\documentclass{article}

\PassOptionsToPackage{nonamebreak}{natbib}
\usepackage[preprint]{neurips_2025}

\usepackage[utf8]{inputenc}
\usepackage[T1]{fontenc}
\usepackage[breaklinks=true]{hyperref}
\usepackage{url}
\usepackage{booktabs}
\usepackage{amsfonts}
\usepackage{mathrsfs}
\usepackage{placeins}
\usepackage{nicefrac}
\usepackage{microtype}
\usepackage[table]{xcolor}
\usepackage{graphicx}
\usepackage{subcaption}
\usepackage{tikz}
\usetikzlibrary{positioning,arrows.meta,calc,backgrounds}
\usepackage{adjustbox}
\usepackage{algorithm}
\usepackage[noend]{algpseudocode}
\usepackage{tabularx}
\usepackage{array,longtable}
\usepackage{amsmath, amssymb, amsthm, mathtools}
\usepackage{aliascnt}
\usepackage{bm}
\usepackage{titlesec}

\newcommand{\safeincludegraphics}[2][]{\includegraphics[#1]{#2}}

\titlespacing*{\section}{0pt}{2.5ex plus 1ex minus .2ex}{1.5ex plus .2ex}
\titlespacing*{\subsection}{0pt}{2ex plus .8ex minus .2ex}{1ex plus .2ex}

\theoremstyle{plain}
\newtheorem{theorem}{Theorem}[section]

\newaliascnt{lemma}{theorem}
\newtheorem{lemma}[lemma]{Lemma}
\aliascntresetthe{lemma}

\newaliascnt{proposition}{theorem}
\newtheorem{proposition}[proposition]{Proposition}
\aliascntresetthe{proposition}

\newaliascnt{corollary}{theorem}
\newtheorem{corollary}[corollary]{Corollary}
\aliascntresetthe{corollary}

\theoremstyle{definition}
\newaliascnt{definition}{theorem}
\newtheorem{definition}[definition]{Definition}
\aliascntresetthe{definition}

\newaliascnt{assumption}{theorem}

\aliascntresetthe{assumption}

\theoremstyle{remark}
\newaliascnt{remark}{theorem}
\newtheorem{remark}[remark]{Remark}
\aliascntresetthe{remark}

\newcommand{\op}{\operatorname{op}}
\newcommand{\tr}{\operatorname{tr}}
\newcommand{\range}{\operatorname{range}}
\newcommand{\rank}{\operatorname{rank}}
\newcommand{\diag}{\operatorname{diag}}
\newcommand{\psinv}[1]{#1^{\dagger/2}}

\newcommand{\Kss}{\bm{K}_{\!SS}}
\newcommand{\Kqs}{\bm{K}_{\!QS}}
\newcommand{\Ksq}{\bm{K}_{\!SQ}}
\newcommand{\Rsg}{R_{\!S,\gamma}}

\title{A Theory of Generalization in Deep Learning}

\author{%
  Elon Litman\thanks{Corresponding author: \texttt{elonlit@stanford.edu}} \And Gabe Guo
}

\begin{document}

\maketitle
\vspace{-1.2em}
\begin{center}
  Stanford University
\end{center}
\vspace{1.2em}

\begin{abstract}
We present a non-asymptotic theory of generalization in deep learning where the empirical neural tangent kernel partitions the output space. In directions corresponding to signal, error dissipates rapidly; in the vast orthogonal dimensions corresponding to noise, the kernel's near-zero eigenvalues trap residual error in a test-invisible reservoir. Within the signal channel, minibatch SGD ensures that coherent population signal accumulates via fast linear drift, while idiosyncratic memorization is suppressed into a slow, diffusive random walk. We prove generalization survives even when the kernel evolves $\mathcal{O}(1)$ in operator norm, the full feature-learning regime. This theory naturally explains disparate phenomena in deep learning theory, such as benign overfitting, double descent, implicit bias, and grokking. Lastly, we derive an exact population-risk objective from a single training run with no validation data, for any architecture, loss, or optimizer, and prove that it measures precisely the noise in the signal channel. This objective reduces in practice to an SNR preconditioner on top of Adam, adding one state vector at no extra cost; it accelerates grokking by $5 \times$, suppresses memorization in PINNs and implicit neural representations, and improves DPO fine-tuning under noisy preferences while staying $3 \times$ closer to the reference policy.
\end{abstract}

\section{Introduction}

A neural network with more parameters than training examples can memorize arbitrary labels, including pure noise \citep{zhang2017understanding}, yet the same training procedure usually generalizes on real data. Classical capacity bounds are vacuous at practical scale, and frozen-kernel theory \citep{jacot2018neural} describes the lazy regime, whereas modern architectures train in the full feature-learning regime. We develop a theory of generalization that handles full feature learning, and we derive a practical method that trains directly on population risk.

We work in output space along the realized trajectory. The empirical tangent kernel $\Kss=\bm J_S\bm J_S^\top$ selects which output directions training can move, and integrating its evolution gives the cumulative dissipation $\mathcal W_S$, whose range is the \emph{signal channel} and whose kernel is the \emph{reservoir}. The test-train kernel $\Kqs=\bm J_Q\bm J_S^\top$ shares the factor $\bm J_S^\top$, so every reservoir direction is invisible to every test set; SGD attenuates surviving label noise because its centered minibatch fluctuation diffuses while drift accumulates; and on the signal-channel side the training and test displacements both factor through $\mathcal W_S^{1/2}$, so under squared loss test motion is determined exactly by training motion along the realized path, even when the kernel drifts by $\mathcal{O}(1)$ in operator norm. \autoref{fig:four_cells} summarizes the resulting decomposition of test error and locates the classical phenomena: grokking, double descent, implicit bias, and benign overfitting, within it.

Exchangeability turns the same operators into population risk. The test transfer $\mathsf G$ instantiated with each training point as a one-point test set against the remaining batch is an unbiased rate of population-risk decrease, and on a one-step window it pulls back through the per-example Jacobians to $\operatorname{tr}(M\bm A_B)$ with $\bm A_B=\bar{\bm g}_B\bar{\bm g}_B^\top-\tfrac{1}{b-1}\bm\Sigma_B$. Maximizing this through the optimizer's metric updates parameter $k$ only when $\mu_k^2$ exceeds $\sigma_k^2/(b-1)$, a one-line change to Adam with one extra state vector.

\begin{figure}[!t]
\centering
\begin{adjustbox}{center}
\definecolor{cellgood}{RGB}{232,238,246}
\definecolor{cellbad}{RGB}{246,234,232}
\definecolor{subduedtext}{RGB}{100,100,108}
\definecolor{accentgood}{RGB}{50,90,145}
\definecolor{accentbad}{RGB}{155,75,72}
\definecolor{phenomenon}{RGB}{95,70,140}
\begin{tikzpicture}[x=1cm, y=1cm,
  phenarrow/.style={line width=1.0pt, color=phenomenon, line cap=round},
  phenlabel/.style={font=\small\itshape, color=phenomenon, inner sep=2pt},
  badge/.style={font=\scriptsize\itshape, color=phenomenon, inner sep=2pt,
                fill=white, fill opacity=0.78, text opacity=1, rounded corners=2pt}]

  \fill[cellgood] (1.65, -0.95) rectangle (7.15, -3.05);
  \fill[cellbad]  (7.15, -0.95) rectangle (12.65, -3.05);
  \fill[cellbad]  (1.65, -3.05) rectangle (7.15, -5.15);
  \fill[cellgood] (7.15, -3.05) rectangle (12.65, -5.15);

  \node[align=center, font=\small\bfseries, inner sep=0pt] at (4.40, -0.475)
    {Signal Channel\\[1pt]
     {\mdseries\footnotesize\itshape\color{subduedtext}$\range(\mathcal W_S)$}};
  \node[align=center, font=\small\bfseries, inner sep=0pt] at (9.90, -0.475)
    {Reservoir\\[1pt]
     {\mdseries\footnotesize\itshape\color{subduedtext}$\ker(\mathcal W_S)$}};

  \node[align=center, font=\small\bfseries, inner sep=0pt] at (0.825, -2.00) {Signal};
  \node[align=center, font=\small\bfseries, inner sep=0pt] at (0.825, -4.10) {Noise};

  \node[align=center, font=\small, text width=5.20cm, inner sep=0pt] at (4.40, -2.00)
    {{\color{accentgood}\textbf{Transfers to Test}}\\[5pt]
     $\tfrac{1}{n}\bm A_\circ\bm D\bigl(f^\star(S)-\bm U_S(0)\bigr)$};
  \node[align=center, font=\small, text width=5.20cm, inner sep=0pt] at (9.90, -2.00)
    {{\color{accentbad}\textbf{Residual Bias}}\\[5pt]
     bounded by $\|\bm R_\perp\|_{\op}$};

  \node[align=center, font=\small, text width=5.20cm, inner sep=0pt] at (4.40, -4.10)
    {{\color{accentbad}\textbf{Only Surviving Variance Term}}\\[5pt]
     $\tfrac{1}{n}\bm G\bm P_{\mathrm{sig}}\bm\varepsilon$};
  \node[align=center, font=\small, text width=5.20cm, inner sep=0pt] at (9.90, -4.10)
    {{\color{accentgood}\textbf{Trapped, Invisible to Test}}\\[5pt]
     $\tfrac{1}{n}\bm G\bm P_{\mathrm{res}}\bm\varepsilon=\bm 0$};


  \draw[phenarrow, -{Stealth[length=3mm,width=3mm]}]
    (10.7, 0.30) to[bend right=8] (4.1, 0.30);
  \node[phenlabel, anchor=south] at (7.15, 0.65)
    {\textbf{Grokking}};

  \draw[phenarrow, {Stealth[length=3mm,width=3mm]}-{Stealth[length=3mm,width=3mm]}]
    (4.1, -5.47) to[bend right=8] (10.7, -5.47);
  \node[phenlabel, anchor=north] at (7.15, -5.82)
    {\textbf{Double Descent}};

  \node[badge, anchor=north west]
    at (1.78, -0.99) {$\downarrow$\,\textbf{Implicit Bias}};

  \node[badge, anchor=south east]
    at (12.52, -5.11) {$\bigstar$\,\textbf{Benign Overfitting}};

\end{tikzpicture}
\end{adjustbox}
\caption{\textbf{Four-cell decomposition of test error.} Each cell is one contribution to $\bm U_Q(T)-f^\star(Q)$ from \eqref{eq:thesis_decomposition}. The two blue cells generalize correctly: clean signal transfers through $\bm A_\circ\bm D$, and any label noise the optimizer placed in the reservoir is killed unconditionally by $\bm G\bm P_{\mathrm{res}}=\bm 0$ (\autoref{thm:reservoir_invisibility}). The two red cells are the failure modes: signal in the reservoir feeds the residual bias (\autoref{thm:sobolev_remainder}), and noise in the signal channel is the only variance term that survives, suppressed by SGD drift-diffusion separation (\autoref{thm:minibatch_coherence}) and the population-risk gate of \autoref{sec:pop_risk_training}. Classical phenomena map onto these cells (purple annotations): \emph{Grokking} ($\leftarrow$, top) is signal migrating from the reservoir into the signal channel as the kernel evolves; \emph{Double Descent} ($\leftrightarrow$, bottom) is noise moving between channels as model capacity sweeps across interpolation (\autoref{rem:double_descent}); \emph{Implicit Bias} ($\downarrow$, top-left) is the spectral schedule of $\mathcal W_S(t)$ filling the signal channel from the largest eNTK eigenvalue down (\autoref{cor:grad_flow_filter_limit}); \emph{Benign Overfitting} ($\bigstar$, bottom-right) is noise sitting in the reservoir at interpolation (\autoref{cor:benign_overfitting}). Frozen-kernel filter and unified bias-variance: \autoref{sec:classical_phenomena}, \autoref{thm:unified_bias_variance}.}
\label{fig:four_cells}
\end{figure}

\section{Related Work}\label{sec:related}

The question of why overparameterized neural networks generalize has attracted sustained attention from both the statistical and optimization communities.

\paragraph{Worst-case bounds and algorithmic stability.}
Uniform-convergence bounds, whether expressed in terms of VC dimension \citep{vapnik1971uniform}, covering numbers \citep{dudley1967sizes}, Rademacher complexity \citep{bartlett2002rademacher}, weight norms \citep{bartlett1998sample,neyshabur2015norm}, or spectral complexity \citep{bartlett2017spectrally}, are vacuous at practical scale \citep{zhang2017understanding}, and \citet{nagarajan2019uniform} argued that uniform convergence is insufficient for deep learning. Algorithmic stability \citep{bousquet2002stability,hardt2016train} bounds the sensitivity to single-point perturbations and requires global Lipschitz constants, which are unavailable for nonconvex losses. PAC-Bayes bounds \citep{mcallester1999pac,dziugaite2017computing} incorporate a data-dependent posterior; the KL penalty becomes meaningful at practical scale once the prior is optimized along the training trajectory, reintroducing path dependence. Our theory provides the appropriate localization: global Lipschitz constants are replaced by path-dependent output-space quantities that capture the actual landscape encountered during training.

\paragraph{Kernel theories and benign overfitting.}
The neural tangent kernel (NTK) \citep{jacot2018neural,du2019gradient} shows that sufficiently wide networks evolve as kernel methods with a frozen tangent kernel, and generalization follows from classical kernel bounds \citep{arora2019fine,lee2019wide}. The benign-overfitting literature \citep{belkin2019reconciling,nakkiran2020deep,bartlett2020benign,tsigler2023benign,hastie2022surprises} established that interpolation can be statistically harmless under appropriate spectral decay, primarily for linear or kernel models. Our theory unifies both regimes: the frozen-kernel limit is a special case, full feature learning is handled with an evolving kernel, and benign overfitting is explained mechanistically as noise trapped in test-invisible directions.

\paragraph{Influence functions and leave-one-out.}
Classical influence functions \citep{cook1982residuals} approximate the effect of removing a training point via a single Newton step at the empirical risk minimizer. \citet{koh2017understanding} brought the tool to deep learning for data attribution, retaining the single-step linearization at the trained weights. The population-risk objective of \autoref{sec:population_risk} reads the leave-one-out displacement directly off the test-transfer operator $\mathsf G$ at the current step, with each training point treated as a one-point test set against the remaining batch. Where classical influence linearizes at the trained weights and folds the trajectory into a Hessian, the operator form gives a one-step kernel-block expression that the optimizer can compute from the gradients it already sees, and the same operator at the full window $T$ recovers the expected generalization gap as an average of self-influences (\autoref{thm:self_influence}).

\section{Output Space Dynamics: Signal Channel and Reservoir}\label{sec:second_law}

We work in output space, since that is where train and test predictions actually live. Let $\mathcal Z$ be a measurable instance space, let $S=(z_1,\dots,z_n)\in\mathcal Z^n$ be the training set, and let $F\colon\mathbb R^d\times\mathcal Z\to \mathbb R^p$ be $C^2$ in the parameters $\bm w$ for every instance $z$, supporting residual networks \citep{he2016deep} and Transformers \citep{vaswani2017attention}. Stack all training outputs into a single vector, assemble their parameter Jacobian, and form the kernel that governs which output directions training can move:
\begin{align}
\bm U_S(\bm w)
&\triangleq
\bigl(F(\bm w,z_1);\dots;F(\bm w,z_n)\bigr)\in\mathbb R^{np}, \label{def:stacked_output}\\
\bm J_S(\bm w)
&\triangleq
D_{\bm w}\bm U_S(\bm w)\in\mathbb R^{np\times d}, \label{def:stacked_jacobian}\\
\Kss(\bm w)
&\triangleq
\bm J_S(\bm w)\bm J_S(\bm w)^\top\succeq 0. \label{def:tangent_kernel}
\end{align}

Take $\Phi_S\colon\mathbb R^{np}\to\mathbb R$ convex and $C^2$ (squared loss has $\Phi_S(\bm u)=\tfrac{1}{2n}\|\bm u-\bm y\|_2^2$), let $L_S=\Phi_S\circ \bm U_S$, and write the output gradient and its Hessian as $\bm g(t)\triangleq\nabla_{\bm u}\Phi_S(\bm u(t))$ and $\bm B(t)\triangleq\nabla^2\Phi_S(\bm u(t))$. Under gradient flow $\partial_t \bm w=-\bm J_S^\top \bm g$, the chain rule yields the coupled output, output-gradient, and dissipation dynamics
\begin{align}\label{eq:gradient_flow}
\partial_t \bm u(t)
&=
-\Kss(t)\bm g(t),
\\
\partial_t \bm g(t)
&=
-\bm B(t)\Kss(t)\bm g(t),
\\
\tfrac{d}{dt}\Phi_S(\bm u(t))
&=
-\bm g(t)^\top \Kss(t) \bm g(t)
=
-\|\bm J_S^\top \bm g\|_2^2.
\end{align}
\phantomsection\label{thm:output_dynamics}\label{eq:output_dynamics_dynamics}\label{eq:output_dynamics_dissipation}
The output gradient therefore propagates as $\bm g(t)=\mathcal P_g(t,0)\bm g(0)$, where the propagator $\mathcal P_g(\cdot,s)$ solves the linear ODE
\begin{align}\label{eq:force_propagator}
\partial_t \mathcal P_g(t,s)
=
-\bm B(t)\Kss(t)\mathcal P_g(t,s),
\qquad
\mathcal P_g(s,s)=\bm I.
\end{align}
The eigenvectors of $\Kss(t)$ rotate during training, so the cumulative effect over a window $[s,T]$ requires integrating along the trajectory.

\begin{definition}[Cumulative Dissipation, Signal Channel, and Reservoir]\label{def:signal_channel}
Fix $0\le s\le T$. The \emph{cumulative dissipation Gramian} and its spectral projectors (derivation from output dynamics in \autoref{sec:operator_derivation}) are
\begin{align}
\mathcal W_S(s,T)
&\triangleq
\int_s^T \mathcal P_g(\tau,s)^\top \Kss(\tau)\mathcal P_g(\tau,s)d\tau,
\\
\bm P_{>\varepsilon}(s,T)
&\triangleq
\mathbf 1_{(\varepsilon,\infty)}\bigl(\mathcal W_S(s,T)\bigr),
\\
\bm P_{\le \varepsilon}(s,T)
&\triangleq
\mathbf 1_{[0,\varepsilon]}\bigl(\mathcal W_S(s,T)\bigr).
\end{align}
The \emph{signal channel} is $\range(\mathcal W_S(s,T))$, the directions where training dissipated loss; the \emph{reservoir} is $\ker\mathcal W_S(s,T)$, the directions where training dissipated none.
\end{definition}

For a test set $Q$ and $\bm W\triangleq\mathcal W_S(s,T)$, the test transfer operator is
\begin{align}\label{def:test_transfer}
\mathsf G_Q(T,s)
&\triangleq
\int_s^T \Kqs(\tau)\mathcal P_g(\tau,s)d\tau.
\end{align}
The chain rule gives $\bm U_Q(T)-\bm U_Q(s)=-\mathsf G_Q(T,s)\bm g(s)$, so $\mathsf G_Q$ propagates the output gradient to test displacement.

\begin{proposition}[Reservoir test-invisibility]\label{thm:reservoir_invisibility}
The test transfer operator vanishes on the reservoir,
\begin{align}\label{eq:reservoir_invisibility}
\ker \bm W \subseteq \ker \mathsf G_Q,
\end{align}
so spectral projectors of $\bm W$ onto small or zero eigenvalues annihilate $\mathsf G_Q$. The corresponding inequality $\bm G^\top\bm G \preceq \|\Gamma_Q(s,T)\|_{\op}\bm W$ on bounded functions of $\bm W$ is recorded in \autoref{sec:proofs_dynamics}.
\end{proposition}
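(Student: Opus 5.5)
The plan is to show that any $\bm v\in\ker\bm W$ makes the integrand of $\mathsf G_Q$ vanish for almost every $\tau$, using the shared factor $\bm J_S^\top$ in $\Kss=\bm J_S\bm J_S^\top$ and $\Kqs=\bm J_Q\bm J_S^\top$.

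First, expand the quadratic form of the Gramian. For $\bm v\in\mathbb R^{np}$,
\begin{align*}
\bm v^\top\bm W\bm v=\int_s^T \bigl\|\bm J_S(\tau)^\top\mathcal P_g(\tau,s)\bm v\bigr\|_2^2\,d\tau .
\end{align*}
Since $\bm W\succeq 0$, having $\bm W\bm v=\bm 0$ is equivalent to $\bm v^\top\bm W\bm v=0$. The integrand is nonnegative, so this holds iff $\bm J_S(\tau)^\top\mathcal P_g(\tau,s)\bm v=\bm 0$ for almost every $\tau\in[s,T]$. The propagator is continuous in $\tau$, because it solves a linear ODE with continuous coefficients, given $F\in C^2$ and $\Phi_S\in C^2$ along the trajectory. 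The Jacobian $\bm J_S(\tau)$ is also continuous. Hence the integrand is continuous, and "almost every" upgrades to "every" $\tau$.

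Next, factor the test transfer integrand. We have $\Kqs(\tau)\mathcal P_g(\tau,s)\bm v=\bm J_Q(\tau)\,\bigl[\bm J_S(\tau)^\top\mathcal P_g(\tau,s)\bm v\bigr]=\bm 0$ pointwise. Integrating over $[s,T]$ gives $\mathsf G_Q\bm v=\bm 0$, which proves $\ker\bm W\subseteq\ker\mathsf G_Q$.

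For the projector statement, note that $\bm P_{\le 0}$ is the orthogonal projector onto $\ker\bm W$, so the inclusion above gives $\mathsf G_Q\bm P_{\le 0}=\bm 0$. For small positive eigenvalues, annihilation holds only approximately, and the quantitative form needs the inequality $\bm G^\top\bm G\preceq\|\Gamma_Q\|_{\op}\bm W$. I would prove it by Cauchy–Schwarz on the integral:
\begin{align*}
\|\mathsf G_Q\bm v\|^2\le\Bigl(\int_s^T\|\bm J_Q\|_{\op}^2\,d\tau\Bigr)\,\bm v^\top\bm W\bm v .
\end{align*}
This identifies $\Gamma_Q$ with a test-Jacobian energy $\int_s^T \bm J_Q\bm J_Q^\top\,d\tau$, or a suitable operator version of it. Consequently $\|\mathsf G_Q\bm P_{\le\varepsilon}\|_{\op}^2\le\varepsilon\|\Gamma_Q\|_{\op}$.

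The only delicate step is the passage from a vanishing integral to pointwise vanishing, since it relies on measurability and continuity of the integrand. This is immediate under the stated smoothness along the realized gradient-flow trajectory. Everything else is the factorization through $\bm J_S^\top$.
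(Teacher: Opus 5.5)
Your proof of $\ker\bm W\subseteq\ker\mathsf G_Q$ is the paper's argument. You expand $\bm h^\top\bm W\bm h$ as $\int_s^T\|\bm J_S(\tau)^\top\mathcal P_g(\tau,s)\bm h\|_2^2\,d\tau$, deduce $\bm J_S(\tau)^\top\mathcal P_g(\tau,s)\bm h=\bm 0$ for a.e.\ $\tau$, and use the factorization $\Kqs=\bm J_Q\bm J_S^\top$. The upgrade from a.e.\ to every $\tau$ is unnecessary, since an integrand that vanishes a.e.\ already integrates to zero.

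The quantitative part is where you diverge, and one sentence needs correcting. In the paper, $\Gamma_Q(s,T)\triangleq\bm W^{\dagger/2}\bm G^\top\bm G\bm W^{\dagger/2}$ is the visibility Gramian, not a test-Jacobian energy. The inequality then follows algebraically from the inclusion. First, $\bm G=\bm G\bm W\bm W^\dagger=\bm G\bm W^{\dagger/2}\bm W^{1/2}$. Hence $\bm G^\top\bm G=\bm W^{1/2}\Gamma_Q\bm W^{1/2}\preceq\|\Gamma_Q\|_{\op}\bm W$. Applying this to $\varphi(\bm W)\bm h$, which commutes with $\bm W^{1/2}$, gives the bounded-function version. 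With this definition, $\|\Gamma_Q\|_{\op}=\sup_{\bm h\notin\ker\bm W}\|\bm G\bm h\|_2^2/\bm h^\top\bm W\bm h$ is the \emph{smallest} $c$ with $\bm G^\top\bm G\preceq c\bm W$. So your Cauchy--Schwarz constant $c_Q=\bigl\|\int_s^T\bm J_Q\bm J_Q^\top\,d\tau\bigr\|_{\op}$ does not ``identify'' $\Gamma_Q$; it only upper-bounds $\|\Gamma_Q\|_{\op}$. Your inequality and your $\varepsilon$-bound are therefore true but weaker than the stated ones. To prove the statement as written, append the two-line factorization above to your inclusion argument.

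Each route buys something different. Yours gives an a priori bound on the visibility constant from test Jacobians alone; it is the same Cauchy--Schwarz mechanism the paper later uses in \autoref{lem:sobolev_C_bound}. It also implies the kernel inclusion by itself: $\bm h^\top\bm W\bm h=0$ forces $\|\mathsf G_Q\bm h\|_2^2\le c_Q\cdot 0$, which makes your pointwise step redundant. The paper's route gives the sharp constant. More importantly, it gives the factorization $\mathsf G_Q=\mathsf C_Q\bm W^{1/2}$ on which the train--test coupling of \autoref{sec:third_law} is built.
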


Reservoir directions cannot affect any test prediction: residual error sitting in $\ker\mathcal W_S$ shows up on training outputs while contributing nothing at test. The same conclusion holds after any positive-semidefinite preconditioning of the parameter updates: substituting $\bm J_S\mapsto\bm J_S M^{1/2}$ in the proof yields $\ker\mathcal W_S^M\subseteq\ker\mathsf G_Q^M$ for the preconditioned operators (\autoref{sec:operator_derivation}), so the optimizer's choice of $M_t$ at each step determines which parameter directions enter the signal channel; \autoref{sec:pop_risk_training} picks the $M_t$ that maximizes a population-safe rate. The frozen-kernel limit reduces $\mathcal W_S$ to a closed-form spectral filter that recovers benign overfitting, double descent, implicit bias, grokking, and ridge regression as different choices of one preconditioner (\autoref{thm:unified_bias_variance}); we record this in \autoref{sec:classical_phenomena}, with the linear-model worked example in \autoref{sec:linear_model}.

Two questions remain about the signal channel. Inside it, what happens to the residual label noise the optimizer fitted? \autoref{sec:minibatch_coherence} shows that minibatch SGD's drift accumulates linearly along population-gradient directions while its centered fluctuation diffuses, so noise channels die at rate $1/\sqrt n + \sqrt{\eta T/b}$ against the $\Theta(T)$ accumulation on signal channels. And what predicts test motion from training motion when the kernel evolves over a typical run? \autoref{sec:third_law} shows that $\bm D$ and $\mathsf G_Q$ both factor through $\mathcal W_S^{1/2}$, and that under squared loss the test displacement is determined exactly by the training displacement on the realized window.

\section{Minibatch Drift Versus Diffusion}\label{sec:minibatch_coherence}

The reservoir cannot affect any test prediction. Inside the signal channel, the residual label noise the optimizer fitted is suppressed by minibatch SGD itself: the centered fluctuation $\bm\xi_k=\hat{\bm g}_k-\bm\mu_k$ has $\mathbb E[\bm\xi_k\mid\mathcal F_k]=0$, so it contributes only diffusion, and on a noise direction the population gradient at fresh draws also vanishes, so the drift dies as $1/\sqrt n$. A genuine signal direction has $\Theta(1)$ drift and accumulates linearly.

To make this precise, decompose each minibatch gradient into its conditional mean and a centered fluctuation, and write the test prediction under preconditioner $\bm M_k$,
\begin{align}
\hat{\bm g}_k &= \bm\mu_k + \bm\xi_k,
&
\bm\mu_k &\triangleq \mathbb E[\hat{\bm g}_k\mid\mathcal F_k],
&
\bm L_{Q,k} &\triangleq \bm J_Q(\bm w_k)\bm M_k.
\end{align}
Taylor-expanding the test prediction along $N$ SGD steps with step size $\eta$ and horizon $T=N\eta$ gives the decomposition together with a step-size bound on the second-order remainder, valid whenever $\bm J_Q$ is $\beta_Q$-Lipschitz along the trajectory:
\begin{align}
\bm U_Q(\bm w_N)-\bm U_Q(\bm w_0)
&=
\underbrace{-\eta\sum_{k=0}^{N-1}\bm L_{Q,k}\bm\mu_k}_{\bm\Delta_Q^{\mathrm{drift}}}
-
\underbrace{\eta\sum_{k=0}^{N-1}\bm L_{Q,k}\bm\xi_k}_{\bm\Delta_Q^{\mathrm{diff}}}
+\bm{\mathcal R}_Q,
\label{eq:drift_diffusion_decomposition}
\\
\|\bm{\mathcal R}_Q\|_2
&\le
\frac{\beta_Q}{2}\sum_{k=0}^{N-1}\|\bm w_{k+1}-\bm w_k\|_2^2.
\label{eq:drift_remainder}
\end{align}
The fluctuations $\eta\bm L_{Q,k}\bm\xi_k$ are martingale differences with respect to $\{\mathcal F_k\}$.

\begin{theorem}[Drift--diffusion separation]\label{thm:minibatch_coherence}
If the test-projected fluctuation covariance is uniformly bounded by $V_k/b$, the drift and diffusion terms accumulate at separated rates,
\begin{align}\label{eq:scale_separation}
\bigl\|\bm\Delta_Q^{\mathrm{drift}}\bigr\|_2 &= O(T),
&
\bigl\|\bm\Delta_Q^{\mathrm{diff}}\bigr\|_{L^2} &= O\!\left(\sqrt{\eta T/b}\right),
&
\mathbb E\|\bm\Pi\bm\mu_k\|_2^2 &= O(1/n),
\end{align}
where the last bound holds on a noise channel with vanishing population gradient at fresh draws under a replace-two stability hypothesis on the projected gradient (\autoref{app:proof_minibatch_coherence}), and is exact when the minibatch is independent of $\mathcal F_k$. The channel's test displacement is therefore $O(T/\sqrt n+\sqrt{\eta T/b})$, asymptotically smaller than the $\Theta(T)$ accumulation of a genuine signal channel.
\end{theorem}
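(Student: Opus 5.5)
The plan is to bound the three quantities in \eqref{eq:scale_separation} separately, each from a different structural fact, and then combine them through the decomposition \eqref{eq:drift_diffusion_decomposition}.

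\emph{Drift.} Assume uniform bounds along the trajectory: $\|\bm L_{Q,k}\|_{\op}\le \Lambda$ (bounded test Jacobian and preconditioner) and $\|\bm\mu_k\|_2\le G$ (bounded conditional mean gradient). The triangle inequality then gives
\begin{align*}
\|\bm\Delta_Q^{\mathrm{drift}}\|_2\le \eta\sum_{k<N}\Lambda G=\Lambda G\,T .
\end{align*}
This is $O(T)$ with no further work.

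\emph{Diffusion.} The increments $\bm m_k\triangleq\eta\bm L_{Q,k}\bm\xi_k$ are martingale differences with respect to $\{\mathcal F_k\}$. This holds because $\bm L_{Q,k}$ is $\mathcal F_k$-measurable and $\mathbb E[\bm\xi_k\mid\mathcal F_k]=0$. For $j<k$, condition on $\mathcal F_k$ to get $\mathbb E\langle \bm m_j,\bm m_k\rangle=0$, so the cross terms vanish. What remains is
\begin{align*}
\mathbb E\|\bm\Delta_Q^{\mathrm{diff}}\|_2^2=\eta^2\sum_{k<N}\mathbb E\,\tr\!\bigl(\bm L_{Q,k}\operatorname{Cov}(\bm\xi_k\mid\mathcal F_k)\bm L_{Q,k}^\top\bigr)\le \eta^2 N\,\frac{\sup_k V_k}{b}=\frac{\eta T\sup_kV_k}{b}.
\end{align*}
Here I read the hypothesis as a bound on the trace of the test-projected covariance. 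Taking square roots gives the $L^2$ rate $O(\sqrt{\eta T/b})$. This step is routine.

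\emph{Noise-channel drift.} This is the main obstacle. Let $\bm\Pi$ project onto the noise channel, and let $\bm\mu_k$ be the conditional mean of a minibatch drawn from $S$. The hypothesis gives $\mathbb E_{z'\sim\mathcal D}[\bm\Pi\nabla\ell(\bm w_k,z')]=0$ at fresh draws, but the sampled points are not fresh: $\bm w_k$ depends on $S$.

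\begin{itemize}
\item \emph{Independent case.} First handle the case where the minibatch (equivalently, the sample used) is independent of $\mathcal F_k$. Then $\bm\Pi\bm\mu_k$ is an average of $n$ i.i.d.\ mean-zero vectors, and a direct variance computation gives $\mathbb E\|\bm\Pi\bm\mu_k\|^2=\sigma^2/n$ exactly. This is the claimed exactness.
\item \emph{Dependent case, step 1.} Write $\bm\Pi\bm\mu_k=\frac1n\sum_i \bm h_i$ with $\bm h_i=\bm\Pi\nabla\ell(\bm w_k(S),z_i)$. Expand the square into $n$ diagonal terms, which contribute $O(1/n)$, and $n(n-1)$ cross terms $\mathbb E\langle \bm h_i,\bm h_j\rangle$.
\item \emph{Dependent case, step 2.} For each cross term, replace both $z_i,z_j$ by independent copies $z_i',z_j'$ to form $S^{(ij)}$. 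Under the swapped weights $\bm w_k(S^{(ij)})$, the points $z_i,z_j$ are fresh, so the cross term has zero mean by the vanishing fresh-draw gradient.
\item \emph{Dependent case, step 3.} The replace-two stability hypothesis bounds the change in the projected gradients, $\|\bm h_i(S)-\bm h_i(S^{(ij)})\|\le \beta/n$. A Cauchy--Schwarz step then bounds each cross term by $O(\beta G/n+\beta^2/n^2)$. Summed with weight $1/n^2$ over the $n^2$ pairs, this gives $O(1/n)$.
\end{itemize}

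The delicate point is making the replace-two argument cost only $1/n$ per pair, not per point. This is precisely what the stability hypothesis must supply. I would state it in exactly that form (an $O(1/n)$ perturbation of projected per-example gradients under a two-point swap) and defer the details to \autoref{app:proof_minibatch_coherence}.

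\emph{Combining.} On the noise channel, the projected drift is bounded by $\eta\sum_k\Lambda\,\|\bm\Pi\bm\mu_k\|$. By Jensen, its $L^2$ norm is $O(T/\sqrt n)$. Adding the diffusion bound gives a channel test displacement of $O(T/\sqrt n+\sqrt{\eta T/b})$. On a signal channel, by contrast, $\bm\Pi\bm\mu_k$ has norm bounded below along the trajectory, so the drift is $\Theta(T)$. This yields the stated separation. The remainder $\bm{\mathcal R}_Q$ is controlled separately by \eqref{eq:drift_remainder} and does not affect these rates.
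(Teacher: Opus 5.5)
Your proposal is correct and follows the paper's proof essentially step for step. Both use the triangle-inequality bound for the drift, martingale orthogonality with the trace bound $V_k/b$ for the diffusion, and replace-two decoupling of the cross terms in $\bm\Pi\bm\mu_k$ followed by Cauchy--Schwarz. The paper states the stability hypothesis as an $L^2$ defect $\varepsilon_{k,n}$ and controls the residual cross terms with the per-example second moment $V_k$ rather than your mean-gradient bound $G$, which yields $\mathbb E\|\bm\Pi\bm\mu_k\|_2^2\le V_k/n+2\sqrt{V_k}\,\varepsilon_{k,n}+\varepsilon_{k,n}^2$; this is the cleaner form of your step 3 and is $O(1/n)$ when $\varepsilon_{k,n}=O(1/n)$.
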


A genuine signal channel has $\mathbb E_{Z\sim\mathcal D}[\bm\Pi\nabla_{\bm w}\ell\mid\mathcal F_k]\ne 0$, so $\|\bm\Pi\bm\mu_k\|_2=\Theta(1)$ and the drift accumulates linearly (\autoref{app:proof_minibatch_coherence}). The squared-mean-versus-trace comparison driving the off-diagonal agreement $\Omega_B$ reappears here as the separation between $\Theta(T)$ drift and $O(\sqrt{\eta T/b})$ diffusion (\autoref{thm:loo_test_transfer}). The next section restricts this trajectory-level statement to a per-parameter update at a single optimizer step.

\section{Train-Test Coupling under Feature Learning}\label{sec:third_law}

The reservoir is invisible at test, and \autoref{sec:minibatch_coherence} showed that residual label noise inside the signal channel decays under SGD. The remaining piece is the signal in the signal channel: when does training motion determine test motion in the feature-learning regime? Abbreviating $\bm W\triangleq\mathcal W_S(s,T)$, the training-side analogue of $\mathsf G_Q$ and its dissipation-normalized counterpart are
\begin{align}
\bm D
&\triangleq
\int_s^T \Kss(\tau)\mathcal P_g(\tau,s)d\tau
=
\mathsf C_S\bm W^{1/2},
&
\mathsf C_S
&\triangleq
\bm D\psinv{\bm W},
\label{eq:train_disp_def}
\\
\bm G
&\triangleq
\int_s^T \Kqs(\tau)\mathcal P_g(\tau,s)d\tau
=
\mathsf C_Q\bm W^{1/2},
&
\mathsf C_Q
&\triangleq
\bm G\psinv{\bm W},
\label{eq:dissipation_factorization}
\end{align}
both well-defined since $\bm D$ and $\bm G$ vanish on $\ker \bm W$ (\autoref{thm:reservoir_invisibility}); the chain rule gives the train-side companion $\bm U_S(T)-\bm U_S(s)=-\bm D\bm g(s)$ to the test relation $\bm U_Q(T)-\bm U_Q(s)=-\bm G\bm g(s)$. Orthogonally projecting $\mathsf C_Q$ onto $\range(\mathsf C_S^\top)$ produces the optimal linear predictor $\bm A_\circ$ and an irreducible remainder $\bm R_\perp$,
\begin{align}\label{eq:Aopt_def}
\bm A_\circ
\triangleq
\mathsf C_Q\mathsf C_S^\dagger,
\qquad
\bm R_\perp
\triangleq
\mathsf C_Q\bigl(\bm I-\mathsf C_S^\dagger\mathsf C_S\bigr),
\end{align}
in the operator analogue of regressing one Gaussian variable on another. For an arbitrary linear predictor $\bm A$, the error operator $(\bm G-\bm A\bm D)\bm W^\dagger(\bm G-\bm A\bm D)^\top$ splits orthogonally into the irreducible piece $\bm R_\perp\bm R_\perp^\top$ and a quadratic penalty in $\bm A-\bm A_\circ$ (\autoref{thm:transfer_error_decomp}), so $\bm A_\circ$ is the unique minimizer in the positive-semidefinite order. The frozen-kernel limit recovers classical kernel regression, $\bm A_\circ=\Kqs\Kss^\dagger$ and $\bm R_\perp=\bm 0$ (\autoref{sec:classical_phenomena}, \autoref{prop:linear_operators}).

\begin{theorem}[Train-test coupling]\label{thm:train_test_coupling}
Suppose $\Phi_S(\bm u)=\tfrac12(\bm u-\bm y)^\top \bm B(\bm u-\bm y)$ for some $\bm B\succ 0$ (squared loss has $\bm B=\tfrac1n\bm I$). On every finite window along the realized trajectory, $\ker\bm D=\ker\bm W$, the remainder vanishes, and the test displacement is determined exactly by the training displacement,
\begin{align}\label{eq:causal_closure_displacement}
\bm U_Q(T)-\bm U_Q(s)
=
\bm A_\circ\bigl(\bm U_S(T)-\bm U_S(s)\bigr),
\qquad
\bm A_\circ
=
\bm G\bm D^\dagger,
\end{align}
with no kernel-stability or asymptotic hypothesis.
\end{theorem}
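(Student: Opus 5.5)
The plan is to exploit the fact that for a quadratic loss the output Hessian $\bm B$ is constant. With $\bm B$ constant, the propagator equation \eqref{eq:force_propagator} integrates in closed form, and both $\bm D$ and $\bm W$ become explicit functions of the endpoint propagator $\bm P_T\triangleq\mathcal P_g(T,s)$. Everything else is linear algebra on kernels and ranges.

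\textbf{Step 1 (closed forms).} Since $\partial_\tau\mathcal P_g=-\bm B\Kss\mathcal P_g$ with $\bm B$ constant, integrating gives
\begin{align}
\bm B\bm D=\int_s^T \bm B\Kss(\tau)\mathcal P_g(\tau,s)\,d\tau=\bm I-\bm P_T,
\end{align}
so $\bm D=\bm B^{-1}(\bm I-\bm P_T)$. Next I differentiate $\mathcal P_g^\top\bm B^{-1}\mathcal P_g$, using $\partial_\tau\mathcal P_g^\top=-\mathcal P_g^\top\Kss\bm B$ and the symmetry of $\Kss$ and $\bm B$. This gives $\tfrac{d}{d\tau}\bigl(\mathcal P_g^\top\bm B^{-1}\mathcal P_g\bigr)=-2\,\mathcal P_g^\top\Kss\mathcal P_g$, hence
\begin{align}
\bm W=\tfrac12\bigl(\bm B^{-1}-\bm P_T^\top\bm B^{-1}\bm P_T\bigr).
\end{align}
These two identities are the heart of the proof. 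I expect this step to be the main obstacle: one has to spot the right Lyapunov-type quantity $\mathcal P_g^\top\bm B^{-1}\mathcal P_g$, and this is exactly where the quadratic-loss hypothesis enters.

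\textbf{Step 2 ($\ker\bm D=\ker\bm W$).} For the inclusion $\ker\bm W\subseteq\ker\bm D$, I take $\bm v\in\ker\bm W$. Then $0=\bm v^\top\bm W\bm v=\int_s^T\|\bm J_S^\top\mathcal P_g(\tau,s)\bm v\|_2^2\,d\tau$. By continuity $\bm J_S^\top\mathcal P_g(\tau,s)\bm v=0$ for all $\tau$, so $\Kss\mathcal P_g\bm v=0$ and therefore $\bm D\bm v=0$; this is also \autoref{thm:reservoir_invisibility} applied with $Q=S$. For the reverse inclusion, $\bm D\bm v=0$ gives $\bm P_T\bm v=\bm v$ by Step 1. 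Substituting into the closed form yields $\bm v^\top\bm W\bm v=\tfrac12(\bm v^\top\bm B^{-1}\bm v-\bm v^\top\bm B^{-1}\bm v)=0$, and since $\bm W\succeq0$ this forces $\bm W\bm v=0$.

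\textbf{Step 3 (remainder vanishes and closure).} First I show $\ker\mathsf C_S=\ker\bm W$. Suppose $\mathsf C_S\bm x=\bm D\psinv{\bm W}\bm x=0$. Then $\psinv{\bm W}\bm x\in\ker\bm D\cap\range\bm W=\ker\bm W\cap\range\bm W=\{0\}$, so $\bm x\in\ker\bm W$; the reverse inclusion is immediate. By \autoref{thm:reservoir_invisibility}, $\mathsf C_Q=\bm G\psinv{\bm W}$ also vanishes on $\ker\bm W$, so $\ker\mathsf C_S\subseteq\ker\mathsf C_Q$. Equivalently $\range\mathsf C_Q^\top\subseteq\range\mathsf C_S^\top$. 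Since $\bm I-\mathsf C_S^\dagger\mathsf C_S$ is the projector onto $\ker\mathsf C_S$, this gives $\bm R_\perp=0$. Then $\bm G=\mathsf C_Q\bm W^{1/2}=\bm A_\circ\mathsf C_S\bm W^{1/2}=\bm A_\circ\bm D$. Combining with the chain-rule identities $\bm U_Q(T)-\bm U_Q(s)=-\bm G\bm g(s)$ and $\bm U_S(T)-\bm U_S(s)=-\bm D\bm g(s)$ yields \eqref{eq:causal_closure_displacement}. Note that no assumption on the evolution of $\Kss$ is used anywhere.

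\textbf{Step 4 ($\bm A_\circ=\bm G\bm D^\dagger$).} Because $\range\bm W^{1/2}=\range\psinv{\bm W}$, we have $\range\mathsf C_S=\range\bm D$. Both $\mathsf C_Q\mathsf C_S^\dagger$ and $\bm G\bm D^\dagger$ vanish on $(\range\bm D)^\perp$. On $\range\bm D$ both satisfy $\bm X\bm D=\bm G$: for $\bm G\bm D^\dagger$ this uses $\ker\bm D\subseteq\ker\bm G$, and for $\bm A_\circ$ it is Step 3. A linear map is determined by its action on $\range\bm D$ together with its action on $(\range\bm D)^\perp$, so the two operators coincide.
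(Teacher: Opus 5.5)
Your proof is correct and follows the paper's own route. Constant $\bm B$ gives $\bm D=\bm B^{-1}(\bm I-\mathcal P_g(T,s))$ and $\bm W=\tfrac12\bigl(\bm B^{-1}-\mathcal P_g(T,s)^\top\bm B^{-1}\mathcal P_g(T,s)\bigr)$, hence $\ker\bm D=\ker\bm W\subseteq\ker\bm G$ by reservoir invisibility, which is equivalent to $\bm R_\perp=\bm 0$ and yields $\bm A_\circ=\bm G\bm D^\dagger$. The paper reads these last facts off its general transfer-equivalence corollary, while your Steps 3--4 re-derive them directly.

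One small misattribution: $\mathsf C_Q=\bm G\bm W^{\dagger/2}$ vanishes on $\ker\bm W$ trivially, because $\bm W^{\dagger/2}$ does. Where reservoir invisibility is actually needed is the factorizations $\bm G=\mathsf C_Q\bm W^{1/2}$ and $\bm D=\mathsf C_S\bm W^{1/2}$ that you use immediately afterwards.
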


\begin{proof}[Sketch]
With constant $\bm B$, integrating $\eqref{eq:force_propagator}$ gives explicitly
\begin{align}
\bm D
=
\bm B^{-1}\bigl(\bm I-\mathcal P_g(T,s)\bigr),
\qquad
\bm h^\top \bm W \bm h
=
\tfrac12\bigl(\|\bm h\|_{\bm B^{-1}}^2-\|\mathcal P_g(T,s)\bm h\|_{\bm B^{-1}}^2\bigr),
\end{align}
so $\bm D\bm h=\bm 0$ forces $\mathcal P_g(T,s)\bm h=\bm h$ and then $\bm h^\top \bm W\bm h=0$, giving $\ker \bm D=\ker \bm W$. Reservoir test-invisibility yields $\ker \bm D\subseteq\ker \bm G$, equivalent to $\bm R_\perp=\bm 0$. Full proof in \autoref{sec:proofs_coupling}.
\end{proof}

\paragraph{Test error decomposes into bias plus signal-channel variance.}
Write the labels as $\bm y = f^\star(S) + \bm\varepsilon$ and split the noise along the signal channel $\range(\bm W)$ and the reservoir $\ker\bm W$ via projectors $\bm P_{\mathrm{sig}}, \bm P_{\mathrm{res}}$. Under squared loss the initial gradient is $\bm g(0) = \tfrac{1}{n}(\bm U_S(0) - \bm y)$, so the exact test displacement $\bm U_Q(T) - \bm U_Q(0) = -\bm G\bm g(0) = \tfrac{1}{n}\bm G(\bm y - \bm U_S(0))$ separates as
\begin{align}\label{eq:thesis_decomposition}
\bm U_Q(T) - f^\star(Q)
=
\underbrace{\bm U_Q(0) + \tfrac{1}{n}\bm A_\circ\bm D\bigl(f^\star(S) - \bm U_S(0)\bigr) - f^\star(Q)}_{\text{bias, controlled by }\bm R_\perp}
+
\underbrace{\tfrac{1}{n}\bm G\bm P_{\mathrm{res}}\bm\varepsilon}_{=\bm 0}
+
\underbrace{\tfrac{1}{n}\bm G\bm P_{\mathrm{sig}}\bm\varepsilon}_{\text{signal-channel variance}}.
\end{align}
The bias is the optimal train-to-test predictor $\bm A_\circ$ applied to the clean training displacement $\tfrac{1}{n}\bm D(f^\star(S) - \bm U_S(0))$, exact under squared loss ($\bm G = \bm A_\circ\bm D$, $\bm R_\perp = \bm 0$); the Sobolev refinement $\|\bm R_\perp\|_{\op} \le C h_S^{m-d_{\mathcal M}/2}$ in \autoref{thm:sobolev_remainder} extends this to smooth networks on dense samples (tight without smoothness, \autoref{app:smoothness_required}). The reservoir term vanishes unconditionally by reservoir invisibility: $\ker\bm W \subseteq \ker\bm G$ (\autoref{thm:reservoir_invisibility}) and $\range(\bm P_{\mathrm{res}}) = \ker\bm W$, so $\bm G\bm P_{\mathrm{res}} = \bm 0$ as an algebraic identity, with no appeal to a pseudoinverse. The signal-channel term is the only remaining failure mode; \autoref{sec:minibatch_coherence} shows SGD's drift along it accumulates as $O(T)$ against $O(\sqrt T)$ noise diffusion, and \autoref{sec:pop_risk_training} derives the population-risk gate that targets it directly. Full statement and proof in \autoref{app:bias_variance_decomp}; the operators are computable on the realized path (\autoref{sec:forward_backward}).

\begin{figure}[t]
    \centering
    \safeincludegraphics[width=0.78\textwidth]{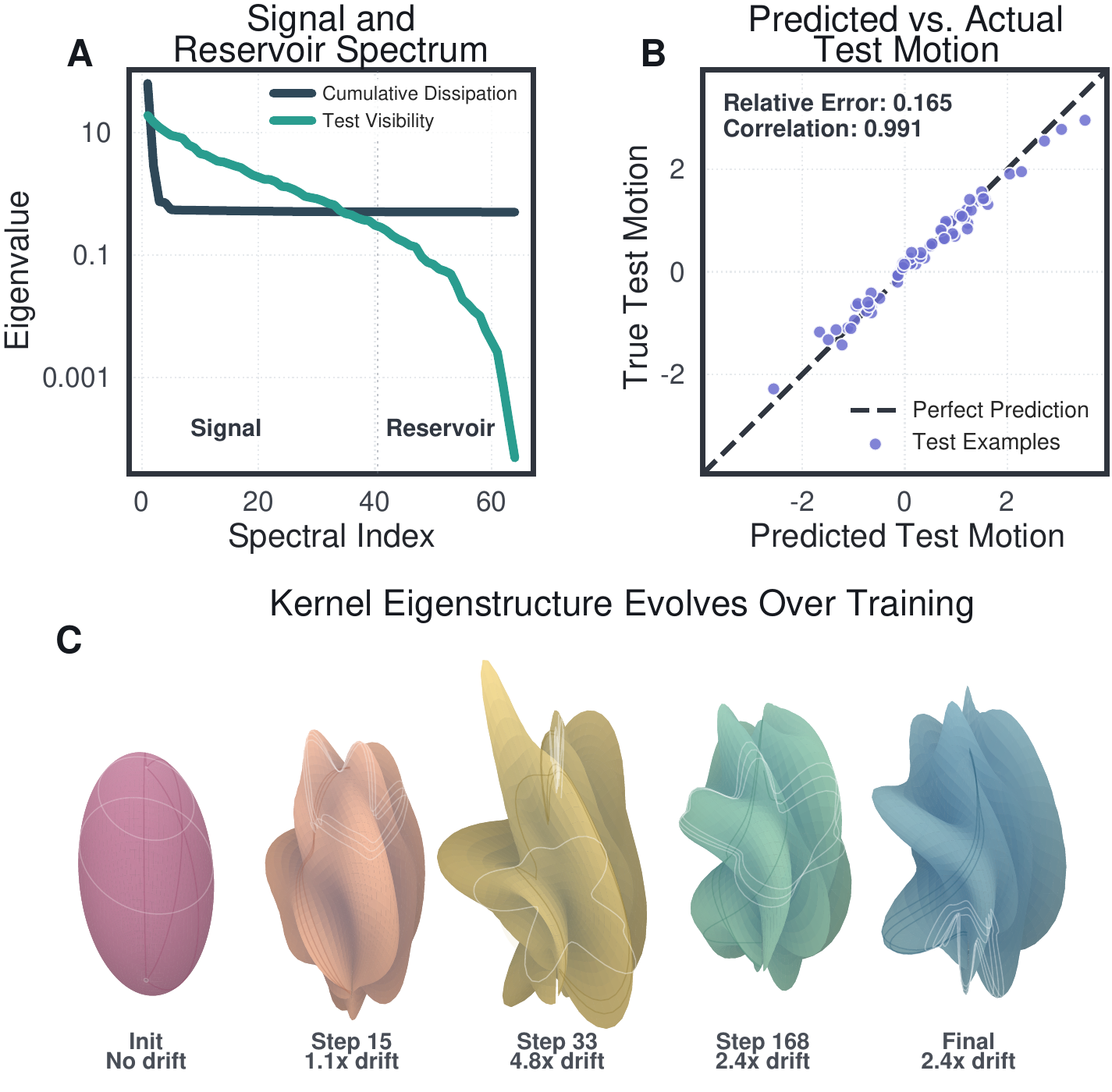}
    \caption{\textbf{Train-Test Coupling under Feature Learning (\autoref{thm:train_test_coupling}).}
    \textbf{(A)}~The test visibility spectrum $\lambda(\Gamma_Q)$ is strictly bounded
    by cumulative dissipation $\lambda(\mathcal{W}_S)$ at every spectral index.
    Directions past the dashed line make up the reservoir: they retain residual
    training error but cannot move test predictions.
    \textbf{(B)}~The optimal linear predictor $\bm A_\circ$ applied to observed training
    displacement recovers the true test displacement (correlation $0.991$,
    relative error $0.165$), confirming train-test coupling under full feature learning.
    \textbf{(C)}~Relative operator-norm drift $\|\Kss(t)-\Kss(0)\|_{\op}/\|\Kss(0)\|_{\op}$ of the empirical tangent kernel \eqref{def:tangent_kernel} over training, peaking at $4.8\times$ and settling at $2.4\times$; far outside the lazy regime, where this ratio stays $o(1)$. Despite this, the predictor in panels A--B holds exactly.}
    \label{fig:train_test_coupling}
\end{figure}

Inside the signal channel, both stable structure and idiosyncratic memorization reduce training loss, so a complexity measure $R\succ 0$ on output space turns the ranking into an eigenvalue problem for $R^{-1/2}\mathcal W_S R^{-1/2}$; the natural $R$ is the self-influence metric (\autoref{app:full_signal_direction_theorem}), which at a single optimizer step collapses to the gradient covariance $\bm\Sigma_B$ used in \autoref{sec:pop_risk_training}.

\section{Population Risk Training}\label{sec:pop_risk_training}\label{sec:population_risk}

The previous sections describe how training motion shapes test motion. The same operators turn the training data into an unbiased rate of population-risk decrease, and localizing to a single optimizer step yields a per-parameter rule the optimizer can compute from the gradients it already sees.

We use i.i.d.\ sampling only through exchangeability of the training $n$-tuple: $(S_{-i},Z_i)\stackrel{d}{=}(S_{n-1},Z)$. A held-out point's loss is then an unbiased sample of population risk for the model trained on the remaining data.

\begin{lemma}[Exchangeability]\label{thm:exchangeability_identity}
For a held-out subset $I\subset[n]$ and the model $\bm w_T(S_{-I})$ trained without $S_I$,
\begin{align}\label{eq:exchangeability_identity}
\mathbb E\!\left[\frac{1}{|I|}\sum_{i\in I}\ell\bigl(\bm w_T(S_{-I}),Z_i\bigr)\right]
=
\mathbb E\!\left[\mathcal L_{\mathcal D}\bigl(\bm w_T(S_{n-|I|})\bigr)\right].
\end{align}
\end{lemma}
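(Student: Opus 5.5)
The plan is to reduce the identity to a statement about one held-out index, using exchangeability together with independence of the held-out points from the data the model was trained on. Throughout, $\bm w_T(\cdot)$ is a measurable function of the training tuple it receives and of any algorithmic randomness (initialization, minibatch order) drawn independently of $S$. I write $\mathcal L_{\mathcal D}(\bm w)=\mathbb E_{Z\sim\mathcal D}\,\ell(\bm w,Z)$.

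First, by linearity of expectation, the left-hand side of \eqref{eq:exchangeability_identity} equals $\frac{1}{|I|}\sum_{i\in I}\mathbb E[\ell(\bm w_T(S_{-I}),Z_i)]$. It therefore suffices to show that each summand equals $\mathbb E[\mathcal L_{\mathcal D}(\bm w_T(S_{n-|I|}))]$.

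Second, fix $i\in I$. Under i.i.d.\ sampling, $S_{-I}$ is a tuple of $n-|I|$ i.i.d.\ draws from $\mathcal D$, and $Z_i$ is independent of $S_{-I}$ and of the algorithmic randomness. Exchangeability also lets me relabel indices, so that $S_{-I}\stackrel{d}{=}S_{n-|I|}$ with the ordering the algorithm sees preserved in distribution. Hence the pair satisfies $(S_{-I},Z_i)\stackrel{d}{=}(S_{n-|I|},Z)$ with $Z\sim\mathcal D$ independent, which is the hypothesis stated before the lemma.

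Third, I condition on $S_{-I}$ and the algorithmic randomness, which fixes $\bm w\triangleq\bm w_T(S_{-I})$. By independence and Fubini/tower property,
\begin{align*}
\mathbb E\bigl[\ell(\bm w,Z_i)\,\big|\,S_{-I},\text{alg}\bigr]=\mathcal L_{\mathcal D}(\bm w).
\end{align*}
Taking the outer expectation and applying the distributional identity from the second step gives $\mathbb E[\mathcal L_{\mathcal D}(\bm w_T(S_{n-|I|}))]$. Averaging over $i\in I$ then finishes the proof.

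The only step needing care is the technical justification of the conditioning. I need $\ell\ge 0$, or integrability of $\ell(\bm w_T,Z)$, for Fubini to apply. I also need joint measurability of $(\bm w,z)\mapsto\ell(\bm w,z)$ and of the training map, which follows from $F$ being $C^2$ in $\bm w$ and measurable in $z$. With these in place the argument is routine.
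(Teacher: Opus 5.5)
Your proof is correct and takes the same route as the paper: the distributional identity $(S_{-I},Z_i)\stackrel{d}{=}(S_{n-|I|},Z)$ for each held-out index, followed by averaging over $i\in I$. You are slightly more careful than the paper's written proof, which states the identity only for singletons $S_{-i}$ and leaves implicit both the conditioning step $\mathbb E[\ell(\bm w,Z_i)\mid S_{-I}]=\mathcal L_{\mathcal D}(\bm w)$ and the integrability requirement.
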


For $|I|=1$ this is leave-one-out; for $|I|=n/K$ it is $K$-fold cross-validation; for $|I|=b$ on a fresh online minibatch it is the case the algorithm uses.

\paragraph{Population risk through the test transfer.}
Fix a window $[s,T]$ and an exchangeable batch $B=(z_1,\dots,z_b)$. For each $a$, set $Q_a\triangleq\{z_a\}$ and $S_{-a}\triangleq B\setminus\{z_a\}$, and read off the test transfer of \autoref{def:test_transfer} for this pair under preconditioner $M$. The new ingredient is the choice of $z_a$ as a one-point test set against the remainder of the batch; the operator itself is the same one that governed train-test coupling in \autoref{sec:third_law}.

\begin{theorem}[Population-risk rate]\label{thm:loo_test_transfer}
On a one-step window $[t,t+\eta]$ from $\bm w_t$ the propagator is the identity to first order, so
\begin{align}\label{eq:loo_test_transfer_collapse}
\mathsf G_{Q_a,S_{-a}}^M(t+\eta,t)
=
\eta\,\bm K^M_{Q_a,S_{-a}}(t)+O(\eta^2),
\end{align}
the row block of $\Kss^M$ pairing point $a$ with the rest of the batch (the self-block $K_{aa}^M$ is absent because $a\notin S_{-a}$). Averaging the leave-one-out improvement over $a$,
\begin{align}\label{eq:loo_average_omega}
\frac{1}{b}\sum_{a=1}^b\bigl(\ell_a(\bm w_t)-\ell_a(\bm w_{-a}^+)\bigr)
&= \eta\Omega_B(M)+O(\eta^2),
&
\Omega_B(M)
&\triangleq
\frac{1}{b(b-1)}\sum_{a\ne c}\bm r_a^\top K_{ac}^M\bm r_c,
\end{align}
and by \autoref{thm:exchangeability_identity} the conditional expectation of the left side given $\mathcal F_t$ is the population risk of the one-step learner trained on an independent $(b-1)$-sample.
\end{theorem}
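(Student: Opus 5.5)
We will prove the three claims in order: first the one-step expansion of the test transfer operator, then the averaged leave-one-out identity, and finally the population interpretation via \autoref{thm:exchangeability_identity}.

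\textbf{Step 1: one-step collapse of the propagator.} In the preconditioned setting, \eqref{eq:force_propagator} becomes $\partial_\tau \mathcal P_g(\tau,t)=-\bm B(\tau)\bm K^M_{S_{-a}S_{-a}}(\tau)\mathcal P_g(\tau,t)$ with $\mathcal P_g(t,t)=\bm I$. Since $\bm B$ and the kernel are continuous along the $C^2$ trajectory, they are bounded on $[t,t+\eta]$, so Grönwall gives $\mathcal P_g(\tau,t)=\bm I+O(\tau-t)$. The cross-kernel $\bm K^M_{Q_a,S_{-a}}(\tau)=\bm J_{Q_a}M\bm J_{S_{-a}}^\top$ is likewise $C^1$ in $\tau$, so $\bm K^M_{Q_a,S_{-a}}(\tau)=\bm K^M_{Q_a,S_{-a}}(t)+O(\tau-t)$. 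Substituting both expansions into \eqref{def:test_transfer} and integrating over a window of length $\eta$ yields \eqref{eq:loo_test_transfer_collapse}. This is the row block pairing $a$ with $S_{-a}$, and the self-block is absent simply because $a\notin S_{-a}$.

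\textbf{Step 2: the leave-one-out displacement and loss change.} We interpret $\bm w_{-a}^+$ as the one-step update using only $S_{-a}$. Per-example losses on a batch of size $b$ are averaged as $\frac{1}{b-1}\sum_{c\ne a}$, so the output gradient is $\bm g_c=\frac{1}{b-1}\bm r_c$ with $\bm r_c\triangleq\nabla_u\ell_c$. The chain rule relation $\bm U_Q(T)-\bm U_Q(s)=-\mathsf G_Q\bm g(s)$ then gives
\begin{align}
F(\bm w_{-a}^+,z_a)-F(\bm w_t,z_a)=-\frac{\eta}{b-1}\sum_{c\ne a}K^M_{ac}\bm r_c+O(\eta^2).
\end{align}
A first-order Taylor expansion of $\ell_a$ in its output argument, whose gradient is $\bm r_a$, gives
\begin{align}
\ell_a(\bm w_t)-\ell_a(\bm w_{-a}^+)=\frac{\eta}{b-1}\sum_{c\ne a}\bm r_a^\top K^M_{ac}\bm r_c+O(\eta^2).
\end{align}
Averaging over $a$ produces $\eta\,\Omega_B(M)+O(\eta^2)$, which is \eqref{eq:loo_average_omega}.

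\textbf{Step 3: population interpretation.} Condition on $\mathcal F_t$, so that $\bm w_t$ is fixed. Exchangeability of the batch gives $(B\setminus\{z_a\},z_a)\stackrel{d}{=}(S_{b-1},Z)$ with $Z$ independent. Applying \autoref{thm:exchangeability_identity} with $|I|=1$ to the one-step learner $S_{-a}\mapsto\bm w_{-a}^+$ shows that $\mathbb E[\ell_a(\bm w_{-a}^+)\mid\mathcal F_t]$ equals the expected population risk of that learner trained on an independent $(b-1)$-sample. Likewise, $\mathbb E[\ell_a(\bm w_t)\mid\mathcal F_t]=\mathcal L_{\mathcal D}(\bm w_t)$, since $\bm w_t$ does not depend on the batch.

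\textbf{Main obstacle.} The delicate point is making the $O(\eta^2)$ remainders uniform. This requires bounds on $\bm B$, on the Lipschitz constant of $\bm J$ along the segment, and on the size of the update. I would obtain these from the $C^2$ assumption restricted to a compact neighborhood of $\bm w_t$, exactly as in the remainder bound \eqref{eq:drift_remainder}. The remaining care is fixing the $1/(b-1)$ normalization convention so that it matches $\Omega_B$.
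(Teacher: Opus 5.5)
Your proposal is correct and follows the paper's proof essentially step for step: the Grönwall-type expansion $\mathcal P_g^M(\tau,t)=\bm I+O(\tau-t)$ collapses $\mathsf G^M_{Q_a,S_{-a}}$ to $\eta\bm K^M_{Q_a,S_{-a}}(t)$, the convention $\Phi_{S_{-a}}=(b-1)^{-1}\sum_{c\in S_{-a}}\phi_c$ gives $c$-blocks $(b-1)^{-1}\bm r_c$, and the argument finishes with a first-order Taylor expansion of $\ell_a$ in output space, averaging over $a$, and exchangeability of a batch independent of $\mathcal F_t$. Your Step 3 is slightly more explicit than the paper's, because you separately identify $\mathbb E[\ell_a(\bm w_t)\mid\mathcal F_t]=\mathcal L_{\mathcal D}(\bm w_t)$, which shows that the left side's conditional expectation is exactly the population-risk improvement.
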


The off-diagonal agreement $\Omega_B(M)$ is the kernel-block expression of the only failure mode left after \autoref{thm:reservoir_invisibility} and \autoref{thm:bias_variance_decomp}: the signal-channel noise $\tfrac{1}{n}\bm G\bm P_{\mathrm{sig}}\bm\varepsilon$ in \eqref{eq:thesis_decomposition}. The reservoir is invisible to both sides at once: $\ker\mathcal W_S\subseteq\ker\mathsf G$ kills the directions the run did not dissipate for every test prediction and for the population-safe rate. The training transfer $\bm D$ rides the full quadratic form $\bm g^\top\Kss^M\bm g$ including the self-blocks $K_{aa}^M$ that empirical-risk minimization sees, while $\mathsf G_{Q_a,S_{-a}}$ excludes them. Population-risk training is therefore the test-side analogue of empirical-risk descent. Specializing to a one-step window and lifting through $\bm g_a=\bm J_a^\top \bm r_a$ collapses $\Omega_B(M)$ to a parameter-space objective $\operatorname{tr}(M\bm A_B)$ with $\bm A_B=\bar{\bm g}_B\bar{\bm g}_B^\top-\tfrac{1}{b-1}\bm\Sigma_B$ (\autoref{sec:kernel_block_derivation}, \autoref{thm:kernel_increment}, \autoref{thm:minibatch_covariance}).

\begin{corollary}[Population-Risk Descent]\label{cor:pop_risk_descent}
For diagonal $\bm P_t=\operatorname{diag}(p_k)$, the unique binary mask maximizing $\operatorname{tr}(M\bm A_B)$ over $0\preceq M\preceq\bm P_t$ updates parameter $k$ exactly when $\mu_k^2>\sigma_k^2/(b-1)$, where $\mu_k=\bar g_{B,k}$ and $\sigma_k^2=(\bm\Sigma_B)_{kk}$. The cutoff is tight in both directions: a parameter that fails it admits an adversarial loss curvature forcing a strict first-order increase in population risk (\autoref{thm:diagonal_gate}).
\end{corollary}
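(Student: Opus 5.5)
The objective is linear in $M$, so the plan is to reduce it to a diagonal linear program over a box and read off the maximizer coordinatewise. Write $\operatorname{tr}(M\bm A_B)=\sum_{j,k}M_{jk}(\bm A_B)_{kj}$. Since the feasible set $\{0\preceq M\preceq\bm P_t\}$ contains the diagonal matrices with $0\le m_k\le p_k$, I will first restrict to the diagonal (mask) family $M=\operatorname{diag}(m_k)$, as in the statement. On that family the objective becomes $\sum_k m_k(\bm A_B)_{kk}$, with
\begin{align*}
(\bm A_B)_{kk}=\bar g_{B,k}^2-\tfrac{1}{b-1}(\bm\Sigma_B)_{kk}=\mu_k^2-\tfrac{\sigma_k^2}{b-1}.
\end{align*}
This is a separable linear objective over the box $\prod_k[0,p_k]$. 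It is maximized by $m_k=p_k$ when $(\bm A_B)_{kk}>0$ and by $m_k=0$ when $(\bm A_B)_{kk}<0$. Under the convention that ties, i.e.\ measure-zero coordinates with $(\bm A_B)_{kk}=0$, are excluded (or assigned to $0$ by strictness), the maximizing binary mask is unique. This gives exactly the rule ``update $k$ iff $\mu_k^2>\sigma_k^2/(b-1)$''.

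Uniqueness needs care on two points. For binary masks $m_k\in\{0,p_k\}$, each coordinate contributes either $0$ or $p_k(\bm A_B)_{kk}$. So any other mask strictly loses whenever it flips a coordinate with nonzero $(\bm A_B)_{kk}$, provided $p_k>0$; I will assume positivity of the Adam preconditioner $\bm P_t$. I will also remark that over the full nondiagonal feasible set the maximum need not be diagonal. That is why the corollary specifies the mask class, and I will note that the argument relies on the link between $\Omega_B(M)$ and $\operatorname{tr}(M\bm A_B)$ from \autoref{thm:loo_test_transfer} together with the kernel-increment lemmas. Via that link, $\eta\operatorname{tr}(M\bm A_B)$ is the first-order population-risk decrease.

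For tightness I will treat the two directions separately. If $\mu_k^2>\sigma_k^2/(b-1)$, then including $k$ increases $\Omega_B$ by $p_k(\bm A_B)_{kk}>0$, which by \autoref{thm:loo_test_transfer} and \autoref{thm:exchangeability_identity} is a strict first-order decrease in expected population risk. If instead $\mu_k^2\le\sigma_k^2/(b-1)$ with strict inequality, the task is to build an adversarial loss in which updating $k$ strictly increases population risk. I plan to take per-example losses whose gradients restricted to coordinate $k$ reproduce the given batch values $g_{a,k}$, with all other coordinates zero and arbitrary curvature. The expected held-out decrease from a step along $-p_k\bar g_{B\setminus a,k}$ is then $\eta p_k$ times the off-diagonal average $\frac{1}{b(b-1)}\sum_{a\ne c}g_{a,k}g_{c,k}$. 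By the identity $\sum_{a\ne c}g_ag_c=b^2\mu^2-\sum_a g_a^2$, this equals $p_k(\mu_k^2-\sigma_k^2/(b-1))<0$, so population risk strictly increases at first order.

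I expect the main obstacle to be making the adversarial construction rigorous, i.e.\ \autoref{thm:diagonal_gate}. It requires exhibiting a data distribution whose empirical batch realizes the given moments while the population gradient on coordinate $k$ is governed by that leave-one-out average. My first attempt would use the empirical distribution of the batch itself as $\mathcal D$, so that exchangeability makes the identity exact.
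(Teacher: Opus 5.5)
\textbf{The maximization step.} This part is the paper's own diagonal-form proof. Restricting to $M=\operatorname{diag}(q_kp_k)$ makes $\operatorname{tr}(M\bm A_B)=\sum_k q_kp_k\bigl(\mu_k^2-\sigma_k^2/(b-1)\bigr)$ separable, and the mask is read off coordinatewise. Your handling of ties and of $p_k>0$ for uniqueness is correct and more careful than the paper. So is your remark that the unrestricted maximizer $M^\star$ of \autoref{thm:kernel_increment} is generically not diagonal, since $\bm A_B$ is a full matrix. Because of this, the paper's phrase ``matches the diagonal of $M^\star$'' should not be relied on; the corollary is a statement about the mask class only.

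\textbf{The gap: tightness, in both directions.} Your algebra correctly shows that switching on coordinate $k$ ($M=p_k\bm e_k\bm e_k^\top$) changes the \emph{realized} batch leave-one-out risk $\mathcal R^\eta_{\mathrm{1ex},B}$ by $-\eta p_k(\bm A_B)_{kk}+O(\eta^2)$. But ``so population risk strictly increases at first order'' does not follow. \autoref{thm:exchangeability_identity} identifies leave-one-out risk with population risk only after averaging over the batch draw given $\mathcal F_t$. Let the batch be i.i.d.\ from $\mathcal D$ and independent of $\mathcal F_t$, and write $\bm\Sigma_{\mathcal D}$ for the population covariance of per-example gradients at $\bm w_t$. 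Then
\[
\mathbb E[\bm A_B\mid\mathcal F_t]
=\nabla\mathcal L_{\mathcal D}\nabla\mathcal L_{\mathcal D}^\top+\tfrac1b\bm\Sigma_{\mathcal D}-\tfrac{1}{b-1}\cdot\tfrac{b-1}{b}\bm\Sigma_{\mathcal D}
=\nabla\mathcal L_{\mathcal D}\nabla\mathcal L_{\mathcal D}^\top\succeq 0 .
\]
So for every fixed $M\succeq 0$ the expected first-order change in population risk is $-\eta\,\nabla\mathcal L_{\mathcal D}^\top M\nabla\mathcal L_{\mathcal D}\le 0$.

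Several consequences follow. No distribution and no curvature can force a strict first-order increase; curvature does not even enter at first order. Symmetrically, a realized $(\bm A_B)_{kk}>0$ is not a strict decrease in expected population risk, so your ``if'' direction fails for the same reason. If instead $M$ is chosen from $B$, as the gate does, the held-out point leaks into $M$ and the exchangeability identity no longer applies at all. The event $(\bm A_B)_{kk}<0$ is a property of the estimator, not of the population.

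Your empirical-distribution device fails concretely. With $\mathcal D$ uniform on the batch, $\partial_k\mathcal L_{\mathcal D}=\mu_k$, and the one-step learner's expected first-order population-risk change is $-\eta p_k\mu_k^2\le 0$. The held-out-only average over the fixed batch is a sampling-without-replacement quantity, for which $(B_{-a},Z_a)\stackrel{d}{=}(S_{b-1},Z)$ fails, so it is not $\mathcal L_{\mathcal D}$.

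\textbf{What you can prove.} Without any adversarial construction, you get tightness for the realized leave-one-out risk. By the expansion \eqref{eq:one_step_expansion_main}, including a coordinate with $(\bm A_B)_{kk}<0$ strictly increases $\mathcal R^\eta_{\mathrm{1ex},B}$ at first order. Excluding one with $(\bm A_B)_{kk}>0$ strictly forfeits a first-order decrease. The paper's proof of \autoref{thm:diagonal_gate} supplies no construction for the population-risk version either; it asserts the adversarial-curvature claim in one sentence. So you are not missing an idea the paper provides, but you should state and prove the leave-one-out version instead.
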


\paragraph{Influence functions on the realized path.}
Evaluated at the full window $T$ rather than a single step, the same operator $\mathsf G_{Q_i,S}(T)$ becomes the leave-one-out displacement that controls each training point's contribution to the generalization gap, and averaging over $i$ recovers the gap exactly (\autoref{thm:self_influence}, \autoref{sec:self_influence_full}). This generalizes classical influence functions \citep{cook1982residuals,koh2017understanding}: where those linearize a Hessian and lose the path, the operator form integrates $\mathsf G$ along the realized trajectory and so remains exact under $\mathcal{O}(1)$ kernel drift.

\paragraph{From the rate to the algorithm.}
Each one-step kernel increment $\eta\Kss^{M_t}$ integrates into $\mathcal W_S^M$, so a sequence of one-step rate-maximizers is the greedy policy whose integral is the signal-channel content of the trajectory through $\mathsf G$, exactly as plain SGD is the greedy step whose integral is empirical-risk descent through $\bm D$. The diagonal cutoff $\mu_k^2>\sigma_k^2/(b-1)$ is the optimal first-order preconditioner for population risk on any diagonal base, and a streaming variance EMA $\hat{\bm s}_t$ of squared gradient deviations realizes it as a one-line change to AdamW: one extra parameter-sized state vector and a per-parameter gate that multiplies the standard moment update (\autoref{eq:soft_filter}, \autoref{alg:pop_risk}). The leave-one-out coefficient is $\alpha=1$ on the fresh-batch boundary and $\alpha=b/(n-b)$ on the finite-dataset boundary (\autoref{thm:minibatch_covariance}); soft and SNR forms of the gate and multi-epoch corrections are in \autoref{sec:variance_estimator_details}.

\begin{figure}[!tbp]
    \centering
    \safeincludegraphics[width=0.8\textwidth]{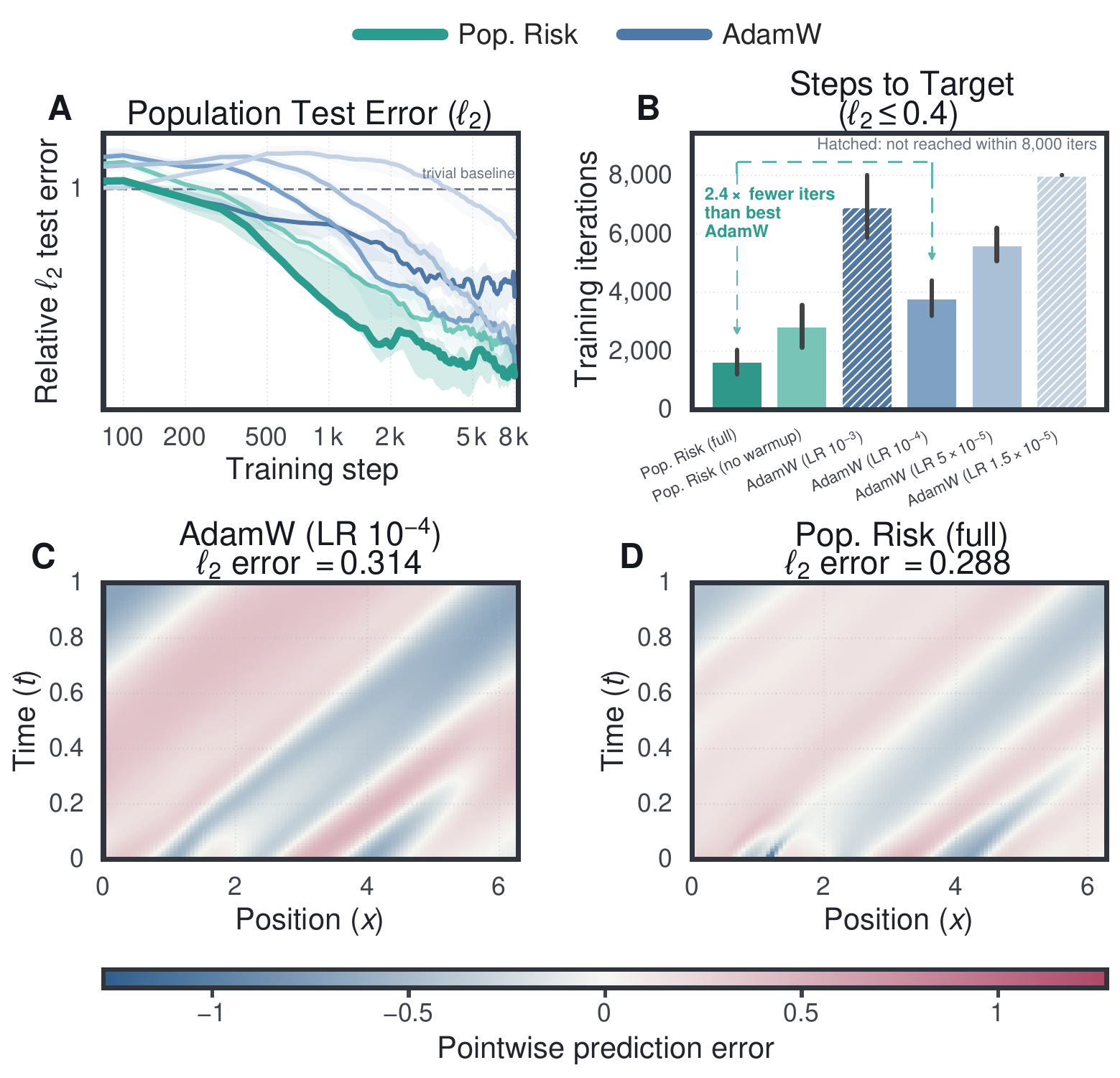}
    \caption{\textbf{Population-risk training on a noisy-IC PINN.} Periodic $u_t+\beta u_x=0$ at $\beta=5$, trained from a Gaussian-noisy initial condition. \textbf{(A)}~Relative $\ell_2$ test error vs.\ iterations. \textbf{(B)}~Iterations to $\ell_2\le 0.40$: $2.4\times$ fewer than the best learning-rate-tuned AdamW; hatched bars mark runs that did not reach the target in $8{,}000$ iterations. \textbf{(C,D)}~Pointwise error fields. Full ablation in \autoref{tab:pinn_ablation}.}
    \label{fig:pinn_convection}
\end{figure}

\begin{figure}[!tbp]
    \centering
    \safeincludegraphics[width=0.64\textwidth]{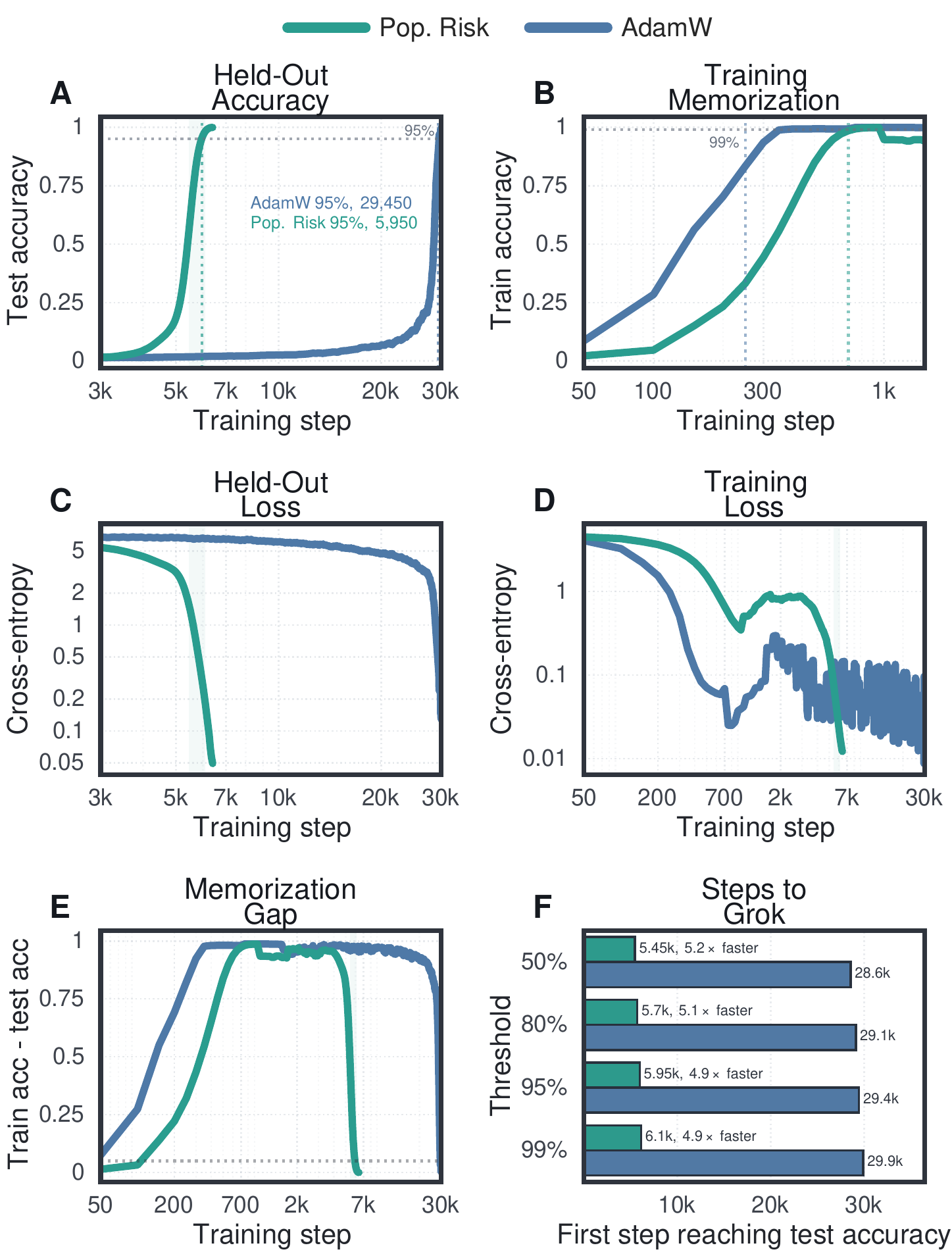}
    \caption{\textbf{Population-risk training collapses the grokking delay.} Same $2$-layer Transformer on modular division $a\cdot b^{-1}\bmod 97$ with $25\%$ training fraction. Population-risk training reaches $95\%$ held-out accuracy at step $5{,}950$ versus $29{,}450$ for AdamW ($4.9\times$ fewer steps).}
    \label{fig:grokking}
\end{figure}

\begin{figure}[!htbp]
    \centering
    \safeincludegraphics[width=0.62\textwidth]{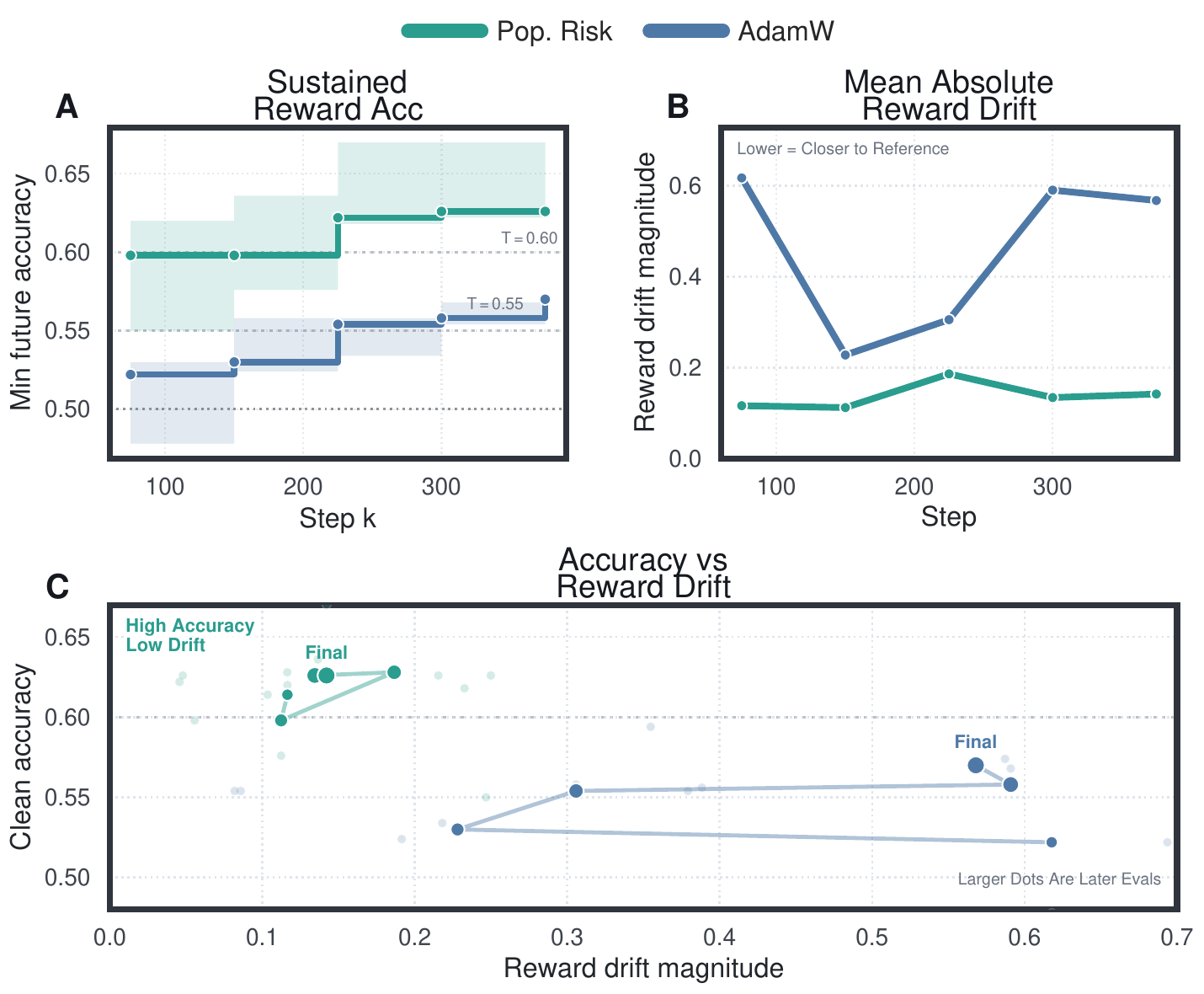}
    \caption{\textbf{Population-risk training on noisy preference alignment.} Qwen2.5-0.5B-Instruct fine-tuned with DPO on $30\%$-swapped UltraFeedback preferences, 3 seeds.
    \textbf{(A)}~Sustained reward accuracy (minimum clean-eval accuracy from each step onward); population-risk training holds above $T{=}0.60$ for the entire second half of training while AdamW only crosses $T{=}0.55$ late.
    \textbf{(B)}~Mean absolute reward drift from the reference policy.
    \textbf{(C)}~Accuracy--drift phase plot.}
    \label{fig:noisy_dpo}
\end{figure}

\paragraph{Results.}
We test the resulting rule across three regimes where empirical-risk training is known to overfit structured noise or to memorize before generalizing; architecture, data split, and optimizer hyperparameters are held fixed and only the population-risk update changes. On a PINN solving $u_t+\beta u_x=0$ at $\beta=5$ from a Gaussian-noisy initial condition, the rule reaches relative $\ell_2\le 0.40$ in $2.4\times$ fewer iterations than the best learning-rate-tuned AdamW (\autoref{fig:pinn_convection}, \autoref{tab:pinn_ablation}). On modular division $a\cdot b^{-1}\bmod 97$ at $25\%$ training fraction, where empirical-risk training is known to grok \citep{power2022grokking}, the same update reaches $95\%$ held-out accuracy at step $5{,}950$ versus $29{,}450$ for AdamW (\autoref{fig:grokking}). Fine-tuning Qwen2.5-0.5B-Instruct \citep{qwen2.5} with DPO under $30\%$ swapped UltraFeedback preferences (\autoref{fig:noisy_dpo}, \autoref{tab:noisy_dpo}) improves final reward accuracy from $0.566$ to $0.641$ while staying $3.05\times$ closer to the reference policy in mean absolute reward drift.

\section{Conclusion}\label{sec:conclusion}

As a deep network trains, its empirical NTK rotates label noise into a test-invisible reservoir, and SGD's centered fluctuation suppresses what survives in the signal channel; on the signal side, training motion determines test motion, even when the kernel drifts by $\mathcal{O}(1)$. The same operators turn a single batch into an unbiased rate of population-risk decrease, maximized through the optimizer's metric by a per-parameter gate. The frozen-kernel limit of our theory reproduces benign overfitting, double descent, implicit bias, and grokking from one spectral filter.

\bibliographystyle{plainnat}
\bibliography{refs}

\appendix

\section{Summary of Assumptions}\label{app:assumptions}

\autoref{tab:assumptions} collects every assumption used in the paper, where it enters, and what it controls. \autoref{sec:second_law} and \autoref{sec:third_law} are fully deterministic: no distributional assumption on the data.

\begin{table}[h]
\centering
\caption{Assumptions used in this paper. Rows are grouped by type: regularity conditions that hold throughout, conditions that strengthen specific results, and statistical conditions that enter only in \autoref{sec:population_risk}--\autoref{sec:pop_risk_training}. Assumptions marked with $(\star)$ are used only in the appendix.}
\label{tab:assumptions}
\small
\renewcommand{\arraystretch}{1.35}
\rowcolors{2}{white}{gray!10}
\begin{tabular}{@{}>{\raggedright\arraybackslash}p{0.30\textwidth}@{\hspace{0.7em}}>{\raggedright\arraybackslash}p{0.21\textwidth}@{\hspace{0.7em}}>{\raggedright\arraybackslash}p{0.40\textwidth}@{}}
\toprule
Assumption & Where used & Role \\
\midrule
$F(\bm w,z)$ is $C^2$ in $\bm w$ for every instance $z$
  & \autoref{sec:second_law}--\autoref{sec:pop_risk_training}
  & Chain rule for output dynamics; propagator ODE well-posed \\
$\Phi_S$ convex and $C^2$ in $\bm u$
  & \autoref{sec:second_law}--\autoref{sec:third_law}
  & Loss dissipation monotone ($\tfrac{d}{dt}\Phi_S\le 0$); Bregman divergence nonneg.\ \\
\addlinespace[5pt]
$\nabla^2\Phi_S=\bm B$ constant (loss quadratic in $\bm u$)
  & \autoref{thm:train_test_coupling}
  & $\bm R_\perp=\bm 0$: training displacement determines test displacement on the realized window. Without this, the general theory still gives a nonzero $\bm R_\perp$ bound \\
$\bm J_Q$ is $\beta_Q$-Lipschitz along the realized trajectory
  & \autoref{thm:minibatch_coherence}
  & Controls the second-order remainder in the drift--diffusion decomposition \\
$(\star)$ Compact data manifold $\mathcal M$ with $F(\bm w,\cdot)\in C^m(\mathcal M)$, $m>d_{\mathcal M}/2$
  & \autoref{thm:sobolev_remainder}
  & Fill-distance bound $\|\bm R_\perp\|_{\op}\le Ch_S^{m-d_{\mathcal M}/2}\Lambda_m$ \\
$(\star)$ Complexity measure $R\succ 0$ on output space
  & \autoref{thm:train_only_minimax_signal}
  & Defines $C_R$; ranks moved directions by loss dissipated per unit complexity \\
\addlinespace[5pt]
Training examples i.i.d.\ from $\mathcal D$ (used only through exchangeability of the $n$-tuple, plus fresh draws $Z_i'\sim\mathcal D$ independent of $S$ where stated)
  & \autoref{sec:population_risk}--\autoref{sec:pop_risk_training}
  & Defines population risk; held-out losses unbiased; martingale structure of minibatch noise \\
Current minibatch independent of optimizer history $\mathcal F_t$
  & \autoref{thm:kernel_increment}
  & One-step LOO risk is the conditional population risk on the batch boundary. Broken by multi-epoch replay; residual bounded by total variation (\autoref{app:proof_minibatch_coherence}) \\
Replace-two stability $\varepsilon_{k,n}=O(1/n)$ of the projected gradient (\autoref{eq:replace_two_stability})
  & \autoref{thm:minibatch_coherence}
  & Empirical noise mean inherits the population $O(1/n)$ cancellation rate; without it, finite-sample dependence through $\bm w_k(S)$ obstructs the bound \\
\bottomrule
\end{tabular}
\end{table}

\section{Notation}\label{app:notation}

\autoref{tab:notation} collects every mathematical object used in the main text, grouped by role: data and architecture; output trajectory and losses; time and scalar parameters; kernels and propagator; cumulative dissipation; transfer operators; population risk and influence; and optimizer state. Operators carrying a window argument $(s,T)$ are evaluated along the realized training trajectory on $[s,T]$.

\small
\renewcommand{\arraystretch}{1.25}
\begin{longtable}{@{}>{\raggedright\arraybackslash}p{0.22\textwidth}@{\hspace{0.6em}}>{\raggedright\arraybackslash}p{0.25\textwidth}@{\hspace{0.6em}}>{\raggedright\arraybackslash}p{0.45\textwidth}@{}}
\caption{Mathematical objects used in the main paper. Bold roman symbols are vectors or matrices; calligraphic and sans-serif operators ($\mathcal W$, $\mathcal P_g$, $\mathsf G$, $\mathsf C$, $\mathsf J$) act on the same spaces and are time-dependent through the realized trajectory. A superscript $M$ on an operator means the preconditioned form, in which the optimizer's preconditioner $M_t$ has been inserted at every instant.}\label{tab:notation}\\
\toprule
Symbol & Space / dimension & Meaning \\
\midrule
\endfirsthead
\multicolumn{3}{l}{\textit{Table~\ref{tab:notation} (continued).}}\\
\toprule
Symbol & Space / dimension & Meaning \\
\midrule
\endhead
\midrule
\multicolumn{3}{r}{\textit{Continued on next page.}}\\
\endfoot
\bottomrule
\endlastfoot
\multicolumn{3}{@{}l}{\textit{Data and architecture.}}\\
\rowcolor{gray!10}$\mathcal Z,\ \mathcal D$ & instance space, prob.\ measure on $\mathcal Z$ & Sample space and the unknown population law on it \\
$S=(z_1,\dots,z_n)$ & $\mathcal Z^n$ & Training set; i.i.d.\ from $\mathcal D$, used only through exchangeability \\
\rowcolor{gray!10}$Q$ & $\mathcal Z^{n_Q}$ & Test set; arbitrary and fixed \\
$n,\ b,\ d,\ p$ & $\mathbb N$ & Train size, batch size, parameter count, per-example output width \\
\rowcolor{gray!10}$F(\bm w,z)$ & $\mathbb R^d\times\mathcal Z\to\mathbb R^p$ & Network output, $C^2$ in $\bm w$ \\
$\bm w$ & $\mathbb R^d$ & Trainable parameters \\
\midrule
\multicolumn{3}{@{}l}{\textit{Output trajectory and losses.}}\\
\rowcolor{gray!10}$\bm U_S(\bm w),\ \bm U_Q(\bm w)$ & $\mathbb R^{np},\ \mathbb R^{n_Q p}$ & Stacked train and test outputs \\
$\bm u(t),\ \bm y$ & $\mathbb R^{np}$ & Output prediction along training and the (squared-loss) target \\
\rowcolor{gray!10}$\bm e_i(T)$ & $\mathbb R^p$ & Training residual at point $i$ at time $T$, $\bm U_{Q_i}^S(T)-\bm y_i$ \\
$\bm J_S(\bm w),\ \bm J_Q(\bm w)$ & $\mathbb R^{np\times d},\ \mathbb R^{n_Q p\times d}$ & Output Jacobians, $D_{\bm w}\bm U_S$ and $D_{\bm w}\bm U_Q$ \\
\rowcolor{gray!10}$\Phi_S$ & $\mathbb R^{np}\to\mathbb R$ & Convex $C^2$ training loss on outputs; squared loss is $\tfrac{1}{2n}\|\bm u-\bm y\|_2^2$ \\
$\bm g(t),\ \bm B(t)$ & $\mathbb R^{np},\ \mathbb R^{np\times np}$ & Output gradient $\nabla_{\bm u}\Phi_S$ and Hessian $\nabla^2_{\bm u}\Phi_S$ at time $t$ \\
\rowcolor{gray!10}$\ell(\bm w,z),\ \ell_i(\bm w)$ & $\mathbb R$ & Per-example loss and its evaluation at $z_i$, $\ell_i(\bm w)=\ell(\bm w,z_i)$ \\
\midrule
\multicolumn{3}{@{}l}{\textit{Time and scalar parameters.}}\\
\rowcolor{gray!10}$T,\ s,\ t,\ \tau$ & $\mathbb R_{\ge 0}$ & Training horizon and intermediate times along $[0,T]$ \\
$\eta$ & $\mathbb R_{>0}$ & Learning rate (one-step size) \\
\rowcolor{gray!10}$\alpha$ & $\{1,\ b/(n-b)\}$ & LOO coefficient: $1$ on the fresh-batch boundary, $b/(n-b)$ on the finite-dataset boundary \\
$\varepsilon,\ \epsilon$ & $\mathbb R_{>0}$ & Small constants in the soft mask numerator/denominator and the Adam denominator \\
\rowcolor{gray!10}$\lambda_{\mathrm{pop}},\ \lambda_{\mathrm{wd}}$ & $\mathbb R_{\ge 0}$ & Population-risk regularization and weight-decay coefficients \\
$\beta_1,\ \beta_2,\ \rho$ & $[0,1)$ & EMA decay rates for $\bm m_t,\ \bm v_t,\ \bm s_t$ \\
\midrule
\multicolumn{3}{@{}l}{\textit{Kernels and propagator.}}\\
\rowcolor{gray!10}$\bm K_{SS}(\bm w),\ \bm K_{QS}(\bm w)$ & $\mathbb R^{np\times np},\ \mathbb R^{n_Q p\times np}$ & Empirical tangent kernel $\bm J_S\bm J_S^\top$ and test-train kernel $\bm J_Q\bm J_S^\top$ \\
$\Kss,\ \Kqs$ & --- & Shorthand for $\bm K_{SS}$ and $\bm K_{QS}$ \\
\rowcolor{gray!10}$\Kss^M$ & $\mathbb R^{np\times np}$ & Preconditioned tangent kernel $\bm J_S M_t\bm J_S^\top$ \\
$\mathcal P_g(t,s)$ & $\mathbb R^{np\times np}$ & Output-gradient propagator: $\bm g(t)=\mathcal P_g(t,s)\bm g(s)$ \\
\midrule
\multicolumn{3}{@{}l}{\textit{Cumulative dissipation.}}\\
\rowcolor{gray!10}$\mathcal W_S(s,T)$ & $\mathbb R^{np\times np}$, $\succeq 0$ & Cumulative dissipation Gramian on the window $[s,T]$ \\
$\mathcal W_S^M(s,T)$ & $\mathbb R^{np\times np}$, $\succeq 0$ & Preconditioned dissipation Gramian; same construction with $\Kss^M$ in the integrand \\
\rowcolor{gray!10}$d\mathcal W_t^M$ & $\mathbb R^{np\times np}$ & Channel increment at time $t$, $\mathcal P_g(t,s)^\top \bm J_S(t) M_t \bm J_S(t)^\top \mathcal P_g(t,s) dt$ \\
$\bm P_{>\varepsilon}(s,T),\ \bm P_{\le\varepsilon}(s,T)$ & $\mathbb R^{np\times np}$ & Spectral projectors of $\mathcal W_S$ onto the signal channel and reservoir \\
\midrule
\multicolumn{3}{@{}l}{\textit{Transfer operators.}}\\
\rowcolor{gray!10}$\bm D(s,T),\ \mathsf C_S$ & $\mathbb R^{np\times np}$ & Train displacement integral and its dissipation-normalized form $\bm D\mathcal W_S^{\dagger/2}$ \\
$\mathsf G_Q(T,s),\ \mathsf C_Q$ & $\mathbb R^{n_Q p\times np}$ & Test transfer integral and its dissipation-normalized form $\mathsf G_Q\mathcal W_S^{\dagger/2}$ \\
\rowcolor{gray!10}$\bm A_\circ,\ \bm R_\perp$ & $\mathbb R^{n_Q p\times np}$ & Optimal linear train-to-test predictor $\mathsf C_Q\mathsf C_S^\dagger$ and its irreducible remainder \\
\midrule
\multicolumn{3}{@{}l}{\textit{Population risk and influence.}}\\
\rowcolor{gray!10}$\mathcal L_{\mathcal D}(\bm w),\ \widehat{\mathcal L}_S^\Psi$ & $\mathbb R$ & Population risk under loss $\Psi$ and its empirical estimate on $S$ \\
$\widehat L_B(\bm w)$ & $\mathbb R$ & Empirical loss on a minibatch $B$, $\tfrac{1}{b}\sum_{a\in B}\ell_a(\bm w)$ \\
\rowcolor{gray!10}$\mathcal L_{\mathrm{pop}}^\Psi(T,S)$ & $\mathbb R$ & First-order training-only estimate of population risk \\
$\mathcal R_{\mathrm{1ex}}^\eta,\ \mathcal R_{\mathrm{1ex},B}^\eta$ & $\mathbb R$ & One-step LOO risks on the full dataset and on a fresh batch \\
\rowcolor{gray!10}$\Psi_z$ & $\mathbb R^p\to\mathbb R$ & Per-example test loss at instance $z$ \\
$\bm C_n$ & $\mathbb R^{n\times n}$ & Centering projector $\bm I-\tfrac{1}{n}\bm 1\bm 1^\top$ \\
\rowcolor{gray!10}$\bm \nu^{(i)}$ & $\mathbb R^n$ & Mass-preserving delete-one direction, $\tfrac{n}{n-1}\bm C_n\bm e_i$ \\
$\mathsf J_\Psi(T,S),\ \mathsf J_\Psi^M$ & $\mathbb R^{n\times n}$ & Influence matrix and its preconditioned counterpart; $\mathsf J_{ij}$ is the linearized effect of downweighting $j$ on $\Psi$ at $i$ \\
\midrule
\multicolumn{3}{@{}l}{\textit{Optimizer state.}}\\
\rowcolor{gray!10}$\mathcal F_t$ & $\sigma$-algebra & Optimizer history up to step $t$ \\
$M_t,\ \bm P_t$ & $\mathbb R^{d\times d}$, $\succeq 0$ & Optimizer preconditioner and its diagonal restriction $\operatorname{diag}(p_k)$ \\
\rowcolor{gray!10}$\hat{\bm g}_k=\bm \mu_k+\bm \xi_k$ & $\mathbb R^d$ & Minibatch gradient as conditional mean plus zero-mean fluctuation \\
$\bm g_a,\ \bar{\bm g}_B$ & $\mathbb R^d$ & Per-example gradient $\nabla_{\bm w}\ell(\bm w_t,z_a)$ and batch mean $b^{-1}\sum_a \bm g_a$ \\
\rowcolor{gray!10}$\bm c_a,\ \bm \Sigma_B$ & $\mathbb R^d,\ \mathbb R^{d\times d}$ & Centered per-example gradient $\bm g_a-\bar{\bm g}_B$ and batch covariance $b^{-1}\sum_a \bm c_a\bm c_a^\top$ \\
$\mu_k,\ \sigma_k^2$ & $\mathbb R$ & Per-parameter mean $(\bar{\bm g}_B)_k$ and variance $(\bm \Sigma_B)_{kk}$ \\
\rowcolor{gray!10}$\bm A_B$ & $\mathbb R^{d\times d}$ & Off-diagonal rate matrix $\bar{\bm g}_B\bar{\bm g}_B^\top-\tfrac{1}{b-1}\bm \Sigma_B$ \\
$\Omega_B(\bm P_t)$ & $\mathbb R$ & Off-diagonal exchangeable rate $\operatorname{tr}(\bm P_t \bm A_B)$ \\
\rowcolor{gray!10}$q_k$ & $[0,1]$, per parameter & Population-safety mask; binary form is $\bm 1\{\mu_k^2>\sigma_k^2/(b-1)\}$ \\
$\bm m_t,\ \bm v_t,\ \bm s_t$ & $\mathbb R^d$ & Raw EMAs of $\bm g_t$, $\bm g_t^{\odot 2}$, and $(\bm g_t-\bm m_{t-1})^{\odot 2}$ \\
\rowcolor{gray!10}$\hat{\bm m}_t,\ \hat{\bm v}_t,\ \hat{\bm s}_t$ & $\mathbb R^d$ & Bias-corrected versions of $\bm m_t, \bm v_t, \bm s_t$ used in the update rule \\
\end{longtable}

\section{Output-Space Dynamics: Proofs}\label{sec:proofs_dynamics}\label{sec:proofs_dynamics_alias}

\paragraph{Intuition.} Under gradient flow the loss gradient decays at rates set by a tangent kernel weighted by loss curvature, and every quantity of interest (training displacement, test displacement, dissipation) is the integral of how that vector flows. The proofs below extend the neural tangent kernel framework of \citet{jacot2018neural,du2019gradient} to the feature-learning regime where the kernel itself evolves with the parameters, and isolate the operators used in the train-test coupling and population-risk arguments later.

\subsection{Deriving the Operators from Output Dynamics}\label{sec:operator_derivation}

\paragraph{Intuition.} The operators $\bm D_S$, $\bm G_Q$, and $\mathcal W_S$ used throughout \autoref{sec:second_law} and \autoref{sec:third_law} derive from integrating the output dynamics of \autoref{thm:output_dynamics} over time. Here, we write that integration explicitly so the dissipation Gramian's spectrum has a direct reading: the eigenvalue along $\bm \psi_j$ is the total integrated squared reachability of $\bm \psi_j$ across the window.

\paragraph{Train and test displacement integrals.} The output dynamics give $\partial_\tau \bm u(\tau) = -\Kss(\tau)\bm g(\tau)$, and the propagator gives $\bm g(\tau) = \mathcal P_g(\tau,s)\bm g(s)$. Substituting and integrating over $[s,T]$, with $\bm g(s)$ constant in $\tau$,
\begin{align}
\bm u(T) - \bm u(s)
&=
-\int_s^T \Kss(\tau)\mathcal P_g(\tau,s)\bm g(s)d\tau
=
-\bm D_S(T,s)\bm g(s),
\\
\bm D_S(T,s)
&\triangleq
\int_s^T \Kss(\tau)\mathcal P_g(\tau,s)d\tau.
\end{align}
The same derivation with the test-side dynamics $\partial_\tau \bm U_Q = -\Kqs\bm g$ and $\Kqs(\tau) = \bm J_Q(\tau)\bm J_S(\tau)^\top$ in place of $\Kss(\tau)$ yields
\begin{align}
\bm U_Q(T) - \bm U_Q(s)
&=
-\bm G_Q(T,s)\bm g(s),
\\
\bm G_Q(T,s)
&\triangleq
\int_s^T \Kqs(\tau)\mathcal P_g(\tau,s)d\tau.
\end{align}
Same integral, same propagator, different kernel: $\bm D_S$ records how training outputs moved, $\bm G_Q$ how test outputs moved.

\paragraph{Cumulative dissipation as a quadratic form.} Substituting $\bm g(\tau) = \mathcal P_g(\tau,s)\bm g(s)$ into the instantaneous dissipation rate from \autoref{thm:output_dynamics}, integrating over $[s,T]$, and pulling out $\bm g(s)$,
\begin{align}
\Phi_S(\bm u(s)) - \Phi_S(\bm u(T))
&=
\int_s^T \bm g(s)^\top \mathcal P_g(\tau,s)^\top \Kss(\tau)\mathcal P_g(\tau,s)\bm g(s)d\tau
\\
&=
\bm g(s)^\top \mathcal W_S(s,T)\bm g(s).
\end{align}
Total loss dissipated over $[s,T]$ is therefore the quadratic form of $\mathcal W_S$ evaluated at the initial gradient. For an arbitrary direction $\bm h \in \mathbb R^{np}$, factoring $\Kss(\tau) = \bm J_S(\tau)\bm J_S(\tau)^\top$ rewrites the quadratic form as a squared-norm integrand,
\begin{align}\label{eq:W_S_squared_norm_app}
\bm h^\top \mathcal W_S(s,T)\bm h
=
\int_s^T \|\bm J_S(\tau)^\top \mathcal P_g(\tau,s)\bm h\|_2^2 d\tau
\ge
0,
\end{align}
which gives $\mathcal W_S \succeq 0$ from the integrand and reads $\bm h^\top \mathcal W_S \bm h$ as the dissipation that direction $\bm h$ \emph{would have} experienced.

\paragraph{Reading off the eigenvalues.} The dissipation Gramian is symmetric PSD, so it has orthonormal eigenvectors $\bm\psi_j$ with eigenvalues $\lambda_j \ge 0$, $\mathcal W_S\bm\psi_j = \lambda_j \bm\psi_j$. Left-multiplying by $\bm\psi_j^\top$ and using $\bm\psi_j^\top \bm\psi_j = 1$, then specializing \eqref{eq:W_S_squared_norm_app} to $\bm h = \bm\psi_j$,
\begin{align}
\lambda_j
=
\bm\psi_j^\top \mathcal W_S(s,T)\bm\psi_j
=
\int_s^T \|\bm J_S(\tau)^\top \mathcal P_g(\tau,s)\bm\psi_j\|_2^2 d\tau.
\end{align}
The eigenvalue along $\bm\psi_j$ equals the total integrated squared reachability of $\bm\psi_j$ over the window. A direction $\bm h$ might align with a large eigenvalue of $\Kss(\tau)$ at one time and a small eigenvalue at another, so the instantaneous spectrum does not tell the whole story; the cumulative integral $\mathcal W_S$ records the total integrated effect over the entire training window.

\subsection{Scaled Tangent Kernel and Timescales}\label{sec:tangent_kernel_timescales}

\paragraph{Intuition.} The decay rate of the loss gradient is set by the kernel weighted by the curvature of the loss. We make this precise by introducing a scaled tangent kernel whose eigenvalues read off as relaxation rates. Directions with the largest eigenvalues are fit quickly; the smallest eigenvalues control the slowest modes, and the gap between them sets training timescales.

\autoref{thm:output_dynamics} shows that $\bm g$ decays according to $\partial_t \bm g = -\bm B\Kss \bm g$. To read off timescales, scale the kernel by the loss curvature:
\begin{align}
\widetilde K_B&=\bm B^{1/2}\Kss \bm B^{1/2}\quad\text{(general losses),} &
\bm M(t)&\triangleq \Kss(\bm w(t))/n\quad\text{(squared loss),}
\end{align}
with eigenvalues
\begin{align}\label{eq:mobility}
\lambda_1(t) \ge \lambda_2(t) \ge \cdots \ge \lambda_{np}(t) \ge 0
\end{align}
and orthonormal eigenvectors $\{\bm v_i(t)\}$. For squared loss, $\bm g=\bm r/n$ and $\bm B=\bm I/n$, so the residual obeys $\partial_t \bm r=-\bm M(t)\bm r$ and its component along $\bm v_i(t)$ decays at rate $\lambda_i(t)$. In practice the spectrum has extreme separation: the condition number among nonzero eigenvalues is typically enormous, creating a sharp split between fast- and slow-decaying directions.

\begin{proof}[Proof of \autoref{thm:output_dynamics} and \autoref{cor:test_trajectory}]
The chain rule applied to $L_S = \Phi_S \circ \bm U_S$ gives
\begin{align}
\partial_t \bm u &= \bm J_S(\bm w)\partial_t \bm w = -\bm J_S \bm J_S^\top \bm g = -\Kss\bm g, \\
\partial_t \bm g &= \nabla^2\Phi_S(\bm u(t))\partial_t \bm u = -\bm B(t)\Kss(t)\bm g(t), \\
\partial_t \bm w &= -\bm J_S(\bm w)^\top \bm g,
\intertext{proving \eqref{eq:output_dynamics_dynamics}. Dissipation follows by the same substitution:}
\frac{d}{dt}\Phi_S(\bm u(t))
&= \langle \bm g(t),\partial_t \bm u(t)\rangle
= -\bm g(t)^\top \Kss(t)\bm g(t)
= -\|\bm J_S^\top \bm g\|_2^2
= -\|\partial_t \bm w\|_2^2,
\end{align}
which proves \eqref{eq:output_dynamics_dissipation}. \autoref{eq:force_propagator} is the linear ODE representation of $\partial_t\bm g = -\bm B\Kss\bm g$. For the test trajectory and the propagator complement,
\begin{align}
\partial_t \bm U_Q
&=
\bm J_Q(\bm w)\partial_t \bm w
=
-\Kqs\bm g,
\\
\frac{d}{dT}\mathsf{F}_{SS}(T)
&=
\bm B(T)\Kss(T)\mathcal{P}_g(T,0),
\\
\frac{d}{dT}\mathcal{P}_g(T,0)
&=
-\bm B(T)\Kss(T)\mathcal{P}_g(T,0).
\end{align}
Integrating the first display together with \eqref{eq:force_propagator} gives \eqref{eq:test_trajectory}. The second and third lines show that $\mathsf{F}_{SS}(T)+\mathcal{P}_g(T,0)$ has zero $T$-derivative; at $T=0$ it equals $\bm I$, proving \eqref{eq:train_complement}.

For \eqref{eq:bregman_loss_gap}, applying the Fenchel--Young equality $\Phi_S(\bm u)+\Phi_S^*(\bm g)=\langle \bm u,\bm g\rangle$ (valid when $\bm g=\nabla \Phi_S(\bm u)$) and $\Phi_S^*(0)=-\Phi_S(\bar{\bm u})$:
\begin{align}
D_{\Phi_S^*}(0,\bm g)
&= \Phi_S^*(0)-\Phi_S^*(\bm g)-\langle \nabla \Phi_S^*(\bm g),-\bm g\rangle \\
&= \Phi_S^*(0)-\Phi_S^*(\bm g)+\langle \bm u,\bm g\rangle \\
&= \Phi_S^*(0)+\Phi_S(\bm u) = \Phi_S(\bm u)-\Phi_S(\bar{\bm u}).
\end{align}
\end{proof}

\begin{proof}[Proof of \autoref{thm:reservoir_invisibility}]
For $\bm h\in\ker(\bm W)$, positive semidefiniteness gives
\begin{align}
0
=
\bm h^\top \bm W \bm h
=
\int_s^T \bigl\|\bm J_S(\tau)^\top \mathcal P_g(\tau,s)\bm h\bigr\|_2^2 d\tau,
\end{align}
so $\mathcal P_g(\tau,s)\bm h\in\ker \bm J_S(\tau)^\top = \ker \Kss(\tau)$ for a.e.\ $\tau$. Since $\Kqs(\tau)=\bm J_Q(\tau)\bm J_S(\tau)^\top$ annihilates $\ker \Kss(\tau)$,
\begin{align}
\bm G\bm h
=
\int_s^T \Kqs(\tau)\mathcal P_g(\tau,s)\bm h d\tau
=0,
\end{align}
hence $\ker(\bm W)\subseteq \ker(\bm G)$.

Since $\bm G$ vanishes on $\ker(\bm W)$,
\begin{align}
\bm G&=\bm G \bm W\bm W^{\dagger}=\bm G \psinv{\bm W}\bm W^{1/2},
\\
\bm G^\top \bm G
&=
\bm W^{1/2}\bigl(\psinv{\bm W}\bm G^\top \bm G \psinv{\bm W}\bigr)\bm W^{1/2}
=
\bm W^{1/2}\Gamma_Q(s,T)\bm W^{1/2}.
\end{align}
Since $\Gamma_Q(s,T)\succeq 0$ and is supported on $\range(\bm W)$,
\begin{align}
0\preceq \Gamma_Q(s,T)\preceq \|\Gamma_Q(s,T)\|_{\op} \bm W\bm W^\dagger.
\end{align}

For any bounded function $\varphi$, since $\varphi(\bm W)$ commutes with $\bm W^{1/2}$,
\begin{align}\label{eq:pathwise_filtered_bound}
\|\bm G\varphi(\bm W)\bm h\|_2^2
&=
\langle \varphi(\bm W)\bm h,\bm G^\top \bm G\varphi(\bm W)\bm h\rangle
=
\Big\langle \bm W^{1/2}\varphi(\bm W)\bm h,\Gamma_Q(s,T)\bm W^{1/2}\varphi(\bm W)\bm h\Big\rangle
\nonumber\\
&\le
\|\Gamma_Q(s,T)\|_{\op}\|\bm W^{1/2}\varphi(\bm W)\bm h\|_2^2.
\end{align}
The kernel inclusion and the equality $\mathsf G_Q=\mathsf C_Q\mathcal W_S^{1/2}$ follow from $\bm G=\bm G\bm W\bm W^\dagger=\bm G\psinv{\bm W}\bm W^{1/2}$.
\end{proof}

\begin{proof}[Proof of \autoref{cor:low_dissipation}]
Apply \autoref{thm:reservoir_invisibility} with $\varphi(\lambda)=\mathbf 1_{\{0\}}(\lambda)$ and $\varphi(\lambda)=\mathbf 1_{[0,\varepsilon]}(\lambda)$; the mobility lower bound is the contrapositive of \eqref{eq:pathwise_filtered_bound} with $\varphi\equiv 1$.
\end{proof}

\subsection{Deferred Corollaries from Section~\ref{sec:second_law}}\label{sec:dynamics_corollaries}

\paragraph{Intuition.} This subsection records two consequences of the output-dynamics theorem that we reuse later. The first writes the test trajectory through the loss-gradient propagator and pairs it with a complementary relation that splits the cumulative training operator into two pieces. The second translates a small-dissipation direction into a quantitative bound on test displacement.

For a convex differentiable $\psi$, the Bregman divergence is
\begin{align}
D_{\psi}(a,b)\triangleq \psi(a)-\psi(b)-\langle \nabla \psi(b),a-b\rangle.
\end{align}

The first corollary expresses the test trajectory through the propagator and yields the complementary relation $\mathsf F_{SS}+\mathcal P_g=\bm I$ used throughout the proofs.

\begin{corollary}[Test Trajectory and Propagator Complement]\label{cor:test_trajectory}
For any test set $Q$,
\begin{align}\label{eq:test_trajectory}
\bm U_Q(t)
=
\bm U_Q(0)
-
\int_0^t \Kqs(\tau)\mathcal{P}_g(\tau,0)\bm g(0) d\tau.
\end{align}
Define the train-on-train cumulative output operator
\begin{align}\label{eq:force_completion_operator}
\mathsf{F}_{SS}(T)
\triangleq
\int_0^T \bm B(\tau)\Kss(\tau)\mathcal{P}_g(\tau,0) d\tau.
\end{align}
Then the complement holds:
\begin{align}\label{eq:train_complement}
\mathsf{F}_{SS}(T)+\mathcal{P}_g(T,0)=\bm I.
\end{align}
If $\Phi_S^*$ is differentiable at $\bm g(t)$ and $\bar{\bm u}$ minimizes $\Phi_S$ with
$\nabla \Phi_S(\bar{\bm u})=0$, then
\begin{align}\label{eq:bregman_loss_gap}
\Phi_S(\bm u(t))-\Phi_S(\bar{\bm u})=D_{\Phi_S^*}(0,\bm g(t)).
\end{align}
\end{corollary}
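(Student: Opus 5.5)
The three claims follow from differentiating in time along the realized gradient-flow trajectory and integrating; no kernel-stability hypothesis enters.

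\emph{Test trajectory \eqref{eq:test_trajectory}.} By the chain rule, $\partial_\tau \bm U_Q(\tau)=\bm J_Q(\bm w(\tau))\partial_\tau\bm w=-\bm J_Q\bm J_S^\top\bm g(\tau)=-\Kqs(\tau)\bm g(\tau)$. The output gradient solves the linear ODE $\partial_\tau\bm g=-\bm B\Kss\bm g$, and $\mathcal P_g$ is its fundamental solution \eqref{eq:force_propagator}. Hence $\bm g(\tau)=\mathcal P_g(\tau,0)\bm g(0)$ by uniqueness for linear ODEs with continuous coefficients. Continuity holds because $F$ is $C^2$ and $\Phi_S$ is $C^2$ along the $C^1$ trajectory. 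Integrating over $[0,t]$ and pulling the constant $\bm g(0)$ out of the integral gives the formula.

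\emph{Propagator complement \eqref{eq:train_complement}.} I would differentiate both pieces in $T$. By the fundamental theorem of calculus, $\tfrac{d}{dT}\mathsf F_{SS}(T)=\bm B(T)\Kss(T)\mathcal P_g(T,0)$. Equation \eqref{eq:force_propagator} with $s=0$ gives $\tfrac{d}{dT}\mathcal P_g(T,0)=-\bm B(T)\Kss(T)\mathcal P_g(T,0)$. The two derivatives cancel, so $\mathsf F_{SS}(T)+\mathcal P_g(T,0)$ is constant in $T$. At $T=0$ it equals $\bm 0+\bm I$, which proves the identity. Equivalently, it is the integrated form of the propagator ODE.

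\emph{Bregman identity \eqref{eq:bregman_loss_gap}.} I would use convex duality.
\begin{itemize}
\item Since $\bm g(t)=\nabla\Phi_S(\bm u(t))$ and $\Phi_S$ is convex, Fenchel--Young holds with equality: $\Phi_S(\bm u)+\Phi_S^*(\bm g)=\langle\bm u,\bm g\rangle$.
\item $\bm u\in\partial\Phi_S^*(\bm g)$, so under the differentiability assumption $\nabla\Phi_S^*(\bm g)=\bm u$.
\item Since $\bar{\bm u}$ minimizes $\Phi_S$, $\Phi_S^*(0)=\sup_{\bm v}(-\Phi_S(\bm v))=-\Phi_S(\bar{\bm u})$.
\end{itemize}
Expanding the definition,
\[
D_{\Phi_S^*}(0,\bm g)=\Phi_S^*(0)-\Phi_S^*(\bm g)+\langle\bm u,\bm g\rangle=-\Phi_S(\bar{\bm u})+\Phi_S(\bm u).
\]

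The main obstacle is the Bregman step, specifically justifying $\nabla\Phi_S^*(\bm g)=\bm u$. If $\Phi_S^*$ is differentiable at $\bm g$ but $\Phi_S$ is not strictly convex, $\partial\Phi_S^*(\bm g)$ is a singleton containing $\bm u$, so the gradient is still $\bm u$. The minimizer assumption is what makes $\Phi_S^*(0)$ finite. The other two parts are routine once existence and uniqueness of $\mathcal P_g$ are granted.
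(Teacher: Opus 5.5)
Your proposal is correct and follows the paper's proof essentially step for step. The test trajectory comes from the chain rule plus $\bm g(\tau)=\mathcal P_g(\tau,0)\bm g(0)$; the complement from the vanishing $T$-derivative of $\mathsf F_{SS}(T)+\mathcal P_g(T,0)$ together with its value $\bm I$ at $T=0$; and the Bregman identity from the Fenchel--Young equality with $\Phi_S^*(0)=-\Phi_S(\bar{\bm u})$. Your explicit argument that $\nabla\Phi_S^*(\bm g)=\bm u$, via $\bm u\in\partial\Phi_S^*(\bm g)$ and the differentiability hypothesis, fills in a step the paper uses without stating it.
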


The second corollary makes the small-dissipation principle quantitative: whenever $\mathcal W_S$ is small along a direction, $\bm G$ is small there too, with the same constant.

\begin{corollary}[Low-Loss-dissipation Bound]\label{cor:low_dissipation}
Under the notation of \autoref{def:test_transfer},
\begin{align}
\bm GP_{\{0\}}(s,T)&=0, \label{eq:pathwise_hard_reservoir}\\
\|\bm GP_{[0,\varepsilon]}(s,T)\bm h\|_2
&\le
\|\Gamma_Q(s,T)\|_{\op}^{1/2}\sqrt{\varepsilon}\|P_{[0,\varepsilon]}(s,T)\bm h\|_2. \label{eq:pathwise_soft_reservoir}
\end{align}
Equivalently, every direction producing terminal test displacement at least $\tau$ obeys
\begin{align}\label{eq:pathwise_mobility_lower_bound}
\|\bm G\bm h\|_2\ge \tau
\Longrightarrow
\bm h^\top \bm W \bm h\ge \tau^2/\|\Gamma_Q(s,T)\|_{\op}.
\end{align}
\end{corollary}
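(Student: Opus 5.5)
The statement to prove is \autoref{cor:low_dissipation}. I would derive all three displays from the factorization established in the proof of \autoref{thm:reservoir_invisibility}. That proof gives
\begin{align}
\bm G^\top\bm G=\bm W^{1/2}\Gamma_Q(s,T)\bm W^{1/2},
\qquad
\Gamma_Q(s,T)\triangleq\psinv{\bm W}\bm G^\top\bm G\psinv{\bm W}\succeq 0,
\end{align}
together with the filtered bound \eqref{eq:pathwise_filtered_bound}, valid for any bounded Borel function $\varphi$ of $\bm W$:
\begin{align}
\|\bm G\varphi(\bm W)\bm h\|_2^2\le\|\Gamma_Q(s,T)\|_{\op}\,\|\bm W^{1/2}\varphi(\bm W)\bm h\|_2^2 .
\end{align}
The whole corollary then reduces to choosing $\varphi$ and evaluating the right-hand side spectrally.

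For \eqref{eq:pathwise_hard_reservoir}, I take $\varphi=\mathbf 1_{\{0\}}$, so that $\varphi(\bm W)=P_{\{0\}}(s,T)$ is the orthogonal projector onto $\ker\bm W$. Then $\bm W^{1/2}P_{\{0\}}=0$, and the bound forces $\bm GP_{\{0\}}\bm h=0$ for every $\bm h$. Alternatively, this follows directly from the kernel inclusion $\ker\bm W\subseteq\ker\bm G$ in \autoref{thm:reservoir_invisibility}.

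For \eqref{eq:pathwise_soft_reservoir}, I take $\varphi=\mathbf 1_{[0,\varepsilon]}$. In the orthonormal eigenbasis $\{\bm\psi_j\}$ of $\bm W$ we have
\begin{align}
\|\bm W^{1/2}P_{[0,\varepsilon]}\bm h\|_2^2=\sum_{\lambda_j\le\varepsilon}\lambda_j\langle\bm\psi_j,\bm h\rangle^2\le\varepsilon\,\|P_{[0,\varepsilon]}\bm h\|_2^2 .
\end{align}
Substituting this into the filtered bound and taking square roots gives the claim.

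For \eqref{eq:pathwise_mobility_lower_bound}, I take $\varphi\equiv1$, which gives $\|\bm G\bm h\|_2^2\le\|\Gamma_Q\|_{\op}\,\bm h^\top\bm W\bm h$. If $\Gamma_Q\neq 0$, the contrapositive yields the implication: $\|\bm G\bm h\|_2\ge\tau$ forces $\bm h^\top\bm W\bm h\ge\tau^2/\|\Gamma_Q\|_{\op}$. If $\Gamma_Q=0$, then $\bm G^\top\bm G=\bm W^{1/2}\Gamma_Q\bm W^{1/2}=0$, so $\bm G=0$. The hypothesis $\|\bm G\bm h\|_2\ge\tau$ is then impossible for $\tau>0$, and the implication holds vacuously. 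This degenerate case is worth stating explicitly in the proof.

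The only step needing care is confirming that the factorization really holds, that is, $\bm G=\bm G\psinv{\bm W}\bm W^{1/2}$. This requires $\bm G$ to vanish on $\ker\bm W$, which is exactly the kernel inclusion already proven in \autoref{thm:reservoir_invisibility}. With that in hand, there is no real obstacle: everything else is spectral calculus for the symmetric PSD matrix $\bm W$, using that $\varphi(\bm W)$ commutes with $\bm W^{1/2}$.
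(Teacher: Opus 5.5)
Your proof is correct and matches the paper's argument: the paper also obtains all three displays by specializing the filtered bound \eqref{eq:pathwise_filtered_bound} from the proof of \autoref{thm:reservoir_invisibility} to $\varphi=\mathbf 1_{\{0\}}$, $\varphi=\mathbf 1_{[0,\varepsilon]}$, and $\varphi\equiv 1$. Two details the paper leaves implicit are worked out in your version: the spectral estimate $\|\bm W^{1/2}P_{[0,\varepsilon]}\bm h\|_2^2\le\varepsilon\|P_{[0,\varepsilon]}\bm h\|_2^2$, and the degenerate case $\Gamma_Q(s,T)=0$, where $\bm G=0$ and the implication holds vacuously for $\tau>0$.
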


\section{Minibatch Drift--Diffusion: Proof}\label{app:proof_minibatch_coherence}

\paragraph{Intuition.} SGD's effect on the test outputs splits into two pieces of very different sizes. The mean (drift) part of the gradient produces a contribution that, in the worst case, accumulates linearly in $T$ along directions the optimizer can actually see; the zero-mean fluctuation (diffusion) part is a martingale and, by orthogonality of its increments, only adds up like $\sqrt{T}$. Signal directions, where the drift is nonzero, dominate over time, while pure-noise directions are washed out as $\sqrt{T}/T\to 0$. The proof is a Taylor expansion of the test prediction across one optimizer step, plus a martingale variance computation, plus a replace-two argument needed to handle the dependence of the iterates on the training set.

We prove that on a signal direction SGD's drift accumulates linearly in $T$, while on a noise direction it grows only like $\sqrt{T}$.

\paragraph{Proof of \autoref{thm:minibatch_coherence}.}
Taylor-expanding the test prediction over a single SGD step,
\begin{align}
\bm U_Q(\bm w_{k+1})-\bm U_Q(\bm w_k)
&=
\bm J_Q(\bm w_k)(\bm w_{k+1}-\bm w_k)+\bm r_{Q,k},
\\
\|\bm r_{Q,k}\|_2 &\le \tfrac{\beta_Q}{2}\|\bm w_{k+1}-\bm w_k\|_2^2.
\end{align}
Substituting $\bm w_{k+1}-\bm w_k=-\eta \bm M_k(\bm\mu_k+\bm\xi_k)$ and summing yields \eqref{eq:drift_diffusion_decomposition}. Since $\bm L_{Q,k},\bm\mu_k$ are $\mathcal F_k$-measurable and $\mathbb E[\bm\xi_k\mid\mathcal F_k]=\bm 0$, the increments $\eta \bm L_{Q,k}\bm\xi_k$ are martingale differences, so
\begin{align}
\mathbb E\Bigl\|\eta\sum_{k=0}^{N-1}\bm L_{Q,k}\bm\xi_k\Bigr\|_2^2
&=
\eta^2\sum_{k=0}^{N-1}\mathbb E \operatorname{tr}\bigl(\bm L_{Q,k}\bm\Sigma_k^{(b)}\bm L_{Q,k}^\top\bigr)
\\
&\le \frac{\eta T}{b}\bar V_T,
\end{align}
where the bound uses
\begin{align}
\operatorname{tr}(\bm L_{Q,k}\bm\Sigma_k^{(b)}\bm L_{Q,k}^\top)
\le
\frac{V_k}{b},
\qquad
\bar V_T=\tfrac{1}{T}\sum_k\eta V_k,
\end{align}
and the drift bound is the deterministic estimate
\begin{align}
\sum_k\eta\|\bm L_{Q,k}\bm\mu_k\|_2
\le
T\sup_k\|\bm L_{Q,k}\bm\mu_k\|_2.
\end{align}

For the empirical noise mean, the summands of
\begin{align}
\bm\Pi\bm\mu_k
=
\tfrac{1}{n}\sum_i\bm\Pi\nabla_{\bm w}\ell(\bm w_k(S),Z_i)
\end{align}
are not independent because $\bm w_k(S)$ depends on every $Z_i$, so the cross terms must be decoupled by replacing the two examples that appear in each one. Let $S^{(ij)}$ be the dataset obtained from $S$ by replacing $Z_i,Z_j$ with independent fresh draws from $\mathcal D$, write $\bm w_k^{(ij)}\triangleq\bm w_k(S^{(ij)})$, and assume the bounded second moment $V_k$ on the projected gradient together with the replace-two defect
\begin{align}\label{eq:replace_two_stability}
\varepsilon_{k,n}
&\triangleq
\sup_{i\ne j}\Bigl(\mathbb E\bigl\|\bm\Pi\nabla_{\bm w}\ell(\bm w_k(S),Z_i)
\nonumber\\
&\qquad-\bm\Pi\nabla_{\bm w}\ell(\bm w_k^{(ij)},Z_i)\bigr\|_2^2\Bigr)^{1/2}.
\end{align}
Then $\bm w_k^{(ij)}$ is independent of $(Z_i,Z_j)$, so fresh-sample centering gives
\begin{align}
\mathbb E\langle\bm\Pi\nabla_{\bm w}\ell(\bm w_k^{(ij)},Z_i),\bm\Pi\nabla_{\bm w}\ell(\bm w_k^{(ij)},Z_j)\rangle=0.
\end{align}
Writing
\begin{align}
\bm\Pi\nabla_{\bm w}\ell(\bm w_k(S),Z_i)
=
\bm\Pi\nabla_{\bm w}\ell(\bm w_k^{(ij)},Z_i)
+
\bm r_i^{(ij)}
\end{align}
and expanding the cross inner product, Cauchy--Schwarz with \eqref{eq:replace_two_stability} bounds the three residual terms by $\sqrt{V_k}\varepsilon_{k,n}$, $\sqrt{V_k}\varepsilon_{k,n}$, and $\varepsilon_{k,n}^2$, so summing diagonal and off-diagonal contributions yields
\begin{align}\label{eq:exchange_noise_mean}
\mathbb E\|\bm\Pi\bm\mu_k\|_2^2 &\le \tfrac{V_k}{n}+2\sqrt{V_k}\varepsilon_{k,n}+\varepsilon_{k,n}^2.
\end{align}
Under $\varepsilon_{k,n}=O(1/n)$ this is $O(1/n)$, and the drift contribution becomes $T^2\bar V_T/n$.\qed

\paragraph{Multi-epoch caveat.}
The one-step LOO result in \autoref{thm:kernel_increment} requires $B$ to be independent of the optimizer history $\mathcal F_t$. In streaming or online training this holds by construction; in multi-epoch training a replayed batch carries information about $\bm w_t$ and the equality acquires a residual,
\begin{align}
\mathbb E[\mathcal R^{\eta}_{\mathrm{1ex},B_t}\mid\mathcal F_t]
=
\mathcal R^{\eta}_{\mathrm{fresh}}(\bm w_t,\bm P_t)+r_t,
\qquad
|r_t|\le 2M\mathrm{TV}(P_{B_t\mid\mathcal F_t},\mathcal D^b),
\end{align}
where $\mathcal R^{\eta}_{\mathrm{fresh}}$ is the fresh-boundary one-step risk, $r_t=\int\varphi_t(B) d(P_{B_t\mid\mathcal F_t}-\mathcal D^b)$ for a bounded test function $|\varphi_t|\le M$, and $r_t$ vanishes in the first epoch.

\section{Train-Test Coupling: Proofs}\label{sec:proofs_coupling}

\paragraph{Intuition.} This appendix proves the train-test coupling results of \autoref{sec:third_law}. The basic question is when motion on a held-out point can be reconstructed from motion observed on the training set. Classical NTK analysis assumes the kernel is frozen at initialization \citep{jacot2018neural,du2019gradient}; here the kernel evolves with the parameters and can move by $\mathcal{O}(1)$ in operator norm over a typical run. We ask when the test displacement is predictable from the observed training displacement, and what quantity controls the prediction error in the feature-learning regime. The bounds we derive are computable from a single run.

\subsection{Feature Learning with Stable Transfer}\label{sec:feature_learning}

\paragraph{Intuition.} This subsection sets up the proof of \autoref{thm:train_test_coupling}. The dissipation Gramian $\mathcal W_S(s,T)$ and the transfer operator $\mathsf G_Q(T,s)$ from \autoref{thm:reservoir_invisibility} already absorb kernel motion, so the analysis applies in full feature learning. We work directly with these pathwise objects and show that a stable train-test relationship survives even when each kernel moves by $\mathcal{O}(1)$.

The stability condition holds for the relationship between training and test kernels even when the individual kernels move by $\mathcal{O}(1)$ in operator norm. The next subsection proves the train-test coupling theorem under this condition. Subsequent subsections develop the path-error decomposition and the test-prediction bounds in \autoref{sec:exact_effective_closure}.

\subsection{Proof of the Train-Test Coupling Theorem}\label{sec:coupling_proof}

\paragraph{Intuition.} The proof of \autoref{thm:train_test_coupling} rests on a simple observation: once both the training and test displacement operators are normalized by the dissipation Gramian, the test side decomposes orthogonally into a piece predictable from the training side plus an orthogonal remainder. This is the operator analogue of regressing one Gaussian variable on another. Fix $0\le s\le T$ and a test set $Q$, and write $\bm D\triangleq \mathsf D_S(T,s)$, $\bm G\triangleq \mathsf G_Q(T,s)$, $\bm W\triangleq \mathcal W_S(s,T)$.

\paragraph{Per-eigendirection reading of the normalization.} On each eigenvector $\bm u_i$ of $\bm W$ with eigenvalue $\sigma_i > 0$,
\begin{align}
\mathsf C_S\bm u_i
&=
\frac{\bm D\bm u_i}{\sqrt{\sigma_i}},
&
\mathsf C_Q\bm u_i
&=
\frac{\bm G\bm u_i}{\sqrt{\sigma_i}}.
\end{align}
Both channels measure displacement along $\bm u_i$ divided by the square root of how much training dissipated along $\bm u_i$, putting train displacement and test transfer in the same per-root-dissipation units. In these units the optimal linear predictor and irreducible remainder of \autoref{def:normalized_channels},
\begin{align}
\bm A_\circ
=
\mathsf C_Q\mathsf C_S^\dagger,
\qquad
\bm R_\perp
=
\mathsf C_Q(\bm I - P_{\mathrm{tr}}),
\end{align}
read as the regression of test-transfer on training-displacement along the eigenbasis of $\bm W$, plus the part of test transfer that cannot be predicted from training displacement.

\begin{definition}[Dissipation-normalized channels, optimal linear predictor, and irreducible remainder]\label{def:normalized_channels}
The \emph{dissipation-normalized train channel} and \emph{dissipation-normalized test channel} are
\begin{align}
\mathsf C_S(s,T)&\triangleq \bm D\psinv{\bm W}, &
\mathsf C_Q(s,T)&\triangleq \bm G\psinv{\bm W}.
\end{align}
The \emph{transfer projector} $P_{\mathrm{tr}}\triangleq \mathsf C_S^\dagger\mathsf C_S$ is the orthogonal projector onto the column space of $\mathsf C_S$.
The \emph{optimal linear predictor} and \emph{irreducible remainder} are
\begin{align}
\bm A_\circ&\triangleq \mathsf C_Q\mathsf C_S^\dagger
=\bm G \bm W^\dagger \bm D^\top\bigl(\bm D\bm W^\dagger \bm D^\top\bigr)^\dagger,\\
\bm R_\perp&\triangleq \mathsf C_Q\bigl(\bm I-P_{\mathrm{tr}}\bigr).
\end{align}
For an arbitrary predictor $\Psi:\range(\bm D)\to\mathbb R^{|Q|p}$, the \emph{normalized worst-case error} is
\begin{align}
\operatorname{Err}(\Psi)
\triangleq
\sup_{\bm h\notin\ker \bm W}
\frac{\|\bm G \bm h-\Psi(\bm D\bm h)\|_2}{\sqrt{\bm h^\top \bm W \bm h}}.
\end{align}
\end{definition}

Both $\bm D$ and $\bm G$ pass through $\bm W^{1/2}$, and the part of the test channel orthogonal to the train channel becomes $\bm R_\perp$.

\begin{theorem}[Train-Test Coupling: Factorization]
\label{thm:coupling_factorization}
Adopt the notation of \autoref{def:normalized_channels}. The train and test displacements share the same right-hand dissipation factor, and the dissipation-normalized test channel decomposes orthogonally into a part driven by training and an irreducible remainder:
\begin{align}
\bm D&=\mathsf C_S(s,T)\bm W^{1/2},
&
\bm G&=\mathsf C_Q(s,T)\bm W^{1/2},
\label{eq:latent_channel_factorization}
\\
\mathsf C_Q(s,T)&=\bm A_\circ\mathsf C_S(s,T)+\bm R_\perp,
&
\bm R_\perp\mathsf C_S(s,T)^\top&=0.
\label{eq:canonical_screen_decomposition}
\end{align}
\end{theorem}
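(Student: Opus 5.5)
The plan is to derive both displays from two facts. First, $\bm D$ and $\bm G$ vanish on $\ker\bm W$. Second, $\bm W^{\dagger/2}\bm W^{1/2}=\bm W\bm W^\dagger$ is the orthogonal projector onto $\range(\bm W)$.

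\emph{Kernel inclusions.} For $\bm G$, the inclusion $\ker\bm W\subseteq\ker\bm G$ is exactly \autoref{thm:reservoir_invisibility}. For $\bm D$, I repeat that argument verbatim with $\Kss(\tau)=\bm J_S(\tau)\bm J_S(\tau)^\top$ in place of $\Kqs(\tau)=\bm J_Q(\tau)\bm J_S(\tau)^\top$. If $\bm h\in\ker\bm W$, the squared-norm form \eqref{eq:W_S_squared_norm_app} forces $\bm J_S(\tau)^\top\mathcal P_g(\tau,s)\bm h=\bm 0$ for a.e.\ $\tau$. Both integrands carry the right factor $\bm J_S(\tau)^\top$, so $\bm D\bm h=\bm 0$ as well.

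\emph{Factorization \eqref{eq:latent_channel_factorization}.} Since $\bm W$ is symmetric PSD, its range is the orthogonal complement of its kernel. The inclusion therefore gives $\bm D=\bm D\bm W\bm W^\dagger$. By functional calculus, $\bm W\bm W^\dagger=\bm W^{\dagger/2}\bm W^{1/2}$: on each eigenvector with $\sigma_i>0$ both sides act as $1$, and on $\ker\bm W$ both act as $0$. Hence $\bm D=\bm D\bm W^{\dagger/2}\bm W^{1/2}=\mathsf C_S\bm W^{1/2}$. The identical computation gives $\bm G=\mathsf C_Q\bm W^{1/2}$; this identity is already recorded at the end of the proof of \autoref{thm:reservoir_invisibility}.

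\emph{Orthogonal decomposition \eqref{eq:canonical_screen_decomposition}.} This part is pure linear algebra. Substituting $P_{\mathrm{tr}}=\mathsf C_S^\dagger\mathsf C_S$ gives
\[
\bm A_\circ\mathsf C_S+\bm R_\perp=\mathsf C_Q\mathsf C_S^\dagger\mathsf C_S+\mathsf C_Q(\bm I-\mathsf C_S^\dagger\mathsf C_S)=\mathsf C_Q,
\]
which is the first identity. For the orthogonality, $P_{\mathrm{tr}}$ is the orthogonal projector onto $\range(\mathsf C_S^\top)$, the row space of $\mathsf C_S$. (The definition calls this the column space; I will use the correct Moore--Penrose fact, namely that $\mathsf C_S^\dagger\mathsf C_S$ is symmetric idempotent with range $\range(\mathsf C_S^\top)$.) Therefore $(\bm I-P_{\mathrm{tr}})\mathsf C_S^\top=\bm 0$, and so $\bm R_\perp\mathsf C_S^\top=\mathsf C_Q(\bm I-P_{\mathrm{tr}})\mathsf C_S^\top=\bm 0$.

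I will also note, as a consistency check, the closed form for $\bm A_\circ$ given in \autoref{def:normalized_channels}. It follows from $\mathsf C_S^\dagger=\mathsf C_S^\top(\mathsf C_S\mathsf C_S^\top)^\dagger$ together with $\mathsf C_S\mathsf C_S^\top=\bm D\bm W^\dagger\bm D^\top$, since $\bm W^{\dagger/2}\bm W^{\dagger/2}=\bm W^\dagger$.

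\emph{Main obstacle.} There is no deep obstacle. The only delicate step is the first: ensuring $\bm D$ (and not just $\bm G$) annihilates $\ker\bm W$, and justifying the pseudo-inverse square-root identity $\bm W^{\dagger/2}\bm W^{1/2}=\bm W\bm W^\dagger$ on the reservoir. This needs the a.e.\ vanishing of the integrand to pass through the integral defining $\bm D$. That is immediate because $\mathcal P_g$ is continuous and the integrand is integrable on the finite window.
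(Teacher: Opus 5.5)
Your proposal is correct and follows essentially the same route as the paper. Reservoir invisibility gives $\ker\bm W\subseteq\ker\bm D\cap\ker\bm G$; the paper simply cites \autoref{thm:reservoir_invisibility} for $\bm D$ as well, since $\bm D$ is the $Q=S$ instance of the test transfer, which is exactly the re-derivation you spell out. Then $\bm W^{\dagger/2}\bm W^{1/2}$ is the orthogonal projector onto $\range(\bm W)$, and the orthogonal split follows because $P_{\mathrm{tr}}=\mathsf C_S^\dagger\mathsf C_S$ is the orthogonal projector onto $\range(\mathsf C_S^\top)$; your correction from column space to row space matches what the paper's proof actually uses.
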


Every linear predictor's error splits as $\bm R_\perp\bm R_\perp^\top$ plus a quadratic penalty in $\bm A-\bm A_\circ$, so $\bm A_\circ$ minimizes the error in the positive-semidefinite order.

\begin{theorem}[Transfer error decomposition]\label{thm:transfer_error_decomp}
Under the hypotheses of \autoref{thm:coupling_factorization}, every linear predictor $\bm A$ satisfies
\begin{align}\label{eq:path_error_decomp}
(\bm G-\bm A\bm D)\bm W^\dagger(\bm G-\bm A\bm D)^\top
=
\bm R_\perp \bm R_\perp^\top
+
(\bm A-\bm A_\circ)\bm D\bm W^\dagger \bm D^\top(\bm A-\bm A_\circ)^\top
\succeq \bm R_\perp \bm R_\perp^\top,
\end{align}
so $\bm A_\circ$ is the unique linear predictor minimizing the path error in the positive-semidefinite order.
\end{theorem}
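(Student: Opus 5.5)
The plan is to reduce everything to the normalized channels of \autoref{thm:coupling_factorization}. By \eqref{eq:latent_channel_factorization}, $\bm D=\mathsf C_S\bm W^{1/2}$ and $\bm G=\mathsf C_Q\bm W^{1/2}$. Hence $\bm G-\bm A\bm D=(\mathsf C_Q-\bm A\mathsf C_S)\bm W^{1/2}$. Conjugating by $\bm W^\dagger$ gives
\begin{align*}
(\bm G-\bm A\bm D)\bm W^\dagger(\bm G-\bm A\bm D)^\top
=(\mathsf C_Q-\bm A\mathsf C_S)\,\bm W^{1/2}\bm W^\dagger\bm W^{1/2}\,(\mathsf C_Q-\bm A\mathsf C_S)^\top .
\end{align*}
The middle factor $\bm W^{1/2}\bm W^\dagger\bm W^{1/2}$ is the orthogonal projector $\bm P_W$ onto $\range(\bm W)$. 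Both normalized channels are built with $\psinv{\bm W}$ on the right, so $\mathsf C_S\bm P_W=\mathsf C_S$ and $\mathsf C_Q\bm P_W=\mathsf C_Q$, and the projector can be dropped. The path error is therefore exactly $\bm E\bm E^\top$, where $\bm E\triangleq\mathsf C_Q-\bm A\mathsf C_S$. The same computation with $\bm A$ replaced by the identity-free pieces shows $\bm D\bm W^\dagger\bm D^\top=\mathsf C_S\mathsf C_S^\top$.

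Next I would substitute the orthogonal split \eqref{eq:canonical_screen_decomposition}, $\mathsf C_Q=\bm A_\circ\mathsf C_S+\bm R_\perp$. This gives $\bm E=\bm R_\perp+(\bm A_\circ-\bm A)\mathsf C_S$. Expanding $\bm E\bm E^\top$, the cross terms are $\bm R_\perp\mathsf C_S^\top(\bm A_\circ-\bm A)^\top$ and its transpose. Both vanish by $\bm R_\perp\mathsf C_S^\top=0$. What remains is $\bm R_\perp\bm R_\perp^\top+(\bm A-\bm A_\circ)\mathsf C_S\mathsf C_S^\top(\bm A-\bm A_\circ)^\top$, which is \eqref{eq:path_error_decomp} after using $\mathsf C_S\mathsf C_S^\top=\bm D\bm W^\dagger\bm D^\top$. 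The inequality $\succeq\bm R_\perp\bm R_\perp^\top$ holds because the second term is a Gram matrix.

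For minimality, the second term is zero exactly when $(\bm A-\bm A_\circ)\mathsf C_S=0$. So $\bm A_\circ$ attains the PSD-order minimum, and every other minimizer agrees with it on $\range(\mathsf C_S)=\range(\bm D)$. \textbf{Uniqueness is the main obstacle:} as stated, it holds only modulo the action on $\range(\bm D)^\perp$. Predictors are only ever applied to training displacements $\bm D\bm h$, so I would interpret uniqueness as uniqueness of the induced map on $\range(\bm D)$. Equivalently, I would restrict to predictors with $\bm A\bm P=\bm A$, where $\bm P$ is the projector onto $\range(\mathsf C_S)=\range(\bm D)$. Under that restriction, $(\bm A-\bm A_\circ)\mathsf C_S=0$ forces $\bm A=\bm A_\circ$, since $\bm A_\circ=\mathsf C_Q\mathsf C_S^\dagger$ already satisfies $\bm A_\circ\bm P=\bm A_\circ$. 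The one remaining check is the projector identity $\mathsf C_S\bm P_W=\mathsf C_S$, which follows because $\psinv{\bm W}\bm P_W=\psinv{\bm W}$.
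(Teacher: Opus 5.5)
Your proof is correct and follows the paper's argument: expand $\mathsf C_Q-\bm A\mathsf C_S=\bm R_\perp+(\bm A_\circ-\bm A)\mathsf C_S$, kill the cross terms with $\bm R_\perp\mathsf C_S^\top=0$, and identify $\mathsf C_S\mathsf C_S^\top=\bm D\bm W^\dagger\bm D^\top$. The paper gets the normalization step straight from the definition, $\mathsf C_Q-\bm A\mathsf C_S=(\bm G-\bm A\bm D)\psinv{\bm W}$ together with $\psinv{\bm W}\psinv{\bm W}=\bm W^\dagger$, so your detour through $\bm W^{1/2}\bm W^\dagger\bm W^{1/2}=\bm P_W$ is unnecessary but harmless. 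Your uniqueness caveat is also right: the minimizer is unique only as a map on $\range(\bm D)$, which is what the paper itself records later in \autoref{thm:path_error_split} (equality iff $\bm A\bm D=\bm A_\circ\bm D$).
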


Allowing the predictor to be nonlinear yields the same minimum error, achieved by the linear $\bm A_\circ$.

\begin{corollary}[Nonlinear optimality]\label{cor:linear_predictor_optimal}
The optimal linear predictor matches the best nonlinear post-processing of the training displacement:
\begin{align}\label{eq:nonlinear_gap_identity}
\inf_{\Psi}\operatorname{Err}(\Psi)
=
\|\bm R_\perp\|_{\op}
=
\sup_{\substack{\bm h\notin\ker \bm W\\ \bm D\bm h=0}}
\frac{\|\bm G\bm h\|_2}{\sqrt{\bm h^\top \bm W \bm h}}
=
\inf_{\bm A} \|(\bm G-\bm A\bm D)\psinv{\bm W}\|_{\op}.
\end{align}
\end{corollary}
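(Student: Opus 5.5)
The plan is to reduce everything to the dissipation-normalized coordinates of \autoref{def:normalized_channels}. For $\bm h\notin\ker\bm W$, write $\bm h=\bm h_r+\bm h_k$ with $\bm h_r\in\range(\bm W)$ and $\bm h_k\in\ker\bm W$. By \autoref{thm:reservoir_invisibility}, $\bm G\bm h_k=\bm 0$, and \autoref{thm:coupling_factorization} gives $\bm D=\mathsf C_S\bm W^{1/2}$, so $\bm D\bm h_k=\bm 0$ as well. Also $\bm h^\top\bm W\bm h=\bm h_r^\top\bm W\bm h_r$.

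So it suffices to treat $\bm h\in\range(\bm W)$. Set $\bm v\triangleq\bm W^{1/2}\bm h$, which ranges over $\range(\bm W)\setminus\{\bm 0\}$. Using \eqref{eq:latent_channel_factorization},
\begin{align}
\bm G\bm h=\mathsf C_Q\bm v,\qquad \bm D\bm h=\mathsf C_S\bm v,\qquad \bm h^\top\bm W\bm h=\|\bm v\|_2^2 .
\end{align}
Both $\mathsf C_S$ and $\mathsf C_Q$ vanish on $\ker\bm W$, because $\psinv{\bm W}$ does. Hence every supremum over $\bm v\in\range(\bm W)$ may equivalently be taken over all of $\mathbb R^{np}$.

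\textbf{Middle equality.} The constraint $\bm D\bm h=\bm 0$ becomes $\bm v\in\ker\mathsf C_S$. This is the orthogonal complement of $\range(\mathsf C_S^\top)=\range(P_{\mathrm{tr}})$. On that complement $\bm R_\perp\bm v=\mathsf C_Q\bm v$, while $\bm R_\perp$ vanishes on $\range(P_{\mathrm{tr}})$. Therefore
\begin{align}
\sup_{\bm v\in\ker\mathsf C_S,\ \bm v\ne\bm 0}\frac{\|\mathsf C_Q\bm v\|_2}{\|\bm v\|_2}=\|\bm R_\perp\|_{\op}.
\end{align}
The maximizing $\bm v$ may be taken in $\range(\bm W)$, because $\range(\mathsf C_S^\top)\subseteq\range(\bm W)$ and $\mathsf C_Q$ kills $\ker\bm W$.

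\textbf{Lower bound on every $\Psi$.} Take $\bm v\in\ker\mathsf C_S\cap\range(\bm W)$ nearly attaining the middle supremum, set $\bm h=\psinv{\bm W}\bm v$, and compare $\bm h$ with $-\bm h$. Both satisfy $\bm D(\pm\bm h)=\bm 0$, so $\Psi$ returns the same value $\Psi(\bm 0)$ on both. By the triangle inequality,
\begin{align}
\max\bigl(\|\bm G\bm h-\Psi(\bm 0)\|_2,\ \|\bm G\bm h+\Psi(\bm 0)\|_2\bigr)\ge\|\bm G\bm h\|_2 .
\end{align}
Both points have the same denominator $\|\bm v\|_2$, so $\operatorname{Err}(\Psi)\ge\|\bm R_\perp\|_{\op}$.

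\textbf{Attainment by $\bm A_\circ$.} For the linear predictor $\Psi(\bm x)=\bm A_\circ\bm x$, \eqref{eq:canonical_screen_decomposition} gives
\begin{align}
\bm G\bm h-\bm A_\circ\bm D\bm h=(\mathsf C_Q-\bm A_\circ\mathsf C_S)\bm v=\bm R_\perp\bm v,
\end{align}
so $\operatorname{Err}(\bm A_\circ)=\|\bm R_\perp\|_{\op}$. This closes the first equality.

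\textbf{Last equality.} Put $\bm M_A\triangleq(\bm G-\bm A\bm D)\psinv{\bm W}$. Since $\psinv{\bm W}\,\psinv{\bm W}=\bm W^\dagger$, we have $\bm M_A\bm M_A^\top=(\bm G-\bm A\bm D)\bm W^\dagger(\bm G-\bm A\bm D)^\top$. \autoref{thm:transfer_error_decomp} gives $\bm M_A\bm M_A^\top\succeq\bm R_\perp\bm R_\perp^\top$, with equality at $\bm A=\bm A_\circ$. Taking operator norms, using $\|\bm M\|_{\op}^2=\|\bm M\bm M^\top\|_{\op}$ and monotonicity of the operator norm in the PSD order, yields
\begin{align}
\inf_{\bm A}\|\bm M_A\|_{\op}=\|\bm R_\perp\|_{\op}.
\end{align}

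I expect the main obstacle to be the bookkeeping of domains rather than any analytic step. The points to check are that reservoir components of $\bm h$ change neither numerator nor denominator, and that the near-maximizer for $\bm R_\perp$ lies in $\range(\bm W)$ so that it comes from an admissible $\bm h$. A further subtlety is that $\Psi$ is only defined on $\range(\bm D)$. The $\pm\bm h$ trick only uses $\Psi(\bm 0)$, and $\bm 0\in\range(\bm D)$, so this causes no problem.
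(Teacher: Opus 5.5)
Your proof is correct and follows essentially the same route as the paper: the $\pm\bm h$ argument forces every $\Psi$ to pay $\|\bm G\bm h\|_2/\sqrt{\bm h^\top\bm W\bm h}$ on $\ker\bm D$, the substitution $\bm v=\bm W^{1/2}\bm h$ turns that constrained supremum into $\|\mathsf C_Q(\bm I-P_{\mathrm{tr}})\|_{\op}=\|\bm R_\perp\|_{\op}$, and $\bm A_\circ$ attains the bound. You add two things the paper leaves implicit: the explicit $\range(\bm W)$ versus $\ker\bm W$ bookkeeping behind its substitution, and a lower bound for the last infimum obtained from the positive-semidefinite inequality of \autoref{thm:transfer_error_decomp}.
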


\begin{remark}[Equivalent formulations]\label{rem:coupling_equivalent}
In terms of the original operators,
\begin{align}\label{eq:latent_displacement}
\bm u(T)-\bm u(s)=-\bm D\bm g(s),
\qquad
\bm U_Q(T)-\bm U_Q(s)=-\bm G\bm g(s),
\end{align}
\begin{align}\label{eq:canonical_screen_decomposition_original}
\bm G=\bm A_\circ \bm D+\bm R_\perp \bm W^{1/2},
\qquad
(\bm G-\bm A_\circ \bm D)\bm W^\dagger \bm D^\top=0.
\end{align}
On the true trajectory,
\begin{align}\label{eq:coupling_trajectory}
\bm U_Q(T)-\bm U_Q(s)
=
\bm A_\circ\bigl(\bm u(T)-\bm u(s)\bigr)-\bm R_\perp \bm W^{1/2}\bm g(s).
\end{align}
\end{remark}

\begin{proof}[Proof of the four results above]
By \autoref{thm:reservoir_invisibility}, $\ker \bm W\subseteq \ker \bm D\cap\ker \bm G$, and since $\psinv{\bm W}\bm W^{1/2}=P_W$ is the orthogonal projector onto $\range(\bm W)$,
\begin{align}
\bm D=\bm DP_W=\bm D\psinv{\bm W}\bm W^{1/2}=\mathsf C_S(s,T)\bm W^{1/2},
\\
\bm G=\bm GP_W=\bm G\psinv{\bm W}\bm W^{1/2}=\mathsf C_Q(s,T)\bm W^{1/2},
\end{align}
which is \eqref{eq:latent_channel_factorization}; \eqref{eq:latent_displacement}
then follows from the definitions of $\bm D$ and $\bm G$.

Since $P_{\mathrm{tr}}=\mathsf C_S^\dagger\mathsf C_S$ is the orthogonal projector onto
$\range(\mathsf C_S^\top)$,
\begin{align}
\mathsf C_Q
=
\mathsf C_QP_{\mathrm{tr}}+\mathsf C_Q(\bm I-P_{\mathrm{tr}})
=
\bm A_\circ\mathsf C_S+\bm R_\perp.
\end{align}
Also,
\begin{align}
\bm R_\perp\mathsf C_S^\top
=
\mathsf C_Q(\bm I-P_{\mathrm{tr}})\mathsf C_S^\top
=0,
\end{align}
because $\range(\mathsf C_S^\top)\subseteq \range(P_{\mathrm{tr}})$.
This proves \eqref{eq:canonical_screen_decomposition}. Multiplying by $\bm W^{1/2}$ gives
\eqref{eq:canonical_screen_decomposition_original}, and then
\eqref{eq:coupling_trajectory} follows from
\eqref{eq:latent_displacement}. The explicit formula for $\bm A_\circ$ is the Moore--Penrose
pseudoinverse $\bm M^\dagger=\bm M^\top(\bm M\bm M^\top)^\dagger$ applied to
$\bm M=\mathsf C_S=\bm D\psinv{\bm W}$.

For any linear $\bm A$, expanding the residual and using $\bm R_\perp\mathsf C_S^\top=0$ gives
\begin{align}
\mathsf C_Q-\bm A\mathsf C_S
&=
\bm R_\perp+(\bm A_\circ-\bm A)\mathsf C_S,
\\
(\mathsf C_Q-\bm A\mathsf C_S)(\mathsf C_Q-\bm A\mathsf C_S)^\top
&=
\bm R_\perp \bm R_\perp^\top
+
(\bm A-\bm A_\circ)\mathsf C_S\mathsf C_S^\top(\bm A-\bm A_\circ)^\top.
\end{align}
With $\mathsf C_Q-\bm A\mathsf C_S=(\bm G-\bm A\bm D)\psinv{\bm W}$ and
$\mathsf C_S\mathsf C_S^\top=\bm D\bm W^\dagger \bm D^\top$, this is
\eqref{eq:path_error_decomp}, and the operator and Frobenius minimizers follow.

For \eqref{eq:nonlinear_gap_identity}, let
\begin{align}
\operatorname{Err}(\Psi)
\triangleq
\sup_{\bm h\notin\ker \bm W}
\frac{\|\bm G\bm h-\Psi(\bm D\bm h)\|_2}{\sqrt{\bm h^\top \bm W \bm h}}.
\end{align}
If $\bm D\bm h=0$, then also $\bm D(-\bm h)=0$. Thus for any predictor $\Psi$,
\begin{align}
\operatorname{Err}(\Psi)
\ge
\max\left\{
\frac{\|\bm G\bm h-\Psi(0)\|_2}{\sqrt{\bm h^\top \bm W \bm h}},
\frac{\|-\bm G\bm h-\Psi(0)\|_2}{\sqrt{\bm h^\top \bm W \bm h}}
\right\}
\ge
\frac{\|\bm G\bm h\|_2}{\sqrt{\bm h^\top \bm W \bm h}}.
\end{align}
Taking the supremum over $\bm D\bm h=0$ gives
\begin{align}
\operatorname{Err}(\Psi)
\ge
\sup_{\substack{\bm h\notin\ker \bm W\\ \bm D\bm h=0}}
\frac{\|\bm G\bm h\|_2}{\sqrt{\bm h^\top \bm W \bm h}}.
\end{align}
Writing $z=\bm W^{1/2}\bm h$, we have $\bm D\bm h=0 \iff \mathsf C_S z=0$,
$\|\bm G\bm h\|_2=\|\mathsf C_Q z\|_2$, and $\bm h^\top \bm W \bm h=\|z\|_2^2$. Hence
\begin{align}
\sup_{\substack{\bm h\notin\ker \bm W\\ \bm D\bm h=0}}
\frac{\|\bm G\bm h\|_2}{\sqrt{\bm h^\top \bm W \bm h}}
=
\sup_{z\in\ker\mathsf C_S\setminus\{0\}}
\frac{\|\mathsf C_Q z\|_2}{\|z\|_2}
=
\|\mathsf C_Q(\bm I-P_{\mathrm{tr}})\|_{\op}
=
\|\bm R_\perp\|_{\op}.
\end{align}
On the other hand, the linear predictor $x\mapsto \bm A_\circ x$ satisfies
\begin{align}
\|(\bm G-\bm A_\circ \bm D)\psinv{\bm W}\|_{\op}
=
\|\bm R_\perp\|_{\op},
\end{align}
so both infima in \eqref{eq:nonlinear_gap_identity} equal $\|\bm R_\perp\|_{\op}$.
\end{proof}
The fixed-transfer problem becomes an operator regression: $\bm A_\circ \bm D$ captures every test component determined by the observed training displacement, while $\bm R_\perp \bm W^{1/2}\bm g(s)$ is the irreducible remainder. Each anchored predictor $\bm A$ pays the positive-semidefinite misspecification penalty
\begin{align}
(\bm A-\bm A_\circ)\bm D\bm W^\dagger \bm D^\top(\bm A-\bm A_\circ)^\top
\end{align}
on top of the irreducible $\bm R_\perp \bm R_\perp^\top$.

The next corollary collects equivalent characterizations of $\bm A_\circ$ that we use later and gives a trajectory bound on the realized path.

\begin{corollary}[Exact transfer equivalences and trajectory bounds]\label{cor:transfer_equivalences}
Under the hypotheses of \autoref{thm:coupling_factorization}, the following are equivalent:
\begin{align}\label{eq:exact_screen_equivalences}
\bm R_\perp=0
&\iff
\ker \bm D\subseteq \ker \bm G
\iff
\range(\bm G^\top)\subseteq \range(\bm D^\top)
\\
&\iff
\bm G^\top \bm G\preceq \lambda \bm D^\top \bm D\textup{ for some }\lambda\ge 0.
\end{align}
When these hold, the predictor on $\range(\bm D)$ is unique, necessarily linear, and given by $\Psi_\sharp(x)=\bm G \bm D^\dagger x$ with Lipschitz norm $\Sigma_Q(s,T)=\|\bm G \bm D^\dagger\|_{\op}$.

On the true trajectory,
\begin{align}\label{eq:latent_screen_bound}
\|\bm U_Q(T)-\bm U_Q(s)-\bm A_\circ\bigl(\bm u(T)-\bm u(s)\bigr)\|_2
&\le
\|\bm R_\perp\|_{\op}
\sqrt{\Phi_S(\bm u(s))-\Phi_S(\bm u(T))}.
\end{align}
More generally, for any linear predictor $\bm A$,
\begin{align}\label{eq:general_screen_bound}
\|\bm U_Q(T)-\bm U_Q(s)-\bm A\bigl(\bm u(T)-\bm u(s)\bigr)\|_2
&\le
\|\mathsf C_Q(s,T)-\bm A\mathsf C_S(s,T)\|_{\op}
\\
&\qquad\cdot
\sqrt{\Phi_S(\bm u(s))-\Phi_S(\bm u(T))}.
\end{align}
\end{corollary}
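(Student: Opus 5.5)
The plan is to reduce everything to the normalized channels of \autoref{thm:coupling_factorization}, working throughout in the variable $z=\bm W^{1/2}\bm h$. Since $\bm D=\mathsf C_S\bm W^{1/2}$ and $\bm G=\mathsf C_Q\bm W^{1/2}$, and since $\psinv{\bm W}\bm W^{1/2}=P_W$ is the projector onto $\range(\bm W)$, we have $\mathsf C_S=\mathsf C_SP_W$ and $\mathsf C_Q=\mathsf C_QP_W$. So both normalized channels live on $\range(\bm W)$.

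\textbf{The equivalences.} For $\bm R_\perp=0\iff\ker\bm D\subseteq\ker\bm G$, note that $\bm R_\perp=\mathsf C_Q(\bm I-P_{\mathrm{tr}})$ vanishes iff $\ker\mathsf C_S\subseteq\ker\mathsf C_Q$. I transport this to $\bm D,\bm G$ by splitting $\bm h=P_W\bm h+(\bm I-P_W)\bm h$. The second piece lies in $\ker\bm W$, which is contained in both $\ker\bm D$ and $\ker\bm G$ by \autoref{thm:reservoir_invisibility}. On $\range(\bm W)$, the map $\bm h\mapsto z=\bm W^{1/2}\bm h$ is a bijection, with $\bm D\bm h=\mathsf C_Sz$ and $\bm G\bm h=\mathsf C_Qz$. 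Hence $\ker\bm D\subseteq\ker\bm G$ iff $\ker\mathsf C_S\cap\range(\bm W)\subseteq\ker\mathsf C_Q$. The right-hand side is the same as $\ker\mathsf C_S\subseteq\ker\mathsf C_Q$, because both maps kill $\ker\bm W$.

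The remaining equivalences are finite-dimensional linear algebra. Taking orthogonal complements, $\ker\bm D\subseteq\ker\bm G\iff\range(\bm G^\top)\subseteq\range(\bm D^\top)$. The range inclusion holds iff $\bm G=\bm X\bm D$ for some $\bm X$, which gives $\bm G^\top\bm G\preceq\|\bm X\|_{\op}^2\bm D^\top\bm D$. Conversely, if $\bm G^\top\bm G\preceq\lambda\bm D^\top\bm D$, then $\bm D\bm h=0$ forces $\|\bm G\bm h\|^2\le0$.

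\textbf{Uniqueness of the predictor.} When these conditions hold, $\bm G\bm h$ depends only on $\bm D\bm h$. So any $\Psi$ with $\Psi(\bm D\bm h)=\bm G\bm h$ is determined on $\range(\bm D)$, and it is linear there. Using $\bm G=\bm G\bm D^\dagger\bm D$ (since $\bm D^\dagger\bm D$ projects onto $\range(\bm D^\top)\supseteq\range(\bm G^\top)$), the predictor is $\Psi_\sharp(x)=\bm G\bm D^\dagger x$. Its Lipschitz constant on $\range(\bm D)$ is $\|\bm G\bm D^\dagger\|_{\op}$, because $\bm D^\dagger$ vanishes on $\range(\bm D)^\perp$.

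\textbf{Trajectory bounds.} I prove \eqref{eq:general_screen_bound} first. By \autoref{rem:coupling_equivalent}, the left side equals $\|(\bm G-\bm A\bm D)\bm g(s)\|_2=\|(\mathsf C_Q-\bm A\mathsf C_S)\bm W^{1/2}\bm g(s)\|_2$. This is at most $\|\mathsf C_Q-\bm A\mathsf C_S\|_{\op}\|\bm W^{1/2}\bm g(s)\|_2$. Then $\|\bm W^{1/2}\bm g(s)\|_2^2=\bm g(s)^\top\bm W\bm g(s)=\Phi_S(\bm u(s))-\Phi_S(\bm u(T))$ by the dissipation identity of \autoref{sec:operator_derivation}. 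Specializing to $\bm A=\bm A_\circ$ and using $\mathsf C_Q-\bm A_\circ\mathsf C_S=\bm R_\perp$ from \eqref{eq:canonical_screen_decomposition} gives \eqref{eq:latent_screen_bound}.

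The main obstacle is the kernel transport in the first step: making sure the reservoir component $(\bm I-P_W)\bm h$ is killed by both $\bm D$ and $\bm G$. That is exactly what reservoir invisibility supplies. Everything else is routine pseudoinverse bookkeeping.
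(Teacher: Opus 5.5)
Your proof is correct and follows the paper's argument essentially step for step. You reduce $\bm R_\perp=0$ to $\ker\mathsf C_S\subseteq\ker\mathsf C_Q$ and transport this to $\ker\bm D\subseteq\ker\bm G$ via the factorization and vanishing on $\ker\bm W$. You close the cycle with duality and $\bm G=\bm X\bm D$. You then get the trajectory bounds from $\bm G-\bm A\bm D=(\mathsf C_Q-\bm A\mathsf C_S)\bm W^{1/2}$ together with $\bm g(s)^\top\bm W\bm g(s)=\Phi_S(\bm u(s))-\Phi_S(\bm u(T))$.

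The differences are cosmetic. You make explicit the bijection $\bm h\mapsto\bm W^{1/2}\bm h$ on $\range(\bm W)$, which the paper leaves implicit. You also prove the general-$\bm A$ bound first and specialize to $\bm A_\circ$, whereas the paper obtains the $\bm A_\circ$ bound directly from \eqref{eq:coupling_trajectory}.
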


\begin{proof}
Since $\ker P_{\mathrm{tr}}=\ker \mathsf C_S$,
\begin{align}
\bm R_\perp=0
\iff
\mathsf C_Q(\bm I-P_{\mathrm{tr}})=0
\iff
\ker \mathsf C_S\subseteq \ker \mathsf C_Q.
\end{align}
Using \eqref{eq:latent_channel_factorization} and the fact that both $\mathsf C_S$ and
$\mathsf C_Q$ vanish on $\ker \bm W$, this is equivalent to $\ker \bm D\subseteq \ker \bm G$.
The equivalence with $\range(\bm G^\top)\subseteq \range(\bm D^\top)$ is
finite-dimensional duality. For the positive-semidefinite condition,
$\bm G=\bm A\bm D$ gives $\bm G^\top \bm G\preceq \|\bm A\|_{\op}^2 \bm D^\top \bm D$, and conversely
$\bm G^\top \bm G\preceq \lambda \bm D^\top \bm D$ implies $\bm D\bm h=0\Rightarrow \bm G\bm h=0$.
When these hold, the predictor on $\range(\bm D)$ is $\bm G\bm D^\dagger$, unique by
well-definedness on the range, and its Lipschitz norm is $\|\bm G\bm D^\dagger\|_{\op}$.

\eqref{eq:latent_screen_bound} follows from
\eqref{eq:coupling_trajectory} and
$\bm g(s)^\top \bm W \bm g(s)=\Phi_S(\bm u(s))-\Phi_S(\bm u(T))$.
For the general linear predictor bound, use
\begin{align}
\bm U_Q(T)-\bm U_Q(s)-\bm A\bigl(\bm u(T)-\bm u(s)\bigr)
&= -(\bm G-\bm A\bm D)\bm g(s) \\
&= -(\mathsf C_Q(s,T)-\bm A\mathsf C_S(s,T))\bm W^{1/2}\bm g(s).
\end{align}
Therefore
\begin{align}
\|\bm U_Q(T)-\bm U_Q(s)-\bm A\bigl(\bm u(T)-\bm u(s)\bigr)\|_2
&\le
\|\mathsf C_Q(s,T)-\bm A\mathsf C_S(s,T)\|_{\op}
\|\bm W^{1/2}\bm g(s)\|_2
\\
&=
\|\mathsf C_Q(s,T)-\bm A\mathsf C_S(s,T)\|_{\op}
\\
&\qquad\cdot\sqrt{\Phi_S(\bm u(s))-\Phi_S(\bm u(T))},
\end{align}
which is \eqref{eq:general_screen_bound}.
\end{proof}

\subsection{Smoothness is Required for Test Prediction}\label{app:smoothness_required}

\paragraph{Intuition.} Predicting test motion from training data requires a smoothness hypothesis on the network. We exhibit two networks producing identical training trajectories, Jacobian paths, and training losses but different test displacements at a held-out point, so the smoothness assumption used in \autoref{sec:sobolev_remainder} is necessary.

\begin{theorem}[Smoothness is necessary]\label{thm:smoothness_required}
Fix $z_\star \notin S$. There exist two $C^\infty$ networks $F$ and $\widetilde F$ such that for every $z_i \in S$ and every $t \le T$,
\begin{align}
F(w(t),z_i)
&=
\widetilde F(w(t),z_i),
&
D_w F(w(t),z_i)
&=
D_w\widetilde F(w(t),z_i),
\end{align}
so the entire training trajectory, Jacobian path, kernel path, and training losses are identical, but
\begin{align}
F(w(T),z_\star)-F(w(0),z_\star)
\ne
\widetilde F(w(T),z_\star)-\widetilde F(w(0),z_\star).
\end{align}
\end{theorem}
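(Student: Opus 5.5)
The plan is to take $\widetilde F(\bm w,z)\triangleq F(\bm w,z)+\phi(z)\,\psi(\bm w)$, where $\phi$ is a smooth bump on instance space and $\psi$ is a smooth function of the parameters. We choose $\phi$ with $\phi(z_\star)=1$ and $\phi(z_i)=0$ for every $z_i\in S$. This is possible because $z_\star\notin S$ and $S$ is finite; we take $\mathcal Z$ to be a metric space such as $\mathbb R^m$, so that a $C^\infty$ bump vanishing on the finitely many training points exists. Since $\phi(z_i)=0$, we get $\widetilde F(\bm w,z_i)=F(\bm w,z_i)$ for \emph{every} $\bm w$, not just along the trajectory. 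Hence all parameter derivatives agree on $S$ as well: $D_{\bm w}\widetilde F(\bm w,z_i)=D_{\bm w}F(\bm w,z_i)$ identically.

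The stacked outputs $\bm U_S$, the Jacobian $\bm J_S$, and the loss $L_S=\Phi_S\circ\bm U_S$ are therefore identical functions of $\bm w$. Gradient flow $\partial_t\bm w=-\bm J_S^\top\bm g$ depends only on these objects, so by uniqueness of ODE solutions from the same $\bm w(0)$ both networks follow the same $\bm w(t)$. Consequently the kernels $\Kss(t)$, the propagator $\mathcal P_g$, the Gramian $\mathcal W_S$, and the training losses coincide for all $t\le T$. This gives the first display of the statement.

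At the test point we have $\widetilde F(\bm w(T),z_\star)-\widetilde F(\bm w(0),z_\star)=F(\bm w(T),z_\star)-F(\bm w(0),z_\star)+\psi(\bm w(T))-\psi(\bm w(0))$. It therefore suffices to choose $\psi$ with $\psi(\bm w(T))\ne\psi(\bm w(0))$. We take $\psi(\bm w)=\langle \bm a,\bm w\rangle$, where $\bm a$ is any vector with $\langle\bm a,\bm w(T)-\bm w(0)\rangle\ne0$; for instance $\bm a=\bm w(T)-\bm w(0)$. Both networks remain $C^\infty$ whenever $F$ is. A concrete instance is $F\equiv$ any smooth network with nonzero initial gradient.

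The main obstacle is nondegeneracy: the construction needs $\bm w(T)\ne\bm w(0)$. This holds as soon as the initial output gradient is not annihilated by $\bm J_S^\top$, i.e.\ $\bm J_S(\bm w(0))^\top\bm g(0)\ne 0$, because the loss then strictly decreases by the dissipation identity. We secure it by choosing a base $F$, say a linear model $F(\bm w,z)=\langle\bm w,\varphi(z)\rangle$, together with labels $\bm y\ne\bm U_S(0)$ that put the initial residual outside $\ker\bm J_S^\top$. Since the theorem only asserts existence, this choice is admissible.
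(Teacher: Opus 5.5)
Your construction $\widetilde F(\bm w,z)=F(\bm w,z)+\phi(z)\psi(\bm w)$ is exactly the paper's proof, and it is correct: a bump vanishing on $S$ and nonzero at $z_\star$, times a weight-space factor separating $\bm w(0)$ from $\bm w(T)$. Your version is also slightly tighter in two ways. You note that $\phi(z_i)=0$ alone already makes $D_{\bm w}$ agree on $S$, so the paper's extra condition that the bump's $z$-gradient vanish at the $z_i$ is superfluous. You also make explicit the nondegeneracy $\bm w(T)\ne\bm w(0)$, obtained by strict loss dissipation when $\bm J_S(\bm w(0))^\top\bm g(0)\ne 0$ and $T>0$, which the paper's requirement that the weight-space factor differ at $w(0)$ and $w(T)$ tacitly presupposes.
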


\begin{proof}
Pick a $C^\infty$ bump $\psi$ on input space and a $C^\infty$ scalar $\phi$ on weight space with
\begin{align}
\psi(z_i) = 0,
\quad
\nabla_z\psi(z_i) = 0
\ \ \forall z_i\in S,
\qquad
\psi(z_\star) \ne 0,
\qquad
\phi(w(T)) - \phi(w(0)) \ne 0,
\end{align}
and define the perturbed network
\begin{align}
\widetilde F(w,z) = F(w,z) + \psi(z)\phi(w).
\end{align}
Then $\widetilde F$ agrees with $F$ to first order on $S$, so the training dynamics are identical, but the test displacement at $z_\star$ differs by $\psi(z_\star)\bigl[\phi(w(T))-\phi(w(0))\bigr]\ne 0$.
\end{proof}

\subsection{Sobolev Bound on the Transfer Remainder}\label{sec:sobolev_remainder}

\paragraph{Intuition.} When the data lies on a low-dimensional manifold and the network is sufficiently smooth, the irreducible remainder $\bm R_\perp$ shrinks with sample density. The argument routes through a single pathwise displacement field on the manifold: test motion is its value at unseen points, training motion is its value at the observed sample, and a Sobolev sampling inequality controls how much the second determines the first. The bound applies with $\mathcal{O}(1)$ kernel drift and quantifies the role of fill distance.

Let $\mathcal M$ be a compact $d_{\mathcal M}$-dimensional data manifold and
assume $F(w,\cdot)$ is $C^m$ on $\mathcal M$, with $m>d_{\mathcal M}/2$.
For $z\in\mathcal M$ write
\begin{align}
\bm J_z(t)=D_wF(w(t),z),\qquad \bm K_{zS}(t)=\bm J_z(t)\bm J_S(t)^\top .
\end{align}
Define the pathwise displacement field
\begin{align}
(\mathscr T_{s,T}\bm h)(z)
\triangleq
\int_s^T \bm K_{zS}(\tau)\mathcal P_g(\tau,s)\bm h d\tau.
\end{align}
Then
\begin{align}
\bm D=E_S\mathscr T_{s,T},\qquad \bm G=E_Q\mathscr T_{s,T},
\end{align}
where $E_Af=(f(a_1);\ldots;f(a_{|A|}))$ is evaluation on a finite set.

Define the pathwise Jacobian-Sobolev norm
\begin{align}
\mathfrak a_m(\tau)
&\triangleq
\sup_{\|\bm v\|_2=1}
\bigl\|z\mapsto \bm J_z(\tau)\bm v\bigr\|_{H^m(\mathcal M;\mathbb R^p)},
&
\Lambda_m(s,T)
&\triangleq
\left(\int_s^T \mathfrak a_m(\tau)^2 d\tau\right)^{1/2}.
\end{align}

Let $\bm W=\mathcal W_S(s,T)$ and let
\begin{align}
\mathcal C_{s,T}\triangleq \mathscr T_{s,T}\bm W^{\dagger/2}.
\end{align}

The first piece is an operator-norm bound on the dissipation-normalized displacement field. Then we combine it with a Sobolev sampling inequality to bound the irreducible remainder in terms of the fill distance.

\begin{lemma}[Operator-norm bound on the dissipation-normalized field]\label{lem:sobolev_C_bound}
\begin{align}
\|\mathcal C_{s,T}\|_{\op(\ell_2,H^m)}
\le
\Lambda_m(s,T).
\end{align}
\end{lemma}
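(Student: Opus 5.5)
The plan is to bound $\|\mathcal C_{s,T}\bm x\|_{H^m}$ for a unit vector $\bm x\in\mathbb R^{np}$ by passing the $H^m$ norm inside the time integral and then using Cauchy--Schwarz in $\tau$. The second factor will be recognized as the dissipation quadratic form, normalized by $\bm W^{\dagger/2}$.

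Set $\bm h\triangleq\bm W^{\dagger/2}\bm x$. By definition,
\begin{align}
(\mathcal C_{s,T}\bm x)(z)=\int_s^T \bm J_z(\tau)\bigl[\bm J_S(\tau)^\top\mathcal P_g(\tau,s)\bm h\bigr]d\tau .
\end{align}
Writing $\bm v(\tau)\triangleq\bm J_S(\tau)^\top\mathcal P_g(\tau,s)\bm h\in\mathbb R^d$, the integrand in $z$ is $\bm J_z(\tau)\bm v(\tau)$. By the definition of $\mathfrak a_m$ and homogeneity,
\begin{align}
\|z\mapsto\bm J_z(\tau)\bm v(\tau)\|_{H^m}\le\mathfrak a_m(\tau)\|\bm v(\tau)\|_2 .
\end{align}
Minkowski's integral inequality, i.e.\ the triangle inequality for the Bochner integral in $H^m(\mathcal M;\mathbb R^p)$, then gives
\begin{align}
\|\mathcal C_{s,T}\bm x\|_{H^m}\le\int_s^T\mathfrak a_m(\tau)\|\bm v(\tau)\|_2\,d\tau .
\end{align}
Applying Cauchy--Schwarz in $\tau$ bounds this by
\begin{align}
\Lambda_m(s,T)\Bigl(\int_s^T\|\bm J_S(\tau)^\top\mathcal P_g(\tau,s)\bm h\|_2^2\,d\tau\Bigr)^{1/2}.
\end{align}

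By \eqref{eq:W_S_squared_norm_app}, the remaining integral equals $\bm h^\top\bm W\bm h$. Substituting $\bm h=\bm W^{\dagger/2}\bm x$ gives
\begin{align}
\bm h^\top\bm W\bm h=\bm x^\top\bm W^{\dagger/2}\bm W\bm W^{\dagger/2}\bm x=\bm x^\top P_W\bm x=\|P_W\bm x\|_2^2\le\|\bm x\|_2^2 ,
\end{align}
where $P_W$ is the orthogonal projector onto $\range(\bm W)$; the identity $\bm W^{\dagger/2}\bm W\bm W^{\dagger/2}=P_W$ follows from the common eigenbasis of $\bm W$ and $\bm W^{\dagger/2}$. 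Combining the two steps yields $\|\mathcal C_{s,T}\bm x\|_{H^m}\le\Lambda_m(s,T)\|\bm x\|_2$, which is the claim.

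The main obstacle is justifying Minkowski's inequality in $H^m$. This requires $\tau\mapsto\bm J_\cdot(\tau)\bm v(\tau)$ to be Bochner-measurable and integrable as an $H^m$-valued map. I would obtain this from the standing regularity assumptions. $F$ is $C^2$ in $\bm w$ and $C^m$ in $z$, so the map is continuous in $\tau$ into $C^m(\mathcal M)\hookrightarrow H^m(\mathcal M)$, given that $\bm w(\tau)$ and $\mathcal P_g$ are continuous. Integrability then follows from the finiteness of $\Lambda_m$, which can be assumed (otherwise the bound is trivial), together with the same Cauchy--Schwarz step. A minor point is that $\mathfrak a_m(\tau)$ should be measurable. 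Lower semicontinuity in $\tau$, as a supremum of continuous functions, suffices for this.
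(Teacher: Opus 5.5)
Your proposal is correct and follows the paper's proof step for step. You push the $H^m$ norm inside the time integral using the definition of $\mathfrak a_m$, apply Cauchy--Schwarz in $\tau$, identify the remaining integral as $\bm h^\top\bm W\bm h$, and substitute $\bm h=\bm W^{\dagger/2}\bm x$. The paper labels the first step Cauchy--Schwarz, but it is the Minkowski/triangle inequality you name.

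Your measurability discussion goes beyond what the paper checks, and it contains one overstatement. Continuity of $\tau\mapsto\bm J_\cdot(\tau)\bm v(\tau)$ into $H^m$ does not follow from separate $C^2$-in-$\bm w$ and $C^m$-in-$z$ smoothness. It needs joint regularity of the mixed derivatives $\partial_z^\alpha D_{\bm w}F$, which the paper assumes tacitly when it defines $\mathfrak a_m$.
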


\begin{theorem}[Pathwise Sobolev Transfer]\label{thm:sobolev_remainder}
If $S$ has fill distance
\begin{align}
h_S=\sup_{z\in\mathcal M}\min_{i\le n} d_{\mathcal M}(z,z_i),
\end{align}
then the irreducible train-test remainder satisfies
\begin{align}\label{eq:sobolev_Rperp}
\|\bm R_\perp\|_{\op}
&\le
C_{\mathcal M,m,p}|Q|^{1/2}
h_S^{m-d_{\mathcal M}/2}
\Lambda_m(s,T).
\end{align}
The bound holds with no frozen-kernel or small-kernel-drift condition.
\end{theorem}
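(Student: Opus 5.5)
The plan is to reduce $\|\bm R_\perp\|_{\op}$ to a sampling inequality for functions in the range of the dissipation-normalized field $\mathcal C_{s,T}$. By \autoref{cor:linear_predictor_optimal},
\begin{align}
\|\bm R_\perp\|_{\op}=\sup\Bigl\{\|\mathsf C_Q z\|_2:\ z\in\ker\mathsf C_S,\ \|z\|_2=1\Bigr\}.
\end{align}
The factorizations $\bm D=E_S\mathscr T_{s,T}$ and $\bm G=E_Q\mathscr T_{s,T}$ give $\mathsf C_S=E_S\mathcal C_{s,T}$ and $\mathsf C_Q=E_Q\mathcal C_{s,T}$. So for a unit $z\in\ker\mathsf C_S$, the function $f\triangleq\mathcal C_{s,T}z\colon\mathcal M\to\mathbb R^p$ vanishes at every training point $z_i$, and $\|\mathsf C_Q z\|_2^2=\sum_{q\in Q}|f(q)|^2\le|Q|\,\|f\|_{L^\infty(\mathcal M)}^2$.

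First I would prove \autoref{lem:sobolev_C_bound}. Fix a unit $\bm v$ and write $\bm h=\psinv{\bm W}\bm v$. Then
\begin{align}
(\mathcal C_{s,T}\bm v)(z)=\int_s^T \bm J_z(\tau)\bigl[\bm J_S(\tau)^\top\mathcal P_g(\tau,s)\bm h\bigr]d\tau .
\end{align}
Minkowski's inequality in $H^m$ bounds its norm by $\int_s^T\mathfrak a_m(\tau)\|\bm J_S(\tau)^\top\mathcal P_g(\tau,s)\bm h\|_2\,d\tau$. Cauchy--Schwarz in $\tau$ then gives at most $\Lambda_m(s,T)\bigl(\int_s^T\|\bm J_S^\top\mathcal P_g\bm h\|_2^2d\tau\bigr)^{1/2}$. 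By \eqref{eq:W_S_squared_norm_app}, the last factor equals $(\bm h^\top\bm W\bm h)^{1/2}=\|\bm W^{1/2}\psinv{\bm W}\bm v\|_2=\|P_W\bm v\|_2\le 1$. This gives the lemma. The only care needed is that Bochner integration commutes with the $H^m$ norm, which holds because $F$ is $C^m$ in $z$ and $C^2$ in $\bm w$, so the integrand is continuous in $\tau$ with values in $H^m$.

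Second, I would invoke the standard zeros/sampling inequality on compact manifolds (Narcowich--Ward--Wendland type), applied componentwise to $\mathbb R^p$-valued functions. For $m>d_{\mathcal M}/2$ and $h_S\le h_0$ it states: if $f\in H^m(\mathcal M)$ vanishes on $S$, then $\|f\|_{L^\infty}\le C_{\mathcal M,m,p}\,h_S^{m-d_{\mathcal M}/2}\|f\|_{H^m}$. This is where the Sobolev embedding requirement $m>d_{\mathcal M}/2$ enters. Combining the two steps gives
\begin{align}
\|\mathsf C_Qz\|_2\le|Q|^{1/2}\,C\,h_S^{m-d_{\mathcal M}/2}\,\|\mathcal C_{s,T}z\|_{H^m}\le C|Q|^{1/2}h_S^{m-d_{\mathcal M}/2}\Lambda_m(s,T),
\end{align}
and taking the supremum over $z$ yields \eqref{eq:sobolev_Rperp}. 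Nothing in this chain uses kernel stability; all path dependence is absorbed into $\Lambda_m$ and the identity \eqref{eq:W_S_squared_norm_app}.

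The main obstacle is the sampling inequality itself: it needs a mesh-norm threshold $h_S\le h_0(\mathcal M,m)$, and the manifold version requires charts or a covering argument. I would handle the large-$h_S$ regime separately. When $h_S>h_0$, use the embedding $\|f\|_{L^\infty}\le C\|f\|_{H^m}$ directly and enlarge the constant by $h_0^{-(m-d_{\mathcal M}/2)}$, so the stated bound holds for all $h_S$ with a single constant $C_{\mathcal M,m,p}$.
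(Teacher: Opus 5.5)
Your proposal is correct and follows essentially the paper's proof. The lemma goes through Minkowski plus Cauchy--Schwarz in $\tau$ and the identity $\bm h^\top\bm W\bm h=\int_s^T\|\bm J_S(\tau)^\top\mathcal P_g(\tau,s)\bm h\|_2^2\,d\tau$; the theorem then applies a zeros/sampling inequality to $\mathcal C_{s,T}z$ with $z\in\ker\mathsf C_S$, which is the same as the paper's $f_{\bm a}=\mathcal C_{s,T}(\bm I-\Pi_S)\bm a$ with $\Pi_S=P_{\mathrm{tr}}$. Your explicit treatment of the mesh-norm threshold $h_S\le h_0$, handled via the plain Sobolev embedding, is a small piece of care that the paper's proof omits.
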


The remainder bound translates directly into a deterministic bound on test displacement along the realized trajectory.

\begin{corollary}[Trajectory form of the Sobolev transfer bound]\label{cor:sobolev_trajectory}
\begin{align}\label{eq:sobolev_trajectory}
\|\bm U_Q(T)-\bm U_Q(s)-\bm A_\circ(\bm u(T)-\bm u(s))\|_2
&\le
C_{\mathcal M,m,p}|Q|^{1/2}
h_S^{m-d_{\mathcal M}/2}
\Lambda_m(s,T) \\
&\qquad\cdot
\sqrt{\Phi_S(\bm u(s))-\Phi_S(\bm u(T))}.
\end{align}
\end{corollary}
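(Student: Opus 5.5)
The statement to prove is \autoref{cor:sobolev_trajectory}. It is a direct consequence of \autoref{thm:sobolev_remainder} combined with the trajectory bound \eqref{eq:latent_screen_bound} of \autoref{cor:transfer_equivalences}, so the plan is a two-line composition; the only care needed is matching the quantities and checking that no hidden hypothesis is introduced.

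\textbf{Step 1: exact residual formula.} Start from \eqref{eq:coupling_trajectory} in \autoref{rem:coupling_equivalent}. It holds for any window on the realized trajectory and requires only reservoir invisibility (\autoref{thm:reservoir_invisibility}); it does not need the quadratic-loss assumption. It gives the residual exactly:
\begin{align}
\bm U_Q(T)-\bm U_Q(s)-\bm A_\circ\bigl(\bm u(T)-\bm u(s)\bigr)=-\bm R_\perp\bm W^{1/2}\bm g(s).
\end{align}

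\textbf{Step 2: bound the norm.} Take norms and use submultiplicativity:
\begin{align}
\|\bm R_\perp\bm W^{1/2}\bm g(s)\|_2\le\|\bm R_\perp\|_{\op}\,\|\bm W^{1/2}\bm g(s)\|_2 .
\end{align}
Then identify the second factor using the cumulative-dissipation identity from \autoref{sec:operator_derivation},
\begin{align}
\|\bm W^{1/2}\bm g(s)\|_2^2=\bm g(s)^\top\mathcal W_S(s,T)\bm g(s)=\Phi_S(\bm u(s))-\Phi_S(\bm u(T)).
\end{align}
Steps 1 and 2 together reproduce \eqref{eq:latent_screen_bound}.

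\textbf{Step 3: substitute the Sobolev bound.} Replace $\|\bm R_\perp\|_{\op}$ by the bound \eqref{eq:sobolev_Rperp} of \autoref{thm:sobolev_remainder}, namely $C_{\mathcal M,m,p}|Q|^{1/2}h_S^{m-d_{\mathcal M}/2}\Lambda_m(s,T)$. Multiplying by the square-root dissipation factor gives \eqref{eq:sobolev_trajectory}.

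\textbf{Consistency check.} The $\bm R_\perp$, $\bm W$ and $\bm A_\circ$ in \autoref{thm:sobolev_remainder} must be the same objects as in \autoref{def:normalized_channels}. This holds: the Sobolev section realizes $\bm D=E_S\mathscr T_{s,T}$ and $\bm G=E_Q\mathscr T_{s,T}$ with the same $\bm W=\mathcal W_S(s,T)$.

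The main (mild) obstacle is therefore bookkeeping rather than analysis. One must confirm that the hypotheses of \autoref{thm:sobolev_remainder} ($C^m$ regularity on the compact manifold $\mathcal M$, $m>d_{\mathcal M}/2$, finite $\Lambda_m$) are inherited here. One must also confirm that the convexity of $\Phi_S$, which makes the dissipation nonnegative so the square root is real, is already implicit in the setting. If any constant needs adjusting, it is absorbed into $C_{\mathcal M,m,p}$.
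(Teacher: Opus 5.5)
Your proposal is correct and follows the paper's own route. The paper also obtains the corollary by combining the Sobolev bound on $\|\bm R_\perp\|_{\op}$ with the exact residual $-\bm R_\perp\bm W^{1/2}\bm g(s)$ and the dissipation identity $\bm g(s)^\top\bm W\bm g(s)=\Phi_S(\bm u(s))-\Phi_S(\bm u(T))$; one small correction to your checklist is that nonnegativity of this dissipation comes from $\bm W\succeq 0$ (gradient flow is a descent flow), not from convexity of $\Phi_S$, so no extra hypothesis is needed there.
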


\begin{proof}[Proof of \autoref{lem:sobolev_C_bound}, \autoref{thm:sobolev_remainder}, and \autoref{cor:sobolev_trajectory}]
For $\bm h\in\mathbb R^{np}$ set $\bm v_h(\tau)=\bm J_S(\tau)^\top\mathcal P_g(\tau,s)\bm h$, so
\begin{align}
(\mathscr T_{s,T}\bm h)(z)
=
\int_s^T \bm J_z(\tau)\bm v_h(\tau) d\tau.
\end{align}
Cauchy--Schwarz then gives
\begin{align}
\|\mathscr T_{s,T}\bm h\|_{H^m}
\le
\int_s^T \mathfrak a_m(\tau)\|\bm v_h(\tau)\|_2 d\tau
\le
\Lambda_m(s,T)
\left(
\int_s^T
\|\bm J_S(\tau)^\top\mathcal P_g(\tau,s)\bm h\|_2^2
d\tau
\right)^{1/2}.
\end{align}
The final integral equals $\bm h^\top \bm W\bm h$, and substituting $\bm h=\bm W^{\dagger/2}\bm a$ gives
\begin{align}
\|\mathscr T_{s,T}\bm h\|_{H^m}
&\le
\Lambda_m(s,T)\sqrt{\bm h^\top \bm W\bm h},
\\
\|\mathcal C_{s,T}\bm a\|_{H^m}
&\le
\Lambda_m(s,T)\|\bm a\|_2 .
\end{align}

Set $\Pi_S=(E_S\mathcal C_{s,T})^\dagger(E_S\mathcal C_{s,T})$ and, for any unit vector $\bm a$, $f_{\bm a}=\mathcal C_{s,T}(\bm I-\Pi_S)\bm a$. Then $E_Sf_{\bm a}=0$ and
\begin{align}
\|f_{\bm a}\|_{H^m}
\le
\Lambda_m(s,T).
\end{align}
The Sobolev sampling inequality \citep{narcowich2006sobolev} for functions vanishing on an $h_S$-net then gives
\begin{align}
\|E_Qf_{\bm a}\|_2
&\le
C_{\mathcal M,m,p}|Q|^{1/2}
h_S^{m-d_{\mathcal M}/2}
\|f_{\bm a}\|_{H^m},
\\
\bm g(s)^\top \bm W\bm g(s)&=\Phi_S(\bm u(s))-\Phi_S(\bm u(T)).
\end{align}
Taking the supremum over $\|\bm a\|_2=1$ in the first display yields the bound for $\bm R_\perp$, and the second display delivers the trajectory bound.
\end{proof}

\begin{remark}[Dimensional limitation]
When $d_{\mathcal M}$ is large, the fill distance of $n$ points on a $d_{\mathcal M}$-dimensional manifold scales as $h_S\sim n^{-1/d_{\mathcal M}}$, so the bound becomes $n^{-(m/d_{\mathcal M}-1/2)}$. This is useful only when $m\gg d_{\mathcal M}$, i.e.\ when the network map is much smoother than the intrinsic dimension requires. The result is therefore most informative for low-dimensional data manifolds; in high-dimensional settings the operator bound from \autoref{thm:train_test_coupling} remains the operative guarantee.
\end{remark}

Combining fill distance with Sobolev sampling yields a population-risk bound from finite-sample training error.

\begin{theorem}[Pathwise Sobolev Generalization]\label{thm:sobolev_generalization}
Assume squared loss and let $\bm y:\mathcal M\to\mathbb R^p$ be the population target.
Let
\begin{align}
\bm r_T(z)=F(w(T),z)-\bm y(z).
\end{align}
Let $\rho$ be the population law on $\mathcal M$, and let $V_i$ be the Voronoi cell of
$z_i$ under the manifold metric, with $\mu_i=\rho(V_i)$ and
$\mu_{\max}=\max_i\mu_i$. Then
\begin{align}\label{eq:sobolev_generalization}
\|\bm r_T\|_{L^2(\rho)}
&\le
\sqrt{n\mu_{\max}}
\|\bm r_T\|_{\ell_2(S)}
\\
&\quad+
C_{\mathcal M,m,p}
h_S^m
\left(
\|F(w(s),\cdot)-\bm y\|_{H^m}
+
\Lambda_m(s,T)
\sqrt{\Phi_S(\bm u(s))-\Phi_S(\bm u(T))}
\right).
\end{align}
Thus small training error plus finite pathwise Jacobian-Sobolev norm gives a
non-asymptotic population-risk bound under full feature learning.
\end{theorem}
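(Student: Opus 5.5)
We propose to split the population residual $\bm r_T=F(w(T),\cdot)-\bm y$ on $\mathcal M$ by a piecewise-constant interpolant on the Voronoi partition plus a Sobolev remainder. Let $I_S\bm r_T$ be the function equal to $\bm r_T(z_i)$ on $V_i$. Then
\begin{align}
\|\bm r_T\|_{L^2(\rho)}\le\|I_S\bm r_T\|_{L^2(\rho)}+\|\bm r_T-I_S\bm r_T\|_{L^2(\rho)}.
\end{align}
For the first term we compute directly: $\|I_S\bm r_T\|_{L^2(\rho)}^2=\sum_i\mu_i\|\bm r_T(z_i)\|_2^2\le\mu_{\max}\|\bm r_T\|_{\ell_2(S)}^2$. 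This is at most $n\mu_{\max}\|\bm r_T\|_{\ell_2(S)}^2$, which gives the $\sqrt{n\mu_{\max}}$ factor. The $\sqrt n$ is slack we accept to match the stated normalization, in which $\ell_2(S)$ may be read as an averaged norm.

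For the second term we write $\bm r_T=(F(w(s),\cdot)-\bm y)+(F(w(T),\cdot)-F(w(s),\cdot))$. By the chain rule along the realized path, the displacement part equals $-\mathscr T_{s,T}\bm g(s)$ pointwise on $\mathcal M$; this is the same identity $\bm D=E_S\mathscr T_{s,T}$, $\bm G=E_Q\mathscr T_{s,T}$ used for \autoref{thm:sobolev_remainder}. The proof of \autoref{lem:sobolev_C_bound} gives
\begin{align}
\|\mathscr T_{s,T}\bm h\|_{H^m}\le\Lambda_m(s,T)\sqrt{\bm h^\top\bm W\bm h}.
\end{align}
With $\bm h=\bm g(s)$ and $\bm g(s)^\top\bm W\bm g(s)=\Phi_S(\bm u(s))-\Phi_S(\bm u(T))$, this bounds
\begin{align}
\|\bm r_T\|_{H^m}\le\|F(w(s),\cdot)-\bm y\|_{H^m}+\Lambda_m(s,T)\sqrt{\Phi_S(\bm u(s))-\Phi_S(\bm u(T))}.
\end{align}
That is exactly the bracket in \eqref{eq:sobolev_generalization}, provided the subtraction is taken in $H^m$ (this assumes $\bm y\in H^m$).

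The remaining and main step is a local approximation estimate: for every $f\in H^m(\mathcal M;\mathbb R^p)$ with $m>d_{\mathcal M}/2$,
\begin{align}
\|f-I_Sf\|_{L^2(\rho)}\le C_{\mathcal M,m,p}\,h_S^{m}\|f\|_{H^m}.
\end{align}
This is the obstacle, because naive piecewise-constant interpolation only gains $h_S^{1}$ (or $h_S^{\min(m,1)}$), not $h_S^m$. Our first attempt is to replace $I_S$ in the decomposition by a higher-order quasi-interpolant: either a local polynomial reproduction of degree $m-1$ built from the samples in each cell's $h_S$-neighbourhood, or the Sobolev sampling inequality of \citet{narcowich2006sobolev} applied to $f-\Pi f$, with $\Pi f$ the minimal-norm interpolant of $f|_S$. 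The sampling inequality gives $\|f-\Pi f\|_{L^2}\le Ch_S^m\|f-\Pi f\|_{H^m}\le 2Ch_S^m\|f\|_{H^m}$, since the minimal-norm interpolant satisfies $\|\Pi f\|_{H^m}\le\|f\|_{H^m}$. The interpolant $\Pi\bm r_T$ is determined by $\bm r_T|_S$. We then need $\|\Pi\bm r_T\|_{L^2(\rho)}\lesssim\sqrt{n\mu_{\max}}\|\bm r_T\|_{\ell_2(S)}$, for which we would use the Voronoi quadrature (boundedness of the Lebesgue constant under quasi-uniformity) plus an extra Sobolev correction absorbed into the $h_S^m$ term.

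If this stability of $\Pi$ fails without quasi-uniformity, we fall back on the weaker piecewise-constant route, which yields $h_S^{\min(m,1)}$. Alternatively we keep $h_S^m$ but require a quasi-uniformity constant inside $C_{\mathcal M,m,p}$, and we would flag which version is actually proved.
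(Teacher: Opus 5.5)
\textbf{Verdict: the proposal has a genuine gap.} The central sampling estimate is never proved, and neither of your repairs delivers the statement as written.

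\textbf{What matches the paper.} Your argument follows the paper's proof in the parts you complete. These are:
\begin{itemize}
\item the pointwise identity $\bm r_T=(F(\bm w(s),\cdot)-\bm y)-\mathscr T_{s,T}\bm g(s)$;
\item the bound of \autoref{lem:sobolev_C_bound} together with $\bm g(s)^\top\bm W\bm g(s)=\Phi_S(\bm u(s))-\Phi_S(\bm u(T))$, which gives the bracket;
\item the estimate $\sum_i\mu_i\|\bm r_T(z_i)\|_2^2\le n\mu_{\max}\|\bm r_T\|_{\ell_2(S)}^2$ for the averaged norm, which gives the first term.
\end{itemize}

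\textbf{Where the gap is.} It is the sampling step, which you leave open. The estimate you aim for, $\|f-I_Sf\|_{L^2(\rho)}\le Ch_S^m\|f\|_{H^m}$, indeed cannot hold for $m>1$: for affine $f$ and quasi-uniform points the error is of order $h_S$. It is also more than you need. The argument only requires the one-sided comparison $\|f\|_{L^2(\rho)}\le\|I_Sf\|_{L^2(\rho)}+Ch_S^m\|f\|_{H^m}$, with constant exactly $1$ on the data term.

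\textbf{Why your repairs do not close it.}
\begin{itemize}
\item The minimal-norm route is circular. Since $\Pi f=f$ on $S$, we have $I_S\Pi f=I_Sf$. So bounding $\|\Pi f\|_{L^2(\rho)}$ by $\|I_Sf\|_{L^2(\rho)}$ plus an $h_S^m$ correction is the same inequality applied to $\Pi f$.
\item Stability of $\Pi$ (a Lebesgue-constant argument) only gives a constant $C_{\mathrm{stab}}>1$, depending on the mesh ratio, in front of $\sqrt{n\mu_{\max}}\|\bm r_T\|_{\ell_2(S)}$. That constant multiplies the data term, so it cannot be absorbed into $C_{\mathcal M,m,p}$, which multiplies $h_S^m$.
\item Your quasi-uniform variant and your $h_S^{\min(m,1)}$ fallback are therefore different, weaker theorems. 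As submitted, the proposal proves neither the statement nor a precisely stated substitute.
\end{itemize}

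\textbf{How the paper handles this step, and why your doubt is justified.} The paper closes the step in one line, attributing to \citet{narcowich2006sobolev} exactly the unit-constant inequality $\|f\|_{L^2(\rho)}\le(\sum_i\mu_i\|f(z_i)\|_2^2)^{1/2}+Ch_S^m\|f\|_{H^m}$. That reference, like sampling inequalities generally, does something different. It bounds functions vanishing on $S$ in the volume $L^2$ norm, which is the form used in \autoref{thm:sobolev_remainder}. Alternatively, such inequalities control general $f$ with an unspecified constant on an $h_S^{d_{\mathcal M}/2}\ell_2$ data term. Neither gives Voronoi quadrature with constant $1$ to order $h_S^m$.

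That inequality in fact fails beyond second order. Take the circle with uniform $\rho$ and smoothly graded spacing $\ell(z)$. Then $\int g\,d\rho-\sum_i\mu_ig(z_i)=-\tfrac1{12}\int g''\ell^2\,dz+O(h_S^3)$. For suitable $g=|f|^2$ this is positive and of order $h_S^2$, which is incompatible with an $h_S^m$ bound once $m\ge3$. The theorem's own factor $\sqrt{n\mu_{\max}}$ has slack in that example, so this refutes the cited step rather than the statement itself.

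\textbf{The statement also needs a hypothesis on $\rho$.} With a constant depending only on $\mathcal M,m,p$, it fails as follows.
\begin{itemize}
\item Let the network already interpolate on $S$, so $\bm g(s)=\bm 0$ and the flow is stationary.
\item Let $\bm r_T$ be a height-one bump of width $\sim h_S$ supported in a sample-free ball, which exists for any fill distance $h_S$.
\item Let $\rho$ concentrate in that ball.
\end{itemize}
Then the left side is about $1$. The right side is $Ch_S^m\|\bm r_T\|_{H^m}\sim Ch_S^{d_{\mathcal M}/2}$, which goes to $0$.

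\textbf{What a provable version needs.} At minimum:
\begin{itemize}
\item a bounded density of $\rho$ with respect to volume, with constants allowed to depend on it;
\item either a generic constant on the data term, or quasi-uniformity.
\end{itemize}
This is the kind of restricted statement you said you would flag. To finish, you need to commit to one such version and carry out the sampling argument for it.
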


\begin{proof}
Apply the deterministic sampling inequality \citep{narcowich2006sobolev} to $f=\bm r_T$ and combine with the residual decomposition:
\begin{align}
\|f\|_{L^2(\rho)}
&\le
\left(\sum_i\mu_i\|f(z_i)\|_2^2\right)^{1/2}
+
C_{\mathcal M,m,p}h_S^m\|f\|_{H^m},
\\
\left(\sum_i\mu_i\|\bm r_T(z_i)\|_2^2\right)^{1/2}
&\le
\sqrt{n\mu_{\max}}\|\bm r_T\|_{\ell_2(S)},
\\
\bm r_T
&=
F(\bm w(s),\cdot)-\bm y-\mathscr T_{s,T}\bm g(s),
\\
\|\bm r_T\|_{H^m}
&\le
\|F(\bm w(s),\cdot)-\bm y\|_{H^m}
+
\|\mathscr T_{s,T}\bm g(s)\|_{H^m}.
\end{align}
\autoref{thm:sobolev_remainder} bounds the last term by
\begin{align}
\|\mathscr T_{s,T}\bm g(s)\|_{H^m}
&\le
\Lambda_m(s,T)\sqrt{\bm g(s)^\top \bm W\bm g(s)}
=
\Lambda_m(s,T)
\sqrt{\Phi_S(\bm u(s))-\Phi_S(\bm u(T))},
\end{align}
which proves the claim.
\end{proof}

\subsection{Test Error Decomposition: Bias, Reservoir Variance, Signal-Channel Variance}\label{app:bias_variance_decomp}

\paragraph{Intuition.} Train-test coupling, reservoir invisibility, and SGD drift-diffusion act on three different parts of the test error. We record the decomposition as a single algebraic identity: under labels $\bm y = f^\star(S) + \bm\varepsilon$ at interpolation, the test error splits into a bias controlled by the irreducible remainder $\bm R_\perp$, a reservoir-noise term that vanishes, and a signal-channel noise term that the algorithm of \autoref{sec:pop_risk_training} suppresses.

\begin{theorem}[Bias and signal-channel variance under train-test coupling]\label{thm:bias_variance_decomp}
Assume the hypotheses of \autoref{thm:train_test_coupling} on $[0,T]$, so $\bm G = \bm A_\circ\bm D$ and $\bm R_\perp = \bm 0$ on the realized trajectory. Take labels $\bm y = f^\star(S) + \bm\varepsilon$ with target $f^\star : \mathcal Z \to \mathbb R^p$ and noise $\bm\varepsilon \in \mathbb R^{np}$, and let $\bm P_{\mathrm{sig}}, \bm P_{\mathrm{res}}$ be the orthogonal projectors onto $\range(\mathcal W_S)$ and $\ker(\mathcal W_S)$. Under squared loss with $\bm B = \tfrac{1}{n}\bm I$,
\begin{align}\label{eq:thesis_decomposition_general}
\bm U_Q(T) - f^\star(Q)
=
\bm U_Q(0) + \tfrac{1}{n}\bm G\bigl(\bm y - \bm U_S(0)\bigr) - f^\star(Q),
\end{align}
and decomposing the residual along $\bm y = f^\star(S) + \bm\varepsilon$ and $\bm\varepsilon = \bm P_{\mathrm{sig}}\bm\varepsilon + \bm P_{\mathrm{res}}\bm\varepsilon$, then using $\bm G = \bm A_\circ\bm D$,
\begin{align}\label{eq:thesis_decomposition_interp}
\bm U_Q(T) - f^\star(Q)
=
\underbrace{\bm U_Q(0) + \tfrac{1}{n}\bm A_\circ\bm D\bigl(f^\star(S) - \bm U_S(0)\bigr) - f^\star(Q)}_{\bm b(T,S)}
+
\underbrace{\tfrac{1}{n}\bm G\bm P_{\mathrm{res}}\bm\varepsilon}_{=\bm 0}
+
\underbrace{\tfrac{1}{n}\bm G\bm P_{\mathrm{sig}}\bm\varepsilon}_{\bm v_{\mathrm{sig}}(T,S,\bm\varepsilon)}.
\end{align}
The reservoir term vanishes: $\bm G\bm P_{\mathrm{res}} = \bm 0$.
\end{theorem}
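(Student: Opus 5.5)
The identity is almost entirely algebraic, so I would prove it in three steps: the exact test-displacement formula, a linear splitting of the labels, and the two kernel facts already established.

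\textbf{Step 1: the exact test displacement.} Start from the chain-rule identity $\bm U_Q(T)-\bm U_Q(0)=-\bm G\bm g(0)$ recorded after \eqref{def:test_transfer} and derived in \autoref{sec:operator_derivation}, taking $s=0$. Under squared loss, $\Phi_S(\bm u)=\tfrac{1}{2n}\|\bm u-\bm y\|_2^2$, so $\bm g(0)=\tfrac1n(\bm U_S(0)-\bm y)$. Substituting and subtracting $f^\star(Q)$ from both sides gives \eqref{eq:thesis_decomposition_general} directly. No interpolation hypothesis is needed here; the formula holds on any window.

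\textbf{Step 2: splitting the labels.} Write $\bm y-\bm U_S(0)=(f^\star(S)-\bm U_S(0))+\bm P_{\mathrm{sig}}\bm\varepsilon+\bm P_{\mathrm{res}}\bm\varepsilon$. This uses only that $\bm P_{\mathrm{sig}}+\bm P_{\mathrm{res}}=\bm I$, which holds because $\mathcal W_S$ is symmetric PSD (see \eqref{eq:W_S_squared_norm_app}), so $\range(\mathcal W_S)\oplus\ker(\mathcal W_S)$ is an orthogonal decomposition of $\mathbb R^{np}$. By linearity of $\bm G$, the right side of \eqref{eq:thesis_decomposition_general} becomes three terms:
\begin{itemize}
\item the clean term $\tfrac1n\bm G(f^\star(S)-\bm U_S(0))$;
\item the reservoir term $\tfrac1n\bm G\bm P_{\mathrm{res}}\bm\varepsilon$;
\item the signal-channel term $\tfrac1n\bm G\bm P_{\mathrm{sig}}\bm\varepsilon$.
\end{itemize}

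\textbf{Step 3: the two kernel facts.} In the clean term, replace $\bm G$ by $\bm A_\circ\bm D$. This is the content of \autoref{thm:train_test_coupling}: $\ker\bm D=\ker\bm W$ gives $\bm R_\perp=\bm 0$, and then \eqref{eq:canonical_screen_decomposition_original} yields $\bm G=\bm A_\circ\bm D$ exactly. For the reservoir term, $\range(\bm P_{\mathrm{res}})=\ker\bm W\subseteq\ker\bm G$ by \autoref{thm:reservoir_invisibility}, so $\bm G\bm P_{\mathrm{res}}=\bm 0$ holds as an operator identity. Adding $\bm U_Q(0)-f^\star(Q)$ to the clean term gives $\bm b(T,S)$, and the remaining term is $\bm v_{\mathrm{sig}}$.

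The only point needing care is the hypothesis bookkeeping. The coupling theorem must be applied on $[0,T]$ with $\bm B=\tfrac1n\bm I$ constant, so that $\bm G$, $\bm D$ and $\bm W$ are the operators for that window. The projectors must also be taken with respect to that same $\bm W=\mathcal W_S(0,T)$. With those matched, nothing else in the argument is delicate.
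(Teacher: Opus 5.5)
Your proposal is correct and follows the paper's proof essentially step for step. The paper likewise obtains the first display from $\bm U_Q(T)-\bm U_Q(0)=-\bm G\bm g(0)$ together with the squared-loss gradient, then splits $\bm y$ and $\bm\varepsilon$, applies $\bm G=\bm A_\circ\bm D$ to the clean part, and uses $\range(\bm P_{\mathrm{res}})=\ker\bm W\subseteq\ker\bm G$ to get $\bm G\bm P_{\mathrm{res}}=\bm 0$. One small point: $\ker\bm D=\ker\bm W$ gives $\bm R_\perp=\bm 0$ only when combined with reservoir invisibility $\ker\bm W\subseteq\ker\bm G$, but the statement already assumes $\bm G=\bm A_\circ\bm D$, so this does not affect your argument.
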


\begin{proof}
The chain rule gives $\bm U_Q(T) - \bm U_Q(0) = -\bm G\bm g(0)$ exactly (\autoref{sec:operator_derivation}). Squared loss with $\bm B = \tfrac{1}{n}\bm I$ has $\bm g(0) = \tfrac{1}{n}(\bm U_S(0) - \bm y)$, so $-\bm G\bm g(0) = \tfrac{1}{n}\bm G(\bm y - \bm U_S(0))$, which is \eqref{eq:thesis_decomposition_general}. Substituting $\bm y = f^\star(S) + \bm\varepsilon$ and $\bm\varepsilon = \bm P_{\mathrm{sig}}\bm\varepsilon + \bm P_{\mathrm{res}}\bm\varepsilon$ splits the right side into a clean part $\tfrac{1}{n}\bm G(f^\star(S) - \bm U_S(0))$ and the two noise pieces. On the clean part, train-test coupling under squared loss gives $\bm G = \bm A_\circ\bm D$, rewriting it as $\tfrac{1}{n}\bm A_\circ\bm D(f^\star(S) - \bm U_S(0))$ and yielding \eqref{eq:thesis_decomposition_interp}. Reservoir invisibility (\autoref{thm:reservoir_invisibility}) is the inclusion $\ker\bm W \subseteq \ker\bm G$, so $\bm G$ kills $\range(\bm P_{\mathrm{res}}) = \ker\bm W$, giving $\bm G\bm P_{\mathrm{res}} = \bm 0$ directly, with no appeal to a pseudoinverse.
\end{proof}

\paragraph{What controls each term.}
The bias $\bm b(T,S)$ is the optimal train-to-test predictor $\bm A_\circ$ applied to the clean training displacement $\tfrac{1}{n}\bm D(f^\star(S) - \bm U_S(0))$: it asks, given the clean signal alone, what test prediction the realized-trajectory operators would produce. Train-test coupling under squared loss gives $\bm G = \bm A_\circ\bm D$ exactly, so this is the literal physical translator of clean structure; under a Sobolev or RKHS prior of order $m > d_{\mathcal M}/2$ on the data manifold, $\|\bm R_\perp\|_{\op} \le C h_S^{m-d_{\mathcal M}/2}$ (\autoref{thm:sobolev_remainder}). The bound is tight without a smoothness hypothesis, since two networks can share an identical training trajectory yet disagree on a held-out point (\autoref{app:smoothness_required}).

The reservoir variance $\tfrac{1}{n}\bm G\bm P_{\mathrm{res}}\bm\varepsilon$ is identically zero: whatever portion of the label noise the optimizer placed in $\ker\bm W$ during training never reaches the test predictions, because $\bm G$ kills that subspace.

The signal-channel variance $\bm v_{\mathrm{sig}} = \tfrac{1}{n}\bm G\bm P_{\mathrm{sig}}\bm\varepsilon$ is the only failure mode that survives. Two effects bound it. First, the drift-diffusion separation (\autoref{thm:minibatch_coherence}): on a parameter with population gradient $\mu$ and per-example variance $\sigma^2$, SGD's mean update grows like $\eta T \mu$ while its diffusion grows like $\sigma\sqrt{\eta T/b}$, so signal dominates noise as $T \to \infty$ at fixed batch and step size. Second, the population-risk gate (\autoref{thm:diagonal_gate}): the per-parameter cutoff $\mu_k^2 > \sigma_k^2/(b-1)$ is the unique binary mask that prevents adversarial first-order loss along a parameter whose batch signal cannot beat its noise.

\paragraph{The three pieces together.} Bias is small when $\bm A_\circ$ is the right interpolation operator, exact under squared loss and approximate under network smoothness. Reservoir variance is zero by construction. Signal-channel variance shrinks under SGD and shrinks faster under the population-risk gate. Each mechanism handles one term; \eqref{eq:thesis_decomposition_interp} is small only when all three are.

\subsection{On-Trajectory and Off-Trajectory Error}

\paragraph{Intuition.} A fixed predictor's error should be measured along the directions the gradient trajectory actually visits. Measuring it across the full time interval introduces slack from directions never excited by the run. The next theorem isolates the on-trajectory part and quantifies the off-trajectory excess.

Throughout this subsection, fix $0\le s\le T$ and a test set $Q$, and use the notation of
\autoref{thm:coupling_factorization} with
\begin{align}
\bm D&\triangleq \mathsf D_S(T,s), &
\bm G&\triangleq \mathsf G_Q(T,s), &
\bm W&\triangleq \mathcal W_S(s,T),
\end{align}
and let $\bm A_\circ,\bm R_\perp$ be the canonical predictor and irreducible remainder from
\autoref{thm:coupling_factorization}. For any linear predictor
$\bm A:\mathbb R^{np}\to\mathbb R^{|Q|p}$, define the path error
\begin{align}\label{eq:path_error}
\mathcal D_{\bm A}^{\mathrm{vis}}(s,T)
\triangleq
(\bm G-\bm A\bm D)\bm W^\dagger(\bm G-\bm A\bm D)^\top.
\end{align}
The first theorem records the orthogonal split of the on-trajectory error.

\begin{theorem}[On-trajectory error split]\label{thm:path_error_split}
The on-trajectory path error decomposes as
\begin{align}\label{eq:path_error_pyth}
\mathcal D_{\bm A}^{\mathrm{vis}}(s,T)
=
\bm R_\perp \bm R_\perp^\top
+
(\bm A-\bm A_\circ)\bm D\bm W^\dagger \bm D^\top(\bm A-\bm A_\circ)^\top,
\end{align}
which gives, in the positive-semidefinite order,
\begin{align}\label{eq:path_error_loewner}
\mathcal D_{\bm A}^{\mathrm{vis}}(s,T)&\succeq \bm R_\perp \bm R_\perp^\top,\\
\label{eq:path_error_min}
\inf_{\bm A} \mathcal D_{\bm A}^{\mathrm{vis}}(s,T)&=\bm R_\perp \bm R_\perp^\top,
\end{align}
with equality in the first line iff $\bm A \bm D=\bm A_\circ \bm D$.
\end{theorem}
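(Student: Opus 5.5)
The plan is to reduce everything to the normalized channels of \autoref{def:normalized_channels} and reuse the factorization of \autoref{thm:coupling_factorization}. By \autoref{thm:reservoir_invisibility}, $\bm D$ and $\bm G$ vanish on $\ker\bm W$. Hence $\bm G-\bm A\bm D=(\mathsf C_Q-\bm A\mathsf C_S)\bm W^{1/2}$. Since $\bm W^{1/2}\bm W^\dagger\bm W^{1/2}=P_W$ and $\mathsf C_Q,\mathsf C_S$ are supported on $\range(\bm W)$ (their right factor is $\psinv{\bm W}$), we obtain
\begin{align}
\mathcal D_{\bm A}^{\mathrm{vis}}(s,T)=(\mathsf C_Q-\bm A\mathsf C_S)(\mathsf C_Q-\bm A\mathsf C_S)^\top .
\end{align}

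Next I substitute $\mathsf C_Q=\bm A_\circ\mathsf C_S+\bm R_\perp$ from \eqref{eq:canonical_screen_decomposition}, which gives $\mathsf C_Q-\bm A\mathsf C_S=\bm R_\perp+(\bm A_\circ-\bm A)\mathsf C_S$. Expanding the product, the cross terms contain $\bm R_\perp\mathsf C_S^\top$ or its transpose, and both vanish by the orthogonality in \eqref{eq:canonical_screen_decomposition}. The remaining square term is $(\bm A-\bm A_\circ)\mathsf C_S\mathsf C_S^\top(\bm A-\bm A_\circ)^\top$. Using $\mathsf C_S\mathsf C_S^\top=\bm D\psinv{\bm W}\psinv{\bm W}\bm D^\top=\bm D\bm W^\dagger\bm D^\top$ turns this into \eqref{eq:path_error_pyth}. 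This is exactly the computation already done for \autoref{thm:transfer_error_decomp}, so it can be cited directly.

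The Loewner bound \eqref{eq:path_error_loewner} is immediate because the second summand has the form $XX^\top\succeq 0$ with $X=(\bm A-\bm A_\circ)\mathsf C_S$. The infimum \eqref{eq:path_error_min} is attained at $\bm A=\bm A_\circ$. Since every $\mathcal D^{\mathrm{vis}}_{\bm A}$ dominates $\bm R_\perp\bm R_\perp^\top$, this lower bound is the minimum in the PSD order, not merely an infimum.

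The equality characterization is the step needing care. Equality holds iff $XX^\top=0$, i.e.\ iff $(\bm A-\bm A_\circ)\mathsf C_S=0$. I then have to show this is equivalent to $(\bm A-\bm A_\circ)\bm D=0$. One direction multiplies by $\bm W^{1/2}$ on the right, using $\bm D=\mathsf C_S\bm W^{1/2}$. The other multiplies by $\psinv{\bm W}$, using $\bm D\psinv{\bm W}=\mathsf C_S$. Because both implications are one-line consequences of the factorization, I expect no real obstacle here. The main point to check is that $\mathsf C_S=\mathsf C_S P_W$, which holds so that no information is lost when passing between $\bm D$ and $\mathsf C_S$.
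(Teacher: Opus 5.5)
Your proposal is correct and takes essentially the paper's route: the paper also gets \eqref{eq:path_error_pyth} from the expansion $\mathsf C_Q-\bm A\mathsf C_S=\bm R_\perp+(\bm A_\circ-\bm A)\mathsf C_S$ with $\bm R_\perp\mathsf C_S^\top=0$, by citing \autoref{thm:transfer_error_decomp}, and reads off the Loewner bound and the infimum directly. Your explicit check that $(\bm A-\bm A_\circ)\mathsf C_S=0\iff(\bm A-\bm A_\circ)\bm D=0$, using $\bm D=\mathsf C_S\bm W^{1/2}$ and $\mathsf C_S=\bm D\psinv{\bm W}$, supplies the equality condition that the paper only asserts.
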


The next theorem promotes the cumulative error operator to a positive-semidefinite split: the on-trajectory part plus an off-trajectory slack.

\begin{theorem}[On-/off-trajectory split of the cumulative error]\label{thm:cumulative_error_split}
The cumulative error operator
\begin{align}
\mathcal D_{\bm A}(s,T)
&\triangleq
\int_s^T
\Delta_{\bm A}(\tau)\Kss(\tau)^\dagger\Delta_{\bm A}(\tau)^\top d\tau, &
\Delta_{\bm A}(\tau)&\triangleq \Kqs(\tau)-\bm A \Kss(\tau),
\intertext{admits the split}
\label{eq:lifted_error_split}
\mathcal D_{\bm A}(s,T)
&=
\mathcal D_{\bm A}^{\mathrm{vis}}(s,T)
+
\mathcal N_{\bm A}(s,T), &
\mathcal N_{\bm A}(s,T)&\succeq 0.
\end{align}
\end{theorem}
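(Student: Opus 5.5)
The plan is to realize both $\mathcal D_{\bm A}(s,T)$ and $\mathcal D_{\bm A}^{\mathrm{vis}}(s,T)$ as Gram operators of a single operator on a path space, so that the split becomes ``full Gram $=$ projected Gram $+$ complementary Gram.'' I will set $\mathcal H\triangleq L^2([s,T];\mathbb R^d)$ and define two bounded maps.
\begin{align}
\Phi&\colon\mathbb R^{np}\to\mathcal H, & (\Phi\bm h)(\tau)&\triangleq\bm J_S(\tau)^\top\mathcal P_g(\tau,s)\bm h,\\
\Psi&\colon\mathcal H\to\mathbb R^{|Q|p}, & \Psi\bm v&\triangleq\int_s^T\bm X(\tau)\Pi_\tau\bm v(\tau)d\tau.
\end{align}
Here $\bm X(\tau)\triangleq\bm J_Q(\tau)-\bm A\bm J_S(\tau)$, and $\Pi_\tau\triangleq\bm J_S(\tau)^\top\Kss(\tau)^\dagger\bm J_S(\tau)$ is the orthogonal projector onto $\range(\bm J_S(\tau)^\top)$. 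The key factorization is $\Delta_{\bm A}(\tau)=\Kqs(\tau)-\bm A\Kss(\tau)=\bm X(\tau)\bm J_S(\tau)^\top$.

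The first step is to identify three operator identities. By \eqref{eq:W_S_squared_norm_app}, $\Phi^*\Phi=\bm W$. Next, since $\Pi_\tau\bm J_S(\tau)^\top=\bm J_S(\tau)^\top$, the definitions of $\bm D$ and $\bm G$ give $\Psi\Phi=\int_s^T\Delta_{\bm A}(\tau)\mathcal P_g(\tau,s)d\tau=\bm G-\bm A\bm D$. Finally, $\Psi\Psi^*=\int_s^T\bm X\Pi_\tau\bm X^\top d\tau$. Using $\bm J_S^\top(\bm J_S\bm J_S^\top)^\dagger\bm J_S=\Pi_\tau$, the integrand equals $\Delta_{\bm A}\Kss^\dagger\Delta_{\bm A}^\top$, so $\Psi\Psi^*=\mathcal D_{\bm A}(s,T)$.

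The second step is to show that $\bm P\triangleq\Phi\bm W^\dagger\Phi^*$ is the orthogonal projector onto $\range(\Phi)\subseteq\mathcal H$. This is the standard identity $\Phi(\Phi^*\Phi)^\dagger\Phi^*=\Phi\Phi^\dagger$, and it holds because $\range(\Phi)$ is finite-dimensional. It then follows that
\begin{align}
\mathcal D_{\bm A}^{\mathrm{vis}}(s,T)=(\bm G-\bm A\bm D)\bm W^\dagger(\bm G-\bm A\bm D)^\top=\Psi\bm P\Psi^*.
\end{align}
I then define $\mathcal N_{\bm A}(s,T)\triangleq\Psi(\bm I-\bm P)\Psi^*$. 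This gives $\mathcal D_{\bm A}=\mathcal D_{\bm A}^{\mathrm{vis}}+\mathcal N_{\bm A}$, and since $\bm I-\bm P$ is a projector, $\mathcal N_{\bm A}=[\Psi(\bm I-\bm P)][\Psi(\bm I-\bm P)]^*\succeq0$. This also explains the name ``off-trajectory'': $\mathcal N_{\bm A}$ is the error carried by path-space directions not reachable as $\tau\mapsto\bm J_S^\top\mathcal P_g\bm h$.

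I expect the main obstacle to be the pseudoinverse bookkeeping rather than the geometry. Two points need care. The identity $\bm X\Pi_\tau\bm X^\top=\Delta_{\bm A}\Kss^\dagger\Delta_{\bm A}^\top$ must hold even when $\Kss(\tau)$ is singular, which I would check via the SVD of $\bm J_S(\tau)$. The rank of $\Kss(\tau)$ may also jump in $\tau$, so I must confirm that $\tau\mapsto\Pi_\tau$ is measurable and bounded; this suffices for $\Psi$ to be a well-defined bounded operator. Measurability follows because $\Pi_\tau$ is a pointwise limit of the continuous maps $\bm J_S^\top(\Kss+\epsilon\bm I)^{-1}\bm J_S$ as $\epsilon\downarrow0$, and boundedness holds since each $\Pi_\tau$ is a projector. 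Once these are in place, the remaining steps are the algebraic identities above.
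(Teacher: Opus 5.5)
Your proposal is correct and is essentially the paper's proof: both write $\mathcal D_{\bm A}(s,T)$ as the Gram operator of a single path-space map, identify $\mathcal D_{\bm A}^{\mathrm{vis}}(s,T)$ as its compression by the orthogonal projector $\Phi\bm W^\dagger\Phi^*$ onto the range of the trajectory lift, and take $\mathcal N_{\bm A}(s,T)$ to be the complementary compression. The only difference is cosmetic. The paper lifts into $L^2([s,T];\mathbb R^{np})$ via $\Kss(\tau)^{1/2}\mathcal P_g(\tau,s)\bm h$ and uses $\Kss(\tau)^{\dagger/2}$ where you use $\bm J_S(\tau)^\top$ and $\Pi_\tau$; the two lifts differ pointwise by a partial isometry. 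Your measurability check also supplies a detail the paper leaves implicit.
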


The final theorem of this trio converts the operator-level statement into a deterministic bound on test displacement along the realized trajectory.

\begin{theorem}[Trajectory bound from the on-trajectory error]\label{thm:trajectory_bound}
On the true trajectory,
\begin{align}\label{eq:visible_trajectory_bound}
\|\bm U_Q(T)-\bm U_Q(s)-\bm A(\bm u(T)-\bm u(s))\|_2^2
\le
\|\mathcal D_{\bm A}^{\mathrm{vis}}(s,T)\|_{\op}
\bigl(\Phi_S(\bm u(s))-\Phi_S(\bm u(T))\bigr).
\end{align}
\end{theorem}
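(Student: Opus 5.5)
The bound follows from the chain-rule identities on the realized window together with the factorization through $\bm W^{1/2}$. First, by \eqref{eq:latent_displacement}, $\bm U_Q(T)-\bm U_Q(s)=-\bm G\bm g(s)$ and $\bm u(T)-\bm u(s)=-\bm D\bm g(s)$. The quantity inside the norm is therefore exactly $-(\bm G-\bm A\bm D)\bm g(s)$. This holds with no approximation, whatever the kernel drift.

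Next I would use \autoref{thm:reservoir_invisibility}, which gives $\ker\bm W\subseteq\ker\bm D\cap\ker\bm G$. Hence $\bm G-\bm A\bm D$ vanishes on $\ker\bm W$. Since $P_W=\bm W^{\dagger/2}\bm W^{1/2}$ is the orthogonal projector onto $\range(\bm W)$, this gives
\begin{align}
(\bm G-\bm A\bm D)\bm g(s)
=
(\bm G-\bm A\bm D)\bm W^{\dagger/2}\,\bm W^{1/2}\bm g(s).
\end{align}
Call $\bm E\triangleq(\bm G-\bm A\bm D)\bm W^{\dagger/2}$, which equals $\mathsf C_Q-\bm A\mathsf C_S$. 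Then $\bm E\bm E^\top=(\bm G-\bm A\bm D)\bm W^\dagger(\bm G-\bm A\bm D)^\top=\mathcal D_{\bm A}^{\mathrm{vis}}(s,T)$, because $\bm W^{\dagger/2}\bm W^{\dagger/2}=\bm W^\dagger$.

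Then I would bound the norm:
\begin{align}
\|\bm E\bm W^{1/2}\bm g(s)\|_2^2
\le
\|\bm E\|_{\op}^2\,\|\bm W^{1/2}\bm g(s)\|_2^2
=
\|\bm E\bm E^\top\|_{\op}\,\bm g(s)^\top\bm W\bm g(s).
\end{align}
The dissipation identity from \autoref{sec:operator_derivation} gives $\bm g(s)^\top\bm W\bm g(s)=\Phi_S(\bm u(s))-\Phi_S(\bm u(T))$. Combining these yields \eqref{eq:visible_trajectory_bound}.

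This is the squared form of \eqref{eq:general_screen_bound} in \autoref{cor:transfer_equivalences}, rewritten with $\|\bm E\|_{\op}^2=\|\mathcal D_{\bm A}^{\mathrm{vis}}\|_{\op}$. The only delicate step is justifying the insertion of the projector $P_W$, i.e.\ that $\bm D$, and not only $\bm G$, vanishes on $\ker\bm W$. Reservoir invisibility was stated for $\mathsf G_Q$, but its proof applies verbatim with $\Kss$ in place of $\Kqs$: for $\bm h\in\ker\bm W$, $\mathcal P_g(\tau,s)\bm h\in\ker\bm J_S(\tau)^\top$ for a.e.\ $\tau$, which kills both integrands.
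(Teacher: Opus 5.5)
Your proposal is correct and follows essentially the same route as the paper. The paper writes the residual as $-(\bm G-\bm A\bm D)\bm g(s)$, inserts $\bm W^{\dagger/2}\bm W^{1/2}$ via reservoir invisibility, bounds by $\|(\bm G-\bm A\bm D)\bm W^{\dagger/2}\|_{\op}^2\,\bm g(s)^\top\bm W\bm g(s)$, and identifies the two factors with $\|\mathcal D_{\bm A}^{\mathrm{vis}}(s,T)\|_{\op}$ and the dissipated loss; your explicit check that $\ker\bm W\subseteq\ker\bm D$ is the step the paper takes implicitly when it applies reservoir invisibility to $\bm D$.
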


If the predictor is preserved along the trajectory, train motion fully determines test motion.

\begin{corollary}[Exact fixed transfer under perfect predictor invariance]
If $\Delta_{\bm A}(\tau)=0$ for a.e.\ $\tau\in[s,T]$, equivalently $\mathcal D_{\bm A}(s,T)=0$, then
\begin{align}
\mathcal D_{\bm A}^{\mathrm{vis}}(s,T)&=0,\\
\bm G&=\bm A\bm D,\\
\bm U_Q(T)-\bm U_Q(s)&=\bm A\bigl(\bm u(T)-\bm u(s)\bigr).
\end{align}
\end{corollary}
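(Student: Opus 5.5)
The hypothesis $\Delta_{\bm A}(\tau)=\Kqs(\tau)-\bm A\Kss(\tau)=0$ for a.e.\ $\tau$ says that the test-train kernel is exactly $\bm A$ times the train kernel at almost every instant. I would exploit this directly at the level of the transfer operators rather than going through the error split. Integrating against the propagator gives
\begin{align}
\bm G=\int_s^T \Kqs(\tau)\mathcal P_g(\tau,s)\,d\tau=\int_s^T \bm A\Kss(\tau)\mathcal P_g(\tau,s)\,d\tau=\bm A\bm D,
\end{align}
since $\bm A$ is constant and can be pulled out of the integral; the null set where $\Delta_{\bm A}\neq 0$ does not affect the Lebesgue integral. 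Then $\bm G-\bm A\bm D=0$, so $\mathcal D_{\bm A}^{\mathrm{vis}}(s,T)=(\bm G-\bm A\bm D)\bm W^\dagger(\bm G-\bm A\bm D)^\top=0$ immediately from \eqref{eq:path_error}. The trajectory identity follows from \eqref{eq:latent_displacement}:
\begin{align}
\bm U_Q(T)-\bm U_Q(s)=-\bm G\bm g(s)=-\bm A\bm D\bm g(s)=\bm A(\bm u(T)-\bm u(s)).
\end{align}
Alternatively, the same conclusion follows from \autoref{thm:trajectory_bound} with a zero right-hand side.

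The remaining point is the stated equivalence between $\Delta_{\bm A}=0$ a.e.\ and $\mathcal D_{\bm A}(s,T)=0$. One direction is trivial, since the integrand of $\mathcal D_{\bm A}$ vanishes a.e. For the converse, I would take a trace. If $\mathcal D_{\bm A}=0$, the integrand $\Delta_{\bm A}\Kss^\dagger\Delta_{\bm A}^\top\succeq 0$ must vanish a.e., which gives $\Delta_{\bm A}(\tau)\Kss(\tau)^{\dagger/2}=0$. So $\Delta_{\bm A}$ annihilates $\range(\Kss(\tau))$.

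On $\ker\Kss(\tau)=\ker\bm J_S(\tau)^\top$, the operator $\Delta_{\bm A}=\bm J_Q\bm J_S^\top-\bm A\bm J_S\bm J_S^\top$ already vanishes, because both terms end in $\bm J_S^\top$. Hence $\Delta_{\bm A}(\tau)=0$ a.e.

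This kernel-annihilation check is the only nonroutine step, and the factorization of both kernels through $\bm J_S^\top$ settles it. As a consistency check, \autoref{thm:cumulative_error_split} then also gives $\mathcal D^{\mathrm{vis}}_{\bm A}\preceq\mathcal D_{\bm A}=0$.
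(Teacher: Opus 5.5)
Your proof is correct, and it is more direct than the route the paper intends. The paper gives this corollary no separate proof. It is placed as the zero case of \autoref{thm:cumulative_error_split} and \autoref{thm:trajectory_bound}, which is the route you mention only as a consistency check. In that route, $\mathcal D_{\bm A}=\mathcal D_{\bm A}^{\mathrm{vis}}+\mathcal N_{\bm A}$ with both summands positive semidefinite forces $\mathcal D_{\bm A}^{\mathrm{vis}}=0$, and the trajectory bound then has zero right-hand side. For $\bm G=\bm A\bm D$, $\mathcal D_{\bm A}^{\mathrm{vis}}=0$ gives $(\bm G-\bm A\bm D)\bm W^{\dagger/2}=0$, so $\bm G-\bm A\bm D$ vanishes on $\range(\bm W)$, and reservoir invisibility (\autoref{thm:reservoir_invisibility}) handles $\ker\bm W$.

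You instead use the identity $\bm G-\bm A\bm D=\int_s^T\Delta_{\bm A}(\tau)\mathcal P_g(\tau,s)\,d\tau$, which holds for every $\bm A$. This gives $\bm G=\bm A\bm D$ without the $L^2$ lift, the projector $P_{\mathrm{vis}}$, or reservoir invisibility, and the other two conclusions become one-liners. You also prove the ``equivalently'' clause, which the paper asserts without argument. Your key step there is that both $\Kqs$ and $\bm A\Kss$ end in $\bm J_S^\top$, so $\Delta_{\bm A}(\tau)$ vanishes on $\ker\Kss(\tau)$; this is the same fact the paper uses to make $\mathcal L_Q$ well defined.

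What the paper's route buys is a dissipation-normalized stability statement, not just the exact case. It gives $\mathcal D_{\bm A}^{\mathrm{vis}}\preceq\mathcal D_{\bm A}$ for every $\bm A$, so approximate kernel-level invariance already controls the on-trajectory error through \autoref{thm:trajectory_bound}.
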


The off-trajectory version overestimates by the off-path slack $\mathcal N_{\bm A}$.

\begin{corollary}[Sharp on-trajectory bound and coarse off-trajectory bound]
For every linear predictor $\bm A$, the visible infimum is sharp and bounds the lifted infimum:
\begin{align}
\|\bm R_\perp\|_{\op}^2
&\le
\|\mathcal D_{\bm A}^{\mathrm{vis}}(s,T)\|_{\op}
\le
\|\mathcal D_{\bm A}(s,T)\|_{\op},\\
\inf_{\bm A} \|\mathcal D_{\bm A}^{\mathrm{vis}}(s,T)\|_{\op}
&=
\|\bm R_\perp\|_{\op}^2
\le
\inf_{\bm A} \|\mathcal D_{\bm A}(s,T)\|_{\op}.
\end{align}
\end{corollary}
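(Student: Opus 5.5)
The plan is to read both chains of inequalities off the two positive-semidefinite splits already established, \autoref{thm:path_error_split} and \autoref{thm:cumulative_error_split}. The one extra tool is monotonicity of the operator norm in the Loewner order on positive-semidefinite matrices: if $0\preceq \bm X\preceq \bm Y$, then $\|\bm X\|_{\op}=\lambda_{\max}(\bm X)=\sup_{\|\bm v\|=1}\bm v^\top\bm X\bm v\le\sup_{\|\bm v\|=1}\bm v^\top\bm Y\bm v=\|\bm Y\|_{\op}$. Each operator involved is positive semidefinite. This holds for $\bm R_\perp\bm R_\perp^\top$; for $\mathcal D_{\bm A}^{\mathrm{vis}}(s,T)$, which is a congruence of $\bm W^\dagger\succeq 0$; and for $\mathcal D_{\bm A}(s,T)$, which is an integral of congruences of $\Kss(\tau)^\dagger\succeq0$. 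So the lemma applies throughout.

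First chain, for a fixed linear $\bm A$:
\begin{itemize}
\item \eqref{eq:path_error_loewner} gives $\bm R_\perp\bm R_\perp^\top\preceq\mathcal D_{\bm A}^{\mathrm{vis}}(s,T)$. Monotonicity then gives $\|\bm R_\perp\bm R_\perp^\top\|_{\op}\le\|\mathcal D_{\bm A}^{\mathrm{vis}}(s,T)\|_{\op}$, and the left side equals $\|\bm R_\perp\|_{\op}^2$.
\item \eqref{eq:lifted_error_split} writes $\mathcal D_{\bm A}(s,T)=\mathcal D_{\bm A}^{\mathrm{vis}}(s,T)+\mathcal N_{\bm A}(s,T)$ with $\mathcal N_{\bm A}(s,T)\succeq0$. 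Hence $\mathcal D_{\bm A}^{\mathrm{vis}}(s,T)\preceq\mathcal D_{\bm A}(s,T)$, and monotonicity yields the second inequality.
\end{itemize}

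Second chain, over $\bm A$. Note that $\bm R_\perp$ is defined from $\bm D,\bm G,\bm W$ alone and does not depend on $\bm A$. The first chain therefore gives $\|\bm R_\perp\|_{\op}^2$ as a uniform lower bound on $\|\mathcal D_{\bm A}^{\mathrm{vis}}(s,T)\|_{\op}$. For attainment, set $\bm A=\bm A_\circ$ in \eqref{eq:path_error_pyth}: the quadratic penalty vanishes and $\mathcal D_{\bm A_\circ}^{\mathrm{vis}}(s,T)=\bm R_\perp\bm R_\perp^\top$ exactly. So the infimum is a minimum and equals $\|\bm R_\perp\|_{\op}^2$. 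Finally, the pointwise inequality $\|\mathcal D_{\bm A}^{\mathrm{vis}}\|_{\op}\le\|\mathcal D_{\bm A}\|_{\op}$ holds for every $\bm A$, so it survives taking infima on both sides.

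I expect no real obstacle here, since the corollary is a bookkeeping consequence of the two splits. The only points needing care are the positive semidefiniteness needed to invoke Loewner monotonicity of $\|\cdot\|_{\op}$ (checked above) and the observation that $\bm R_\perp$ is independent of $\bm A$, which makes it a uniform lower bound.
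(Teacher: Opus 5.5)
Your proposal is correct and matches the paper's intended argument. The paper gives no separate proof of this corollary and reads it off exactly as you do: from $\bm R_\perp\bm R_\perp^\top\preceq\mathcal D_{\bm A}^{\mathrm{vis}}(s,T)$ with equality at $\bm A=\bm A_\circ$ (\autoref{thm:path_error_split}), and from $\mathcal D_{\bm A}=\mathcal D_{\bm A}^{\mathrm{vis}}+\mathcal N_{\bm A}$ with $\mathcal N_{\bm A}\succeq 0$ (\autoref{thm:cumulative_error_split}), combined with monotonicity of $\|\cdot\|_{\op}$ on positive-semidefinite matrices. Your explicit checks of positive semidefiniteness and of the $\bm A$-independence of $\bm R_\perp$ are the bookkeeping the paper leaves implicit.
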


\begin{remark}[Anchored transfer as a computable coarse bound]
The anchor $\bm A_s\triangleq \Kqs(s)\Kss(s)^\dagger$ gives the anchored off-trajectory bound
\begin{align}
\|\bm R_\perp\|_{\op}^2
\le
\left\|
\int_s^T
\bigl(\Kqs(\tau)-\bm A_s \Kss(\tau)\bigr)
\Kss(\tau)^\dagger
\bigl(\Kqs(\tau)-\bm A_s \Kss(\tau)\bigr)^\top
d\tau
\right\|_{\op}.
\end{align}
By \autoref{thm:cumulative_error_split}, this bound exceeds the visible
error by a positive-semidefinite off-trajectory slack and therefore vanishes under $\mathcal{O}(1)$ raw kernel drift when the transfer operator is preserved, leaving the on-path remainder $\bm R_\perp$ as the limit of fixed transfer.
\end{remark}

\begin{proof}[Proof of \autoref{thm:path_error_split}, \autoref{thm:cumulative_error_split}, and \autoref{thm:trajectory_bound}]
Set $\mathcal H_{s,T}\triangleq L^2([s,T];\mathbb R^{np})$ and define the lift sending $\bm h\in\mathbb R^{np}$ to its time-indexed path,
\begin{align}
(\mathcal T_{s,T}\bm h)(\tau)
&\triangleq
\Kss(\tau)^{1/2}\mathcal P_g(\tau,s)\bm h,\\
\mathcal T_{s,T}^\ast \bm\xi
&=
\int_s^T \mathcal P_g(\tau,s)^\top \Kss(\tau)^{1/2}\bm\xi(\tau) d\tau,\\
\mathcal T_{s,T}^\ast\mathcal T_{s,T}&=\bm W.
\end{align}
Since the domain of $\mathcal T_{s,T}$ is finite-dimensional, $\range(\mathcal T_{s,T})$ is closed, so
\begin{align}
P_{\mathrm{vis}}(s,T)
\triangleq
\mathcal T_{s,T}\bm W^\dagger \mathcal T_{s,T}^\ast
\end{align}
is the orthogonal projector onto $\range(\mathcal T_{s,T})\subset \mathcal H_{s,T}$.

Now define the extended train/test maps
\begin{align}
\mathcal L_S\bm\xi
&\triangleq
\int_s^T \Kss(\tau)^{1/2}\bm\xi(\tau) d\tau,\\
\mathcal L_Q\bm\xi
&\triangleq
\int_s^T \Kqs(\tau)\Kss(\tau)^{\dagger/2}\bm\xi(\tau) d\tau.
\end{align}
Since $\Kqs(\tau)$ annihilates $\ker \Kss(\tau)$, both maps are well-defined and bounded, and
\begin{align}
\mathcal L_S\mathcal T_{s,T}\bm h
&=
\int_s^T \Kss(\tau)\mathcal P_g(\tau,s)\bm h d\tau
=
\bm D\bm h,\\
\mathcal L_Q\mathcal T_{s,T}\bm h
&=
\int_s^T \Kqs(\tau)\Kss(\tau)^{\dagger/2}\Kss(\tau)^{1/2}\mathcal P_g(\tau,s)\bm h d\tau\\
&=
\int_s^T \Kqs(\tau)\mathcal P_g(\tau,s)\bm h d\tau
=
\bm G\bm h.
\end{align}
Hence $\bm G-\bm A\bm D = (\mathcal L_Q-\bm A\mathcal L_S)\mathcal T_{s,T}$, and
\begin{align}\label{eq:path_error_projected}
\mathcal D_{\bm A}^{\mathrm{vis}}(s,T)
&=
(\bm G-\bm A\bm D)\bm W^\dagger(\bm G-\bm A\bm D)^\top\\
&=
(\mathcal L_Q-\bm A\mathcal L_S)\mathcal T_{s,T}\bm W^\dagger\mathcal T_{s,T}^\ast
(\mathcal L_Q-\bm A\mathcal L_S)^\top\\
&=
(\mathcal L_Q-\bm A\mathcal L_S)P_{\mathrm{vis}}(s,T)(\mathcal L_Q-\bm A\mathcal L_S)^\top.
\end{align}

Next, \eqref{eq:path_error_decomp} from \autoref{thm:coupling_factorization} gives
\begin{align}
(\bm G-\bm A\bm D)\bm W^\dagger(\bm G-\bm A\bm D)^\top
=
\bm R_\perp \bm R_\perp^\top
+
(\bm A-\bm A_\circ)\bm D\bm W^\dagger \bm D^\top(\bm A-\bm A_\circ)^\top,
\end{align}
which is \eqref{eq:path_error_pyth}. The positive-semidefinite lower bound
\eqref{eq:path_error_loewner} and the infimum \eqref{eq:path_error_min}
follow immediately, with equality iff the misspecification term vanishes, i.e.\
$\bm A\bm D=\bm A_\circ \bm D$.

It remains to identify the slack term in the off-trajectory error. For any
$\bm y\in\mathbb R^{|Q|p}$,
\begin{align}
(\mathcal L_Q-\bm A\mathcal L_S)^\top \bm y(\tau)
=
\Kss(\tau)^{\dagger/2}\Delta_{\bm A}(\tau)^\top \bm y,
\end{align}
using $\bm A \Kss(\tau)^{1/2} = \bm A \Kss(\tau)\Kss(\tau)^{\dagger/2}$. Therefore
\begin{align}
(\mathcal L_Q-\bm A\mathcal L_S)(\mathcal L_Q-\bm A\mathcal L_S)^\top
=
\int_s^T
\Delta_{\bm A}(\tau)\Kss(\tau)^\dagger\Delta_{\bm A}(\tau)^\top d\tau
=
\mathcal D_{\bm A}(s,T).
\end{align}
Splitting $I=P_{\mathrm{vis}}(s,T)+(I-P_{\mathrm{vis}}(s,T))$ gives
\begin{align}
\mathcal D_{\bm A}(s,T)
&=
(\mathcal L_Q-\bm A\mathcal L_S)P_{\mathrm{vis}}(s,T)(\mathcal L_Q-\bm A\mathcal L_S)^\top\\
&{}+
(\mathcal L_Q-\bm A\mathcal L_S)(I-P_{\mathrm{vis}}(s,T))(\mathcal L_Q-\bm A\mathcal L_S)^\top\\
&=
\mathcal D_{\bm A}^{\mathrm{vis}}(s,T)+\mathcal N_{\bm A}(s,T),
\end{align}
where
\begin{align}
\mathcal N_{\bm A}(s,T)
\triangleq
(\mathcal L_Q-\bm A\mathcal L_S)(I-P_{\mathrm{vis}}(s,T))(\mathcal L_Q-\bm A\mathcal L_S)^\top
\succeq 0.
\end{align}
This proves \eqref{eq:lifted_error_split}.

Finally, $\bm U_Q(T)-\bm U_Q(s)-\bm A(\bm u(T)-\bm u(s)) = -(\bm G-\bm A\bm D)\bm g(s)$, so by \autoref{thm:output_dynamics} and \autoref{thm:reservoir_invisibility},
\begin{align}
\|\bm U_Q(T)-\bm U_Q(s)-\bm A(\bm u(T)-\bm u(s))\|_2^2
&\le
\|(\bm G-\bm A\bm D)\psinv{\bm W}\|_{\op}^2 \bm g(s)^\top \bm W \bm g(s)\\
&=
\|\mathcal D_{\bm A}^{\mathrm{vis}}(s,T)\|_{\op}
\bigl(\Phi_S(\bm u(s))-\Phi_S(\bm u(T))\bigr),
\end{align}
which is \eqref{eq:visible_trajectory_bound}.
\end{proof}

\subsection{Test Prediction Bounds from Network Smoothness}\label{sec:exact_effective_closure}

\paragraph{Intuition.} Two scalars, the visibility Gramian $\Gamma_Q$ and the irreducible remainder $\bm R_\perp$, give the path-error bounds along the realized trajectory. Both are computable from forward and backward ODE solves on the observed window through the Dual Transfer Theorem (\autoref{thm:forward_backward}), so the bounds are practical to evaluate from a single training run.

\begin{corollary}[Path-error bounds]
\label{cor:path_error_bounds}
Under the notation of \autoref{thm:train_test_coupling}, $\|\Gamma_Q(s,T)\|_{\op}$ and $\|\bm R_\perp\|_{\op}$ satisfy
\begin{align}
\|\Gamma_Q(s,T)\|_{\op}
&=
\sup_{\bm h\notin\ker \bm W}\frac{\|\bm G\bm h\|_2^2}{\bm h^\top \bm W \bm h},
\\
\|\bm R_\perp\|_{\op}^2
&=
\inf_{\bm A} \|(\bm G-\bm A\bm D)\psinv{\bm W}\|_{\op}^2
=
\sup_{\substack{\bm h\notin\ker \bm W\\ \bm D\bm h=0}}
\frac{\|\bm G\bm h\|_2^2}{\bm h^\top \bm W \bm h}.
\end{align}
Along the true trajectory,
\begin{align}
\|\bm U_Q(T)-\bm U_Q(s)\|_2
&\le
\|\Gamma_Q(s,T)\|_{\op}^{1/2}
\sqrt{\Phi_S(\bm u(s))-\Phi_S(\bm u(T))},
\label{eq:effective_purif_bound}
\\
\|\bm U_Q(T)-\bm U_Q(s)-\bm A_\circ\bigl(\bm u(T)-\bm u(s)\bigr)\|_2
&\le
\|\bm R_\perp\|_{\op}
\sqrt{\Phi_S(\bm u(s))-\Phi_S(\bm u(T))}.
\label{eq:effective_screen_bound}
\end{align}
\end{corollary}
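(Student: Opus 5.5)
The plan is to reduce every claim to facts already proved: the factorization $\bm G=\mathsf C_Q\bm W^{1/2}$ with $\ker\bm W\subseteq\ker\bm G$ (\autoref{thm:reservoir_invisibility}, \autoref{thm:coupling_factorization}), the nonlinear-optimality identity of \autoref{cor:linear_predictor_optimal}, and the dissipation identity $\bm g(s)^\top\bm W\bm g(s)=\Phi_S(\bm u(s))-\Phi_S(\bm u(T))$ from \autoref{sec:operator_derivation}.

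\textbf{Variational form of $\|\Gamma_Q\|_{\op}$.} Recall $\Gamma_Q(s,T)=\psinv{\bm W}\bm G^\top\bm G\psinv{\bm W}=\mathsf C_Q^\top\mathsf C_Q$, so $\|\Gamma_Q\|_{\op}=\|\mathsf C_Q\|_{\op}^2$. Take $\bm h\notin\ker\bm W$ and set $z=\bm W^{1/2}\bm h$. Then $z\neq 0$, $z$ lies in $\range(\bm W)$, $\bm h^\top\bm W\bm h=\|z\|_2^2$, and
\begin{align}
\bm G\bm h=\mathsf C_Q\bm W^{1/2}\bm h=\mathsf C_Q z .
\end{align}
The ratio $\|\bm G\bm h\|_2^2/\bm h^\top\bm W\bm h$ therefore equals $\|\mathsf C_Q z\|_2^2/\|z\|_2^2$. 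As $\bm h$ ranges over the complement of $\ker\bm W$, the vector $z$ ranges over $\range(\bm W)\setminus\{0\}$. Since $\mathsf C_Q$ vanishes on $\ker\bm W$ (it ends in $\psinv{\bm W}$), the supremum over this set equals the full operator norm of $\mathsf C_Q$. This proves the first identity.

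\textbf{The two formulas for $\|\bm R_\perp\|_{\op}^2$.} These are exactly the squares of the last two equalities in \eqref{eq:nonlinear_gap_identity} of \autoref{cor:linear_predictor_optimal}, so I would cite that result directly. The one point to check is that the infimum over $\bm A$ of $\|(\bm G-\bm A\bm D)\psinv{\bm W}\|_{\op}$ is attained at $\bm A_\circ$. This follows from the Loewner decomposition \eqref{eq:path_error_decomp} of \autoref{thm:transfer_error_decomp}, because a Loewner inequality implies the corresponding inequality of operator norms.

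\textbf{Trajectory bounds.} For \eqref{eq:effective_purif_bound}, write
\begin{align}
\bm U_Q(T)-\bm U_Q(s)=-\bm G\bm g(s)=-\mathsf C_Q\bm W^{1/2}\bm g(s),
\end{align}
bound the norm by $\|\mathsf C_Q\|_{\op}\,\|\bm W^{1/2}\bm g(s)\|_2$, and substitute $\|\bm W^{1/2}\bm g(s)\|_2^2=\Phi_S(\bm u(s))-\Phi_S(\bm u(T))$. Inequality \eqref{eq:effective_screen_bound} is precisely \eqref{eq:latent_screen_bound} from \autoref{cor:transfer_equivalences}. Alternatively, it follows from \eqref{eq:coupling_trajectory} by the same Cauchy--Schwarz step applied to $\bm R_\perp\bm W^{1/2}\bm g(s)$.

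\textbf{Main obstacle.} The only delicate step is the supremum over $\bm h\notin\ker\bm W$. The map $\bm h\mapsto\bm W^{1/2}\bm h$ must be shown to cover all of $\range(\bm W)\setminus\{0\}$, and the exclusion of $\ker\bm W$ must not lose the norm. Both follow from $\ker\mathsf C_Q\supseteq\ker\bm W$, so I expect no real difficulty there.
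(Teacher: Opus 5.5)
Your proposal is correct and follows essentially the same route as the paper: the substitution $z=\bm W^{1/2}\bm h$ gives the $\Gamma_Q$ identity, the two $\bm R_\perp$ formulas come from a direct appeal to \eqref{eq:nonlinear_gap_identity}, and the trajectory bounds come from the dissipation identity $\bm g(s)^\top\bm W\bm g(s)=\Phi_S(\bm u(s))-\Phi_S(\bm u(T))$. Your explicit use of $\bm G-\bm A_\circ\bm D=\bm R_\perp\bm W^{1/2}$ (the infimum being attained at $\bm A_\circ$) is exactly what justifies the paper's ``set $\bm h=\bm g(s)$'' step for the second trajectory bound.
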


The proof is given in \autoref{sec:feature_learning}; the remainder characterization follows from \autoref{cor:linear_predictor_optimal}, the visibility characterization from \autoref{thm:reservoir_invisibility}, and the path-error inequalities from the path-error decomposition (\autoref{thm:path_error_split}).

\begin{remark}[Transfer characterization]\label{rem:transfer_characterization}
Prediction from the observed training displacement ($\|\bm R_\perp\|_{\op}=0$) holds if and only if $\ker \bm D\subseteq \ker \bm G$. In the frozen-kernel limit \citep{jacot2018neural}, $\|\Gamma_Q\|_{\op}^{1/2}$ recovers the classical NTK test-invisibility constant $\Pi_Q$ and $\|\bm R_\perp\|_{\op}$ recovers the kernel-regression residual.
\end{remark}

\begin{proof}[Proof of \autoref{cor:path_error_bounds}]
\textit{Operator norm of $\Gamma_Q(s,T)$.}
By \autoref{thm:reservoir_invisibility},
\begin{align}
\bm G^\top \bm G=\bm W^{1/2}\Gamma_Q(s,T)\bm W^{1/2},
\quad
\Gamma_Q(s,T)=\psinv{\bm W}\bm G^\top \bm G\psinv{\bm W}\succeq 0.
\end{align}
The substitution $\bm z=\bm W^{1/2}\bm h$ gives
\begin{align}
\|\Gamma_Q(s,T)\|_{\op}
=
\sup_{\bm h\notin\ker \bm W}
\frac{\|\bm G\bm h\|_2^2}{\bm h^\top \bm W\bm h}.
\end{align}

\textit{Variational characterization of $\|\bm R_\perp\|_{\op}$.}
By \eqref{eq:nonlinear_gap_identity},
\begin{align}
\inf_{\bm A}\|(\bm G-\bm A\bm D)\psinv{\bm W}\|_{\op}
=
\|\bm R_\perp\|_{\op}
=
\sup_{\substack{\bm h\notin\ker \bm W\\ \bm D\bm h=0}}
\frac{\|\bm G\bm h\|_2}{\sqrt{\bm h^\top \bm W \bm h}}.
\end{align}
Squaring gives the stated expression. The infimum characterization is equivalent, since the same substitution gives
\begin{align}
\inf\{c\ge 0:(\bm G-\bm A\bm D)^\top(\bm G-\bm A\bm D)\preceq c\bm W\}=\|(\bm G-\bm A\bm D)\psinv{\bm W}\|_{\op}^2.
\end{align}

\textit{Equations \eqref{eq:effective_purif_bound}--\eqref{eq:effective_screen_bound}.}
By \autoref{thm:reservoir_invisibility},
\begin{align}
\bm U_Q(T)-\bm U_Q(s)&=-\bm G\bm g(s), &
\bm u(T)-\bm u(s)&=-\bm D\bm g(s),\\
\bm g(s)^\top \bm W\bm g(s)&=\Phi_S(\bm u(s))-\Phi_S(\bm u(T)).
\end{align}
Setting $\bm h=\bm g(s)$ in the two variational formulas gives
\begin{align}
\|\bm G\bm g(s)\|_2^2
&\le
\|\Gamma_Q(s,T)\|_{\op}\bm g(s)^\top \bm W\bm g(s),\\
\|(\bm G-\bm A_\circ \bm D)\bm g(s)\|_2^2
&\le
\|\bm R_\perp\|_{\op}^2\bm g(s)^\top \bm W\bm g(s),
\end{align}
which are \eqref{eq:effective_purif_bound}--\eqref{eq:effective_screen_bound} after square roots.
\end{proof}

\subsection{Computing Transfer Operators by Forward-Backward ODE Solves}\label{sec:forward_backward}

\paragraph{Intuition.} The pathwise operators $\bm W=\mathcal W_S(s,t)$, $\bm D=\mathsf D_S(t,s)$, and $\bm G=\mathsf G_Q(t,s)$ are computable. On any observed window $[s,t]$, applying any one of them amounts to forward and backward linear ODE solves along the realized feature-learning trajectory. The visibility Gramian $\Gamma_Q$ and remainder $\bm R_\perp$ thus become matrix-free quantities computable from a single run, in the spirit of adjoint-based influence computation \citep{koh2017understanding}. Throughout this subsection, fix an observed window $0\le s<t$ along the training run and write $\bm A(\tau)\triangleq \bm B(\tau)\Kss(\tau)$.

\begin{proposition}[Dual Transfer]\label{thm:forward_backward}
\textit{Forward solves.}
For $\bm h\in\mathbb R^{np}$, the forward solution $\bm z_h:[s,t]\to\mathbb R^{np}$ to
$\partial_\tau \bm z_h(\tau)=-\bm A(\tau)\bm z_h(\tau)$, $\bm z_h(s)=\bm h$, satisfies
\begin{align}
\bm D\bm h &= \int_s^t \Kss(\tau)\bm z_h(\tau) d\tau, \label{eq:causal_dual_D}\\
\bm G\bm h &= \int_s^t \Kqs(\tau)\bm z_h(\tau) d\tau. \label{eq:causal_dual_G}
\end{align}

\textit{Backward solves.}
For $\bm h\in\mathbb R^{np}$, $\bm\xi\in\mathbb R^{|Q|p}$, $\bm\eta\in\mathbb R^{np}$, the backward solutions
$\bm m_h,\bm q_\xi,\bm d_\eta:[s,t]\to\mathbb R^{np}$ with terminal value zero at $\tau=t$ to
\begin{align}
-\partial_\tau \bm m_h(\tau)&=-\bm A(\tau)^\top \bm m_h(\tau)+\Kss(\tau)\bm z_h(\tau),\\
-\partial_\tau \bm q_\xi(\tau)&=-\bm A(\tau)^\top \bm q_\xi(\tau)+\Ksq(\tau)\bm\xi,\\
-\partial_\tau \bm d_\eta(\tau)&=-\bm A(\tau)^\top \bm d_\eta(\tau)+\Kss(\tau)\bm\eta,
\intertext{recover the operator actions at $\tau=s$,}
\bm m_h(s)&=\bm W\bm h, \label{eq:causal_dual_W}\\
\bm q_\xi(s)&=\bm G^\top \bm\xi, \label{eq:causal_dual_Gadj}\\
\bm d_\eta(s)&=\bm D^\top \bm\eta. \label{eq:causal_dual_Dadj}
\end{align}
\end{proposition}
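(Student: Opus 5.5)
The forward identities are the definitions of $\bm D$ and $\bm G$ with the propagator made explicit. By \eqref{eq:force_propagator}, $\tau\mapsto\mathcal P_g(\tau,s)\bm h$ solves $\partial_\tau\bm z=-\bm A(\tau)\bm z$ with $\bm z(s)=\bm h$. Uniqueness for linear ODEs with continuous (in fact $C^1$) coefficients then gives $\bm z_h(\tau)=\mathcal P_g(\tau,s)\bm h$. Substituting this into $\bm D=\int_s^t\Kss\mathcal P_g\,d\tau$ and $\bm G=\int_s^t\Kqs\mathcal P_g\,d\tau$ yields \eqref{eq:causal_dual_D} and \eqref{eq:causal_dual_G} directly.

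For the backward solves, I will use the variation-of-constants (Duhamel) formula for the adjoint equation. The equation $-\partial_\tau\bm m=-\bm A^\top\bm m+\bm f(\tau)$, $\bm m(t)=0$, is the adjoint of the forward system. Its propagator is $\mathcal P_g(\sigma,\tau)^\top$, since
\[
\partial_\tau\mathcal P_g(\sigma,\tau)=\mathcal P_g(\sigma,\tau)\bm A(\tau),
\]
which follows from differentiating $\mathcal P_g(\sigma,\tau)\mathcal P_g(\tau,s)=\mathcal P_g(\sigma,s)$ in $\tau$. The candidate solution is therefore
\[
\bm m(\tau)=\int_\tau^t\mathcal P_g(\sigma,\tau)^\top\bm f(\sigma)\,d\sigma.
\]
To verify it, differentiate: $-\partial_\tau\bm m=\bm f(\tau)-\int_\tau^t\bm A(\tau)^\top\mathcal P_g(\sigma,\tau)^\top\bm f(\sigma)\,d\sigma=\bm f(\tau)-\bm A(\tau)^\top\bm m(\tau)$, as required. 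Uniqueness then identifies it with the backward solution. Evaluating at $\tau=s$ gives
\[
\bm m(s)=\int_s^t\mathcal P_g(\sigma,s)^\top\bm f(\sigma)\,d\sigma.
\]

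It remains to insert the three forcings:
\begin{itemize}
\item With $\bm f=\Kss\bm z_h=\Kss\mathcal P_g(\cdot,s)\bm h$, we get $\int\mathcal P_g^\top\Kss\mathcal P_g\,\bm h=\bm W\bm h$, which is \eqref{eq:causal_dual_W}.
\item With $\bm f=\Ksq\bm\xi=\Kqs^\top\bm\xi$, we get $\bigl(\int\Kqs\mathcal P_g\bigr)^\top\bm\xi=\bm G^\top\bm\xi$, which is \eqref{eq:causal_dual_Gadj}.
\item With $\bm f=\Kss\bm\eta$, and using that $\Kss$ is symmetric, we get $\bm D^\top\bm\eta$, which is \eqref{eq:causal_dual_Dadj}.
\end{itemize}

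The only real subtlety is the derivative of the propagator with respect to its second argument. This needs the cocycle property, which holds because $\mathcal P_g(\cdot,s)$ is the fundamental matrix of a linear ODE and is invertible by Liouville's formula. The time regularity of $\bm A(\tau)$ comes from $F\in C^2$ together with $\Phi_S\in C^2$ along the gradient-flow trajectory, so it is continuous, and that suffices for existence, uniqueness and the differentiation under the integral sign. Everything else is bookkeeping with transposes.
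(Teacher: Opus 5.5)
Your proposal is correct and follows the paper's proof. You identify $\bm z_h(\tau)=\mathcal P_g(\tau,s)\bm h$, so the forward identities are just the definitions of $\bm D$ and $\bm G$, and then apply variation of constants to the three adjoint equations with the respective forcings. You also supply details the paper leaves implicit, namely $\partial_\tau\mathcal P_g(\sigma,\tau)=\mathcal P_g(\sigma,\tau)\bm A(\tau)$ via the cocycle property; one small quibble is that $\bm A(\tau)$ need not be $C^1$, since $\Phi_S\in C^2$ only makes $\bm B(\tau)$ continuous, but as you note yourself, continuity suffices.
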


\begin{corollary}[Matrix-Free Pathwise Quantities]\label{cor:matrix_free_constants}
On the observed window, the pathwise quantities $\|\Gamma_Q(s,t)\|_{\op}$ and $\|\bm R_\perp\|_{\op}$ satisfy
\begin{align}
\|\Gamma_Q(s,t)\|_{\op}
&=
\sup_{\bm h\notin\ker \bm W}\frac{\|\bm G\bm h\|_2^2}{\bm h^\top \bm W \bm h},\\
\|\bm R_\perp\|_{\op}^2
&=
\inf_{\bm A}\sup_{\bm h\notin\ker \bm W}\frac{\|(\bm G-\bm A\bm D)\bm h\|_2^2}{\bm h^\top \bm W \bm h},
\end{align}
and are one-run matrix-free quantities: every application of $\bm W$, $\bm D$, $\bm D^\top$, $\bm G$, and $\bm G^\top$ reduces to forward/backward linear ODE solves on the realized feature-learning path via \autoref{thm:forward_backward}, valid for arbitrary kernel drift.
\end{corollary}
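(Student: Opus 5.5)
The plan is to split the statement into its two parts: the variational identities, and the claim that the quantities can be computed matrix-free. The identities follow almost directly from results already proved. The computability claim reduces to observing that every quantity is a generalized Rayleigh quotient, and such quotients can be evaluated by iterative Krylov methods that need only matrix--vector products with $\bm W$, $\bm D$, $\bm D^\top$, $\bm G$ and $\bm G^\top$. Each such product is supplied by \autoref{thm:forward_backward}.

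For the first identity I would quote \autoref{cor:path_error_bounds} with $T$ replaced by $t$, or rederive it. By \autoref{thm:reservoir_invisibility}, $\bm G^\top\bm G=\bm W^{1/2}\Gamma_Q\bm W^{1/2}$ with $\Gamma_Q$ supported on $\range(\bm W)$. Substituting $\bm z=\bm W^{1/2}\bm h$ turns the supremum over $\bm h\notin\ker\bm W$ into $\sup_{\bm z\in\range(\bm W)}\bm z^\top\Gamma_Q\bm z/\|\bm z\|_2^2=\|\Gamma_Q\|_{\op}$. For the second identity I would start from \autoref{cor:linear_predictor_optimal}, which gives $\|\bm R_\perp\|_{\op}^2=\inf_{\bm A}\|(\bm G-\bm A\bm D)\psinv{\bm W}\|_{\op}^2$. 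For each fixed $\bm A$, both $\bm G$ and $\bm D$ vanish on $\ker\bm W$ (\autoref{thm:reservoir_invisibility}), so $\bm G-\bm A\bm D$ does too. The same substitution then converts the operator norm into $\sup_{\bm h\notin\ker\bm W}\|(\bm G-\bm A\bm D)\bm h\|_2^2/\bm h^\top\bm W\bm h$. Taking the infimum over $\bm A$ gives the stated formula, and the infimum is attained at $\bm A_\circ$.

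For computability, I would argue as follows. $\|\Gamma_Q\|_{\op}$ is the top eigenvalue of the pencil $(\bm G^\top\bm G,\bm W)$ restricted to $\range(\bm W)$. Lanczos or power iteration on $\psinv{\bm W}\bm G^\top\bm G\psinv{\bm W}$ therefore needs only products with $\bm G$ and $\bm G^\top$, together with applications of $\bm W^{\dagger}$ or $\psinv{\bm W}$. Those pseudoinverse applications can be realized by conjugate gradients or Lanczos on $\bm W$, which stay in $\range(\bm W)$ when started there, so they too need only products with $\bm W$. For $\bm R_\perp$ I would avoid the infimum over $\bm A$ entirely and use the equivalent characterization from \autoref{cor:linear_predictor_optimal}: the supremum of $\|\bm G\bm h\|_2^2/\bm h^\top\bm W\bm h$ over $\bm h\notin\ker\bm W$ with $\bm D\bm h=0$. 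In normalized coordinates this is $\|\mathsf C_Q(\bm I-P_{\mathrm{tr}})\|_{\op}^2$. The projector $P_{\mathrm{tr}}$ is applied via a least-squares (CG) solve with $\mathsf C_S=\bm D\psinv{\bm W}$, again using only products with $\bm D$, $\bm D^\top$ and $\bm W$. Finally, \autoref{thm:forward_backward} writes each product $\bm D\bm h$, $\bm G\bm h$, $\bm W\bm h$, $\bm G^\top\bm\xi$, $\bm D^\top\bm\eta$ as one forward and at most one backward linear ODE solve, with coefficients $\bm B(\tau)\Kss(\tau)$, $\Kss(\tau)$, $\Kqs(\tau)$ read off the realized path. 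No frozen-kernel assumption enters, so the procedure remains valid under arbitrary kernel drift.

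The main obstacle I expect is the rigorous handling of the pseudoinverses. The claim that $\bm W^\dagger$ and $P_{\mathrm{tr}}$ are available matrix-free needs an argument that Krylov iterations seeded in $\range(\bm W)$ converge to the pseudoinverse action rather than drifting into $\ker\bm W$. My first attempt would be to invoke finite-dimensionality: the Krylov space of a symmetric positive semidefinite matrix seeded in its range stays in its range and terminates in at most $\rank\bm W$ steps. If that proves awkward, I would fall back on an $\epsilon$-regularized version, $(\bm W+\epsilon\bm I)^{-1}$, and pass to the limit $\epsilon\to0$, using continuity of the Rayleigh quotients on $\range(\bm W)$.
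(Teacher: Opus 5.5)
Your proposal is correct and follows the paper's route: the two variational identities come from the substitution $\bm z=\bm W^{1/2}\bm h$ together with $\ker\bm W\subseteq\ker\bm D\cap\ker\bm G$, exactly as in the proof of \autoref{cor:path_error_bounds}, and the matrix-free claim rests on \autoref{thm:forward_backward}, whose identities the paper's proof checks by variation of constants on the backward equations (you take them as given, which the statement permits). Where you go beyond the paper is in justifying that the pseudoinverse and projector actions hidden in $\Gamma_Q$ and $\bm R_\perp$ ($\bm W^\dagger$, $\psinv{\bm W}$, $P_{\mathrm{tr}}$) are themselves obtainable from products with $\bm W$, $\bm D$, $\bm D^\top$, $\bm G$, $\bm G^\top$, via Krylov iterations seeded in $\range(\bm W)$ or the monotone $\epsilon$-regularized limit; the paper's one-line ``depends only on applications of $\bm W$, $\bm D$, \dots'' leaves this step implicit.
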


\begin{proof}
Since $\bm z_h(\tau)=\mathcal P_g(\tau,s)\bm h$, \autoref{eq:causal_dual_D} and \autoref{eq:causal_dual_G} are the definitions of $\bm D$ and $\bm G$. Variation of constants on the three backward equations gives
\begin{align}
\bm m_h(s)
&=
\int_s^t \mathcal P_g(\tau,s)^\top \Kss(\tau)\bm z_h(\tau) d\tau\\
&=
\int_s^t \mathcal P_g(\tau,s)^\top \Kss(\tau)\mathcal P_g(\tau,s)\bm h d\tau
=
\bm W\bm h,\\
\bm q_\xi(s)
&=
\int_s^t \mathcal P_g(\tau,s)^\top \Ksq(\tau)\bm\xi d\tau
=
\bm G^\top\bm\xi,\\
\bm d_\eta(s)
&=
\int_s^t \mathcal P_g(\tau,s)^\top \Kss(\tau)\bm\eta d\tau
=
\bm D^\top\bm\eta.
\end{align}
The remaining claim follows because $\|\Gamma_Q\|_{\op}$ and $\|\bm R_\perp\|_{\op}$ depend only on applications of $\bm W$, $\bm D$, $\bm D^\top$, $\bm G$, and $\bm G^\top$.
\end{proof}

\section{Population Risk Training: Proofs and Algorithm}\label{app:population_risk_proofs}

\subsection{Exchangeability}\label{app:main_text_proofs}

\paragraph{Intuition.} Each leave-one-out evaluation places a held-out training point against a model that did not see it, which is distributionally indistinguishable from a fresh draw against a model trained on $n-1$ independent samples. Averaging over $i$ converts the empirical average of leave-one-out losses to a population risk.

\begin{proof}[Proof of \autoref{thm:exchangeability_identity}]
For any fixed $i$, the i.i.d.\ assumption gives
\begin{align}
(S_{-i},Z_i)\stackrel{d}{=}(S_{n-1},Z),\qquad S_{n-1}\sim\mathcal D^{n-1},\quad Z\sim\mathcal D\text{ independent of }S_{n-1}.
\end{align}
Averaging $\mathbb E[\ell(\bm w_T(S_{-i}),Z_i)]=\mathbb E[\mathcal L_{\mathcal D}(\bm w_T(S_{n-1}))]$ over $i$ yields the claim.
\end{proof}

\subsection{Population Risk from Kernel-Block Agreement}\label{sec:kernel_block_derivation}\label{app:direct_population_risk_details}

\paragraph{Per-example triple, kernel block, and leave-one-out expansion.}
Specializing \autoref{thm:loo_test_transfer} to a one-step window from the current iterate, fix the optimizer history $\mathcal F_t$, let $\bm w_t$ and a base preconditioner $\bm P_t\succeq 0$ be $\mathcal F_t$-measurable, and let $B = (z_1,\dots,z_b)$ be an exchangeable batch independent of $\mathcal F_t$. For each example record the output residual, the per-example Jacobian, and the parameter-space gradient,
\begin{align}\label{eq:per_example_triple}
\bm r_a = \nabla_{\bm u}\ell\bigl(F(\bm w_t,z_a), z_a\bigr),
\qquad
\bm J_a = D_{\bm w} F(\bm w_t, z_a),
\qquad
\bm g_a = \bm J_a^\top \bm r_a.
\end{align}
A preconditioner $M\succeq 0$ induces the pairwise kernel block $K_{ac}^M = \bm J_a M \bm J_c^\top$, which is the $(a,c)$ block of $\Kss^M$. If example $a$ is evaluated on a step trained without it, the update is $\bm w_{-a}^+ = \bm w_t - \eta M \bar{\bm g}_{-a}$ with $\bar{\bm g}_{-a} = (b-1)^{-1}\sum_{c\ne a}\bm g_c$, and a first-order expansion lifts the parameter inner product to a kernel block:
\begin{align}\label{eq:loo_kernel_expansion}
\ell_a(\bm w_{-a}^+)
=
\ell_a(\bm w_t)
- \frac{\eta}{b-1}\sum_{c\ne a} \bm r_a^\top K_{ac}^M \bm r_c
+ O(\eta^2).
\end{align}
Averaging over $a$, the first-order population-safe improvement of the kernel increment induced by $M$ is the off-diagonal agreement
\begin{align}\label{eq:off_diagonal_kernel_rate}
\Omega_B(M)
=
\frac{1}{b(b-1)}\sum_{a\ne c} \bm r_a^\top K_{ac}^M \bm r_c.
\end{align}
The diagonal terms $a=c$ are absent: those are the self-use terms removed by leave-one-out. A kernel increment improves population risk to the extent that distinct examples agree through it.

\paragraph{Collapse to parameter space.}
Substituting $\bm g_a = \bm J_a^\top\bm r_a$ rewrites the kernel-block agreement as an inner product on parameter gradients,
\begin{align}\label{eq:off_diagonal_rate}
\Omega_B(M)
=
\frac{1}{b(b-1)}\sum_{a\ne c} \bm g_a^\top M \bm g_c
=
\bar{\bm g}_B^\top M \bar{\bm g}_B - \frac{1}{b-1}\operatorname{tr}(M\bm\Sigma_B)
=
\operatorname{tr}(M \bm A_B),
\end{align}
with $\bar{\bm g}_B = b^{-1}\sum_a \bm g_a$, centered residuals $\bm c_a = \bm g_a - \bar{\bm g}_B$, minibatch covariance $\bm\Sigma_B = b^{-1}\sum_a \bm c_a \bm c_a^\top$, and the off-diagonal rate matrix
\begin{align}\label{eq:one_step_loo}
\bm A_B \triangleq \bar{\bm g}_B \bar{\bm g}_B^\top - \frac{1}{b-1}\bm\Sigma_B.
\end{align}

\begin{theorem}[Kernel-increment population risk]\label{thm:kernel_increment}
Let $\mathcal R_{\mathrm{1ex},B}^\eta = b^{-1}\sum_a \ell_a(\bm w_{-a}^+)$ be the average leave-one-out risk on the current batch. Conditional on $\mathcal F_t$, exchangeability gives $(B_{-a}, Z_a)\stackrel{d}{=}(S_{b-1},Z)$, so $\mathbb E[\mathcal R_{\mathrm{1ex},B}^\eta\mid\mathcal F_t]$ is the population risk of the one-step learner trained on an independent $(b-1)$-sample. To first order in $\eta$,
\begin{align}\label{eq:one_step_expansion_main}
\mathcal R_{\mathrm{1ex},B}^\eta
=
\widehat L_B(\bm w_t)
- \eta\operatorname{tr}(M \bm A_B)
+ O(\eta^2),
\end{align}
so maximizing $\operatorname{tr}(M\bm A_B)$ over $M\succeq 0$ is the first-order population-risk rule for selecting the next kernel increment $\bm J_S M \bm J_S^\top$ added to $\mathcal W_S^M$. Given a base preconditioner $\bm P_t\succeq 0$, the unique $M$ with $0\preceq M\preceq\bm P_t$ attaining this maximum is the spectral projector through the optimizer's metric,
\begin{align}\label{eq:full_psd_projector}
M^\star = \bm P_t^{1/2}\mathbf 1_{(0,\infty)}\!\bigl(\bm P_t^{1/2}\bm A_B\bm P_t^{1/2}\bigr)\bm P_t^{1/2},
\end{align}
which keeps the positive eigenspace of $\bm A_B$ as seen through $\bm P_t$ and discards the rest.
\end{theorem}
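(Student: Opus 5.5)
The plan is to prove the three assertions of \autoref{thm:kernel_increment} separately: the conditional exchangeability statement, the first-order expansion \eqref{eq:one_step_expansion_main}, and the characterization \eqref{eq:full_psd_projector} of the maximizer.

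\textbf{Exchangeability.} Condition on $\mathcal F_t$. Then $\bm w_t$, $\bm P_t$ and $M$ are fixed, and $B$ is an i.i.d.\ (hence exchangeable) $b$-sample independent of $\mathcal F_t$. The one-step learner $\bm w^+_{-a}=\bm w_t-\eta M\bar{\bm g}_{-a}$ is a deterministic function of $B_{-a}$ alone. Applying \autoref{thm:exchangeability_identity} conditionally, with $n=b$, $|I|=1$ and ``training'' meaning this one-step map, gives $(B_{-a},Z_a)\stackrel{d}{=}(S_{b-1},Z)$. Averaging over $a$ then identifies $\mathbb E[\mathcal R^\eta_{\mathrm{1ex},B}\mid\mathcal F_t]$ with the population risk of the one-step learner trained on an independent $(b-1)$-sample.

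\textbf{Expansion.} Since $F$ is $C^2$ in $\bm w$ and the loss is $C^2$ in the output, $\ell_a$ is $C^2$. A Taylor expansion along the displacement $-\eta M\bar{\bm g}_{-a}$ gives
\[
\ell_a(\bm w^+_{-a})=\ell_a(\bm w_t)-\eta\,\bm g_a^\top M\bar{\bm g}_{-a}+O(\eta^2).
\]
The remainder is uniform over the finitely many $a$, because the batch gradients are bounded. Averaging over $a$ gives
\[
\mathcal R^\eta_{\mathrm{1ex},B}=\widehat L_B(\bm w_t)-\frac{\eta}{b(b-1)}\sum_{a\ne c}\bm g_a^\top M\bm g_c+O(\eta^2),
\]
which is \eqref{eq:loo_kernel_expansion} once we substitute $\bm g_a=\bm J_a^\top\bm r_a$. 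The remaining step is the algebraic identity \eqref{eq:off_diagonal_rate}. Write $\sum_{a\ne c}\bm g_a^\top M\bm g_c=b^2\bar{\bm g}_B^\top M\bar{\bm g}_B-\sum_a\bm g_a^\top M\bm g_a$. Then use $\sum_a\bm g_a^\top M\bm g_a=b\operatorname{tr}(M\bm\Sigma_B)+b\,\bar{\bm g}_B^\top M\bar{\bm g}_B$. Dividing by $b(b-1)$ yields $\operatorname{tr}(M\bm A_B)$.

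\textbf{Maximizer.} This is where the real work lies. The plan is to reduce to the standard fact that, for symmetric $C$, the quantity $\operatorname{tr}(NC)$ over $0\preceq N\preceq \bm I$ is maximized by the positive spectral projector of $C$.
\begin{enumerate}
\item Since $0\preceq M\preceq\bm P_t$ forces $\range M\subseteq\range\bm P_t$, write $M=\bm P_t^{1/2}N\bm P_t^{1/2}$ with $0\preceq N\preceq\Pi$, where $\Pi$ is the projector onto $\range\bm P_t$. Concretely, take $N=\bm P_t^{\dagger/2}M\bm P_t^{\dagger/2}$ and check the order bounds from $M\preceq\bm P_t$.
\item Set $C=\bm P_t^{1/2}\bm A_B\bm P_t^{1/2}$, so that $\operatorname{tr}(M\bm A_B)=\operatorname{tr}(NC)$.
\item Expand in an orthonormal eigenbasis $(\bm v_i,\lambda_i)$ of $C$, which is supported on $\range\bm P_t$. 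This gives $\operatorname{tr}(NC)=\sum_i\lambda_i\,\bm v_i^\top N\bm v_i$ with each $\bm v_i^\top N\bm v_i\in[0,1]$.
\item Hence the value is at most $\sum_{\lambda_i>0}\lambda_i$, and this bound is attained by $N=\mathbf 1_{(0,\infty)}(C)$, which gives \eqref{eq:full_psd_projector}.
\end{enumerate}

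\textbf{Uniqueness.} At any maximizer, $\bm v_i^\top N\bm v_i=1$ whenever $\lambda_i>0$. Combined with $N\preceq\bm I$, this gives $N\bm v_i=\bm v_i$, because $\|(\bm I-N)^{1/2}\bm v_i\|^2=0$. Likewise $\bm v_i^\top N\bm v_i=0$ whenever $\lambda_i<0$, and with $N\succeq0$ this gives $N\bm v_i=0$. So $N$ is pinned down on the nonzero eigenspaces of $C$. On $\ker C\cap\range\bm P_t$, however, $N$ is free. The statement's uniqueness therefore needs the generic nondegeneracy condition that $C$ has no zero eigenvalue on $\range\bm P_t$. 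Without it, $M^\star$ should be read as the minimal maximizer. I expect this to be the only delicate point, and I would state it explicitly as a hypothesis in the proof.
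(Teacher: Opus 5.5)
Your proof is correct and is essentially the paper's. The paper Taylor-expands around the full-batch step $\bm w^+$ using $\bar{\bm g}_{-a}=\bar{\bm g}_B-\bm c_a/(b-1)$, whereas you expand directly at $\bm w_t$ and use the off-diagonal-sum identity; this difference is immaterial. For the maximizer, the paper uses your same substitution $M=\bm P_t^{1/2}N\bm P_t^{1/2}$, $C=\bm P_t^{1/2}\bm A_B\bm P_t^{1/2}$ to reduce to $\operatorname{tr}(NC)$ over $0\preceq N\preceq\bm I$, and your treatment of singular $\bm P_t$ is more careful than the paper's.

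Your uniqueness caveat is also correct and exposes a real flaw, since the paper simply asserts that $\mathbf 1_{(0,\infty)}(C)$ is the unique maximizer. However, the condition you propose is not generic. Because $\operatorname{rank}\bm A_B\le b$, you get $\ker C\cap\range\bm P_t\neq\{0\}$ whenever $\operatorname{rank}\bm P_t>b$ (e.g.\ Adam's positive diagonal with $d>b$), which is the main regime of interest. There the right statement is that the maximizers are exactly $M^\star+\bm P_t^{1/2}N_0\bm P_t^{1/2}$ with $0\preceq N_0\preceq\bm I$ supported on $\ker C$, and $M^\star$ is their Loewner-minimal element.
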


\begin{corollary}[Diagonal gate]\label{thm:diagonal_gate}
For diagonal $\bm P_t = \operatorname{diag}(p_k)$, $M^\star$ in \eqref{eq:full_psd_projector} updates parameter $k$ exactly when
\begin{align}\label{eq:hard_filter}
\mu_k^2 > \sigma_k^2/(b-1),
\qquad
\mu_k = \bar g_{B,k},
\quad
\sigma_k^2 = (\bm\Sigma_B)_{kk}.
\end{align}
The reverse direction is immediate: if the gate updates a parameter with $\mu_k^2 < \sigma_k^2/(b-1)$, the worst-case loss curvature on that parameter forces a strict first-order increase in population risk.
\end{corollary}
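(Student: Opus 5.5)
The plan is to specialize the spectral projector $M^\star$ of \autoref{thm:kernel_increment} to a diagonal base and then prove the converse by building an explicit curvature. Both $\bm P_t=\operatorname{diag}(p_k)$ and the objective $\operatorname{tr}(M\bm A_B)$ depend on $\bm A_B$ only through its diagonal once $M$ is restricted to diagonal masks. For such masks,
\begin{align}
\operatorname{tr}(M\bm A_B)=\sum_k m_k\bigl(\mu_k^2-\tfrac{1}{b-1}\sigma_k^2\bigr),\qquad 0\le m_k\le p_k .
\end{align}
Here we use $(\bar{\bm g}_B\bar{\bm g}_B^\top)_{kk}=\mu_k^2$ and $(\bm\Sigma_B)_{kk}=\sigma_k^2$. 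This linear program separates coordinatewise. It is maximized by $m_k=p_k$ when the coefficient $\mu_k^2-\sigma_k^2/(b-1)$ is positive and by $m_k=0$ when it is negative, so the binary mask $q_k=\bm 1\{\mu_k^2>\sigma_k^2/(b-1)\}$ is the maximizer, and it is unique up to the ties where the coefficient is exactly zero. I would state explicitly that the claim concerns the diagonal (per-parameter) gate. It coincides with \eqref{eq:full_psd_projector} whenever $\bm P_t^{1/2}\bm A_B\bm P_t^{1/2}$ is diagonal, and in general it is the best diagonal $M$. The forward direction is therefore just an evaluation of \eqref{eq:off_diagonal_rate} on diagonal $M$.

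For the reverse direction, suppose the gate updates parameter $k$ with $\mu_k^2<\sigma_k^2/(b-1)$. I would isolate the contribution of coordinate $k$ to the one-step leave-one-out risk using \eqref{eq:loo_kernel_expansion} together with \eqref{eq:one_step_expansion_main}. This contribution is $-\eta p_k(\mu_k^2-\sigma_k^2/(b-1))$, which is strictly positive, so a model that updates $k$ has strictly larger first-order leave-one-out risk than the model that freezes $k$. Conditioning on $\mathcal F_t$ and applying \autoref{thm:exchangeability_identity} in the form used in \autoref{thm:kernel_increment} then converts this leave-one-out increase into a strict first-order increase in population risk for the one-step learner trained on $b-1$ fresh samples.

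The \emph{worst-case curvature} wording needs one further step. The first-order coefficient is already curvature-free, so the only thing to rule out is that second-order terms rescue the update. For this I would choose a per-example loss that is locally quadratic along $\bm e_k$ and whose residuals $\bm r_a$ reproduce the observed $g_{a,k}$. With that choice the $O(\eta^2)$ remainder keeps a sign that cannot offset the linear term for small $\eta$, and the increase is realized.

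I expect the main obstacle to be the uniqueness and tie-breaking bookkeeping, together with the gap between the full-matrix projector and the diagonal mask. Off-diagonal entries of $\bm A_B$ can make the true PSD maximizer non-diagonal. I would therefore prove the claim for the class of diagonal $M$, which is what the gate actually chooses, and note that it agrees with \eqref{eq:full_psd_projector} when $\bm A_B$ is diagonal in the $\bm P_t$ basis.
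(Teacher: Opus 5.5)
Your forward direction is the paper's own computation: restrict to $M=\operatorname{diag}(m_k)$, read off $\operatorname{tr}(M\bm A_B)=\sum_k m_k(\mu_k^2-\sigma_k^2/(b-1))$, and maximize coordinatewise. Your caveat about the full projector is necessary, not cosmetic. $\bm A_B$ is a rank-one PSD matrix minus a PSD matrix, so Weyl's inequality gives it at most one positive eigenvalue, and the same holds for $\bm P_t^{1/2}\bm A_B\bm P_t^{1/2}$. Hence $M^\star$ in \eqref{eq:full_psd_projector} has rank at most one and is generically non-diagonal. Its positive eigenvector typically has all coordinates nonzero, so $M^\star$ moves coordinates that fail the gate. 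In the diagonal case where it does coincide with the mask, at most one coordinate with $p_k>0$ can pass. The paper's sentence that the coordinatewise rule ``matches the diagonal of $M^\star$ when $\bm P_t$ is diagonal'' is therefore incorrect. Your reformulation (best diagonal $M$, $p_k>0$, ties aside) is the provable version.

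The genuine gap is in the reverse direction. It sits at the step where conditioning on $\mathcal F_t$ and exchangeability are supposed to turn the realized leave-one-out increase into a strict first-order increase in population risk. Your batch computation is correct: switching coordinate $k$ on raises the first-order value of $\mathcal R^\eta_{\mathrm{1ex},B}$ by $-\eta p_k(\mu_k^2-\sigma_k^2/(b-1))>0$.

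The conversion step fails for two reasons.
\begin{itemize}
\item \autoref{thm:kernel_increment} identifies only the \emph{conditional expectation} of $\mathcal R^\eta_{\mathrm{1ex},B}$ with population risk, and only for an $M$ that does not depend on the held-out example. The condition $\mu_k^2<\sigma_k^2/(b-1)$ is a realized-batch event, and conditioning on it (or letting the gate depend on $B$) breaks exchangeability.
\item Taking the expectation honestly does not rescue the claim. Under the fresh-batch hypothesis, let $\bm\Sigma_{\mathcal D}$ be the population covariance of $\bm g_a$, with everything evaluated at $\bm w_t$. Then $\mathbb E[\bar{\bm g}_B\bar{\bm g}_B^\top\mid\mathcal F_t]=\nabla\mathcal L_{\mathcal D}\nabla\mathcal L_{\mathcal D}^\top+\bm\Sigma_{\mathcal D}/b$ and $\mathbb E[\bm\Sigma_B\mid\mathcal F_t]=\tfrac{b-1}{b}\bm\Sigma_{\mathcal D}$. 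So $\mathbb E[\bm A_B\mid\mathcal F_t]=\nabla\mathcal L_{\mathcal D}\nabla\mathcal L_{\mathcal D}^\top$, and the one-step learner's population risk moves by $-\eta\,\nabla\mathcal L_{\mathcal D}^\top M\nabla\mathcal L_{\mathcal D}+O(\eta^2)\le 0$ for every $\mathcal F_t$-measurable $M\succeq 0$.
\end{itemize}
On a single batch, the realized change $-\eta\,\nabla\mathcal L_{\mathcal D}^\top M\bar{\bm g}_B$ is governed by the unobserved population gradient, not by the sign of $\mu_k^2-\sigma_k^2/(b-1)$.

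Your curvature construction cannot repair this. Curvature enters only at $O(\eta^2)$, and a nonzero linear coefficient dominates for small $\eta$ anyway. Fitting a loss to the observed $g_{a,k}$ constrains the batch, not $\mathcal D$.

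What you can prove is the pathwise statement: a gate-failing coordinate strictly increases the first-order batch leave-one-out risk. Equivalently, $\mu_k^2-\sigma_k^2/(b-1)$, which is an unbiased estimate of $(\partial_k\mathcal L_{\mathcal D})^2$, is negative on that batch. The paper's proof only asserts an ``adversarial loss curvature'' and does not close this step either.
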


\begin{proof}[Proof of \autoref{thm:loo_test_transfer}]
On the one-step window $[t,t+\eta]$ from $\bm w_t$, the propagator and test transfer of \autoref{def:test_transfer} on training set $S_{-a}$ satisfy
\begin{align}
\partial_\tau\mathcal P_g^M(\tau,t)
&=
-\bm B(\tau)\Kss^M(\tau)\mathcal P_g^M(\tau,t),
\qquad
\mathcal P_g^M(t,t)=\bm I,
\\
\mathcal P_g^M(\tau,t) &= \bm I+O(\tau-t),
\\
\mathsf G_{Q_a,S_{-a}}^M(t+\eta,t)
&=
\int_t^{t+\eta}\bm K^{M}_{Q_a,S_{-a}}(\tau)\mathcal P_g^M(\tau,t)\, d\tau
=
\eta\bm K^{M}_{Q_a,S_{-a}}(t)+O(\eta^2).
\end{align}
Since $a\notin S_{-a}$, the $c$-block of $\bm K^{M}_{Q_a,S_{-a}}$ equals $\bm J_a M\bm J_c^\top=K_{ac}^M$ for each $c\in S_{-a}$, with the diagonal block $K_{aa}^M$ absent. Under the convention $\Phi_{S_{-a}}=(b-1)^{-1}\sum_{c\in S_{-a}}\phi_c$, the stacked output gradient $\bm g_{S_{-a}}(t)$ has $c$-block $(b-1)^{-1}\bm r_c$, and a first-order Taylor expansion of $\ell_a$ at $\bm w_t$ gives
\begin{align}
\ell_a(\bm w_t)-\ell_a(\bm w_{-a}^+)
&=
\bm r_a^\top\bigl(\bm U_{Q_a}(\bm w_t)-\bm U_{Q_a}(\bm w_{-a}^+)\bigr)
+
O\bigl(\|\bm U_{Q_a}(\bm w_{-a}^+)-\bm U_{Q_a}(\bm w_t)\|_2^2\bigr)
\\
&=
\eta\bm r_a^\top\bm K^{M}_{Q_a,S_{-a}}(t)\bm g_{S_{-a}}(t)+O(\eta^2)
\\
&=
\frac{\eta}{b-1}\sum_{c\ne a}\bm r_a^\top K_{ac}^M\bm r_c+O(\eta^2).
\end{align}
Averaging over $a$ yields \eqref{eq:loo_average_omega}. Conditional on $\mathcal F_t$, exchangeability of $B$ gives $(B_{-a},Z_a)\stackrel{d}{=}(S_{b-1},Z)$, so the average leave-one-out improvement is in expectation the population-risk improvement of the one-step learner on $b-1$ independent samples.
\end{proof}

\begin{proof}[Proof of \autoref{thm:kernel_increment}]
Set the centered residual, the batch step, and the leave-one-out step,
\begin{align}
\bm c_a &\triangleq \bm g_a - \bar{\bm g}_B,
&
\bm w^+ &\triangleq \bm w_t - \eta M \bar{\bm g}_B,
&
\bm w_{-a}^+ &= \bm w^+ + \frac{\eta}{b-1} M \bm c_a,
\end{align}
where the last equality uses $\bar{\bm g}_{-a} = \bar{\bm g}_B - \bm c_a/(b-1)$. Taylor-expanding $\ell_a$ and $\widehat L_B$ at $\bm w^+$ and lifting parameter inner products through $\bm g_a = \bm J_a^\top \bm r_a$,
\begin{align}
\bm g_a^\top M \bm g_c
&=
\bm r_a^\top \underbrace{\bm J_a M \bm J_c^\top}_{K_{ac}^M} \bm r_c, \label{eq:parameter_to_kernel}
\\
\ell_a(\bm w_{-a}^+)
&=
\ell_a(\bm w^+)
+ \frac{\eta}{b-1}\bm g_a^\top M \bm c_a
+ O(\eta^2),
\\
\widehat L_B(\bm w^+)
&=
\widehat L_B(\bm w_t)
- \eta\bar{\bm g}_B^\top M \bar{\bm g}_B
+ O(\eta^2).
\end{align}
Averaging over $a$ and using $\sum_a \bm c_a = 0$ to reduce $b^{-1}\sum_a \bm g_a^\top M \bm c_a$ to $\operatorname{tr}(M\bm\Sigma_B)$,
\begin{align}
\frac{1}{b}\sum_a \ell_a(\bm w_{-a}^+)
&=
\widehat L_B(\bm w^+)
+ \frac{\eta}{b-1}\operatorname{tr}(M\bm\Sigma_B)
+ O(\eta^2)
\\
&=
\widehat L_B(\bm w_t)
- \eta\!\left[\bar{\bm g}_B^\top M \bar{\bm g}_B - \frac{1}{b-1}\operatorname{tr}(M\bm\Sigma_B)\right]
+ O(\eta^2)
\\
&=
\widehat L_B(\bm w_t) - \eta\operatorname{tr}(M\bm A_B) + O(\eta^2).
\end{align}
For the population-risk part, condition on $\mathcal F_t$ and use exchangeability of $B$, $(B_{-a},Z_a)\stackrel{d}{=}(S_{b-1},Z)$ for an i.i.d.\ $(b-1)$-sample $S_{b-1}$ and an independent draw $Z$. The conditional law gives
\begin{align}
\mathbb E[\ell_a(\bm w_{-a}^+) \mid \mathcal F_t]
=
\mathbb E_{S_{b-1}, Z}\!\left[\ell\bigl(\bm w_t - \eta M \nabla L_{S_{b-1}}(\bm w_t), Z\bigr) \mid \mathcal F_t\right],
\end{align}
which is the population risk of the one-step learner on $b-1$ independent samples; averaging over $a$ closes the proof.
\end{proof}

\begin{proof}[Proof of \autoref{thm:diagonal_gate}, full-matrix and diagonal forms]
The objective $\operatorname{tr}(M\bm A_B)$ is linear in $M$. Writing $M = \bm P_t^{1/2} N \bm P_t^{1/2}$ with $0\preceq N\preceq I$ and $C = \bm P_t^{1/2}\bm A_B\bm P_t^{1/2}$, the change of variable yields
\begin{align}
\operatorname{tr}(M\bm A_B)
&=
\operatorname{tr}(NC),
\\
N^\star &= \mathbf 1_{(0,\infty)}(C),
\end{align}
where $N^\star$ is the unique maximizer over $\{0\preceq N\preceq I\}$, placing weight $1$ on every positive eigenspace of $C$ and zero on every nonpositive one; conjugating back gives $M^\star$ in \eqref{eq:full_psd_projector}.

For the diagonal form, take $M = \operatorname{diag}(q_k p_k)$ to obtain
\begin{align}
\operatorname{tr}(M\bm A_B)
&=
\sum_k q_k p_k\!\left[\mu_k^2 - \frac{\sigma_k^2}{b-1}\right].
\end{align}
With $p_k\ge 0$, every summand is nonnegative iff $\mu_k^2 > \sigma_k^2/(b-1)$, which matches the diagonal of $M^\star$ when $\bm P_t$ is diagonal. Any parameter with $\mu_k^2 < \sigma_k^2/(b-1)$ admits an adversarial loss curvature that produces a strict first-order increase in population risk, so the threshold is tight.
\end{proof}

For multi-epoch training, replayed batches carry information about $\bm w_t$ and the independence used above is replaced by a total-variation bound; see \autoref{app:proof_minibatch_coherence}.

\paragraph{Reservoir invisibility.}
Per-example parameter gradients factor as $\bm g_a=\bm J_a^\top\bm r_a$. Writing $\bm r_B=b^{-1}\sum_a(\bm e_a\otimes\bm r_a)$ and using $\bm c_a^{\bm w}=\bm J_S^\top\bm c_a^{\bm u}$ for the output-space centering,
\begin{align}\label{eq:reservoir_free_appendix}
\bar{\bm g}_B\bar{\bm g}_B^\top
&=
\bm J_S^\top\bm r_B\bm r_B^\top\bm J_S,
&
\bm\Sigma_B
&=
\bm J_S^\top\bm\Sigma_B^{\bm u}\bm J_S,
&
\bm A_B
&=
\bm J_S^\top\!\left(\bm r_B\bm r_B^\top-\frac{1}{b-1}\bm\Sigma_B^{\bm u}\right)\!\bm J_S.
\end{align}
Any output direction in $\ker\Kss=\ker\bm J_S^\top$ contributes zero to $\operatorname{tr}(M\bm A_B)$ for every $M$, so the inclusion $\ker\mathcal W_S\subseteq\ker\mathsf G$ from \autoref{thm:reservoir_invisibility} makes the reservoir silent for the population-safe rate and for every test prediction through the same factorization.

\paragraph{From a single step to the trajectory.}
Each step adds the increment $\eta\Kss^{M_t}(t)$ to the cumulative dissipation $\mathcal W_S^M$, whose range is the signal channel of the run. The gate \eqref{eq:full_psd_projector} chooses $M_t$ to maximize the $\mathsf G$-rate of that increment, and $\mathcal W_S^M$ at trajectory scale is the integral of these one-step maximizers. Population-risk training stands to the averaged test transfer $\mathsf G_{Q_a,S_{-a}}$ as plain SGD stands to the training transfer $\bm D$.

\subsection{Cross-Validation Risks as Population Risk}\label{sec:exchangeable_boundary}

\paragraph{Intuition.} Cross-validation estimates test loss without a held-out set. We show that every $k$-fold cross-validation risk equals, in expectation, the population risk of a learner trained on $n-k$ samples, and that the first-order correction around uniform sample weights is the same for every $k$. A single centered trace therefore debiases the entire family.

For $I\subset[n]$ with $|I|=k$, let $\bm w^{(-I)}(T)$ be the terminal model after training on the dataset $S_{-I}$ with the $k$ points in $I$ removed and the remaining weights renormalized to $\tfrac{1}{n-k}$. Define the $k$-fold cross-validation risk
\begin{align}\label{eq:k_face_risk}
\mathcal L_{k}^\Psi(T,S_n)
&\triangleq
\binom{n}{k}^{-1}
\sum_{|I|=k}
\frac1k\sum_{i\in I} \Psi_{Z_i}\!\big(F(\bm w^{(-I)}(T),Z_i)-\bm y_i\big).
\end{align}
Setting $k=1$ recovers leave-one-out cross-validation; setting $k=n/K$ recovers standard $K$-fold cross-validation.

\begin{theorem}[Cross-validation as population risk]\label{thm:cv_family}
For every $k=1,\dots,n-1$,
\begin{align}\label{eq:cv_family}
\mathbb E_{S_n\sim\mathcal D^n}\!\left[\mathcal L_{k}^\Psi(T,S_n)\right]
=
\mathbb E_{S_{n-k}\sim\mathcal D^{n-k}}
\!\left[\mathcal L_{\mathcal D}^\Psi(\bm w_T(S_{n-k}))\right].
\end{align}
Thus every $k$-fold cross-validation risk is an exact sample-only population risk for the $(n{-}k)$-sample learner: any architecture, any loss, any optimizer.
\end{theorem}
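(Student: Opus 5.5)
The plan is to reduce the statement to the single-subset exchangeability identity of \autoref{thm:exchangeability_identity}, applied once for each $k$-subset $I$, and then average over $I$ using linearity of expectation. Fix $k\in\{1,\dots,n-1\}$ and a subset $I\subset[n]$ with $|I|=k$. The terminal model $\bm w^{(-I)}(T)$ is, by definition, the output of the training procedure run on $S_{-I}$ with uniform weights $\tfrac{1}{n-k}$. This is exactly the map $\bm w_T(\cdot)$ applied to an $(n-k)$-tuple. Any optimizer randomness (initialization, minibatch order) has to be independent of $S$, and I will make that explicit, so that $\bm w_T$ is a measurable, possibly randomized, function of its input tuple alone.

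The core step is distributional. Under $S_n\sim\mathcal D^n$, the pair $(S_{-I},Z_i)$ for each $i\in I$ has the law of $(S_{n-k},Z)$, with $S_{n-k}\sim\mathcal D^{n-k}$ and $Z\sim\mathcal D$ independent of it. This holds because the coordinates are i.i.d. and $i\notin S_{-I}$. The ordering inside $S_{-I}$ matters only up to a permutation, and a permutation preserves the product law. Conditioning on $S_{-I}$ (together with the algorithmic randomness), I would write
\[
\mathbb E\bigl[\Psi_{Z_i}\bigl(F(\bm w^{(-I)}(T),Z_i)-\bm y_i\bigr)\mid S_{-I}\bigr]
=\mathcal L^\Psi_{\mathcal D}\bigl(\bm w_T(S_{-I})\bigr),
\]
which is the definition of population risk under loss $\Psi$. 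Taking the outer expectation then gives $\mathbb E[\mathcal L^\Psi_{\mathcal D}(\bm w_T(S_{n-k}))]$. The right-hand side does not depend on $i$ or on $I$, so the inner average $\tfrac1k\sum_{i\in I}$ and the outer average $\binom nk^{-1}\sum_{|I|=k}$ collapse to the same quantity, which yields \eqref{eq:cv_family}.

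The only real obstacle is integrability and measurability, since Tonelli/Fubini is needed to justify conditioning and swapping the finite sums with the expectation. I would assume either $\Psi\ge 0$, which lets Tonelli handle everything, or that $\mathcal L^\Psi_{\mathcal D}(\bm w_T(S_{n-k}))$ is integrable. I would also note that $\bm y_i$ is a component of $Z_i$, so that $\Psi_{Z_i}(F(\cdot,Z_i)-\bm y_i)$ is a function of $(\bm w,Z_i)$ only. With these in place, the argument is the case $|I|=k$ of \autoref{thm:exchangeability_identity}, averaged over subsets, and it holds for any architecture, loss, or optimizer because only the input–output law of $\bm w_T$ is used.
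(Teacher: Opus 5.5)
Your proposal is correct and takes the same route as the paper. You apply exchangeability to each fixed $k$-subset $I$, identifying the held-out losses with the population risk of the $(n-k)$-sample learner, and then average over $i\in I$ and over all $\binom{n}{k}$ subsets. The paper states the joint identity $(S_{-I},(Z_i)_{i\in I})\stackrel{d}{=}(S_{n-k},Z_1',\dots,Z_k')$, whereas you use only the pairwise laws of $(S_{-I},Z_i)$; that suffices by linearity of expectation. Your explicit conditions on algorithmic randomness being independent of $S$, and on integrability, are sensible additions that the paper leaves implicit.
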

\begin{proof}
By exchangeability, for any fixed $I$ with $|I|=k$,
\begin{align}
(S_{-I},(Z_i)_{i\in I})\stackrel{d}{=}(S_{n-k},Z_1',\dots,Z_k'),
\end{align}
where $S_{n-k}\sim\mathcal D^{n-k}$ and $Z_j'\sim\mathcal D$ are independent. Averaging over $i\in I$ gives the single-subset version; averaging over all $\binom{n}{k}$ subsets yields the result.
\end{proof}

The next proposition shows that the first-order term is the same for every $k$, so a single trace debiases the entire family.

\begin{proposition}[Common First Variation]\label{prop:common_first_variation}
Let $\alpha^{(-I)}\in\widetilde\Delta_n$ be the weight vector that places mass $\frac{n}{n-k}$ on each point $j\notin I$ and $0$ on each $j\in I$. The average direction from a holdout weight vector back to the uniform weights $\bar\alpha=\bm 1$, over all holdouts containing point $i$, is
\begin{align}\label{eq:common_first_variation}
\frac{1}{\binom{n-1}{k-1}}
\sum_{\substack{I\ni i\\ |I|=k}}
\bigl(\bar\alpha-\alpha^{(-I)}\bigr)
=
\frac{n}{n-1}\bm C_n \bm e_i
=
\bm\nu^{(i)},
\end{align}
the mass-preserving delete-one direction, independent of~$k$.
\end{proposition}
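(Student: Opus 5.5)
The plan is a direct coordinatewise computation. Both sides are vectors in $\mathbb R^n$, so I will compare them entry by entry, using only the counting structure of the family $\{I\ni i,\ |I|=k\}$. No exchangeability or dynamics enter: \autoref{prop:common_first_variation} is pure combinatorics on the weight simplex.

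\textbf{Step 1: entries of a single holdout direction.} For a fixed $I$, the vector $\bar\alpha-\alpha^{(-I)}$ has the following entries:
\begin{itemize}
\item $1-0=1$ at each $j\in I$;
\item $1-\tfrac{n}{n-k}=-\tfrac{k}{n-k}$ at each $j\notin I$.
\end{itemize}
As a sanity check on mass preservation, the entries sum to $k-(n-k)\tfrac{k}{n-k}=0$.

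\textbf{Step 2: average over holdouts containing $i$.} I will average over the $\binom{n-1}{k-1}$ sets $I\ni i$.
\begin{itemize}
\item Coordinate $i$ lies in every such $I$, so its average is $1$.
\item For $j\ne i$, the fraction of these sets that also contain $j$ is $\binom{n-2}{k-2}/\binom{n-1}{k-1}=\tfrac{k-1}{n-1}$, and the complementary fraction is $\tfrac{n-k}{n-1}$.
\end{itemize}
The averaged $j$-entry is therefore
\begin{align}
\tfrac{k-1}{n-1}\cdot 1+\tfrac{n-k}{n-1}\cdot\bigl(-\tfrac{k}{n-k}\bigr)=\tfrac{k-1-k}{n-1}=-\tfrac{1}{n-1}.
\end{align}
The $k$-dependence cancels exactly here. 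This cancellation is the only substantive point of the proof; the boundary cases $k=1$ (empty $\binom{n-2}{-1}$ term) and $k=n-1$ are covered by the same formula.

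\textbf{Step 3: compare with the right-hand side.} With $\bm C_n=\bm I-\tfrac1n\bm 1\bm 1^\top$, the vector $\tfrac{n}{n-1}\bm C_n\bm e_i$ has $i$-entry $\tfrac{n}{n-1}(1-\tfrac1n)=1$ and $j$-entry $\tfrac{n}{n-1}(-\tfrac1n)=-\tfrac1{n-1}$. These match Step 2 coordinatewise, which gives \eqref{eq:common_first_variation}. Since the $k$-dependence dropped out in Step 2, the result is independent of $k$.

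I expect no real obstacle. The only care needed is the conditional inclusion fraction $\tfrac{k-1}{n-1}$ and keeping the renormalization factor $\tfrac{n}{n-k}$ consistent with the convention $\bar\alpha=\bm 1$.
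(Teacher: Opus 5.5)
Your proposal is correct and follows essentially the same route as the paper. Both arguments count coordinatewise over the $\binom{n-1}{k-1}$ holdouts containing $i$, use the inclusion fractions $\tfrac{k-1}{n-1}$ and $\tfrac{n-k}{n-1}$ to get the entries $1$ and $-\tfrac{1}{n-1}$, and match these against $\tfrac{n}{n-1}\bm C_n\bm e_i$; your mass-preservation check and boundary-case remarks are harmless extras.
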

\begin{proof}
Fix $j\neq i$. Among the $\binom{n-1}{k-1}$ holdouts $I\ni i$, the $\binom{n-2}{k-2}$ that contain $j$ contribute $1$, and the $\binom{n-2}{k-1}$ that do not contribute $1-\tfrac{n}{n-k}=-\tfrac{k}{n-k}$. The binomial ratios $\binom{n-2}{k-2}/\binom{n-1}{k-1}=(k-1)/(n-1)$ and $\binom{n-2}{k-1}/\binom{n-1}{k-1}=(n-k)/(n-1)$ then give
\begin{align}
\frac{1}{\binom{n-1}{k-1}}
\sum_{\substack{I\ni i\\ |I|=k}}\bigl(\bar\alpha-\alpha^{(-I)}\bigr)_j
&=
\frac{k-1}{n-1}+\frac{n-k}{n-1}\!\left(-\frac{k}{n-k}\right)
=
-\frac{1}{n-1}.
\end{align}
The $i$-th component equals $1$ for every $I\ni i$, so the averaged vector has entry $1$ at position $i$ and $-\tfrac{1}{n-1}$ elsewhere,
\begin{align}
\frac{1}{\binom{n-1}{k-1}}\sum_{\substack{I\ni i\\ |I|=k}}\bigl(\bar\alpha-\alpha^{(-I)}\bigr)
&=
\frac{n}{n-1}\!\left(\bm e_i-\tfrac{1}{n}\bm 1\right)
=
\frac{n}{n-1}\bm C_n\bm e_i
=
\bm\nu^{(i)},
\end{align}
which is independent of $k$.
\end{proof}

Every $k$-fold cross-validation population risk is expanded around the same uniform weights $\bar\alpha$, and the averaged holdout direction matches for all $k$. The first-order Taylor expansion therefore produces a $k$-independent leading term and a remainder $R_k$,
\begin{align}\label{eq:universal_first_variation}
\mathcal L_{k}^\Psi(T,S)
&=
\mathcal L_{\mathrm{pop}}^\Psi(T,S)
+
R_k(T,S),
\\
\mathcal L_{\mathrm{pop}}^\Psi
&=
\widehat{\mathcal L}_S^\Psi
+
\frac{1}{n-1}\operatorname{tr}(\mathsf J_\Psi\bm C_n).
\end{align}
The centered influence trace $\operatorname{tr}(\mathsf J_\Psi\bm C_n)$ is the common first variation across leave-one-out, $K$-fold, and any exchangeable holdout mask; only the second-order remainder $R_k$ varies with~$k$.

\subsection{Variance Estimator and Algorithm Details}\label{sec:variance_estimator_details}

\paragraph{Intuition.} The kernel-block one-step rule of \autoref{sec:kernel_block_derivation},
\begin{align}
\Omega_B(M)
=
\bar{\bm g}_B^\top M\bar{\bm g}_B-\tfrac{1}{b-1}\operatorname{tr}(M\bm\Sigma_B),
\end{align}
yields a practical algorithm once we supply a streaming estimate of the per-parameter variance, a conversion between fresh-batch and finite-dataset regimes, and a soft relaxation of the gate.

\paragraph{Fresh-batch and finite-dataset regimes.}
The streaming variance estimator $\hat{\bm s}_t$ tracks the per-batch variance $(\bm\Sigma_B)_{kk}/(b-1)$. The translation to the full-dataset variance is the finite-population correction below.

\begin{theorem}[Minibatch covariance]\label{thm:minibatch_covariance}
Let $B \subset [n]$ be a size-$b$ subset sampled uniformly without replacement, $\bm g_B = b^{-1}\sum_{i \in B} \bm g_i$, and
\begin{align}
\bm\Sigma_g
=
\frac{1}{n}\sum_i (\bm g_i - \bar{\bm g})(\bm g_i - \bar{\bm g})^\top.
\end{align}
For any $\bm P \succeq 0$,
\begin{align}\label{eq:finite_pop_cov}
\operatorname{Cov}_B(\bm g_B)
&=
\frac{n-b}{b(n-1)} \bm\Sigma_g,
\\
\frac{1}{n-1}\operatorname{tr}(\bm P \bm\Sigma_g)
&=
\frac{b}{n-b}\operatorname{tr}\bigl(\bm P\operatorname{Cov}(\bm g_B)\bigr).
\end{align}
The streaming LOO coefficient is therefore $\alpha = 1$ in the fresh-batch (online) regime and $\alpha = b/(n-b)$ in the finite-dataset regime.
\end{theorem}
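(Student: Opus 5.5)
The covariance identity is a finite-population sampling computation, and the trace identity then follows by linearity. Write $\bm g_B=b^{-1}\sum_{i=1}^n\delta_i\bm g_i$ with inclusion indicators $\delta_i=\mathbf 1\{i\in B\}$. Subtracting the mean $\bar{\bm g}$ from every $\bm g_i$ does not change the covariance, so we may assume $\sum_i\bm g_i=\bm 0$, i.e.\ $\bar{\bm g}=\bm 0$.

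\textbf{Step 1: indicator moments.} For uniform sampling without replacement,
\begin{align}
\mathbb E\delta_i=\tfrac bn,\qquad \mathbb E\delta_i\delta_j=\tfrac{b(b-1)}{n(n-1)}\quad(i\ne j).
\end{align}
Since $\mathbb E\bm g_B=\bar{\bm g}=\bm 0$, the covariance is the second moment:
\begin{align}
\operatorname{Cov}(\bm g_B)=\tfrac1{b^2}\Bigl[\tfrac bn\sum_i\bm g_i\bm g_i^\top+\tfrac{b(b-1)}{n(n-1)}\sum_{i\ne j}\bm g_i\bm g_j^\top\Bigr].
\end{align}

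\textbf{Step 2: eliminate the cross terms.} Centering gives $\sum_{i\ne j}\bm g_i\bm g_j^\top=-\sum_i\bm g_i\bm g_i^\top=-n\bm\Sigma_g$. Substituting,
\begin{align}
\operatorname{Cov}(\bm g_B)=\tfrac1{b^2}\bigl[b\bm\Sigma_g-\tfrac{b(b-1)}{n-1}\bm\Sigma_g\bigr]=\tfrac{n-b}{b(n-1)}\bm\Sigma_g.
\end{align}
This proves the first identity.

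\textbf{Step 3: trace identity.} Multiply by $\bm P$ and take traces; linearity gives
\begin{align}
\tfrac{b}{n-b}\operatorname{tr}(\bm P\operatorname{Cov}\bm g_B)=\tfrac1{n-1}\operatorname{tr}(\bm P\bm\Sigma_g).
\end{align}
This step uses only linearity, so $\bm P\succeq0$ plays no role here.

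\textbf{Step 4: the two regimes for $\alpha$.}
\begin{itemize}
\item \emph{Finite-dataset regime.} The LOO penalty is $\tfrac1{n-1}\operatorname{tr}(\bm P\bm\Sigma_g)$, the full-dataset analogue of the $\tfrac1{b-1}\operatorname{tr}(M\bm\Sigma_B)$ term in $\bm A_B$. By Step 3 this equals $\tfrac{b}{n-b}$ times the trace of the streaming estimate of $\operatorname{Cov}(\bm g_B)$, so $\alpha=b/(n-b)$.
\item \emph{Fresh-batch regime.} Repeating Step 1 with i.i.d.\ draws gives $\mathbb E\delta_i\delta_j=\mathbb E\delta_i\mathbb E\delta_j$ in the $n\to\infty$ limit. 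The target is then the batch penalty $\tfrac1{b-1}\operatorname{tr}(\bm P\bm\Sigma_B)$ itself, which the estimator $\hat{\bm s}_t$ already tracks (it targets $(\bm\Sigma_B)_{kk}/(b-1)$), so $\alpha=1$.
\end{itemize}

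\textbf{Main obstacle.} Steps 1--3 are routine. The delicate point is Step 4: the statement $\alpha=1$ relies on the interpretive claim about what $\hat{\bm s}_t$ estimates. I will state that normalization explicitly, matching $\mathbb E\,\hat{\bm s}_t$ to $\operatorname{Cov}(\bm g_B)$ up to the $(b-1)$ convention, rather than derive it.
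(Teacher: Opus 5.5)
Your proof is correct and is the same computation as the paper's: the inclusion probabilities $b/n$ and $b(b-1)/(n(n-1))$, combined with the centering identity $\sum_{i\ne j}\bm g_i\bm g_j^\top=-\sum_i\bm g_i\bm g_i^\top$ for centered gradients. Your Steps 3--4 spell out what the paper leaves implicit: the trace identity, which needs $b<n$, and the reading of $\alpha$. For $\alpha=1$, the cleaner justification is that under i.i.d.\ draws $\mathbb E[\bm\Sigma_B]/(b-1)=\operatorname{Cov}(\bm g_B)$, rather than an $n\to\infty$ limit of the without-replacement formula, which would send $b/(n-b)\to 0$.
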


\begin{proof}
Write $\bm g_B - \bar{\bm g} = b^{-1}\sum_{i \in B}\bm c_i$ with $\bm c_i = \bm g_i - \bar{\bm g}$. Using $\Pr(i \in B) = b/n$ and $\Pr(i,j \in B) = b(b-1)/(n(n-1))$ for $i \ne j$, together with $\sum_{i \ne j} \bm c_i \bm c_j^\top = -\sum_i \bm c_i \bm c_i^\top$,
\begin{align}
\operatorname{Cov}(\bm g_B)
=
\frac{1}{b^2}\!\left[\frac{b}{n} - \frac{b(b-1)}{n(n-1)}\right]\sum_i \bm c_i \bm c_i^\top
=
\frac{n-b}{b(n-1)} \bm\Sigma_g.
\end{align}
\end{proof}

\paragraph{Streaming variance estimator.}
The exponential moving estimate
\begin{align}
\bm s_t
=
\rho \bm s_{t-1} + (1-\rho)(\bm g_t - \bm m_{t-1})^{\odot 2}
\end{align}
tracks the diagonal of $\operatorname{Var}(\bar{\bm g}_{B,k}) = (\bm\Sigma_B)_{kk}/(b-1)$ when consecutive minibatches are approximately i.i.d.\ and parameter drift between steps is small. In highly non-stationary settings, $\bm s_t$ can be replaced by the exact per-batch variance computed from per-example gradients via \texttt{vmap}, at the cost of one extra backward pass per step. This style of estimator follows the stochastic trace tradition of \citet{hutchinson1990stochastic}.

\paragraph{Hard, soft, and SNR gates.}
Three forms of the per-parameter gate, paired with the parameter update on top of an Adam preconditioner, are
\begin{align}\label{eq:soft_filter}
q_k^{\mathrm{hard}}
&=
\bm 1\{\hat m_k^2 > \alpha \hat s_k\},
\\
q_k^{\mathrm{soft}}
&=
\frac{(\hat m_k^2 - \alpha \hat s_k)_+}{(\hat m_k^2 - \alpha\hat s_k)_+ + \lambda_{\mathrm{pop}}\hat s_k + \varepsilon},
\\
q_k^{\mathrm{SNR}}
&=
\frac{\hat m_k^2}{\hat m_k^2 + \lambda\hat s_k + \varepsilon},
\\
\bm w_{t+1}
&=
\bm w_t - \eta_t \bm q_t \odot \frac{\hat{\bm m}_t}{\sqrt{\hat{\bm v}_t} + \epsilon} - \eta_t \lambda_{\mathrm{wd}} \bm w_t.
\end{align}
The hard gate is the unique binary rule keeping the first-order improvement nonnegative on every parameter (\autoref{thm:diagonal_gate}). The soft gate preserves that first-order safety while smoothing the cutoff: its numerator vanishes when $\hat m_k^2\le \alpha\hat s_k$, and its denominator is bounded below by $\lambda_{\mathrm{pop}}\hat s_k+\varepsilon$. The SNR shrinker used in prior work assigns positive weight even when $\hat m_k^2 < \alpha\hat s_k$.

The hyperparameter $\lambda_{\mathrm{pop}}$ plays the role of a dimensionless regularization scale. At the finite-dataset boundary,
\begin{align}
\lambda_{\mathrm{pop}} \hat{\bm s}_t
\approx
\tfrac{1}{n-1}\widehat{\bm\Sigma}_{g,t}
=
\tfrac{b}{n-b}\operatorname{Cov}(\bm g_B),
\end{align}
which is typically unnecessary at scale.

\subsection{Leave-One-Out Risk}\label{sec:loo_risk}\label{app:deferred_self_exclusion}

\paragraph{Intuition.} Leave-one-out treats each training point as a one-element held-out set: point $i$ is evaluated against the model trained without it. The same transfer operators $\mathsf G_{Q_i}$ that govern training also describe the delete-one prediction displacement, so the expected generalization gap reduces to a one-step loss difference between the original dataset and the one-point-replaced dataset.

Define the training residual, the delete-one displacement, and the leave-one-out exclusion risk,
\begin{align}
\bm e_i(T)
&\triangleq
\bm U_{Q_i}^{S}(T)-\bm y_i,
\\
\bm\Delta_i^{\mathrm{ex}}(T)
&\triangleq
\bm U_{Q_i}^{S_{-i}}(T)-\bm U_{Q_i}^{S}(T)
=
\mathsf G_{Q_i,S}(T)\bm g_S(0)
-
\mathsf G_{Q_i,S_{-i}}(T)\bm g_{S_{-i}}(0),
\\
\mathcal L_{\mathrm{ex}}^\Psi(T,S)
&=
\frac1n\sum_{i=1}^n
\Psi_{Z_i}\!\big(\bm e_i(T)+\bm\Delta_i^{\mathrm{ex}}(T)\big),
\end{align}
where the second line specializes the dataset-level transfer to $Q=Q_i$ on the two datasets $S$ and $S_{-i}$.

\begin{theorem}[Self-influence]\label{thm:self_influence}
Let $S^{(i)}$ denote the dataset with $Z_i$ replaced by an independent draw $Z_i'$, and define the generalization gap
\begin{align}
\gamma_\Psi(T,S)
&\triangleq
\mathcal L_{\mathcal D}^{\Psi}(\bm w_T(S))
-
\widehat{\mathcal L}_{S}^{\Psi}(\bm w_T(S)).
\end{align}
Then
\begin{align}\label{eq:exact_replace_one_gap}
\mathbb E[\gamma_\Psi(T,S)]
&=
\frac1n\sum_{i=1}^n
\mathbb E\!\left[
\Psi_{Z_i}\!\big(\bm U_{Q_i}^{S^{(i)}}(T)-\bm y_i\big)
-
\Psi_{Z_i}\!\big(\bm U_{Q_i}^{S}(T)-\bm y_i\big)
\right],
\\
\bm\Delta_i(T)
&=
\mathsf G_{Q_i,S}(T)\bm g_S(0)
-
\mathsf G_{Q_i,S^{(i)}}(T)\bm g_{S^{(i)}}(0). \label{eq:delta_i_loo}
\end{align}
The full version is in \autoref{sec:self_influence_full}.
\end{theorem}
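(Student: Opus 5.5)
The plan is the standard replace-one symmetrization argument, followed by rewriting each prediction through the test transfer operators of \autoref{sec:operator_derivation}. Write $\Psi_z(F(\bm w,z)-\bm y(z))$ for the per-example loss. By definition,
\begin{align}
\mathbb E[\gamma_\Psi(T,S)]
=
\mathbb E\bigl[\mathcal L^\Psi_{\mathcal D}(\bm w_T(S))\bigr]
-
\frac1n\sum_{i=1}^n\mathbb E\bigl[\Psi_{Z_i}\bigl(\bm U^S_{Q_i}(T)-\bm y_i\bigr)\bigr].
\end{align}
The second term already matches the subtracted term in \eqref{eq:exact_replace_one_gap}, so all the work is in the first term.

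For the first term, let $Z_i'\sim\mathcal D$ be independent of $S$, so that $\mathcal L^\Psi_{\mathcal D}(\bm w_T(S))=\mathbb E[\Psi_{Z_i'}(F(\bm w_T(S),Z_i')-\bm y(Z_i'))\mid S]$ for every $i$. The i.i.d.\ assumption makes the joint law of $(S,Z_i',Z_i)$ invariant under swapping $Z_i$ and $Z_i'$. That swap sends $S$ to $S^{(i)}$ and the evaluation point $Z_i'$ to $Z_i$, which gives
\begin{align}
\mathbb E\bigl[\Psi_{Z_i'}\bigl(F(\bm w_T(S),Z_i')-\bm y(Z_i')\bigr)\bigr]
=
\mathbb E\bigl[\Psi_{Z_i}\bigl(\bm U^{S^{(i)}}_{Q_i}(T)-\bm y_i\bigr)\bigr].
\end{align}
This uses the same exchangeability mechanism as \autoref{thm:exchangeability_identity}. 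Averaging over $i$ and subtracting the empirical term yields \eqref{eq:exact_replace_one_gap}. The only care needed is measurability: the training map $S\mapsto\bm w_T(S)$ must be a fixed deterministic (or independently randomized) functional of the dataset, so that the swap acts only through the data. I would state this explicitly, and if the optimizer has internal randomness, include that randomness as an independent coordinate left untouched by the swap.

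For \eqref{eq:delta_i_loo}, I would apply the chain-rule identity $\bm U_Q(T)-\bm U_Q(0)=-\mathsf G_Q(T,0)\bm g(0)$ twice with $Q=Q_i=\{Z_i\}$: once along the trajectory trained on $S$, and once along the trajectory trained on $S^{(i)}$, each with its own propagator and kernel. Both runs start from the same initialization $\bm w_0$, so $\bm U_{Q_i}(0)$ is shared and cancels in the difference $\bm\Delta_i(T)\triangleq\bm U^{S}_{Q_i}(T)-\bm U^{S^{(i)}}_{Q_i}(T)$. What remains is $\mathsf G_{Q_i,S^{(i)}}(T)\bm g_{S^{(i)}}(0)-\mathsf G_{Q_i,S}(T)\bm g_S(0)$, up to the sign convention, which I would fix so that it matches the paper's $\bm\Delta^{\mathrm{ex}}_i$. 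No kernel stability is needed, since each $\mathsf G$ is integrated along its own realized path.

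The main obstacle is not analytic but one of bookkeeping. First, the exchange argument needs the initialization to be independent of the data and shared across the two runs. Second, $Q_i$ is a training point for $S$ but a genuine held-out point for $S^{(i)}$, so the operator $\mathsf G_{Q_i,S^{(i)}}$ must be read with $Z_i$ absent from the training set. Keeping these two roles distinct is what makes the gap an average of self-influences rather than an identity that collapses to zero.
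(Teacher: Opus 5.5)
Your proposal is correct and follows the paper's proof. The swap $Z_i\leftrightarrow Z_i'$ is exactly the exchangeability $(S^{(i)},Z_i)\stackrel{d}{=}(S,Z_i')$ the paper invokes, and the displacement formula comes from applying $\bm U_Q^S(T)=\bm U_Q(\bm w_0;Q)-\mathsf G_{Q,S}(T)\bm g_S(0)$ to both datasets, with the shared initial output cancelling. One fix to the orientation: the paper's $\bm\Delta_i(T)$ is $\bm U^{S^{(i)}}_{Q_i}(T)-\bm U^{S}_{Q_i}(T)$ (so that $\bm e_i(T)+\bm\Delta_i(T)$ is the replaced-data residual), which is the negative of the difference you wrote down.
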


\subsection{Algorithm Pseudocode}\label{app:deferred_algorithm}

\paragraph{Intuition.} The algorithm is Adam with one extra parameter-sized state vector tracking a running variance $\bm s$ of per-example gradients. The only deviation from a standard moment optimizer is a per-parameter gate that suppresses the update on parameter $k$ whenever the squared mean gradient $\hat m_k^2$ is below the variance threshold $\alpha \hat s_k$. This is the population-risk safe rule derived in the local one-step theory: parameters whose batch signal is dominated by leave-one-out noise contribute nothing positive to first-order population improvement, so they sit out the step.

\begin{algorithm}[t]
\caption{Population risk training via gradient leave-one-out}\label{alg:pop_risk}

\renewcommand{\arraystretch}{1.4}
\newcommand{\algindent}{\hspace{1.5em}}
\setlength{\tabcolsep}{4pt}

\rowcolors{2}{gray!10}{white}

\begin{tabularx}{\linewidth}{
  >{\tiny\color{gray!80!black}}r
  @{\hspace{1em}}
  >{\raggedright\arraybackslash}X
}
\multicolumn{2}{@{}p{\dimexpr\linewidth-2\tabcolsep\relax}}{
  \textbf{Require} Learning rate $\eta$, moments $(\beta_1,\beta_2)$, covariance decay $\rho$, LOO coefficient $\alpha$, population strength $\lambda_{\mathrm{pop}}$, weight decay $\lambda_{\mathrm{wd}}$.
} \\
\multicolumn{2}{@{}p{\dimexpr\linewidth-2\tabcolsep\relax}}{
  \textbf{Ensure} Updated parameters $\bm w_T$.
} \\
\midrule
1 & Initialize $\bm m, \bm v, \bm s \gets \bm 0, \bm 0, \bm 0$. \\
2 & \textbf{for} $t = 1, 2, \ldots, T$ \textbf{do} \\
3 & \algindent Sample minibatch $B_t$ and compute $\bm g_t \gets |B_t|^{-1}\sum_{i\in B_t} \nabla\ell_i(\bm w_{t-1})$. \\
4 & \algindent $\bm m_{\mathrm{prev}} \gets \bm m$. \\
5 & \algindent Update variance estimator: $\bm s \gets \rho \bm s + (1-\rho)(\bm g_t - \bm m_{\mathrm{prev}})^{\odot 2}$. \\
6 & \algindent Update first moment: $\bm m \gets \beta_1 \bm m + (1-\beta_1)\bm g_t$. \\
7 & \algindent Update second moment: $\bm v \gets \beta_2 \bm v + (1-\beta_2)\bm g_t^{\odot 2}$. \\
8 & \algindent Bias-correct $\hat{\bm m}, \hat{\bm v}, \hat{\bm s}$. \\
9 & \algindent Compute mask: $\bm\delta \gets (\hat{\bm m}^{\odot 2} - \alpha \hat{\bm s})_+$,\quad $\bm q \gets \bm\delta / (\bm\delta + \lambda_{\mathrm{pop}} \hat{\bm s} + \varepsilon)$. \\
10 & \algindent Update parameters: $\bm w_t \gets \bm w_{t-1} - \eta \bm q \odot \hat{\bm m} / (\sqrt{\hat{\bm v}} + \epsilon) - \eta \lambda_{\mathrm{wd}} \bm w_{t-1}$. \\
11 & \textbf{end for} \\
12 & \textbf{return} $\bm w_T$. \\
\end{tabularx}
\end{algorithm}

\section{Complexity Measure and Self-Influence}\label{app:complexity_self_influence}\label{app:full_signal_direction_theorem}

\subsection{Optimal Signal Directions for a Complexity Measure $R$}\label{app:optimal_signal_directions}

\paragraph{Intuition.} Test-invisibility identifies the directions in which training motion has no effect on test predictions. The signal channel contains every direction in which training reduced loss, including those that reflect transferable structure together with those that reflect memorization. A complexity measure $R\succ 0$ on output space resolves this within the signal channel: the top eigenspace of $R^{-1/2}\mathcal W_S R^{-1/2}$ is the worst-case-optimal way to read out signal from a training run, and the split between signal and memorization is determined by the choice of $R$ together with the cumulative dissipation Gramian $\mathcal W_S$, with the test set entering only through a scalar visibility constant. The graph-Laplacian metric $\Rsg = I+\gamma \widehat L_S$ is one valid choice; the self-influence metric $R_{S,\gamma,\beta}^a(s,T)$ defined below is another, and the theorem holds for any $R\succ 0$.

\begin{theorem}[Optimal signal directions: hard projectors]
\label{thm:sharp_semantic_phase}
\label{thm:train_only_minimax_signal}
Fix $0\le s\le T$, a test set $Q$, and a complexity measure $R\succ 0$ on
$\mathbb R^{np}$. Write $\bm W \triangleq \mathcal W_S(s,T)$ and
$C_R(s,T) \triangleq R^{-1/2} \bm W R^{-1/2}\succeq 0$, with spectral decomposition
$C_R(s,T)=\sum_{j=1}^{\rho}\lambda_j^R\phi_j^R(\phi_j^R)^\top$,
$\lambda_1^R\ge\cdots\ge\lambda_\rho^R>0$.
Let $\bm Q_r^R\triangleq \sum_{j\le r}\phi_j^R(\phi_j^R)^\top$ be the top-$r$
spectral projector, and recall the visibility Gramian $\Gamma_Q(s,T)$ from \autoref{def:test_transfer}.

\emph{For hard rank-$r$ subspaces}, define the projector class
\begin{align}
\mathcal P_r
\triangleq
\{\bm Q\in\mathbb R^{np\times np}:\ \bm Q^2=\bm Q,\ \bm Q^\top=\bm Q,\ \rank(\bm Q)=r\}
\end{align}
and, for $\bm Q\in\mathcal P_r$, the worst-case lost test motion
\begin{align}
\mathcal E_r(\bm Q)
\triangleq
\sup_{\substack{0\preceq \mathsf A\preceq \|\Gamma_Q(s,T)\|_{\op}C_R(s,T)}}
\|(I-\bm Q)\mathsf A(I-\bm Q)\|_{\op}.
\end{align}
Then
\begin{align}
\inf_{\bm Q\in\mathcal P_r}\mathcal E_r(\bm Q)
=
\|\Gamma_Q(s,T)\|_{\op}\lambda_{r+1}^R,
\label{eq:signal_filter_opt}
\end{align}
and the infimum is attained by the hard spectral projector $\bm Q=\bm Q_r^R$.
If $\lambda_r^R>\lambda_{r+1}^R$, the minimizer is \emph{unique among orthogonal projectors}.

\emph{Remark (Contractions admit multiple optima).}
For the relaxed class $\mathcal M_r = \{\bm M: 0\preceq \bm M\preceq I,\ \rank(\bm M)\le r\}$, several minimizers can attain the optimal value. For example, with $C_R = \operatorname{diag}(100,1)$ and $r=1$, both $\operatorname{diag}(1,0)$ and $\operatorname{diag}(0.9,0)$ achieve the same tail bound $\lambda_2 = 1$. The corresponding statement for contractions is the spectral relaxation in \autoref{thm:continuous_signal_filter}.
\end{theorem}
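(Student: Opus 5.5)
The plan is to reduce the inner supremum to a single extremal $\mathsf A$ and then apply Courant--Fischer. Throughout, write $c\triangleq\|\Gamma_Q(s,T)\|_{\op}$ and $C\triangleq C_R(s,T)$, with eigenvalues $\lambda_1^R\ge\lambda_2^R\ge\cdots$ extended by zeros up to dimension $np$.

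\emph{Step 1: the supremum is attained at $\mathsf A=cC$.} Fix $\bm Q\in\mathcal P_r$. Conjugation preserves the Loewner order, so $0\preceq\mathsf A\preceq cC$ gives $0\preceq(I-\bm Q)\mathsf A(I-\bm Q)\preceq c(I-\bm Q)C(I-\bm Q)$. The operator norm is monotone on positive-semidefinite matrices, so $\|(I-\bm Q)\mathsf A(I-\bm Q)\|_{\op}\le c\|(I-\bm Q)C(I-\bm Q)\|_{\op}$. The feasible choice $\mathsf A=cC$ attains this bound, and therefore
\begin{align}
\mathcal E_r(\bm Q)=c\max_{x\in V,\ \|x\|_2=1}x^\top Cx,\qquad V\triangleq\range(I-\bm Q),\quad \dim V=np-r.
\end{align}

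\emph{Step 2: value and attainment.} As $\bm Q$ ranges over $\mathcal P_r$, the subspace $V$ ranges over all subspaces of dimension $np-r$. The min--max form of Courant--Fischer gives $\min_{\dim V=np-r}\max_{x\in V,\|x\|=1}x^\top Cx=\lambda_{r+1}^R$, which is \eqref{eq:signal_filter_opt}. For attainment, take $\bm Q=\bm Q_r^R$. Then $V=\operatorname{span}\{\phi_j^R:j>r\}$ together with $\ker C$, and the maximal Rayleigh quotient on $V$ is exactly $\lambda_{r+1}^R$. (If $r\ge\rho$, this value is $0$.)

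\emph{Step 3: uniqueness, which is the main obstacle.} I would first try the natural argument. The gap $\lambda_r^R>\lambda_{r+1}^R$ forces $V\cap U=\{0\}$, where $U\triangleq\operatorname{span}\{\phi_j^R:j\le r\}$, because every unit vector of $U$ has Rayleigh quotient at least $\lambda_r^R>\lambda_{r+1}^R$. Combined with $\dim V+\dim U=np$, this gives $V\oplus U=\mathbb R^{np}$. The intended conclusion, however, is $V=U^\perp$, and that does not follow in general.

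In fact I expect the uniqueness clause to fail as stated. Take $C=\operatorname{diag}(2,1,0)$ and $r=1$, so $\lambda_2^R=1$. Let $V=\operatorname{span}\{e_2,\ e_1+ae_3\}$ with $a\ge1$. For $x=\alpha e_2+\beta(e_1+ae_3)$ the Rayleigh quotient is
\begin{align}
\frac{\alpha^2+2\beta^2}{\alpha^2+(1+a^2)\beta^2}\le1 .
\end{align}
So $\bm Q=I-P_V$ attains the optimum $c\lambda_2^R$ while $\bm Q\neq\bm Q_1^R$.

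My plan is therefore to prove the value and attainment claims as above, and to state uniqueness only in a corrected form. One option is uniqueness for the objective $\|C^{1/2}(I-\bm Q)\|_{\mathrm F}^2$, i.e.\ the trace version, where Ky Fan's equality case does force $V=U^\perp$ under the gap. The other is the weaker statement proved above: every optimal $\bm Q$ has $\range(I-\bm Q)\cap U=\{0\}$.
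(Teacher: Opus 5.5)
Your proof of the value and of attainment follows the paper's route. Loewner monotonicity bounds the inner supremum by $c\,\lambda_{\max}\bigl((I-\bm Q)C(I-\bm Q)\bigr)$, the bound is attained, and Courant--Fischer min--max over $\range(I-\bm Q)$ gives $c\lambda_{r+1}^R$ with $\bm Q_r^R$ optimal. The one difference is cosmetic. You attain the supremum with the feasible choice $\mathsf A=cC$ itself, whereas the paper builds a rank-one extremizer $P^2C^{1/2}\bm u\bm u^\top C^{1/2}$ from a top eigenvector of the compression; your version is shorter. You did not address the contraction remark, but your Step 1 covers it verbatim, since it only uses that $I-\bm M$ is symmetric. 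For $\bm M=\operatorname{diag}(0.9,0)$ one gets $(I-\bm M)C(I-\bm M)=\operatorname{diag}(1,1)$.

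On uniqueness you are right and the paper is not. The paper's proof disposes of this clause in one sentence, calling it ``the strict-gap case of Courant--Fischer,'' but the min--max theorem has no such equality case. When $c>0$, the minimizers are exactly the $\bm Q$ for which $C-\lambda_{r+1}^R I$ is negative semidefinite on $\range(I-\bm Q)$. That set is typically larger than $\{\bm Q_r^R\}$.

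Your example checks out: in the orthonormal basis $e_2,\,(e_1+ae_3)/\sqrt{1+a^2}$, the compression of $C$ to $V$ is $\operatorname{diag}\bigl(1,2/(1+a^2)\bigr)$. It is an instance of a general mechanism. Suppose $C$ has an eigenvalue $\mu<\lambda_{r+1}^R$ on $\mathbb R^{np}$ (zeros included) with eigenvector $\phi$. Replacing $\phi$ by $\phi+\epsilon\phi_1^R$ in an eigenbasis of $U^\perp$ yields another minimizer whenever $0<\epsilon^2\le(\lambda_{r+1}^R-\mu)/(\lambda_1^R-\lambda_{r+1}^R)$. Conversely, suppose $c>0$, the gap holds, and every eigenvalue below the gap equals $\lambda_{r+1}^R$. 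Then $C-\lambda_{r+1}^R I\succeq 0$, which forces $\range(I-\bm Q)=U^\perp$.

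Since $\ker C=R^{1/2}\ker\bm W$, uniqueness fails whenever the reservoir is nontrivial and $r<\rho$. That is precisely the regime the paper emphasizes, and uniqueness also fails trivially when $\|\Gamma_Q(s,T)\|_{\op}=0$. The same error carries over to the uniqueness sentence in the definition of the self-influence transfer.

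Both of your repairs are correct. Ky Fan's equality case gives uniqueness under the gap for the trace objective $\operatorname{tr}\bigl(C(I-\bm Q)\bigr)$. And every minimizer of the operator-norm objective satisfies $\range(I-\bm Q)\cap U=\{0\}$.
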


Here signal denotes directions where training dissipated the most loss per unit of complexity. The realized test-visible operator satisfies the spectral bound
\begin{align}
R^{-1/2}\bm G^\top \bm G R^{-1/2}
\preceq
\|\Gamma_Q(s,T)\|_{\op} C_R(s,T),
\end{align}
so that choosing $R=\Rsg$ recovers the graph-Laplacian case and the later choice $R=R_{S,\gamma,\beta}^a(s,T)$ incorporates self-influence within the same theorem. The signal basis is fixed before observing the test predictor, and the realized test operator refines that basis.

The hard projector restricts the controller class to idempotent operators. Allowing spectral contractions with a trace budget yields a spectral filter of $C_R(s,T)$ that attenuates small eigenvalues more strongly than large ones (\autoref{thm:continuous_signal_filter} in \autoref{sec:deferred_signal_parts}).

Three successive filters identify the directions that drive test error. The reservoir of low-dissipation directions is test-invisible. Among the remaining test-visible directions, leakage is captured by $\|R_\perp\|_{\op}$, the deviation of test motion from what training motion predicts. Among directions that are mobile and test-visible, those of high complexity rank low in $C_R(s,T)$ and are deprioritized by the signal theorem. Transferable signal remains after all three. The signal-channel partition records what training moved, and the complexity filter separates structure from nuisance within the signal channel.

\begin{figure}[t]
    \centering
    \safeincludegraphics[width=\textwidth]{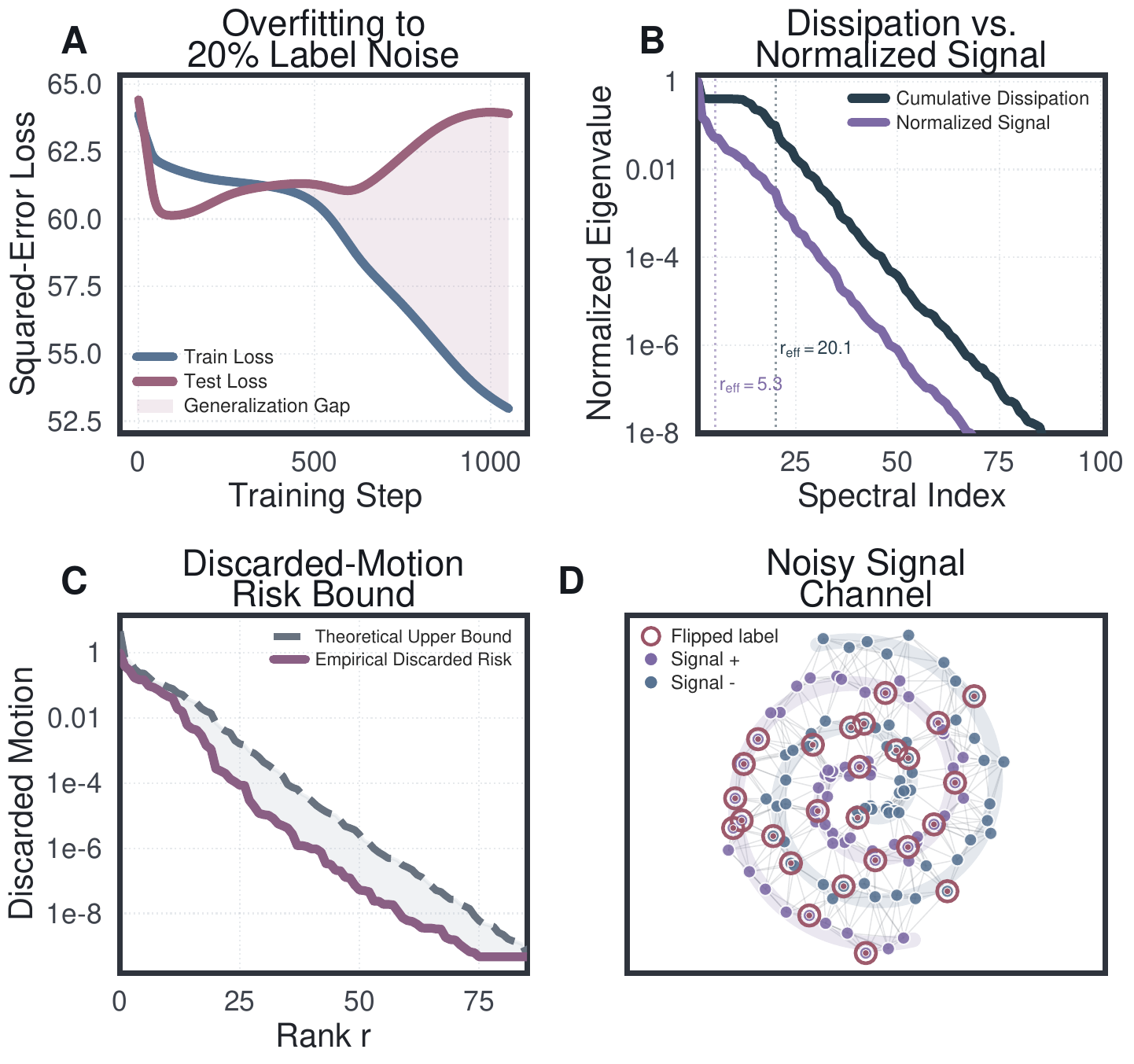}
    \caption{\textbf{Isolating the signal channel (\autoref{thm:train_only_minimax_signal}).}
    Evaluated on a dataset with $20\%$ label noise.
    \textbf{(A)}~Standard overfitting: training loss vanishes while test loss diverges.
    \textbf{(B)}~The raw cumulative dissipation $\lambda(\mathcal{W}_S)$ decays smoothly. Normalizing by manifold structure yields $C_R$ (purple), which drops by six orders of magnitude at effective rank $r_{\mathrm{eff}}=5.3$.
    \textbf{(C)}~The empirical worst-case lost test motion tightly tracks the theoretical bound $\|\Gamma_Q\|_{\op}\lambda_{r+1}^R$.
    \textbf{(D)}~The signal channel contains both clean structure and corrupted labels (circled), motivating the complexity measure $R$.}
    \label{fig:signal_spectrum}
\end{figure}

\subsection{Proofs for the Optimal Signal Directions Theorem}\label{sec:proofs_signal}

\paragraph{Intuition.}
Among rank-$r$ summaries of a training run, the one that retains the most test-relevant motion per unit of training cost is identified by an Eckart--Young-style argument: the optimal projector is given by the leading eigenvectors of $C_R(s,T)$, with the metric $R$ replacing the Euclidean inner product used in the classical statement. This appendix proves the optimal-signal-direction theorem of \autoref{app:full_signal_direction_theorem}.

\subsubsection{Deferred Parts of the Signal Principle}\label{sec:deferred_signal_parts}

\paragraph{Intuition.}
The optimal-signal-direction theorem combines two ingredients: a ceiling on how strongly the test predictor reacts along any direction (admissibility), and a tail bound that controls the test motion lost when only the top-$r$ directions are retained. We state both pieces here and then combine them in the proof. With the notation of \autoref{thm:train_only_minimax_signal}, let $\bm G\triangleq \mathsf G_Q(T,s)$ and define the $R$-orthogonal projector $\bm\Pi_r^R \triangleq R^{-1/2}\bm Q_r^R R^{1/2}$.

\paragraph{Admissible class.}
The realized test predictor satisfies
\begin{align}
R^{-1/2}\bm G^\top \bm G R^{-1/2}
\preceq
\|\Gamma_Q(s,T)\|_{\op} C_R(s,T).
\label{eq:minimax_signal_admissible}
\end{align}

\paragraph{Tail bound.}
For every $\bm h\in\mathbb R^{np}$ and every $r\ge 0$,
\begin{align}
\|\bm G(I-\bm\Pi_r^R)\bm h\|_2^2
\le
\|\Gamma_Q(s,T)\|_{\op}\lambda_{r+1}^R
\bm h^\top R(I-\bm\Pi_r^R)\bm h,
\label{eq:minimax_signal_tail}
\end{align}
with the convention $\lambda_{r+1}^R=0$ if $r\ge \rho$. Projecting into the tail eigenspace of $C_R(s,T)$ and applying the admissibility bound yields \eqref{eq:minimax_signal_tail} as a corollary of the admissible class.

\begin{remark}[Continuous relaxation]\label{thm:continuous_signal_filter}
Relaxing from rank-$r$ projectors to trace-bounded contractions $\{V:0\preceq V\preceq I,\ \tr(V)\le \tau\}$ yields a spectral filter of $C_R(s,T)$ that attenuates each eigendirection in proportion to its eigenvalue, diagonal in the eigenbasis of $C_R$, and fully suppressing small eigenvalues while preserving large ones. The hard rank-$r$ result above is the idempotent special case.
\end{remark}

\begin{proof}[Proof of \autoref{thm:train_only_minimax_signal}]
\textit{Admissible class.} \autoref{thm:reservoir_invisibility} gives $\bm G^\top \bm G \preceq \|\Gamma_Q(s,T)\|_{\op}\bm W$. Conjugating by $R^{-1/2}$ gives
\begin{align}
R^{-1/2}\bm G^\top \bm G R^{-1/2}
\preceq
\|\Gamma_Q(s,T)\|_{\op}R^{-1/2}\bm W R^{-1/2}
=
\|\Gamma_Q(s,T)\|_{\op}C_R(s,T),
\end{align}
which is \eqref{eq:minimax_signal_admissible}.

\textit{Tail bound.} Write $P^2\triangleq \|\Gamma_Q(s,T)\|_{\op}$, $C\triangleq C_R(s,T)$, $\bm Q\triangleq \bm Q_r^R$, $\bm\Pi\triangleq \bm\Pi_r^R$, and $\bm z\triangleq R^{1/2}\bm h$. Then $\bm G(I-\bm\Pi)\bm h=\bm G R^{-1/2}(I-\bm Q) \bm z$ and
\begin{align}
\|\bm G(I-\bm\Pi)\bm h\|_2^2
&=
\bm z^\top (I-\bm Q)R^{-1/2}\bm G^\top \bm G R^{-1/2}(I-\bm Q)\bm z \\
&\le
P^2 \bm z^\top (I-\bm Q)C(I-\bm Q)\bm z
\qquad\text{by \eqref{eq:minimax_signal_admissible}} \\
&\le
P^2\lambda_{r+1}^R \bm z^\top (I-\bm Q)\bm z
\qquad\text{since $\bm Q$ is the top-$r$ spectral projector of $C$} \\
&=
P^2\lambda_{r+1}^R \bm h^\top R(I-\bm\Pi)\bm h,
\end{align}
which proves \eqref{eq:minimax_signal_tail}.

\textit{Uniqueness.} Every rank-$r$ $R$-orthogonal projector can be written
\begin{align}
\bm\Pi = R^{-1/2} \bm Q R^{1/2},
\end{align}
where $\bm Q$ is an orthogonal rank-$r$ projector on the whitened space.
For a candidate transfer operator $\mathsf A$ from invisible to visible directions with
$0\preceq \mathsf A\preceq P^2 C$, the lost-motion ratio is
\begin{align}
\sup_{\bm h\neq 0}
\frac{\bm h^\top (I-\bm\Pi)^\top R^{1/2} \mathsf A R^{1/2}(I-\bm\Pi)\bm h}
     {\bm h^\top R \bm h}
=
\lambda_{\max}\!\big((I-\bm Q)\mathsf A(I-\bm Q)\big).
\end{align}
Therefore
\begin{align}
\sup_{\substack{0\preceq \mathsf A\preceq P^2 C}}
\sup_{\bm h\neq 0}
\frac{\bm h^\top (I-\bm\Pi)^\top R^{1/2} \mathsf A R^{1/2}(I-\bm\Pi)\bm h}
     {\bm h^\top R \bm h}
\le
P^2\lambda_{\max}\!\big((I-\bm Q)C(I-\bm Q)\big).
\label{eq:minimax_upper}
\end{align}

The bound \eqref{eq:minimax_upper} is attained. Pick a unit top eigenvector $\bm v$ of $(I-\bm Q)C(I-\bm Q)$ with eigenvalue
\begin{align}
\mu_{\bm Q} \triangleq \lambda_{\max}\!\big((I-\bm Q)C(I-\bm Q)\big),
\end{align}
and set
\begin{align}
\bm u
\triangleq
\frac{C^{1/2}\bm v}{\|C^{1/2}\bm v\|_2},
\qquad
\mathsf A_{\bm Q}
\triangleq
P^2 C^{1/2}\bm u\bm u^\top C^{1/2}.
\end{align}
Then $\bm u\bm u^\top\preceq I$ gives $0\preceq \mathsf A_{\bm Q}\preceq P^2 C$, and $\bm v\in \range(I-\bm Q)$ together with the definition of $\mathsf A_{\bm Q}$ yields
\begin{align}
\bm v^\top (I-\bm Q)\mathsf A_{\bm Q}(I-\bm Q)\bm v
&=
P^2\bm v^\top C^{1/2}\bm u\bm u^\top C^{1/2}\bm v \\
&=
P^2\|C^{1/2}\bm v\|_2^2 \\
&=
P^2\bm v^\top C \bm v \\
&=
P^2\mu_{\bm Q},
\end{align}
so the supremum in \eqref{eq:minimax_upper} equals $P^2\lambda_{\max}\!\big((I-\bm Q)C(I-\bm Q)\big)$.

Minimizing this expression over rank-$r$ projectors $\bm Q$, the Courant--Fischer theorem gives
\begin{align}
\inf_{\rank(\bm Q)=r}\lambda_{\max}\!\big((I-\bm Q)C(I-\bm Q)\big)&=\lambda_{r+1}^R,
\end{align}
with minimizer $\bm Q_r^R$, the top-$r$ spectral projector of $C$, so
\begin{align}
\bm\Pi_r^R &= R^{-1/2}\bm Q_r^R R^{1/2}
\end{align}
is the minimizing original-space projector and the optimal value is $P^2\lambda_{r+1}^R$. Uniqueness under the strict gap $\lambda_r^R>\lambda_{r+1}^R$ is the strict-gap case of Courant--Fischer.

\medskip
\noindent\textit{Extension to universal rank-$r$ linear channels
(\autoref{eq:signal_filter_opt}).}
Let $\bm M\in\mathcal M_r$, i.e.\ $0\preceq \bm M\preceq I$ with $\rank(\bm M)\le r$.
Write $C\triangleq C_R(s,T)$ and $P^2\triangleq \|\Gamma_Q(s,T)\|_{\op}$.
Since $0\preceq \mathsf A\preceq P^2 C$,
\begin{align}
\mathcal E_r(\bm M)
=
P^2\|(I-\bm M)C^{1/2}\|_{\op}^2.
\end{align}
Indeed,
\begin{align}
\sup_{\mathsf A}\sup_{\bm z}\frac{\bm z^\top(I-\bm M)\mathsf A(I-\bm M)\bm z}{\|\bm z\|^2}
&=\lambda_{\max}\bigl((I-\bm M)P^2 C(I-\bm M)\bigr)\\
&=P^2\|C^{1/2}(I-\bm M)\|_{\op}^2.
\end{align}
Now $C^{1/2}\bm M$ has rank at most $r$, so by the Eckart--Young--Mirsky
theorem for operator norm,
\begin{align}
\inf_{\rank(\bm M)\le r}\|C^{1/2}-C^{1/2}\bm M\|_{\op}
=
\inf_{\rank(\bm N)\le r}\|C^{1/2}-\bm N\|_{\op}
=
\sigma_{r+1}(C^{1/2})
=
\sqrt{\lambda_{r+1}^R},
\end{align}
where the second equality uses the substitution $\bm N=C^{1/2}\bm M$ (every rank-$r$ operator in $\range(C^{1/2})$ is attainable because the constraint $\bm M\preceq I$ is slack at the optimum). Hence
\begin{align}
\inf_{\bm M\in\mathcal M_r}\mathcal E_r(\bm M)
=
P^2\lambda_{r+1}^R.
\end{align}

For the matching lower bound, fix any $\bm M$ with $0\preceq \bm M\preceq I$ and $\rank(\bm M)\le r$. Courant--Fischer gives $\dim(\ker \bm M)\ge np-r$, so $\ker \bm M$ intersects the bottom $(np-r)$-dimensional eigenspace of $C$ nontrivially. Pick
\begin{align}
\bm z
\in
\ker \bm M\cap\operatorname{span}\{\phi_{r+1}^R,\dots,\phi_\rho^R\},
\qquad
\|\bm z\| = 1.
\end{align}
Then $(I-\bm M)\bm z=\bm z$ and
$\bm z^\top C \bm z\ge\lambda_{r+1}^R$, giving
$\mathcal E_r(\bm M)\ge P^2\lambda_{r+1}^R$.

The infimum is attained by $\bm M=\bm Q_r^R$: since $\bm Q_r^R$ is an orthogonal
projector with $0\preceq \bm Q_r^R\preceq I$ and $\rank(\bm Q_r^R)=r$, it belongs
to $\mathcal M_r$, and the upper bound is achieved with equality.
\end{proof}

\subsection{Self-Influence and Adjoint Reweighting}\label{sec:proofs_self_influence}

\paragraph{Intuition.}
The population-risk gap can be read off from a single training run by examining how each training point would have responded to its own removal: each example carries a self-influence whose average over the dataset equals the generalization gap. This appendix proves \autoref{thm:self_influence_full} and the reweighting sensitivity formula, extending classical influence functions \citep{cook1982residuals,koh2017understanding} to feature-learning trajectories and connecting with the leave-one-out stability tradition of \citet{bousquet2002stability,hardt2016train}. The full statement contains three pieces: the population-risk gap equals an average over training points of a one-shot loss difference; when the loss is differentiable, that loss difference becomes an integral over a prediction-displacement vector; and when the training objective is separable per example, the displacement vector decomposes into a content-change part and an observation-channel-change part. We state these as a base theorem and two corollaries so that each piece can be cited on its own.

\subsubsection{Full Statement of the Self-Influence Theorem}\label{sec:self_influence_full}

\begin{theorem}[Self-Influence, base form]\label{thm:self_influence_full}
Under the notation of \autoref{sec:population_risk},
\begin{align}
\mathbb E[\gamma_\Psi(T,S)]
=
\frac1n\sum_{i=1}^n
\mathbb E\!\left[
\Psi_{Z_i}\!\big(\bm U_{Q_i}^{S^{(i)}}(T)-\bm y_i\big)
-
\Psi_{Z_i}\!\big(\bm U_{Q_i}^{S}(T)-\bm y_i\big)
\right].
\end{align}
\end{theorem}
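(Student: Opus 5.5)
This is the classical replace-one identity, and I would prove it by exchangeability alone, splitting $\mathbb E[\gamma_\Psi]$ into its population and empirical halves and matching each to one term of the sum. Throughout I take $S=(Z_1,\dots,Z_n)$ i.i.d.\ from $\mathcal D$, with $Z_1',\dots,Z_n'$ i.i.d.\ from $\mathcal D$ and independent of $S$. I write $\bm w_T(\cdot)$ for the (possibly randomized) training map. Any algorithmic randomness is assumed independent of the data and symmetric in law under relabeling, so it can be absorbed into the expectations.

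\emph{Population half.} Because $Z_i'$ is independent of $S$, conditioning on $S$ gives
\begin{align}
\mathcal L_{\mathcal D}^\Psi(\bm w_T(S))=\mathbb E\bigl[\Psi_{Z_i'}\bigl(F(\bm w_T(S),Z_i')-\bm y(Z_i')\bigr)\mid S\bigr]
\end{align}
for every $i$. Averaging over $i$ expresses $\mathbb E[\mathcal L_{\mathcal D}^\Psi(\bm w_T(S))]$ as $\tfrac1n\sum_i\mathbb E[\Psi_{Z_i'}(\bm U_{Q_i'}^{S}(T)-\bm y_i')]$. Next I apply the swap $Z_i\leftrightarrow Z_i'$. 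The pair $(S,Z_i')$ and the pair $(S^{(i)},Z_i)$ have the same joint law, since both consist of $n+1$ i.i.d.\ draws arranged in the same pattern. Applying this to the measurable functional $(S,z)\mapsto\Psi_z(F(\bm w_T(S),z)-\bm y(z))$ gives
\begin{align}
\mathbb E\bigl[\Psi_{Z_i'}(\bm U_{Q_i'}^{S}(T)-\bm y_i')\bigr]=\mathbb E\bigl[\Psi_{Z_i}(\bm U_{Q_i}^{S^{(i)}}(T)-\bm y_i)\bigr].
\end{align}
This has the same structure as the proof of \autoref{thm:exchangeability_identity}, applied to the replaced dataset instead of the deleted one.

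\emph{Empirical half.} By definition, $\widehat{\mathcal L}_S^\Psi(\bm w_T(S))=\tfrac1n\sum_i\Psi_{Z_i}(\bm U_{Q_i}^{S}(T)-\bm y_i)$, so no distributional step is needed here. Subtracting the empirical half from the population half and using linearity of expectation gives the claimed identity.

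The companion displacement formula \eqref{eq:delta_i_loo} is then pure bookkeeping. I apply the chain-rule relation $\bm U_Q(T)-\bm U_Q(0)=-\mathsf G_Q(T,0)\bm g(0)$ from \autoref{sec:operator_derivation} with $Q=Q_i$ on the two datasets $S$ and $S^{(i)}$, which share the initialization, and subtract. The initial test outputs $\bm U_{Q_i}(0)$ cancel, leaving $\bm\Delta_i(T)=\mathsf G_{Q_i,S}(T)\bm g_S(0)-\mathsf G_{Q_i,S^{(i)}}(T)\bm g_{S^{(i)}}(0)$. This holds exactly, with no kernel-stability hypothesis, because each operator is integrated along its own realized trajectory.

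The only delicate step is the swap in the population half. Its justification requires the training map to be a fixed measurable function of the data and independent randomness, so that relabeling $Z_i\leftrightarrow Z_i'$ leaves the joint law invariant. It also requires an integrability condition, for instance $\mathbb E|\Psi_Z(\cdot)|<\infty$, so the expectations are finite and can be subtracted. I would state both conditions explicitly as the hypotheses of the theorem. Everything else is linearity of expectation.
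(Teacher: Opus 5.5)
Your proof is correct and matches the paper's. The paper likewise uses the fresh draw $Z_i'$ to express the population half, invokes the swap $(S^{(i)},Z_i)\stackrel{d}{=}(S,Z_i')$, takes the empirical half by definition, and obtains the displacement formula from $\bm U_Q^S(T)=\bm U_Q(\bm w_0;Q)-\mathsf G_{Q,S}(T)\bm g_S(0)$ on the two datasets. Your explicit measurability and integrability hypotheses are reasonable conditions that the paper leaves implicit; the relabeling-symmetry assumption on the algorithm is unnecessary, since the swap keeps $Z_i'$ in slot $i$.
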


\begin{corollary}[Integral representation of the gap]\label{cor:self_influence_integral}
If each $\Psi_z$ is differentiable, define the prediction displacement
\begin{align}
\bm\Delta_i(T)
&=
\mathsf G_{Q_i,S}(T)\bm g_S(0)
-
\mathsf G_{Q_i,S^{(i)}}(T)\bm g_{S^{(i)}}(0).
\end{align}
Then the gap admits the integral representation
\begin{align}
\mathbb E[\gamma_\Psi(T,S)]
&=
\frac1n\sum_{i=1}^n
\mathbb E\!\left[
\int_0^1
\left\langle
\nabla\Psi_{Z_i}\!\big(\bm e_i(T)+\theta\bm\Delta_i(T)\big),
\bm\Delta_i(T)
\right\rangle
d\theta
\right].\label{eq:exact_self_influence_integral}
\end{align}
\end{corollary}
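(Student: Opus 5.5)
The plan is to derive the integral representation directly from \autoref{thm:self_influence_full} by applying the fundamental theorem of calculus pointwise, one sample at a time. Two ingredients are needed. First, identify the two predictions compared in the base form, $\bm U_{Q_i}^{S}(T)$ and $\bm U_{Q_i}^{S^{(i)}}(T)$, with $\bm e_i(T)$ and $\bm e_i(T)+\bm\Delta_i(T)$. Second, show that the resulting line-segment integral of $\nabla\Psi_{Z_i}$ is integrable, so it can be placed inside the expectation.

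\textbf{Displacement identity.} We apply the chain-rule relation from \autoref{sec:operator_derivation}, with $Q=Q_i=\{z_i\}$ as a one-point test set, to two gradient-flow runs. Both runs start from the same $\bm w(0)$, one trained on $S$ and one on $S^{(i)}$. Since the starting weights are shared, $\bm U_{Q_i}^{S}(0)=\bm U_{Q_i}^{S^{(i)}}(0)=F(\bm w(0),z_i)$. The relation $\bm U_Q(T)-\bm U_Q(0)=-\mathsf G_Q(T,0)\bm g(0)$ holds for each run separately, with its own propagator and kernels. Subtracting the two instances gives
\begin{align}
\bm U_{Q_i}^{S^{(i)}}(T)-\bm U_{Q_i}^{S}(T)=\mathsf G_{Q_i,S}(T)\bm g_S(0)-\mathsf G_{Q_i,S^{(i)}}(T)\bm g_{S^{(i)}}(0)=\bm\Delta_i(T).
\end{align}
By definition $\bm e_i(T)=\bm U_{Q_i}^{S}(T)-\bm y_i$, so the replaced-data argument is $\bm U_{Q_i}^{S^{(i)}}(T)-\bm y_i=\bm e_i(T)+\bm\Delta_i(T)$. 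The label $\bm y_i$ here belongs to $Z_i$, the point held out of $S^{(i)}$, and it is the same label in both terms of the base form.

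\textbf{Pointwise integral.} Fix a realization of $(S,Z_i')$ and set $\varphi(\theta)=\Psi_{Z_i}(\bm e_i(T)+\theta\bm\Delta_i(T))$. Because $\Psi_{Z_i}$ is differentiable, $\varphi$ is differentiable on $[0,1]$ with $\varphi'(\theta)=\langle\nabla\Psi_{Z_i}(\bm e_i(T)+\theta\bm\Delta_i(T)),\bm\Delta_i(T)\rangle$. The fundamental theorem of calculus then gives
\begin{align}
\varphi(1)-\varphi(0)=\int_0^1\varphi'(\theta)\,d\theta.
\end{align}
The left side is exactly the bracket in \autoref{thm:self_influence_full}. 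Substituting, taking expectations, and averaging over $i$ yields \eqref{eq:exact_self_influence_integral}.

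\textbf{Main obstacle: justifying the fundamental theorem step.} Mere differentiability does not guarantee that $\varphi'$ is Lebesgue integrable, so the identity above could fail as stated. I would first assume that $\Psi_z$ is $C^1$, which holds for the squared loss and every smooth loss used in the paper. Then $\varphi'$ is continuous on the compact interval $[0,1]$, and the fundamental theorem applies directly. Failing that, I would use the weaker condition that $\nabla\Psi_z$ is locally bounded, which makes $\varphi$ Lipschitz and hence absolutely continuous. Once the pointwise identity holds, taking expectations needs nothing beyond the integrability of the two loss values, which the base theorem already assumes when it writes $\mathbb E[\gamma_\Psi]$. So no exchange of limits beyond linearity of expectation is required.
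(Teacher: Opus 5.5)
Your proof is correct and follows the paper's argument. The paper likewise obtains $\bm U_{Q_i}^{S^{(i)}}(T)-\bm y_i=\bm e_i(T)+\bm\Delta_i(T)$ from the shared-initialization representation $\bm U_Q^S(T)=\bm U_Q(\bm w_0;Q)-\mathsf G_{Q,S}(T)\bm g_S(0)$, and then applies the fundamental theorem of calculus to $\theta\mapsto\Psi_{Z_i}(\bm e_i(T)+\theta\bm\Delta_i(T))$ inside the base replace-one identity; your regularity caveat is a legitimate refinement that the paper leaves implicit.
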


\begin{corollary}[Displacement under a separable objective]\label{cor:self_influence_separable}
Under the separable training objective of \autoref{thm:replace_one_stability},
\begin{align}
\bm\Delta_i(T)
&=
\mathsf G_{Q_i,S}^{(i)}(T)\bm\delta_i
+
\bigl(\mathsf G_{Q_i,S}(T)-\mathsf G_{Q_i,S^{(i)}}(T)\bigr)\bm g_{S^{(i)}}(0), \label{eq:delta_i_loo_separable}
\end{align}
where $\bm\delta_i$ is the initial gradient difference of the replaced example.
\end{corollary}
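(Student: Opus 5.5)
The statement to prove is \autoref{cor:self_influence_separable}: under a separable objective, $\bm\Delta_i(T)$ splits into a content-change part $\mathsf G^{(i)}_{Q_i,S}(T)\bm\delta_i$ and an observation-channel-change part $\bigl(\mathsf G_{Q_i,S}(T)-\mathsf G_{Q_i,S^{(i)}}(T)\bigr)\bm g_{S^{(i)}}(0)$. The plan is an add-and-subtract on the displacement formula of \autoref{cor:self_influence_integral}, followed by identifying each piece.

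\textbf{Step 1 (splitting).} Start from
\begin{align}
\bm\Delta_i(T)=\mathsf G_{Q_i,S}(T)\bm g_S(0)-\mathsf G_{Q_i,S^{(i)}}(T)\bm g_{S^{(i)}}(0).
\end{align}
Subtract and add $\mathsf G_{Q_i,S}(T)\bm g_{S^{(i)}}(0)$, which gives
\begin{align}
\bm\Delta_i(T)=\mathsf G_{Q_i,S}(T)\bigl(\bm g_S(0)-\bm g_{S^{(i)}}(0)\bigr)+\bigl(\mathsf G_{Q_i,S}(T)-\mathsf G_{Q_i,S^{(i)}}(T)\bigr)\bm g_{S^{(i)}}(0).
\end{align}
The second term already matches the observation-channel-change term of \eqref{eq:delta_i_loo_separable}.

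\textbf{Step 2 (separability makes the gradient difference one block).} With a separable objective $\Phi_S(\bm u)=\tfrac1n\sum_j\phi_{z_j}(\bm u_j)$, the stacked output gradient has $j$-block $\tfrac1n\nabla\phi_{z_j}(\bm u_j(0))$. Since $S$ and $S^{(i)}$ share every example except the $i$-th, and share the initialization $\bm w(0)$, these blocks agree for all $j\neq i$. The difference $\bm g_S(0)-\bm g_{S^{(i)}}(0)$ is therefore supported on block $i$ only, and equals $(\bm e_i\otimes I_p)\bm\delta_i$ with $\bm\delta_i$ the initial gradient difference of the replaced example.

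\textbf{Step 3 (identifying the operator).} I would define $\mathsf G^{(i)}_{Q_i,S}(T)\triangleq\mathsf G_{Q_i,S}(T)(\bm e_i\otimes I_p)$, the $i$-th column block of the test transfer. It is the integral of $\Kqs(\tau)\mathcal P_g(\tau,0)$ restricted to inputs in block $i$. Substituting gives \eqref{eq:delta_i_loo_separable}.

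\textbf{Main obstacle.} The delicate point is bookkeeping, not analysis. The two transfers act on output spaces indexed by the same slots, since $S^{(i)}$ keeps $z_i'$ in position $i$. This is what makes the difference $\mathsf G_{Q_i,S}-\mathsf G_{Q_i,S^{(i)}}$ well-typed: both are $\mathbb R^{p\times np}$. I would state this identification explicitly, together with the sign convention for $\bm\delta_i$, $\bm g_S(0)-\bm g_{S^{(i)}}(0)$ on block $i$, and whether the $1/n$ weight is absorbed into $\bm\delta_i$.
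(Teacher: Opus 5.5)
Your proposal is correct and is the paper's argument. The paper gets \eqref{eq:delta_i_loo_separable} by taking $Q=Q_i$ in the replace-one stability lemma, whose proof is your add-and-subtract of $\mathsf G_{Q,S}(T)\bm g_{S'}(0)$ followed by the one-block identity $\bm g_S(0)-\bm g_{S'}(0)=\bm e_i\otimes\bm\delta_i$ (with the $1/n$ absorbed into $\bm\delta_i$); the only difference is that the paper writes this as the output difference $\bm U_{Q_i}^{S}(T)-\bm U_{Q_i}^{S^{(i)}}(T)=-\bm\Delta_i(T)$, while you manipulate the operator expression for $\bm\Delta_i(T)$ directly.
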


\begin{remark}[Content vs.\ channel split]\label{rem:self_influence_content_channel}
Each dataset induces its own train-test decomposition, so the displacement splits into a content-change term and a channel-change term. Writing
\begin{align}
\bm z_S &= \bm W_S^{1/2}\bm g_S(0),
&
\bm C_{i,S} &= \mathsf G_{Q_i,S}(T)\bm W_S^{\dagger/2},
\intertext{with the same definitions for $S^{(i)}$,}
\bm\Delta_i(T) &= \bm C_{i,S}(\bm z_S - \bm z_{S^{(i)}}) + (\bm C_{i,S} - \bm C_{i,S^{(i)}}) \bm z_{S^{(i)}}.
\end{align}
The first term measures how much the learned content changed; the second measures how much the observation channel for point $i$ changed. The generalization gap is the average loss response to both.
\end{remark}

\begin{proof}[Proof of \autoref{thm:self_influence}]
By the law of the unconscious statistician and exchangeability $(S^{(i)},Z_i)\stackrel{d}{=}(S,Z_i')$,
\begin{align}
\mathbb E\big[\mathcal L_{\mathcal D}^{\Psi}(\bm w_T(S))\big]
&=
\frac1n\sum_{i=1}^n
\mathbb E\!\left[
\Psi_{Z_i}\!\big(F(\bm w_T(S^{(i)}),Z_i)-\bm y_i\big)
\right],
\\
\mathbb E\!\left[\widehat{\mathcal L}_{S}^{\Psi}(\bm w_T(S))\right]
&=
\frac1n\sum_{i=1}^n
\mathbb E\!\left[
\Psi_{Z_i}\!\big(F(\bm w_T(S),Z_i)-\bm y_i\big)
\right],
\end{align}
and subtracting gives \eqref{eq:exact_replace_one_gap}. For \eqref{eq:exact_self_influence_integral}, apply the fundamental theorem of calculus to $\theta\mapsto \Psi_{Z_i}\!\big(\bm e_i(T)+\theta\bm\Delta_i(T)\big)$. Finally, \eqref{eq:delta_i_loo} follows from $\bm U_Q^S(T)=\bm U_Q(\bm w_0;Q)-\mathsf G_{Q,S}(T)\bm g_S(0)$ applied to $Q=Q_i$ and the two datasets $S$ and $S^{(i)}$; \eqref{eq:delta_i_loo_separable} is \autoref{thm:replace_one_stability} with $Q=Q_i$.
\end{proof}

\subsubsection{Self-Influence from the Training Displacement}\label{sec:self_influence_decomp}

\paragraph{Intuition.}
Self-influence (\autoref{thm:self_influence}) tracks how each training point's loss responds to its own removal, while the signal spectrum (\autoref{app:full_signal_direction_theorem}) tracks how training motion translates to test-visible displacement per unit data roughness. Both quantities arise from the same train-displacement operator, so a single forward pass and a single backward pass produce both at once.

\begin{proposition}[Self-influence decomposition]\label{prop:self_influence_decomp}
Fix an observed window $0\le s\le T$ and write
\begin{align}
\bm D\triangleq \mathsf D_S(T,s)
=\int_s^T \Kss(\tau)\mathcal P_g(\tau,s) d\tau.
\end{align}
For covectors $\bm a=(\bm a_1,\dots,\bm a_n)$ with $\bm a_i\in\mathbb R^p$, define the block contraction
\begin{align}
\bm A_{\bm a}:\mathbb R^{np}\to\mathbb R^n,
\\
\bm A_{\bm a}(\bm x_1;\dots;\bm x_n)
\triangleq
(\langle \bm a_1,\bm x_1\rangle,\dots,\langle \bm a_n,\bm x_n\rangle).
\end{align}
Let $Q_i=\{z_i\}$ denote the singleton containing the $i$-th training point. Then for every $\bm h\in\mathbb R^{np}$,
\begin{align}
(\bm A_{\bm a}\bm D\bm h)_i
=
\langle \bm a_i,\mathsf G_{Q_i}(T,s)\bm h\rangle,
\qquad i=1,\dots,n.
\end{align}
If $\bm a_i=\nabla\Psi_{z_i}(\bm e_i(s))$ with $\bm e_i(s)=\bm U_{Q_i}(s)-\bm y_i$, then $\bm D^\top \bm A_{\bm a}^\top \bm A_{\bm a}\bm D$ is the exact quadratic expression for self-influence through $\bm D$ on the observed window.
\end{proposition}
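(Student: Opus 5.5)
The plan is to unwind the block structure of $\bm D$ one training point at a time. The key observation is that the $i$-th $p$-block row of $\Kss(\tau)=\bm J_S(\tau)\bm J_S(\tau)^\top$ is $\bm J_{z_i}(\tau)\bm J_S(\tau)^\top$. This is exactly the test-train kernel $\bm K_{Q_iS}(\tau)$ for the singleton test set $Q_i=\{z_i\}$, because the stacked Jacobian $\bm J_S$ is built by concatenating the per-example Jacobians $\bm J_{z_i}=D_{\bm w}F(\bm w,z_i)$.

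Writing $\bm E_i\in\mathbb R^{p\times np}$ for the block-row selector, I will first record the identity
\begin{align}
\bm E_i\Kss(\tau)=\Kqs(\tau)\big|_{Q=Q_i}
\end{align}
for every $\tau$. Since $\bm E_i$ is a constant matrix, it passes through the time integral defining $\bm D$ (\autoref{sec:operator_derivation}). The propagator $\mathcal P_g(\tau,s)$ is the same in both integrals: it is built only from $\bm B$ and $\Kss$ of the training set $S$, and the test set never enters it. This gives
\begin{align}
\bm E_i\bm D=\int_s^T \bm E_i\Kss(\tau)\mathcal P_g(\tau,s)\,d\tau=\int_s^T \bm K_{Q_iS}(\tau)\mathcal P_g(\tau,s)\,d\tau=\mathsf G_{Q_i}(T,s),
\end{align}
using definition \eqref{def:test_transfer} with $Q=Q_i$. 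Because the block contraction satisfies $(\bm A_{\bm a}\bm x)_i=\langle\bm a_i,\bm E_i\bm x\rangle$, applying this to $\bm x=\bm D\bm h$ gives the stated componentwise formula for every $\bm h$.

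For the self-influence reading, I take $\bm a_i=\nabla\Psi_{z_i}(\bm e_i(s))$. By the chain rule, $\bm U_{Q_i}(T)-\bm U_{Q_i}(s)=-\mathsf G_{Q_i}(T,s)\bm g(s)$. The first-order change in the $i$-th training loss along the window is therefore $-\langle\bm a_i,\mathsf G_{Q_i}\bm g(s)\rangle=-(\bm A_{\bm a}\bm D\bm g(s))_i$. Summing the squares of these entries gives
\begin{align}
\|\bm A_{\bm a}\bm D\bm h\|_2^2=\bm h^\top\bm D^\top\bm A_{\bm a}^\top\bm A_{\bm a}\bm D\bm h,
\end{align}
which is the quadratic form that collects the per-point self-response through $\bm D$. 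It is exact on the window because every step above is an identity; no linearization is made beyond the first-order loss derivative already built into the definition of $\bm a$. This matches the way \autoref{thm:self_influence} reads the leave-one-out displacement through $\mathsf G_{Q_i}$.

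The only step needing care is the first identity. Its content is that a training point used as a one-point test set has the same Jacobian row as it has inside $\bm J_S$, so the $i$-th block row of $\Kss$ coincides with $\bm K_{Q_iS}$. It is also what justifies reusing the training propagator: $\mathsf G_{Q_i}$ is defined relative to the training set $S$, which still contains $z_i$. This is in contrast to the leave-one-out operators $\mathsf G_{Q_a,S_{-a}}$ of \autoref{thm:loo_test_transfer}, and I will state the difference explicitly so the two are not conflated. The final claim about self-influence is interpretive: it identifies the quadratic form rather than proving a new bound, so I will present it as a direct consequence of the block identity.
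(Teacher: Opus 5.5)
Your proposal is correct and follows the paper's argument. The paper notes that $\bm D$ is the test transfer with $Q=S$, so its $i$-th $p$-block row is $\mathsf G_{Q_i}(T,s)$, contracts that block against $\bm a_i$, and cites the chain rule for the quadratic form; your block-row selector $\bm E_i$ passed through the integral, with the shared training propagator, just makes that step explicit, and your reading of $\bm D^\top\bm A_{\bm a}^\top\bm A_{\bm a}\bm D$ is interpretive in the same way as the paper's.
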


\autoref{prop:self_influence_decomp} shows that the quadratic $\bm D^\top \bm A_{\bm a}^\top \bm A_{\bm a}\bm D$ shares its forward and backward solves with the transfer operators of \autoref{thm:forward_backward}, so the same ODE solves suffice. The covectors $\bm a_i=\nabla\Psi_{z_i}(\bm e_i(s))$ are available at the window start $s$, so the construction uses only quantities observed before the terminal state.

\begin{definition}[Self-influence transfer]\label{def:self_influence_metric}
For $\beta\ge 0$, define the centered self-influence metric and the associated self-influence-weighted signal operator,
\begin{align}
R_{S,\gamma,\beta}^a(s,T) &\triangleq\Rsg+\beta \bm D^\top \bm A_{\bm a}^\top \bm C_n \bm A_{\bm a}\bm D,\\
T_{Q,\gamma,\beta}^a(s,T) &\triangleq R_{S,\gamma,\beta}^a(s,T)^{-1/2}\mathsf G_Q(T,s)^\top\mathsf G_Q(T,s)R_{S,\gamma,\beta}^a(s,T)^{-1/2},
\end{align}
where $\bm C_n=\bm I-\frac{1}{n}\bm 1\bm 1^\top$ is the centering projector. The centering penalizes per-example deviations from the batch mean (the centered directions $\bm C_n \bm e_i$) and preserves the common mode of uniform shifts across all examples. Since $R_{S,\gamma,\beta}^a\succ 0$, \autoref{thm:train_only_minimax_signal} applies with $R=R_{S,\gamma,\beta}^a$: the top-$r$ eigenspaces of $C_{R_{S,\gamma,\beta}^a}(s,T)$ are the unique rank-$r$ subspaces minimizing worst-case lost test motion, and the Eckart--Young--Mirsky and Courant--Fischer bounds carry over with $\Rsg$ replaced by $R_{S,\gamma,\beta}^a$. The case $\beta=0$ recovers the graph-Laplacian setup of \autoref{app:full_signal_direction_theorem}.
\end{definition}

Together these objects describe the training process at four levels: the dynamics determine which directions training moves; the transfer operator determines what reaches test outputs; the signal spectrum determines which fast directions carry transferable structure; and self-influence supplies the observable scores that make all three measurable. The same centering projector $\bm C_n$ appears in both the self-influence metric and the population-risk objective
\begin{align}
\mathcal L_{\mathrm{pop}}^\Psi = \widehat{\mathcal L}_S^\Psi + \frac{1}{n-1}\operatorname{tr}(\mathsf J_\Psi \bm C_n)
\end{align}
of \autoref{sec:population_risk}, so the signal spectrum and the population-risk correction share a common data-weight centering.

\begin{proof}[Proof of \autoref{prop:self_influence_decomp}]
Since $\bm D=\mathsf G_S(T,s)$ is the transfer operator with test set equal to the full training set, the $i$-th $p$-block of $\bm D\bm h$ is $\mathsf G_{Q_i}(T,s)\bm h$. Contracting that block against $\bm a_i$ gives the first display. The quadratic expression follows from the chain rule.
\end{proof}

\subsubsection{Stability Under Example Replacement}\label{sec:replace_one}

\paragraph{Intuition.}
Resampling one training point and rerunning training from the same initialization shifts the test prediction by the sum of two transparent quantities: the initial-gradient effect of the replaced example, and a drift in the transfer operator between the two neighboring datasets. This decomposition matches the algorithmic stability framework of \citet{bousquet2002stability,hardt2016train} and turns those uniform stability bounds into computable, run-specific quantities.

For a dataset $S$ and test set $Q$, define the transfer operator
\begin{align}\label{eq:transfer_operator_dataset}
\mathsf G_{Q,S}(T)\triangleq \int_0^T \Kqs^S(\tau)\mathcal P_{g,S}(\tau,0) d\tau,
\end{align}
so that \autoref{thm:output_dynamics} gives
\begin{align}\label{eq:transfer_dataset_repn}
\bm U_Q^S(T)=\bm U_Q(\bm w_0;Q)-\mathsf G_{Q,S}(T)\bm g_S(0).
\end{align}
Write $\mathsf G_{Q,S}^{(i)}(T):\mathbb R^p\to\mathbb R^{|Q|p}$ for the $i$-th $p$-column block of $\mathsf G_{Q,S}(T)$.

\begin{lemma}[Stability Under Example Replacement]\label{thm:replace_one_stability}
Fix $T\ge 0$ and a test set $Q$. Let $S' = S^{(i\leftarrow z')}$ and train both datasets from the same initialization $\bm w_0$.

For a general training objective $\Phi_S$,
\begin{align}\label{eq:replace_one_exact_general}
\bm U_Q^S(T)-\bm U_Q^{S'}(T)
=
-\mathsf G_{Q,S}(T)\bigl(\bm g_S(0)-\bm g_{S'}(0)\bigr)
-
\bigl(\mathsf G_{Q,S}(T)-\mathsf G_{Q,S'}(T)\bigr)\bm g_{S'}(0).
\end{align}

If, in addition, the training objective is the empirical average of a per-example loss with $\phi(\cdot;z)$ convex and $C^2$ for every $z$, then
\begin{align}
\Phi_S(\bm u)&=\frac1n\sum_{j=1}^n \phi(u_j;z_j),
\\
\bm g_S(0)-\bm g_{S'}(0)&=\bm e_i\otimes \bm\delta_i,
\\
\bm\delta_i&\triangleq
\frac1n\Bigl(\nabla_{\bm u}\phi(F(\bm w_0,z_i);z_i)-\nabla_{\bm u}\phi(F(\bm w_0,z');z')\Bigr),
\intertext{and so}
\bm U_Q^S(T)-\bm U_Q^{S'}(T)
&=
-\mathsf G_{Q,S}^{(i)}(T)\bm\delta_i
-
\bigl(\mathsf G_{Q,S}(T)-\mathsf G_{Q,S'}(T)\bigr)\bm g_{S'}(0),\label{eq:replace_one_exact_sparse}
\\
\|\bm U_Q^S(T)-\bm U_Q^{S'}(T)\|_2
&\le
\|\mathsf G_{Q,S}^{(i)}(T)\|_{\op}\|\bm\delta_i\|_2
+
\|\mathsf G_{Q,S}(T)-\mathsf G_{Q,S'}(T)\|_{\op}\|\bm g_{S'}(0)\|_2.\label{eq:replace_one_bound}
\end{align}
\end{lemma}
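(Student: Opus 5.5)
The plan is to use the transfer representation \eqref{eq:transfer_dataset_repn} for both datasets. Both runs start from the same initialization $\bm w_0$, so the test outputs at time $0$ agree, $\bm U_Q(\bm w_0;Q)$, because $Q$ is fixed and does not depend on the training set. Subtracting the two representations gives
\begin{align}
\bm U_Q^S(T)-\bm U_Q^{S'}(T)=-\mathsf G_{Q,S}(T)\bm g_S(0)+\mathsf G_{Q,S'}(T)\bm g_{S'}(0).
\end{align}
Adding and subtracting $\mathsf G_{Q,S}(T)\bm g_{S'}(0)$ yields \eqref{eq:replace_one_exact_general} directly. This is purely algebraic and needs nothing beyond \autoref{thm:output_dynamics} and \autoref{cor:test_trajectory} applied to each dataset separately. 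The only point to check is that each run is a genuine gradient-flow trajectory for its own $\Phi$, so that its own propagator and kernel path are used.

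For the separable case, I would compute the stacked output gradient blockwise. With $\Phi_S(\bm u)=\frac1n\sum_j\phi(u_j;z_j)$, the $j$-th $p$-block of $\bm g_S(0)$ is $\frac1n\nabla_{\bm u}\phi(F(\bm w_0,z_j);z_j)$. The outputs at time $0$ are evaluated at the common $\bm w_0$, so for $j\ne i$ the blocks of $\bm g_S(0)$ and $\bm g_{S'}(0)$ coincide, since the instance $z_j$ and the weights are identical. Only block $i$ differs, by exactly $\bm\delta_i$. Hence $\bm g_S(0)-\bm g_{S'}(0)=\bm e_i\otimes\bm\delta_i$.

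Applying $\mathsf G_{Q,S}(T)$ to $\bm e_i\otimes\bm\delta_i$ picks out its $i$-th column block, giving $\mathsf G_{Q,S}^{(i)}(T)\bm\delta_i$; this proves \eqref{eq:replace_one_exact_sparse}. The bound \eqref{eq:replace_one_bound} then follows from the triangle inequality and the definition of the operator norm, using $\|\bm e_i\otimes\bm\delta_i\|_2=\|\bm\delta_i\|_2$.

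No step is a real obstacle. The only subtlety is bookkeeping: the operators $\mathsf G_{Q,S}$ and $\mathsf G_{Q,S'}$ live on the same space $\mathbb R^{np}$ even though their kernels and propagators come from different trajectories, so their difference is well defined. Convexity and $C^2$ regularity of $\phi$ are used only to guarantee that each flow and its propagator ODE \eqref{eq:force_propagator} are well posed.
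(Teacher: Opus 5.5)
Your proposal is correct and follows the paper's proof essentially step for step. You subtract the two transfer representations $\bm U_Q^{S}(T)=\bm U_Q(\bm w_0;Q)-\mathsf G_{Q,S}(T)\bm g_S(0)$ and its analogue for $S'$, then add and subtract $\mathsf G_{Q,S}(T)\bm g_{S'}(0)$. You note that under the separable objective only the $i$-th block of the initial output gradient changes, and you finish with the triangle inequality.

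One small quibble with your closing remark: convexity of $\phi$ is not what makes the flows well posed. The $C^2$ hypotheses on $F$ and $\phi$ already give a locally Lipschitz vector field, and neither your argument nor the paper's uses convexity at all.
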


\begin{proof}
By \autoref{thm:output_dynamics},
\begin{align}
\bm U_Q^S(T)&=\bm U_Q(\bm w_0;Q)-\mathsf G_{Q,S}(T)\bm g_S(0),
\\
\bm U_Q^{S'}(T)&=\bm U_Q(\bm w_0;Q)-\mathsf G_{Q,S'}(T)\bm g_{S'}(0),
\intertext{and subtracting yields \eqref{eq:replace_one_exact_general}. Under the separable empirical loss only the replaced block changes at time $0$, so}
\bm g_S(0)-\bm g_{S'}(0)&=\bm e_i\otimes \bm\delta_i.
\end{align}
Substituting into \eqref{eq:replace_one_exact_general} gives \eqref{eq:replace_one_exact_sparse}, and \eqref{eq:replace_one_bound} follows from the triangle inequality.
\end{proof}

The two terms in \eqref{eq:replace_one_bound} read off directly. The first measures the effect of the replaced example's initial gradient, and the second measures the transfer-operator drift between the two neighboring datasets. For any test loss with Lipschitz constant $L_Q$ in prediction space, multiplying the right-hand side of \eqref{eq:replace_one_bound} by $L_Q$ yields a replace-one stability bound \citep{bousquet2002stability}. The quantity $\|\mathsf G_{Q,S}^{(i)}(T)\|_{\op}$ is an example-specific, test-specific influence score controlled by test-invisibility (\autoref{sec:third_law}): directions in $\ker\Kqs$ leave $\mathsf G_{Q,S}$ unchanged, so noise-trapped gradient stays clear of test predictions.

\subsubsection{Reweighting Sensitivity and Generalization Estimation}\label{sec:self_influence_details}

\paragraph{Intuition.}
The integral expression for the gap involves the transfer-operator difference $\bm\Delta_i(T)$, which would require retraining for every $i$. The first variation of the test functional under an infinitesimal reweighting of a training point, captured by a single backward sensitivity solve along the realized run, gives the same information without retraining, extending classical influence functions \citep{cook1982residuals,koh2017understanding} to the full feature-learning trajectory. From that one solve the entire self-influence matrix is read off, and its centered trace is an unbiased estimate of the population-risk gap.

\begin{theorem}[One-Run Reweighting Sensitivity Formula]\label{thm:adjoint_reweighting}
Assume the training objective is separable:
\begin{align}
L_S(\bm w)=\frac1n\sum_{j=1}^n \ell_j(\bm w),
\\
\ell_j(\bm w)\triangleq\phi(F(\bm w,z_j);z_j),
\end{align}
with each $\ell_j\in C^2$. Fix a test set $Q$ and a $C^1$ test functional
$\psi:\mathbb R^{|Q|p}\to\mathbb R$. For a reweighting direction $\bm\nu\in\mathbb R^n$,
let $\bm w^\lambda$ solve gradient flow for
\begin{align}
L_S^\lambda(\bm w)
\triangleq
\frac1n\sum_{j=1}^n (1-\lambda \nu_j)\ell_j(\bm w),
\\
\bm w^\lambda(0)=\bm w_0.
\end{align}
Assume $\lambda\mapsto \bm w^\lambda(T)$ is differentiable at $\lambda=0$, and write
$\bm w(t)\triangleq\bm w^0(t)$. Then
\begin{align}
\left.\frac{d}{d\lambda}\right|_{\lambda=0}\psi(\bm U_Q(\bm w^\lambda(T)))
=
\frac1n\int_0^T
\left\langle
\bm p_{\psi,Q}(\tau),
\sum_{j=1}^n \nu_j \nabla_{\bm w} \ell_j(\bm w(\tau))
\right\rangle
d\tau,
\end{align}
where $\bm p_{\psi,Q}$ is the backward sensitivity along the realized run:
\begin{align}
\partial_\tau \bm p_{\psi,Q}(\tau)
=
\nabla_{\bm w}^2 L_S(\bm w(\tau))\bm p_{\psi,Q}(\tau),
\\
\bm p_{\psi,Q}(T)=\bm J_Q(\bm w(T))^\top \nabla\psi(\bm U_Q(\bm w(T))).
\end{align}

Equivalently, if $\bm v_{\bm\nu}$ solves the forward sensitivity equation
\begin{align}
\partial_\tau \bm v_{\bm\nu}(\tau)
=
-\nabla_{\bm w}^2 L_S(\bm w(\tau))\bm v_{\bm\nu}(\tau)
+
\frac1n\sum_{j=1}^n \nu_j \nabla_{\bm w} \ell_j(\bm w(\tau)),
\\
\bm v_{\bm\nu}(0)=0,
\end{align}
then
\begin{align}
\left.\frac{d}{d\lambda}\right|_{\lambda=0}\psi(\bm U_Q(\bm w^\lambda(T)))
=
\bigl\langle
\nabla\psi(\bm U_Q(\bm w(T))),
\bm J_Q(\bm w(T))\bm v_{\bm\nu}(T)
\bigr\rangle.
\end{align}
\end{theorem}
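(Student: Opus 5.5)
The plan is the standard adjoint-sensitivity argument for gradient flow, applied to the $\lambda$-family and closed with a duality identity between the two linear ODEs.

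\textbf{Step 1: linearize in $\lambda$.} Differentiate the perturbed flow $\partial_t \bm w^\lambda=-\nabla L_S^\lambda(\bm w^\lambda)$ in $\lambda$ at $\lambda=0$. Since $\nabla L_S^\lambda=\nabla L_S-\frac{\lambda}{n}\sum_j\nu_j\nabla\ell_j$, the chain rule gives that $\bm v_{\bm\nu}(t)\triangleq\partial_\lambda \bm w^\lambda(t)|_{\lambda=0}$ satisfies exactly the stated forward sensitivity equation, with $\bm v_{\bm\nu}(0)=0$ because the initialization is shared.

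To justify differentiating under $\partial_t$, I would use the $C^2$ hypothesis on each $\ell_j$. This makes the vector field $(\bm w,\lambda)\mapsto-\nabla L_S^\lambda(\bm w)$ of class $C^1$. The classical theorem on differentiable dependence of ODE solutions on parameters then yields $C^1$ dependence of $\bm w^\lambda(t)$ on $\lambda$, uniformly on $[0,T]$, together with the variational equation. This upgrades the stated assumption of differentiability at $T$ to the whole window.

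\textbf{Step 2: the forward form.} The chain rule at time $T$ gives
\[
\tfrac{d}{d\lambda}\psi(\bm U_Q(\bm w^\lambda(T)))=\langle\nabla\psi,\bm J_Q(\bm w(T))\bm v_{\bm\nu}(T)\rangle,
\]
which is the second display.

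\textbf{Step 3: the adjoint form.} Set $\bm H(\tau)=\nabla^2_{\bm w}L_S(\bm w(\tau))$, which is symmetric and continuous. Define $\bm p$ by the stated backward equation $\partial_\tau\bm p=\bm H\bm p$ with terminal value $\bm p(T)=\bm J_Q^\top\nabla\psi$. Then
\[
\tfrac{d}{d\tau}\langle\bm p,\bm v_{\bm\nu}\rangle=\langle\bm H\bm p,\bm v_{\bm\nu}\rangle+\Bigl\langle\bm p,-\bm H\bm v_{\bm\nu}+\tfrac1n\sum_j\nu_j\nabla\ell_j\Bigr\rangle=\tfrac1n\Bigl\langle\bm p,\sum_j\nu_j\nabla\ell_j\Bigr\rangle,
\]
where the Hessian terms cancel by symmetry of $\bm H$. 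Integrating from $0$ to $T$ and using $\bm v_{\bm\nu}(0)=0$ gives
\[
\langle\bm p(T),\bm v_{\bm\nu}(T)\rangle=\tfrac1n\int_0^T\Bigl\langle\bm p(\tau),\sum_j\nu_j\nabla\ell_j(\bm w(\tau))\Bigr\rangle\,d\tau.
\]
The left side equals the forward-form expression by the choice of $\bm p(T)$. This yields the first display.

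\textbf{Main obstacle.} The only delicate point is regularity. The backward linear ODE needs $\bm H(\tau)$ continuous in $\tau$, which follows from $C^2$ losses and continuity of the trajectory. The variational equation needs the $C^1$ parameter dependence of Step 1. I would state these via the standard Peano–Gronwall differentiability theorem rather than prove them. Everything else is the Lagrange-identity cancellation above, which is the same duality used in \autoref{thm:forward_backward}.
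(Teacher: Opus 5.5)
Your proposal is correct and follows the paper's proof step for step. You differentiate the weighted flow to get the forward sensitivity equation, apply the chain rule at $T$, cancel the Hessian terms in $\frac{d}{d\tau}\langle \bm p_{\psi,Q},\bm v_{\bm\nu}\rangle$ using the symmetry of $\bm H(\tau)=\nabla_{\bm w}^2 L_S(\bm w(\tau))$, and integrate using $\bm v_{\bm\nu}(0)=0$ and the terminal condition for $\bm p_{\psi,Q}$. You also invoke $C^1$ dependence of ODE solutions on parameters, available because $-\nabla L_S^\lambda$ is $C^1$ in $(\bm w,\lambda)$ under the $C^2$ hypothesis; this justifies the variational equation on all of $[0,T]$, which the paper simply asserts.
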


\begin{proof}
Differentiating the weighted flow at $\lambda=0$ gives the forward sensitivity equation, and the chain rule yields the terminal formula
\begin{align}
\left.\frac{d}{d\lambda}\right|_{\lambda=0}\psi(\bm U_Q(\bm w^\lambda(T)))
=
\bigl\langle
\nabla\psi(\bm U_Q(\bm w(T))),
\bm J_Q(\bm w(T))\bm v_{\bm\nu}(T)
\bigr\rangle.
\end{align}
Set $\bm H(\tau)\triangleq \nabla_{\bm w}^2L_S(\bm w(\tau))$. Combining
\begin{align}
\partial_\tau \bm v_{\bm\nu}(\tau)&=-\bm H(\tau)\bm v_{\bm\nu}(\tau)
+\frac1n\sum_{j=1}^n \nu_j \nabla_{\bm w}\ell_j(\bm w(\tau)),
\\
\partial_\tau \bm p_{\psi,Q}(\tau)&=\bm H(\tau)\bm p_{\psi,Q}(\tau),
\intertext{the cross-term cancels and}
\frac{d}{d\tau}\langle \bm p_{\psi,Q}(\tau),\bm v_{\bm\nu}(\tau)\rangle
&=
\frac1n
\left\langle
\bm p_{\psi,Q}(\tau),
\sum_{j=1}^n \nu_j \nabla_{\bm w}\ell_j(\bm w(\tau))
\right\rangle.
\end{align}
Integrating over $[0,T]$, using $\bm v_{\bm\nu}(0)=0$, and inserting the terminal condition for
$\bm p_{\psi,Q}$ gives the sensitivity formula.
\end{proof}

Entry $\mathsf J_{ij}$ measures how much the test loss at point $i$ would move under a small downweighting of training point $j$. The centered trace $\operatorname{tr}(\mathsf J\bm C_n)$ keeps only relative changes among examples.

\begin{definition}[Self-Influence Matrix]\label{def:self_influence_matrix}
Under the hypotheses of \autoref{thm:adjoint_reweighting}, fix $i,j\in\{1,\dots,n\}$, write $Q_i=\{z_i\}$, and let $\bm w^{(j,\lambda)}$ denote the weighted flow corresponding to $\bm\nu=\bm e_j$. Set
\begin{align}
\psi_i(\bm u)&=\Psi_{Z_i}(\bm u-\bm y_i),
\\
\bm e_i(T)&=\bm U_{Q_i}(\bm w(T))-\bm y_i.
\intertext{The \emph{self-influence matrix} is}
\mathsf J_\Psi(T,S)
&=
\bigl[\mathsf J_{ij}(T,S)\bigr]_{i,j=1}^n,
\\
\mathsf J_{ij}(T,S)
&\triangleq
\left.\frac{d}{d\lambda}\right|_{\lambda=0}
\psi_i\!\bigl(\bm U_{Q_i}(\bm w^{(j,\lambda)}(T))\bigr).
\end{align}
\end{definition}

A single backward sensitivity solve produces every entry $\mathsf J_{ij}$ along the realized run, and its centered trace estimates population risk from one training pass.

\begin{theorem}[Self-influence matrix from a backward solve]
\label{thm:self_influence_backward}
Under the notation of \autoref{def:self_influence_matrix}, with $\bm J_i(\bm w(T))\triangleq D_wF(\bm w(T),z_i)$ and $\bm p_i$ solving the backward sensitivity equation
\begin{align}
\partial_\tau \bm p_i(\tau)
&=
\nabla_{\bm w}^2L_S(\bm w(\tau))\bm p_i(\tau),
\\
\bm p_i(T)&=\bm J_i(\bm w(T))^\top \nabla\Psi_{Z_i}(\bm e_i(T)),
\intertext{the self-influence matrix entry is}
\mathsf J_{ij}(T,S)
&=
\frac1n\int_0^T
\langle \bm p_i(\tau),\nabla_{\bm w}\ell_j(\bm w(\tau))\rangle d\tau.
\end{align}
\end{theorem}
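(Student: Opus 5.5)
The plan is to combine \autoref{thm:adjoint_reweighting} with \autoref{def:self_influence_matrix}, specialized to a single test point and a single reweighting coordinate. Fix $i,j$ and take $Q=Q_i=\{z_i\}$, $\psi=\psi_i$, and reweighting direction $\bm\nu=\bm e_j$. The separability and $C^2$ hypotheses are inherited from \autoref{def:self_influence_matrix}, and so is differentiability of $\lambda\mapsto\bm w^{(j,\lambda)}(T)$ at $\lambda=0$. Hence \autoref{thm:adjoint_reweighting} applies verbatim, and $\mathsf J_{ij}(T,S)$ is exactly its left-hand side.

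Next I would identify the backward sensitivity. By the chain rule, $\nabla\psi_i(\bm u)=\nabla\Psi_{Z_i}(\bm u-\bm y_i)$. At the terminal output $\bm U_{Q_i}(\bm w(T))$ this equals $\nabla\Psi_{Z_i}(\bm e_i(T))$. Also $\bm J_{Q_i}(\bm w(T))=D_wF(\bm w(T),z_i)=\bm J_i(\bm w(T))$. So the terminal condition $\bm p_{\psi,Q}(T)=\bm J_Q^\top\nabla\psi$ coincides with $\bm p_i(T)$. Both functions solve the same linear ODE $\partial_\tau\bm p=\nabla^2_{\bm w}L_S(\bm w(\tau))\bm p$. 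Its coefficient is continuous in $\tau$, since each $\ell_j\in C^2$ and $\bm w(\cdot)$ is continuous. Uniqueness for linear ODEs with a terminal condition then gives $\bm p_{\psi,Q}\equiv\bm p_i$ on $[0,T]$.

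Finally, with $\bm\nu=\bm e_j$ the source $\sum_k\nu_k\nabla_{\bm w}\ell_k$ collapses to $\nabla_{\bm w}\ell_j(\bm w(\tau))$. Substituting into the integral formula of \autoref{thm:adjoint_reweighting} gives
\begin{align}
\mathsf J_{ij}(T,S)=\frac1n\int_0^T\langle\bm p_i(\tau),\nabla_{\bm w}\ell_j(\bm w(\tau))\rangle\,d\tau .
\end{align}

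The argument is essentially bookkeeping, and I see no real obstacle. The one point needing care is the sign convention: $L_S^\lambda$ carries the factor $1-\lambda\nu_j$, so $\mathsf J_{ij}$ is the derivative with respect to \emph{downweighting} point $j$. The stated formula has no extra minus sign because it is copied directly from the theorem's conclusion under the same convention. I would also note why the statement speaks of a single backward solve. For fixed $i$, the one function $\bm p_i$ yields the entries $\mathsf J_{ij}$ for all $j$ by quadrature against the per-example gradients along the realized trajectory. The ODE solve itself depends only on $i$.
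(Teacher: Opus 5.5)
Your proposal is correct and matches the intended argument. The paper gives no separate proof, because the statement is \autoref{thm:adjoint_reweighting} specialized to $Q=Q_i$, $\psi=\psi_i$, $\bm\nu=\bm e_j$, with the terminal conditions matched through $\bm J_{Q_i}=\bm J_i$ and $\nabla\psi_i(\bm U_{Q_i}(\bm w(T)))=\nabla\Psi_{Z_i}(\bm e_i(T))$; your uniqueness argument for the linear backward ODE and your note on the downweighting sign convention cover the only points needing care.
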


\begin{corollary}[Generalization estimator from the diagonal]\label{cor:self_influence_diagonal}
Under the same hypotheses, the centered trace estimator
\begin{align}
\widehat\Gamma_{\mathrm{adj}}(T,S)
&\triangleq
\frac1n\sum_{i=1}^n \mathsf J_{ii}(T,S)
=
\frac1n\tr\mathsf J_\Psi(T,S)
\end{align}
collects the diagonal entries of the self-influence matrix. Set
\begin{align}
q_i(\lambda) \triangleq \psi_i\!\bigl(\bm U_{Q_i}(\bm w^{(i,\lambda)}(T))\bigr).
\end{align}
If $q_i$ is twice differentiable on $[0,1]$, Taylor with integral remainder gives
\begin{align}
q_i(1)-q_i(0)
=
\mathsf J_{ii}(T,S)
+
\int_0^1 (1-\lambda)q_i''(\lambda) d\lambda.
\end{align}
\end{corollary}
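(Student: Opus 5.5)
The plan rests on Theorem~\ref{thm:self_influence_backward}, applied with the choice $j=i$. The two pieces of the corollary are then handled separately: the first display is a definition plus that identification, and the second is a one-variable Taylor identity. For the first display I will write $\mathsf J_{ii}(T,S)=\frac1n\int_0^T\langle\bm p_i(\tau),\nabla_{\bm w}\ell_i(\bm w(\tau))\rangle\,d\tau$. Averaging over $i$ gives $\frac1n\sum_i\mathsf J_{ii}=\frac1n\tr\mathsf J_\Psi(T,S)$, which holds simply because the trace is the sum of the diagonal entries. No further work is needed there: $\widehat\Gamma_{\mathrm{adj}}$ is a definition, and the content is that each diagonal entry can be computed from the same backward solve that defines $\bm p_i$.

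For the Taylor identity, I first identify $q_i'(0)$ with $\mathsf J_{ii}$. By Definition~\ref{def:self_influence_matrix}, $\bm w^{(i,\lambda)}$ is the weighted flow with $\bm\nu=\bm e_i$, and $\psi_i(\bm U_{Q_i}(\cdot))$ is exactly the functional whose $\lambda$-derivative at $0$ defines $\mathsf J_{ii}(T,S)$. Hence $q_i(\lambda)=\psi_i(\bm U_{Q_i}(\bm w^{(i,\lambda)}(T)))$ satisfies $q_i'(0)=\mathsf J_{ii}(T,S)$. Here I use the differentiability hypothesis of Theorem~\ref{thm:adjoint_reweighting}, which the assumed twice-differentiability of $q_i$ on $[0,1]$ already implies. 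Then Taylor's theorem with integral remainder to first order on $[0,1]$ gives
\begin{align}
q_i(1)-q_i(0)=q_i'(0)+\int_0^1(1-\lambda)q_i''(\lambda)\,d\lambda,
\end{align}
and substituting $q_i'(0)=\mathsf J_{ii}$ yields the claim. The integral form of the remainder requires $q_i''$ to be integrable. If only pointwise twice-differentiability is given, I would either assume $q_i\in C^2$ or invoke the Lebesgue form, which holds under absolute continuity of $q_i'$. I expect this regularity caveat to be the only delicate step, so I would state explicitly that twice differentiability is read as $q_i'$ being absolutely continuous.

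Finally I would remark on the interpretation. At $\lambda=1$ the weight on example $i$ vanishes, so $q_i(1)$ is the loss at $z_i$ of a model trained without $z_i$, with the remaining weights unrenormalized. The identity therefore expresses the exact leave-one-out loss change as the adjoint diagonal entry plus a second-order remainder, which connects to Theorem~\ref{thm:self_influence}.
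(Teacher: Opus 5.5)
Your proof is correct and follows the paper's own argument (the paper gives no separate proof beyond the statement): $q_i'(0)=\mathsf J_{ii}(T,S)$ holds by Definition~\ref{def:self_influence_matrix} with $\bm\nu=\bm e_i$, first-order Taylor with integral remainder finishes it, and your caveat that $q_i''$ must be integrable (e.g.\ $q_i\in C^2$ or $q_i'$ absolutely continuous) correctly sharpens the paper's ``twice differentiable'' hypothesis. One small slip: differentiability of the scalar $q_i$ does not imply differentiability of $\lambda\mapsto\bm w^{\lambda}(T)$ as required in Theorem~\ref{thm:adjoint_reweighting}, but nothing depends on that implication, since $q_i'(0)=\mathsf J_{ii}$ is definitional and that hypothesis is assumed anyway.
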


\section{Frozen-Kernel Limit and Classical Phenomena}\label{sec:classical_phenomena}

\paragraph{Intuition.}
Classical generalization phenomena (benign overfitting, double descent, implicit bias, grokking, ridge shrinkage) are usually told as separate stories. We show here that, in the frozen-kernel limit, each is a different choice of the spectral filter $\bm M$ in \autoref{thm:unified_bias_variance}. Picking $\bm M$ selects the phenomenon; the underlying decomposition is unchanged.

In the frozen-kernel limit \citep{jacot2018neural}, the signal-channel-and-reservoir picture recovers the classical bias-variance split. Benign overfitting, double descent, implicit bias, and grokking then appear as four choices of the spectral filter $\bm M$ in \autoref{thm:unified_bias_variance}. This appendix collects those instances and shows how each classical phenomenon falls out of a single decomposition.

\subsection{Unified Bias--Variance Decomposition}\label{sec:applications}

\paragraph{Intuition.}
\autoref{thm:unified_bias_variance} below states an exact bias--variance split for any self-adjoint contraction $\bm M$: the bias is a quadratic form in the unfit signal direction and the variance is a trace against the noise covariance. Specializing $\bm M$ to a gradient-flow filter recovers implicit bias and grokking; to a ridge filter, ridge regression; to a hard threshold, the bias--variance trade-off; to a rank truncation, benign overfitting and double descent.

In the frozen-kernel, squared-loss regime, every choice of spectral filter $\bm M$ produces a single bias-variance decomposition. The bias is a quadratic form in the unfit signal direction, and the variance is a trace against the noise covariance. Picking $\bm M$ to be a gradient-flow filter, a ridge filter, a hard threshold, or a rank truncation reproduces, respectively, implicit bias and grokking, ridge regression, the bias-variance trade-off, and benign overfitting plus double descent.

Fix the initial configuration $\bm w_0$ and let $\bm\Sigma=\diag(\sigma_1,\dots,\sigma_r)$ collect the positive singular values of $\bm J_S(\bm w_0)$. Bundle the SVD, kernels, the test map on the mobile singular space, and the normalized visibility Gramian:
\begin{align}
\bm J_S(\bm w_0) &= \bm U\bm\Sigma \bm V^\top,
\\
\bm K_0 &= \Kss(\bm w_0)=\bm U\bm\Sigma^2\bm U^\top,
\\
\bm H_0 &= \Kqs(\bm w_0),
\\
\bm C_0 &\triangleq \bm J_Q(\bm w_0)\bm V\bm\Sigma^{-1}\in\mathbb R^{|Q|p\times r},
\\
\bm\Gamma_0 &\triangleq \bm C_0^\top \bm C_0
= \bm\Sigma^{-1}\bm V^\top \bm J_Q(\bm w_0)^\top \bm J_Q(\bm w_0)\bm V\bm\Sigma^{-1}\succeq 0.
\end{align}
The Gramian $\bm\Gamma_0$ is the frozen-kernel limit of the dynamic visibility operator $\Gamma_Q(s,T)\triangleq \psinv{W}G^\top G \psinv{W}$ from \autoref{def:test_transfer}. For any self-adjoint contraction $0\preceq \bm M\preceq \bm I_r$, the filtered output and the corresponding parameter displacement read
\begin{align}
\bm U_Q^{\bm M} &\triangleq \bm U_Q(0) + \bm C_0 \bm M \bm U^\top(\bm y-\bm U_S(0)),
\\
\bm\delta_{\bm M} &\triangleq -\bm V\bm\Sigma^{-1}\bm M\bm U^\top(\bm U_S(0)-\bm y),
\\
\intertext{and the five filters of interest, gradient flow at time $t$, ridge, hard thresholding, predictive rank truncation, and full interpolation, are}
\bm M_t &= \bm I-e^{-t\bm\Sigma^2/n},
\\
\bm M_\eta &= \bm\Sigma^2(\bm\Sigma^2+n\eta \bm I)^{-1},
\\
\bm M_\tau &= \mathbf 1_{[\tau,\infty)}(\bm\Sigma^2),
\\
\bm M &= \bm P_r,
\\
\bm M &= \bm I_r.
\end{align}

A single bias--variance decomposition governs every $\bm M$.

\begin{theorem}[Unified Bias--Variance Decomposition]\label{thm:unified_bias_variance}
Assume $\bm y=\bar{\bm y}+\bm\xi$ with $\mathbb E[\bm\xi\mid S]=0$, and write
\begin{align}
\bar{\bm a} &\triangleq \bm U^\top(\bar{\bm y}-\bm U_S(0)),
&
\bm\zeta &\triangleq \bm U^\top \bm\xi,
&
\bm\Sigma_{\bm\zeta} &\triangleq \operatorname{Cov}(\bm\zeta\mid S),
&
\bar{\bm U}_Q &\triangleq \bm U_Q(0)+ \bm C_0 \bar{\bm a}.
\end{align}
For every self-adjoint contraction $0\preceq \bm M\preceq \bm I_r$,
\begin{align}
\bm U_Q^{\bm M}-\bar{\bm U}_Q &= -\bm C_0(\bm I-\bm M)\bar{\bm a} + \bm C_0\bm M\bm\zeta,
\\
\mathbb E\!\left[\|\bm U_Q^{\bm M}-\bar{\bm U}_Q\|_2^2\mid S\right]
&=
\bar{\bm a}^\top(\bm I-\bm M)\bm\Gamma_0(\bm I-\bm M)\bar{\bm a}
+
\tr(\bm M\bm\Gamma_0 \bm M\bm\Sigma_{\bm\zeta}).
\end{align}
\end{theorem}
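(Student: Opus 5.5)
The proof is linear algebra on the definitions, in two steps: first an algebraic identity for the filtered output, then a conditional second-moment computation.

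\emph{Algebraic identity.} Substitute $\bm y=\bar{\bm y}+\bm\xi$ into the definition of $\bm U_Q^{\bm M}$. This gives
\begin{align}
\bm U^\top(\bm y-\bm U_S(0))=\bar{\bm a}+\bm\zeta,
\end{align}
and hence $\bm U_Q^{\bm M}=\bm U_Q(0)+\bm C_0\bm M\bar{\bm a}+\bm C_0\bm M\bm\zeta$. Subtracting $\bar{\bm U}_Q=\bm U_Q(0)+\bm C_0\bar{\bm a}$ yields
\begin{align}
\bm U_Q^{\bm M}-\bar{\bm U}_Q=\bm C_0(\bm M-\bm I)\bar{\bm a}+\bm C_0\bm M\bm\zeta,
\end{align}
which is the first display. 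Nothing beyond linearity is needed here.

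\emph{Second moment.} Condition on $S$. Then $\bar{\bm a}$, $\bm C_0$ and $\bm M$ are $S$-measurable; for $\bm M$ this holds when it is one of the listed filters, which depend only on $\bm\Sigma$, $t$, $\eta$, $\tau$ and $r$. Since $\mathbb E[\bm\xi\mid S]=0$ and $\bm U$ is $S$-measurable, $\mathbb E[\bm\zeta\mid S]=0$. Expanding the squared norm of the sum gives three terms.
\begin{itemize}
\item The bias term is $\|\bm C_0(\bm I-\bm M)\bar{\bm a}\|_2^2=\bar{\bm a}^\top(\bm I-\bm M)\bm C_0^\top\bm C_0(\bm I-\bm M)\bar{\bm a}$. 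It becomes the stated quadratic form using $\bm\Gamma_0=\bm C_0^\top\bm C_0$ and self-adjointness of $\bm M$.
\item The cross term is linear in $\bm\zeta$ with $S$-measurable coefficients, so it vanishes in conditional expectation.
\item The variance term is $\mathbb E[\bm\zeta^\top\bm M\bm\Gamma_0\bm M\bm\zeta\mid S]$. By the trace trick it equals $\tr(\bm M\bm\Gamma_0\bm M\,\mathbb E[\bm\zeta\bm\zeta^\top\mid S])$, and $\mathbb E[\bm\zeta\bm\zeta^\top\mid S]=\bm\Sigma_{\bm\zeta}$ because $\bm\zeta$ has conditional mean zero.
\end{itemize}

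\emph{Main obstacle.} The only delicate point is measurability: the cross term vanishes only if $\bm M$ and $\bm C_0$ do not depend on $\bm\xi$. In this frozen-kernel setting they are built from $\bm J_S(\bm w_0)$ and $\bm J_Q(\bm w_0)$, which do not depend on the labels. I would state explicitly that $\bm M$ is required to be $\sigma(S)$-measurable, where $S$ collects the inputs and the initialization, and I would note that the contraction hypothesis $0\preceq\bm M\preceq\bm I_r$ is not used in the identity itself, only in interpreting the terms as filters.
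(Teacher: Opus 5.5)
Your proposal is correct and follows the paper's proof: substitute $\bm y=\bar{\bm y}+\bm\xi$ into the definition of $\bm U_Q^{\bm M}$ to get $\bm U_Q(0)+\bm C_0\bm M(\bar{\bm a}+\bm\zeta)$, and subtract $\bar{\bm U}_Q$. Then expand the conditional second moment, where the cross term vanishes by $\mathbb E[\bm\zeta\mid S]=0$ and the bias and variance terms follow from $\bm C_0^\top\bm C_0=\bm\Gamma_0$, self-adjointness of $\bm M$, and the trace trick. Your explicit requirement that $\bm M$, $\bm C_0$, and $\bar{\bm y}$ be $S$-measurable, with $S$ collecting inputs and initialization, is left implicit in the paper and is a worthwhile clarification, but it does not change the argument.
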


\begin{corollary}[Gradient-flow filter limit]\label{cor:grad_flow_filter_limit}
With $\bm M=\bm M_t=\bm I-e^{-t\bm\Sigma^2/n}$, larger training singular values are fit first, and the $t\to\infty$ limit recovers the unique minimum-Euclidean-norm interpolant:
\begin{align}
\bm\delta_{\bm M_t} &= -\bm V\bm\Sigma^{-1}(\bm I-e^{-t\bm\Sigma^2/n})\bm U^\top(\bm U_S(0)-\bm y),
\\
\bm\delta_{\bm M_t} &\xrightarrow{t\to\infty} -\bm V\bm\Sigma^{-1}\bm U^\top(\bm U_S(0)-\bm y) = -\bm J_S(\bm w_0)^\dagger(\bm U_S(0)-\bm y).
\end{align}
\end{corollary}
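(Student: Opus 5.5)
Under a frozen kernel at $\bm w_0$ and squared loss $\Phi_S(\bm u)=\tfrac1{2n}\|\bm u-\bm y\|_2^2$, the corollary reduces to an explicit closed-form computation. First I would solve the linearized parameter flow. Linearize $\bm U_S(\bm w)\approx \bm U_S(0)+\bm J_S(\bm w_0)(\bm w-\bm w_0)$ and write $\bm\delta(t)=\bm w(t)-\bm w_0$. Gradient flow then reads
\[
\partial_t\bm\delta=-\tfrac1n\bm J_S^\top\bigl(\bm U_S(0)+\bm J_S\bm\delta-\bm y\bigr),\qquad \bm\delta(0)=\bm 0 .
\]
Insert the SVD $\bm J_S=\bm U\bm\Sigma\bm V^\top$. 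The forcing lies in $\range(\bm V)$, and so does the drift $\bm J_S^\top\bm J_S\bm\delta$, so the solution stays in $\range(\bm V)$. I would therefore write $\bm\delta=\bm V\bm c$ and get the decoupled diagonal system
\[
\partial_t\bm c=-\tfrac1n\bm\Sigma^2\bm c-\tfrac1n\bm\Sigma\bm U^\top(\bm U_S(0)-\bm y).
\]

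Second, I would integrate this system mode by mode. Each coordinate is a scalar linear ODE with constant forcing, and variation of constants gives
\[
\bm c(t)=-\bm\Sigma^{-2}\bigl(\bm I-e^{-t\bm\Sigma^2/n}\bigr)\bm\Sigma\,\bm U^\top(\bm U_S(0)-\bm y).
\]
Since the diagonal matrices commute, this simplifies to $-\bm\Sigma^{-1}\bm M_t\bm U^\top(\bm U_S(0)-\bm y)$. That is exactly $\bm\delta_{\bm M_t}$ as defined before \autoref{thm:unified_bias_variance}, which gives the first display. The ordering claim follows by reading the filter entries $1-e^{-t\sigma_i^2/n}$: they are increasing in $\sigma_i$ at every $t$, so larger singular directions approach $1$ first. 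This is the same spectral picture as the residual decay along eigenvectors of $\bm M(t)=\Kss/n$ in \autoref{sec:tangent_kernel_timescales}.

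Third, I would take the limit $t\to\infty$. Every $\sigma_i>0$, so $e^{-t\bm\Sigma^2/n}\to\bm 0$ and $\bm M_t\to\bm I_r$. The limit is $-\bm V\bm\Sigma^{-1}\bm U^\top(\bm U_S(0)-\bm y)$, and $\bm V\bm\Sigma^{-1}\bm U^\top=\bm J_S(\bm w_0)^\dagger$ by the compact-SVD formula for the pseudoinverse.

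Fourth, I would identify this limit as the minimum-norm interpolant. The limit $-\bm J_S^\dagger(\bm U_S(0)-\bm y)$ is the minimum-Euclidean-norm minimizer of $\|\bm J_S\bm\delta+\bm U_S(0)-\bm y\|_2$, because it lies in $\range(\bm J_S^\top)=\range(\bm V)$ and is a least-squares solution. It is also an exact interpolant, and unique as such, when $\bm y-\bm U_S(0)\in\range(\bm U)$, for instance when $\bm J_S$ has full row rank in the overparameterized case. I expect the only subtlety to be this last point. ``Interpolant'' needs $\bm y-\bm U_S(0)\in\range(\bm J_S)$; otherwise the limit is the minimum-norm least-squares solution. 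I would state that caveat and note that uniqueness follows from the orthogonal decomposition $\range(\bm J_S^\top)\oplus\ker\bm J_S$.
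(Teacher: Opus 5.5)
Your proposal is correct and takes essentially the same route as the paper. The paper reads the first display as the definition of $\bm\delta_{\bm M}$ at $\bm M=\bm M_t$, justified by ``standard frozen-kernel filter calculus'', and then lets $\bm M_t\to\bm I_r$ to obtain the Moore--Penrose solution; your mode-by-mode solution of the linearized flow writes out that calculus, and your caveat that the limit is an exact interpolant only when $\bm y-\bm U_S(0)\in\range(\bm U)$ (otherwise it is the minimum-norm least-squares solution) is a correct refinement the paper leaves implicit.
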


\begin{proof}
From $\bm H_0 = \bm J_Q(\bm w_0)\bm V\bm\Sigma \bm U^\top$ and $\bm K_0^\dagger = \bm U\bm\Sigma^{-2}\bm U^\top$ we obtain
\begin{align}
\bm H_0\bm K_0^\dagger \bm M = \bm J_Q(\bm w_0)\bm V\bm\Sigma^{-1}\bm M\bm U^\top = \bm C_0\bm M\bm U^\top,
\end{align}
so $\bm U_Q^{\bm M} = \bm U_Q(0)+\bm C_0\bm M(\bar{\bm a}+\bm\zeta)$, and subtracting $\bar{\bm U}_Q = \bm U_Q(0)+\bm C_0\bar{\bm a}$ gives the first claim. Taking conditional expectations and using $\mathbb E[\bm\zeta\mid S]=0$,
\begin{align*}
\mathbb E\!\left[\|\bm U_Q^{\bm M}-\bar{\bm U}_Q\|_2^2\mid S\right]
&=
\|\bm C_0(\bm I-\bm M)\bar{\bm a}\|_2^2
+
\mathbb E\|\bm C_0\bm M\bm\zeta\|_2^2 \\
&=
 \bar{\bm a}^\top(\bm I-\bm M)\bm C_0^\top \bm C_0(\bm I-\bm M)\bar{\bm a}
+
\tr(\bm M\bm C_0^\top \bm C_0\bm M\bm\Sigma_{\bm\zeta}),
\end{align*}
which is the displayed formula because $\bm C_0^\top \bm C_0=\bm\Gamma_0$. The parameter formula follows from the standard frozen-kernel filter calculus, and the limit $\bm M_t\to \bm I_r$ yields the Moore--Penrose solution.
\end{proof}

\begin{corollary}[Predictive rank decomposition]\label{cor:predictive_rank_law}
Diagonalize $\bm\Gamma_0 = \sum_{j=1}^{\rho}\lambda_j\psi_j\psi_j^\top$ with $\lambda_1\ge\cdots\ge\lambda_\rho>0$, and let $\bm P_r=\sum_{j\le r}\psi_j\psi_j^\top$ be the top-$r$ predictive projector. The risk along the predictive-rank path and its increment are
\begin{align}
\mathcal R_r
&\triangleq
\mathbb E\!\left[\|\bm U_Q^{\bm P_r}-\bar{\bm U}_Q\|_2^2\mid S\right]
=
\sum_{j>r}\lambda_j|\langle \psi_j,\bar{\bm a}\rangle|^2
+
\sum_{j\le r}\lambda_j\langle \psi_j,\bm\Sigma_{\bm\zeta}\psi_j\rangle,
\\
\mathcal R_{r+1}-\mathcal R_r
&=
\lambda_{r+1}
\Bigl(
\langle \psi_{r+1},\bm\Sigma_{\bm\zeta}\psi_{r+1}\rangle
-
|\langle \psi_{r+1},\bar{\bm a}\rangle|^2
\Bigr).
\end{align}
\end{corollary}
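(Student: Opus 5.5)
The plan is to specialize \autoref{thm:unified_bias_variance} to the filter $\bm M=\bm P_r$ and then read off both terms in the eigenbasis of $\bm\Gamma_0$. First I check admissibility. $\bm P_r=\sum_{j\le r}\psi_j\psi_j^\top$ is an orthogonal projector, so it is self-adjoint and satisfies $0\preceq\bm P_r\preceq\bm I$. The theorem therefore applies and gives
\begin{align}
\mathcal R_r=\bar{\bm a}^\top(\bm I-\bm P_r)\bm\Gamma_0(\bm I-\bm P_r)\bar{\bm a}+\tr(\bm P_r\bm\Gamma_0\bm P_r\bm\Sigma_{\bm\zeta}).
\end{align}
One notational point: the index $r$ of the predictive projector is distinct from the rank of $\bm J_S(\bm w_0)$ that fixes the ambient dimension. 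I will write the ambient dimension as $r_0$, with $\rho\le r_0$, to keep the two apart.

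The key structural fact is that $\bm P_r$ is a spectral projector of $\bm\Gamma_0$, so it commutes with $\bm\Gamma_0$. Completing $\{\psi_j\}_{j\le\rho}$ to an orthonormal basis of $\mathbb R^{r_0}$ with vectors from $\ker\bm\Gamma_0$ (eigenvalue $0$) gives two identities:
\begin{align}
(\bm I-\bm P_r)\bm\Gamma_0(\bm I-\bm P_r)=\sum_{j>r}\lambda_j\psi_j\psi_j^\top,
\qquad
\bm P_r\bm\Gamma_0\bm P_r=\sum_{j\le r}\lambda_j\psi_j\psi_j^\top.
\end{align}
Here the kernel directions drop out because $\lambda=0$ on them. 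Substituting these gives the bias term $\sum_{j>r}\lambda_j|\langle\psi_j,\bar{\bm a}\rangle|^2$. By cyclicity of the trace, the variance term is $\sum_{j\le r}\lambda_j\,\psi_j^\top\bm\Sigma_{\bm\zeta}\psi_j$. Together these are the first display.

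For the increment, I subtract the formula at $r$ from the formula at $r+1$. The bias sum loses exactly the $j=r+1$ term, and the variance sum gains exactly the $j=r+1$ term. Factoring out $\lambda_{r+1}$ yields the second display. This requires $r+1\le\rho$; beyond $\rho$ both sides are zero by the convention $\lambda_j=0$.

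The only step needing care is the commutation and kernel bookkeeping in the second paragraph. In particular, $\bm I-\bm P_r$ also projects onto $\ker\bm\Gamma_0$, and those directions must contribute nothing. This holds because $\bm\Gamma_0$ annihilates them. Everything else is routine algebra.
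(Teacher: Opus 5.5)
Your proposal is correct and follows the paper's proof: specialize \autoref{thm:unified_bias_variance} to $\bm M=\bm P_r$, use $\bm P_r\bm\Gamma_0=\bm\Gamma_0\bm P_r$ to split the bias and variance terms along the eigenbasis $\{\psi_j\}$, and subtract consecutive values of $\mathcal R_r$ to obtain the increment. Two details the paper leaves implicit are harmless refinements: your bookkeeping for the directions in $\ker\bm\Gamma_0$, and your separation of the projector index $r$ from the ambient rank.
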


\begin{corollary}[Benign overfitting]\label{cor:benign_overfitting}
Under the hypotheses of \autoref{cor:predictive_rank_law}, the interpolation variance is finite, and under isotropic noise $\bm\Sigma_{\bm\zeta}=\sigma_\xi^2\bm I_r$ it reduces to the Gramian trace:
\begin{align}
\mathcal R_\rho
&=
\sum_{j\le \rho}\lambda_j\langle \psi_j,\bm\Sigma_{\bm\zeta}\psi_j\rangle,
\\
\mathcal R_\rho
&=
\sigma_\xi^2\tr(\bm\Gamma_0)
=
\sigma_\xi^2\sum_{j=1}^r \frac{\|\bm J_Q(\bm w_0)v_j\|_2^2}{\sigma_j^2}.
\end{align}
\end{corollary}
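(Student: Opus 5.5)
The plan is to specialize \autoref{cor:predictive_rank_law} to the full predictive rank $r=\rho$, and then evaluate the resulting trace explicitly using the frozen-kernel SVD bundle that precedes \autoref{thm:unified_bias_variance}.

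\textbf{Step 1: set $r=\rho$.} The bias tail $\sum_{j>\rho}\lambda_j|\langle\psi_j,\bar{\bm a}\rangle|^2$ is then an empty sum. This leaves
\[
\mathcal R_\rho=\sum_{j\le\rho}\lambda_j\langle\psi_j,\bm\Sigma_{\bm\zeta}\psi_j\rangle,
\]
which is the first display. Finiteness is immediate, since this is a finite sum of finite numbers: $\rho\le r<\infty$, each $\lambda_j<\infty$, and $\bm\Sigma_{\bm\zeta}$ is a finite conditional covariance.

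\textbf{Step 2: check that this is the interpolation risk.} The corollary is about interpolation, i.e.\ the filter $\bm M=\bm I_r$ of \autoref{thm:unified_bias_variance}, whereas $\bm P_\rho$ is only the projector onto $\range(\bm\Gamma_0)$. So I need to confirm that the two filters give the same risk. Since $\bm\Gamma_0(\bm I-\bm P_\rho)=\bm 0$, I get $\bm C_0\bm P_\rho=\bm C_0$. Hence both the bias and the variance terms of \autoref{thm:unified_bias_variance} coincide for $\bm M=\bm P_\rho$ and $\bm M=\bm I_r$. This is the one point needing care, namely distinguishing the rank $\rho$ of $\bm\Gamma_0$ from the rank $r$ of $\bm J_S(\bm w_0)$. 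It is otherwise routine.

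\textbf{Step 3: isotropic noise.} Under $\bm\Sigma_{\bm\zeta}=\sigma_\xi^2\bm I_r$, each $\langle\psi_j,\bm\Sigma_{\bm\zeta}\psi_j\rangle=\sigma_\xi^2$ because the $\psi_j$ are unit vectors. Therefore
\[
\mathcal R_\rho=\sigma_\xi^2\sum_{j\le\rho}\lambda_j=\sigma_\xi^2\tr(\bm\Gamma_0),
\]
where the last equality holds because the remaining eigenvalues of $\bm\Gamma_0$ vanish.

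\textbf{Step 4: compute the trace.} Write $\bm\Gamma_0=\bm C_0^\top\bm C_0$, so that $\tr(\bm\Gamma_0)=\sum_{j=1}^r\|\bm C_0\bm e_j\|_2^2$. Since $\bm C_0\bm e_j=\bm J_Q(\bm w_0)\bm V\bm\Sigma^{-1}\bm e_j=\sigma_j^{-1}\bm J_Q(\bm w_0)v_j$, this gives
\[
\tr(\bm\Gamma_0)=\sum_{j=1}^r\|\bm J_Q(\bm w_0)v_j\|_2^2/\sigma_j^2,
\]
which completes the isotropic formula.
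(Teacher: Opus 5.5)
Your proposal is correct and follows essentially the same route as the paper. Like the paper, you specialize the predictive-rank risk of \autoref{cor:predictive_rank_law} (i.e.\ \autoref{thm:unified_bias_variance} with $\bm M=\bm P_r$) at $r=\rho$ so the bias tail is empty, and then evaluate $\tr(\bm\Gamma_0)$ entrywise via $\bm C_0\bm e_j=\sigma_j^{-1}\bm J_Q(\bm w_0)v_j$. Your Step~2 check that $\bm C_0\bm P_\rho=\bm C_0$, so the $\bm P_\rho$ and $\bm I_r$ filters give identical test outputs, is a worthwhile explicit justification of something the paper leaves implicit when it identifies $\mathcal R_\rho$ with the interpolation risk.
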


\begin{remark}[Double descent]\label{rem:double_descent}
Along the predictive-rank path of \autoref{cor:predictive_rank_law}, adding mode $r+1$ lowers risk iff its signal exceeds its noise and raises risk iff the reverse inequality holds, so the U-shape is the sequence of sign changes of $\mathcal R_{r+1}-\mathcal R_r$.
\end{remark}

\begin{proof}
Apply \autoref{thm:unified_bias_variance} with $\bm M=\bm P_r$. Since $\bm P_r\bm\Gamma_0=\bm\Gamma_0\bm P_r$, the bias and variance terms split spectrally and the isotropic-noise variance reduces to a Gramian trace:
\begin{align}
(\bm I-\bm P_r)\bm\Gamma_0(\bm I-\bm P_r)
&=
\sum_{j>r}\lambda_j\psi_j\psi_j^\top,
\\
\bm P_r\bm\Gamma_0\bm P_r
&=
\sum_{j\le r}\lambda_j\psi_j\psi_j^\top,
\\
\tr(\bm\Gamma_0)
&=
\tr\!\bigl(\bm\Sigma^{-1}\bm V^\top \bm J_Q(\bm w_0)^\top \bm J_Q(\bm w_0)\bm V\bm\Sigma^{-1}\bigr)
=
\sum_{j=1}^r \frac{\|\bm J_Q(\bm w_0)v_j\|_2^2}{\sigma_j^2}.
\end{align}
Substituting the first two into the risk formula gives $\mathcal R_r$, and subtracting consecutive values gives the increment formula.
\end{proof}

\begin{figure}[t]
    \centering
    \safeincludegraphics[width=\textwidth]{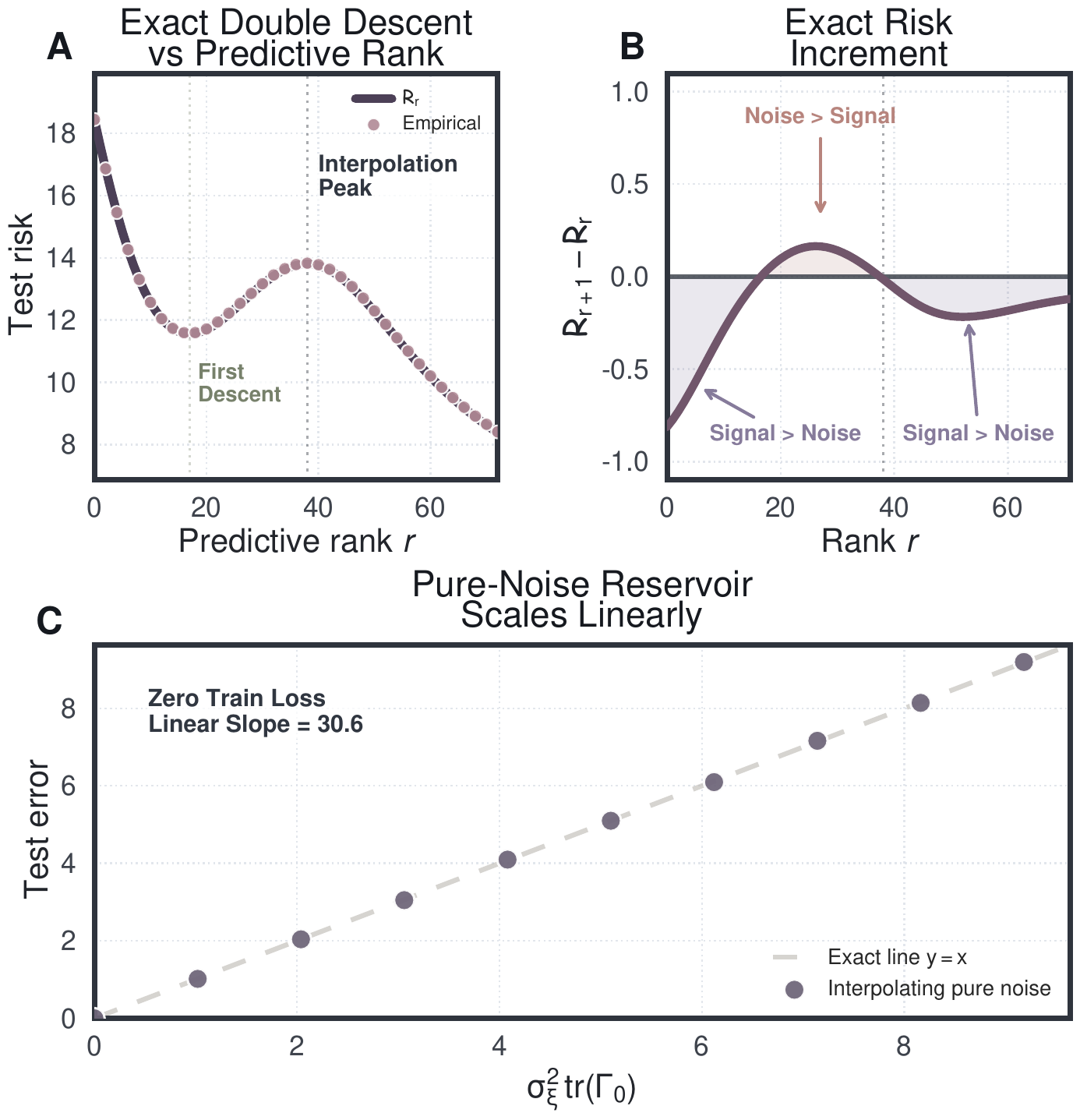}
    \caption{\textbf{Unified Bias--Variance: Capacity Axis.} Validation of \autoref{cor:predictive_rank_law}.
    \textbf{(a)} Empirical test risk (scatter) at $t \to \infty$ perfectly aligns with the theoretical risk $\mathcal{R}_r$ (solid line), explicitly predicting the double-descent peak without approximations.
    \textbf{(b)} The risk increment $\mathcal{R}_{r+1} - \mathcal{R}_r$. The peak of the double descent curve in (a) occurs exactly where this increment crosses below zero, proving that risk increases if and only if noise strictly dominates signal.
    \textbf{(c)} In the overparameterized limit, despite interpolating pure noise ($\widehat{\mathcal{L}}_S = 0$), test error scales strictly linearly with $\sigma_\xi^2 \tr(\bm\Gamma_0)$, proving the residual noise is physically trapped in the inert, test-invisible reservoir.}
    \label{fig:capacity_axis_unification}
\end{figure}

\begin{remark}[Implicit bias and grokking are the same decomposition in two regimes]
Implicit bias \citep{soudry2018implicit,gunasekar2017implicit} is the $\bm M=\bm M_t$ branch of \autoref{thm:unified_bias_variance}: gradient flow activates high-mobility modes first because $\bm M_t=\bm I-e^{-t\bm\Sigma^2/n}$ is monotone in $\sigma_j^2$, and the full interpolating limit is the minimum-norm solution. Grokking \citep{power2022grokking} is the nonstationary continuation of the same selection principle: in the frozen-kernel, no-manifold limit $(\gamma=0)$ the test transfer mass on the mobile singular space is exactly the spectrum of $\bm M_t\bm\Gamma_0\bm M_t$,
\begin{align}
\mathsf G_Q(t,0) &= n\bm H_0\bm K_0^\dagger(\bm I-e^{-t\bm K_0/n}),
\\
T_{Q,0}(0,t)
&=
n^2(\bm I-e^{-t\bm K_0/n})\bm K_0^\dagger \bm H_0^\top \bm H_0 \bm K_0^\dagger(\bm I-e^{-t\bm K_0/n}),
\\
\bm U^\top T_{Q,0}(0,t)\bm U &= n^2 \bm M_t\bm\Gamma_0 \bm M_t.
\end{align}
The optimal signal directions theorem (\autoref{thm:train_only_minimax_signal}) is the dynamic extension of the same spectral decomposition: delayed generalization occurs when the leading filtered predictive modes of $\bm M_t\bm\Gamma_0 \bm M_t$ (or, in the full theory, $C_R(s,T)$) finally dominate the tail.
\end{remark}

\begin{figure}[t]
    \centering
    \safeincludegraphics[width=\textwidth]{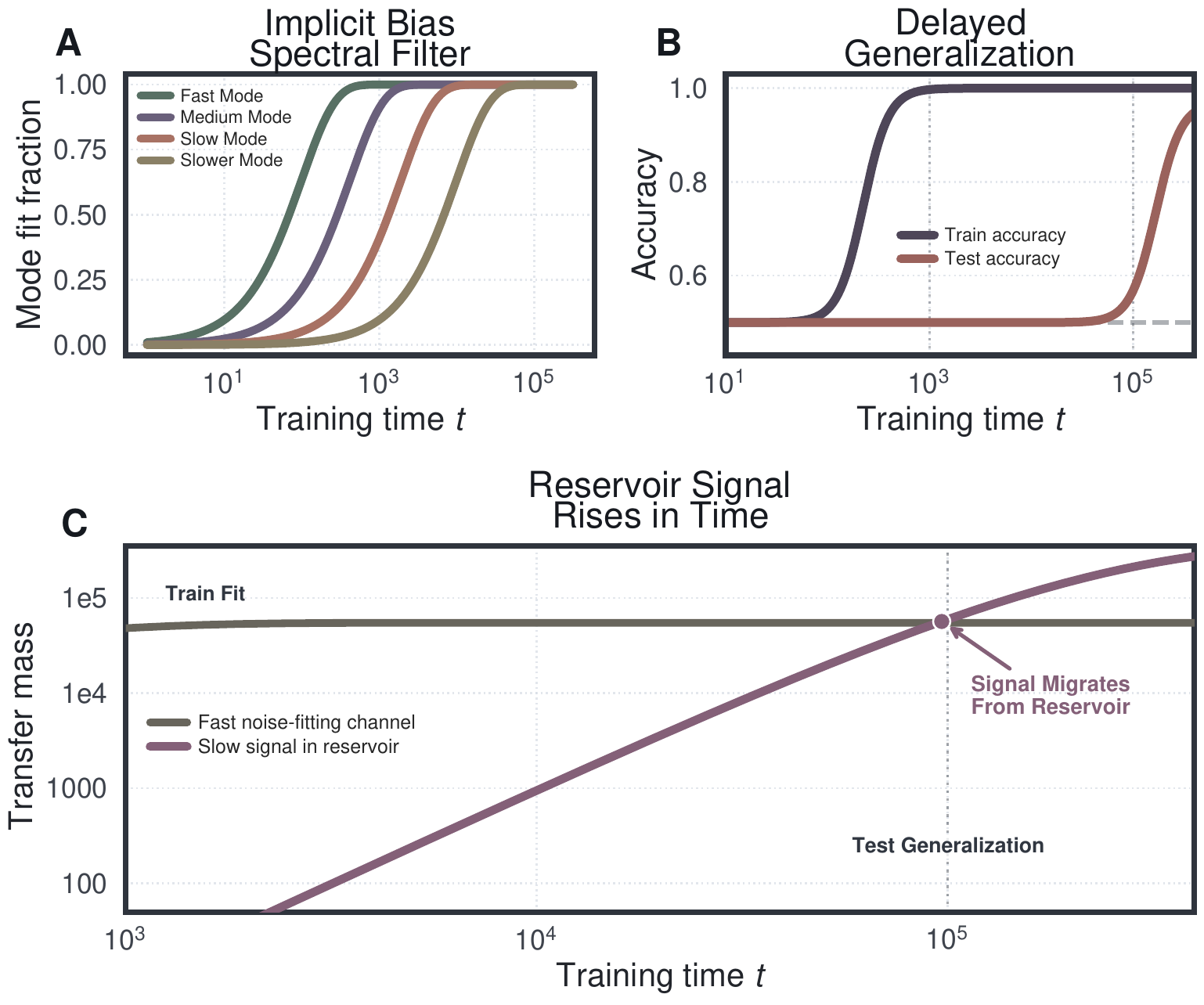}
    \caption{\textbf{Unified Bias--Variance: Time Axis.}
    \textbf{(A) Implicit Bias:} Target fit decomposed over the eigenvectors of $\bm\Gamma_0$. The theoretical filter $1 - e^{-t\sigma_j^2/n}$ derived in \autoref{thm:unified_bias_variance} shows high-mobility modes being learned exponentially faster than low-mobility modes.
    \textbf{(B) Grokking:} Standard delayed generalization. The network interpolates the training set at $t=10^3$, but test accuracy remains at random chance until $t=10^5$.
    \textbf{(C) The Mechanism:} Aggregate transfer mass concentrates in the high-mobility modes first (these fit noise) and saturates, while the slower low-mobility signal modes rise over time and overtake them at the same $t\approx 10^5$ scale where test accuracy rises. Grokking is the delayed resolution of the dynamic spectral filter, not a sudden change in optimization phase.}
    \label{fig:time_axis_unification}
\end{figure}

\begin{figure}[p]
    \centering
    \safeincludegraphics[height=0.68\textheight,keepaspectratio]{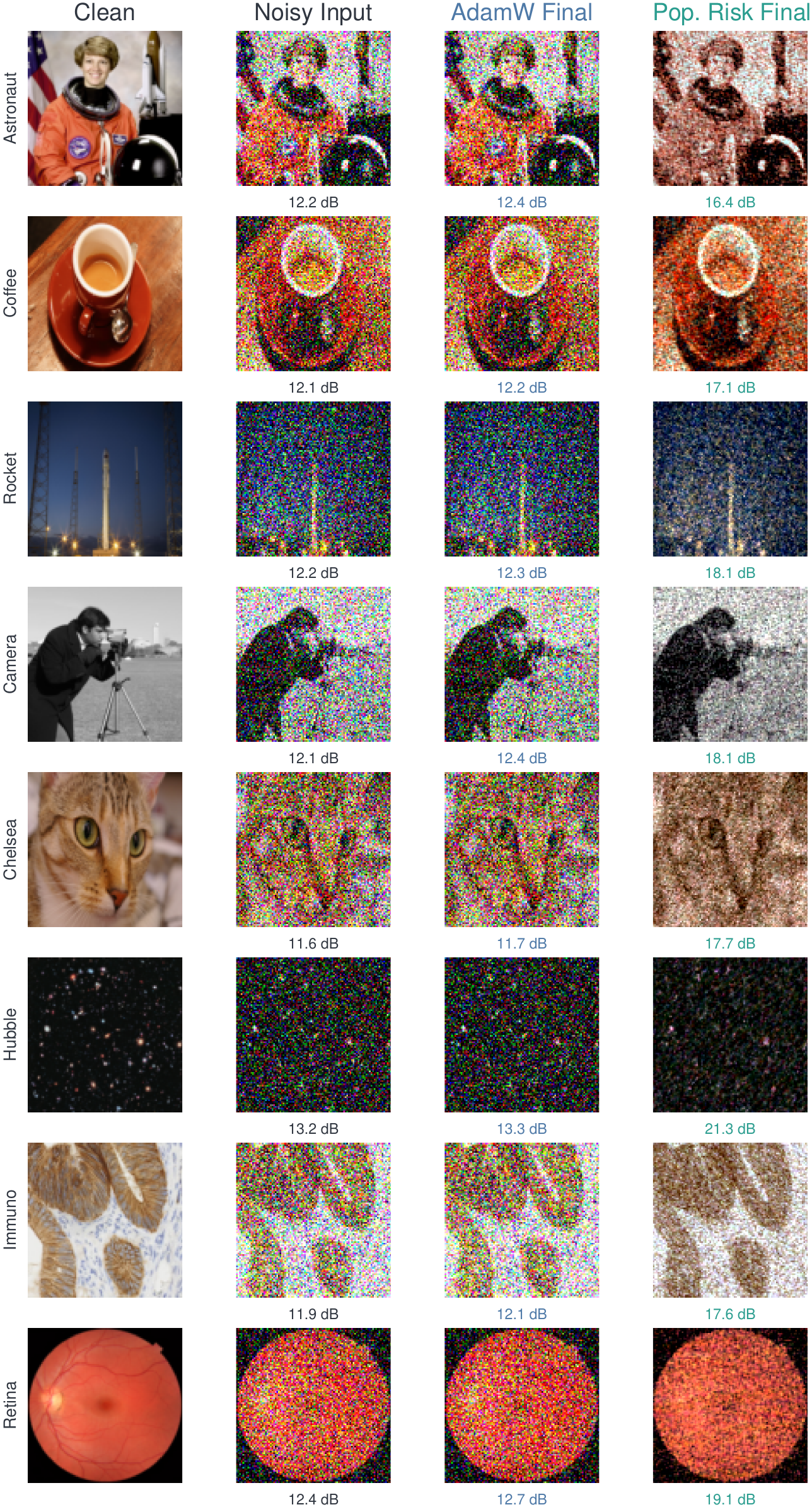}
    \caption{\textbf{Final-step INR reconstructions across images.}
    Each row trains the same coordinate-MLP denoising setup on a different noisy image, using the same optimizer settings and the same final training budget.
    The first two columns show the clean target and the corrupted input; the last two columns show the final AdamW and population-risk reconstructions.
    This gallery complements \autoref{fig:inr_denoising}: the line plots and Fourier spectra of the main text quantify the noise-fitting mechanism on one representative image, while the reconstructions show that the benefit is visually consistent across diverse image structure. Population-risk training reduces the need for image-by-image early stopping because the optimizer suppresses incoherent pixel-noise directions during training.}
    \label{fig:inr_denoising_gallery}
\end{figure}

\subsection{Linear Models: from Gradient Flow to Ridge Regression}\label{sec:linear_model}

\paragraph{Intuition.}
As a sanity check, we specialize the general theory to a linear model: every operator becomes a closed-form expression in the SVD of the data matrix, and weight-decayed gradient flow interpolates smoothly between two classical limits. Sending the weight decay $\lambda\to 0$ recovers minimum-norm interpolation; keeping $\lambda>0$ recovers kernel ridge regression. The same spectral filter calculus survives, with the saturation filter replaced by a shrinkage filter.

In the linear-model setting every object in the general theory is available in closed form. Weight-decayed gradient flow then recovers two classical estimators as limits of a single trajectory: minimum-norm interpolation at $\lambda=0$, kernel ridge regression at $\lambda>0$. We use the linear case to give explicit expressions for the propagator, the cumulative dissipation Gramian, the training displacement, and the transfer operator, and to show that the spectral filter calculus survives the addition of weight decay.

Let $F(\bm w,z)=\bm w^\top \bm x$ with $p=1$ and squared loss $\Phi_S(\bm u)=\tfrac{1}{2n}\|\bm u-\bm y\|_2^2$, and write the data matrix $\bm X=[\bm x_1\ \cdots\ \bm x_n]^\top\in\mathbb R^{n\times d}$ with compact SVD $\bm X=\bm U\bm\Sigma\bm V^\top$, $\bm\Sigma=\diag(\sigma_1,\dots,\sigma_r)$, $r=\rank(\bm X)$. The Jacobian $\bm J_S=\bm X$, the kernel $\Kss=\bm X\bm X^\top=\bm U\bm\Sigma^2\bm U^\top$, and the loss Hessian $\bm B=\tfrac1n\bm I$ are constant; write $\bm a\triangleq\bm U^\top(\bm y-\bm X\bm w_0)$ and $\bm P_{\mathrm{mob}}\triangleq\bm U\bm U^\top$.

\begin{proposition}[Explicit operators for the linear model]\label{prop:linear_operators}
The propagator, cumulative dissipation, training displacement, and test transfer operator on $[0,T]$ are
\begin{align}
  \mathcal P_g(t,0) &= e^{-t\Kss/n},
  &
  \mathcal W_S(0,T) &= \tfrac{n}{2}\bigl(\bm I-e^{-2T\Kss/n}\bigr)\bm P_{\mathrm{mob}},
  \\[3pt]
  \bm D &= n\bigl(\bm I-e^{-T\Kss/n}\bigr)\bm P_{\mathrm{mob}},
  &
  \bm G &= n\Kqs\Kss^{-1}\bigl(\bm I-e^{-T\Kss/n}\bigr)\bm P_{\mathrm{mob}},
\end{align}
where $\Kss^{-1}$ acts on $\range(\Kss)$. The optimal predictor is the frozen-kernel map $\bm A_\circ=\Kqs\Kss^\dagger$ and the irreducible remainder vanishes, $\bm R_\perp=\bm 0$, by \autoref{thm:train_test_coupling}.
\end{proposition}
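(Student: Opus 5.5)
The plan is a direct computation that exploits the fact that, in the linear model, every ingredient of \autoref{def:signal_channel} and \eqref{def:test_transfer} is a function of the single constant matrix $\Kss=\bm U\bm\Sigma^2\bm U^\top$. Since $\bm J_S=\bm X$ and $\bm J_Q=\bm X_Q$ do not depend on $\bm w$, the kernels $\Kss$ and $\Kqs=\bm X_Q\bm X^\top$ are constant along the trajectory. The loss Hessian is $\bm B=\tfrac1n\bm I$. The propagator ODE \eqref{eq:force_propagator} therefore becomes the autonomous linear system $\partial_t\mathcal P_g=-\tfrac1n\Kss\mathcal P_g$ with $\mathcal P_g(0,0)=\bm I$, whose unique solution is $e^{-t\Kss/n}$. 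Everything else is spectral calculus in the eigenbasis $\bm U$ together with its orthogonal complement $\ker\Kss=\range(\bm P_{\mathrm{mob}})^\perp$.

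\emph{Step 1: the three integrals.} I would substitute $\mathcal P_g(\tau,0)=e^{-\tau\Kss/n}$ into each integral. All factors are symmetric functions of $\Kss$ and hence commute, so
\begin{align}
\mathcal W_S(0,T)=\int_0^T \Kss e^{-2\tau\Kss/n}\,d\tau,
\qquad
\bm D=\int_0^T \Kss e^{-\tau\Kss/n}\,d\tau .
\end{align}
On each eigendirection $\bm u_j$ with eigenvalue $\sigma_j^2>0$, integrating the scalar exponentials gives $\tfrac n2(1-e^{-2T\sigma_j^2/n})$ and $n(1-e^{-T\sigma_j^2/n})$ respectively. On $\ker\Kss$ the integrand vanishes identically, which is exactly what the trailing factor $\bm P_{\mathrm{mob}}$ records.

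For $\bm G$ I would first note that $\Kqs=\bm X_Q\bm V\bm\Sigma\bm U^\top$ annihilates $\ker\Kss$. This lets me write $\Kqs=\Kqs\Kss^\dagger\Kss$, and then
\begin{align}
\bm G=\Kqs\Kss^\dagger\int_0^T\Kss e^{-\tau\Kss/n}\,d\tau=\Kqs\Kss^\dagger\bm D,
\end{align}
which is the stated formula once $\Kss^{-1}$ is read as the inverse on $\range(\Kss)$.

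\emph{Step 2: the predictor and remainder.} The loss is quadratic with constant $\bm B\succ0$, so \autoref{thm:train_test_coupling} applies directly. It gives $\bm R_\perp=\bm 0$, $\ker\bm D=\ker\bm W$, and $\bm A_\circ=\bm G\bm D^\dagger$.

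Suppose $T>0$. Then $1-e^{-T\sigma_j^2/n}>0$ for every $j$, so $\bm D$ is invertible on $\range(\bm P_{\mathrm{mob}})$ and vanishes on its complement. Hence $\bm D^\dagger=\tfrac1n(\bm I-e^{-T\Kss/n})^{-1}\bm P_{\mathrm{mob}}$, and $\bm D\bm D^\dagger=\bm P_{\mathrm{mob}}$. Combining this with Step 1 gives
\begin{align}
\bm G\bm D^\dagger=\Kqs\Kss^\dagger\bm D\bm D^\dagger=\Kqs\Kss^\dagger\bm P_{\mathrm{mob}}=\Kqs\Kss^\dagger .
\end{align}
This is $T$-independent and coincides with the frozen-kernel regression map. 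The degenerate window $T=0$, where all operators vanish, I would set aside as trivial.

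\emph{Main obstacle.} The calculus itself is routine. The delicate point is the bookkeeping on $\ker\Kss$, in two places. First, I must justify that every expression of the form $\Kss^{-1}(\cdot)$ is legitimate: the factor it multiplies must vanish on $\ker\Kss$, and this is where $\Kqs\bm P_{\mathrm{mob}}=\Kqs$ is used. Second, I must check that the paper's two descriptions of $\bm A_\circ$ agree here, namely $\mathsf C_Q\mathsf C_S^\dagger$ from \eqref{eq:Aopt_def} and $\bm G\bm D^\dagger$ from \autoref{thm:train_test_coupling}.

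For the second check I would argue as follows. The operators $\mathsf C_S=\bm D\psinv{\bm W}$ and $\mathsf C_Q=\bm G\psinv{\bm W}$ are both functions of $\Kss$ supported on $\range(\bm P_{\mathrm{mob}})$, and $\mathsf C_S$ is invertible there. So $\mathsf C_S^\dagger=\bm W^{1/2}\bm D^\dagger$ on that range, and therefore
\begin{align}
\mathsf C_Q\mathsf C_S^\dagger=\bm G\,\psinv{\bm W}\bm W^{1/2}\bm D^\dagger=\bm G\bm P_{\mathrm{mob}}\bm D^\dagger=\bm G\bm D^\dagger,
\end{align}
where the last step uses $\range(\bm D^\dagger)\subseteq\range(\bm P_{\mathrm{mob}})$. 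This confirms $\bm A_\circ=\Kqs\Kss^\dagger$ without further appeal to the general theorem.
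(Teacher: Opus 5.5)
Your proposal is correct and follows essentially the same route as the paper: constant kernels make the propagator $e^{-t\Kss/n}$, the three integrals are evaluated eigenspace by eigenspace, $\ker\bm D=\ker\Kss=\ker\mathcal W_S$ for $T>0$ gives $\bm R_\perp=\bm 0$, and $\bm A_\circ=\bm G\bm D^\dagger$ reduces to $\Kqs\Kss^\dagger$ once the common factor $(\bm I-e^{-T\Kss/n})$ cancels. Your additions are tidy: the factorization $\bm G=\Kqs\Kss^\dagger\bm D$, the explicit restriction to $T>0$ (which the paper uses only implicitly), and the direct check that $\mathsf C_Q\mathsf C_S^\dagger=\bm G\bm D^\dagger$. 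One small correction: $\mathsf C_Q$ is not itself a function of $\Kss$, but that check only needs $\mathsf C_S$, $\bm W$, and $\bm D$ to be functions of $\Kss$.
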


\begin{proof}
Constant $\bm J_S$ reduces the propagator ODE $\partial_t\mathcal P_g=-\tfrac1n\Kss\mathcal P_g$ to a linear matrix equation with solution $\mathcal P_g(t,0)=e^{-t\Kss/n}$. Substituting into the definitions and integrating on each eigenspace of $\Kss$ via
\begin{align}
\int_0^T\sigma_j^2 e^{-2\tau\sigma_j^2/n} d\tau=\tfrac{n}{2}(1-e^{-2T\sigma_j^2/n}),
\end{align}
and likewise for $\bm D$ and $\bm G$, yields the displayed formulas. Since $1-e^{-T\sigma_j^2/n}>0$ for every $\sigma_j>0$, $\ker\bm D=\ker\Kss=\ker\mathcal W_S$, so $\bm R_\perp=\bm 0$. The optimal predictor $\bm A_\circ=\bm G\bm D^\dagger$ simplifies to $\Kqs\Kss^\dagger$ after canceling the common spectral factor $(\bm I-e^{-T\Kss/n})$.
\end{proof}

Under pure gradient flow from $\bm w_0$, the output-space filter is $\bm M_t=\bm I-e^{-t\bm\Sigma^2/n}$ and the output evolves as $\bm u(t)=\bm u(0)+\bm U\bm M_t\bm a$. As $t\to\infty$ the filter saturates ($\bm M_t\to\bm I_r$) and the parameters converge to $\bm w_0+\bm X^\dagger(\bm y-\bm X\bm w_0)$, the minimum-Euclidean-norm interpolant; this is the $\bm M=\bm M_t$ instance of the unified bias--variance decomposition (\autoref{thm:unified_bias_variance}).

The output-space framework of \autoref{sec:second_law} starts from the unpenalized flow $\partial_t\bm w=-\bm J_S^\top\bm g$; weight decay extends the dynamics by adding a $-\lambda\bm w$ term, and in the linear-model setting the extended dynamics remain explicitly solvable.

\begin{theorem}[Weight-decayed gradient flow converges to ridge regression]\label{thm:linear_ridge}
Under weight-decayed gradient flow $\partial_t\bm w=-\nabla L_S(\bm w)-\lambda\bm w$ with $\lambda>0$ from $\bm w_0=\bm 0$, the $j$-th right-singular component $\alpha_j(t)\triangleq\bm V_j^\top\bm w(t)$ obeys a scalar linear ODE whose output-space filter entry is the product of ridge shrinkage and an exponential approach:
\begin{align}\label{eq:lin_wd_ode}
  \partial_t\alpha_j
  &= -\Bigl(\frac{\sigma_j^2}{n}+\lambda\Bigr)\alpha_j
    +\frac{\sigma_ja_j}{n},
  \qquad
  a_j=(\bm U^\top\bm y)_j,
\\
\label{eq:lin_wd_filter}
  M_j^\lambda(t)
  &= \frac{\sigma_j^2}{\sigma_j^2+n\lambda}
    \Bigl(1-e^{-(\sigma_j^2+n\lambda)t/n}\Bigr),
\intertext{with fixed point $\alpha_j^*=\sigma_j a_j/(\sigma_j^2+n\lambda)$. As $t\to\infty$, $M_j^\lambda\to\sigma_j^2/(\sigma_j^2+n\lambda)$ and the parameters and outputs converge to the kernel ridge regression solution with regularization $n\lambda$:}
\label{eq:lin_wd_limit}
  \bm w(t) &\xrightarrow{t\to\infty}
    \bm X^\top(\bm X\bm X^\top+n\lambda\bm I)^{-1}\bm y,
\\
  \bm u(t) &\xrightarrow{t\to\infty}
    \Kss(\Kss+n\lambda\bm I)^{-1}\bm y.
\end{align}
Components of $\bm w$ orthogonal to every right singular vector satisfy $\partial_t \bm w_\perp=-\lambda \bm w_\perp\to \bm 0$, so weight decay drives every direction outside the data subspace to zero.
\end{theorem}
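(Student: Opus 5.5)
The plan is to work in the right-singular basis of $\bm X=\bm U\bm\Sigma\bm V^\top$ and reduce the weight-decayed flow to decoupled scalar ODEs. With $\nabla L_S(\bm w)=\tfrac1n\bm X^\top(\bm X\bm w-\bm y)$, the flow reads
\begin{align}
\partial_t\bm w=-\tfrac1n\bm X^\top\bm X\bm w+\tfrac1n\bm X^\top\bm y-\lambda\bm w .
\end{align}
First, project onto $\bm V_j$. Using $\bm V_j^\top\bm X^\top\bm X=\sigma_j^2\bm V_j^\top$ and $\bm V_j^\top\bm X^\top\bm y=\sigma_j(\bm U^\top\bm y)_j=\sigma_j a_j$ gives \eqref{eq:lin_wd_ode} directly. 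Second, project onto the orthogonal complement of $\range(\bm V)$. Since $\bm X^\top$ maps into $\range(\bm V)$, only the $-\lambda\bm w$ term survives, which yields $\partial_t\bm w_\perp=-\lambda\bm w_\perp$. With $\bm w_0=\bm 0$ this component is identically zero, and for general initialization it decays as $e^{-\lambda t}$.

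Next, solve the scalar linear ODE. With rate $c_j=(\sigma_j^2+n\lambda)/n$ and $\alpha_j(0)=0$,
\begin{align}
\alpha_j(t)=\frac{\sigma_j a_j}{\sigma_j^2+n\lambda}\bigl(1-e^{-c_jt}\bigr).
\end{align}
The fixed point is $\alpha_j^*$. The outputs are $\bm u(t)=\bm X\bm w(t)=\sum_j\sigma_j\alpha_j(t)\bm U_j$. Writing this as $\bm U\,\diag(M^\lambda_j(t))\,\bm a$ identifies the filter entry $M_j^\lambda(t)=\sigma_j\alpha_j(t)/a_j$, which is exactly \eqref{eq:lin_wd_filter}. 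This is the same output-space filter convention used for $\bm M_t$ in \autoref{thm:unified_bias_variance}; indeed, setting $\lambda=0$ recovers $\bm M_t$, which is a useful consistency check.

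Finally, pass to the limit $t\to\infty$. Every rate satisfies $c_j\ge\lambda>0$, so convergence holds uniformly in $j$ and
\begin{align}
\bm w(\infty)=\bm V\bm\Sigma(\bm\Sigma^2+n\lambda\bm I)^{-1}\bm U^\top\bm y .
\end{align}
To match \eqref{eq:lin_wd_limit}, verify the push-through identity $\bm X^\top(\bm X\bm X^\top+n\lambda\bm I)^{-1}=\bm V\bm\Sigma(\bm\Sigma^2+n\lambda\bm I)^{-1}\bm U^\top$. The one subtlety here is the component of $\bm y$ in $\ker\bm X^\top$ (the complement of $\range(\bm U)$). On that subspace $(\bm X\bm X^\top+n\lambda\bm I)^{-1}$ acts as $(n\lambda)^{-1}$, but it is then annihilated by $\bm X^\top$. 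Since $\lambda>0$, the inverse exists on all of $\mathbb R^n$, so no pseudoinverse is needed. Multiplying by $\bm X$ gives $\bm u(\infty)=\Kss(\Kss+n\lambda\bm I)^{-1}\bm y$, and the same kernel-complement argument shows the two expressions agree there, both being $0$.

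I expect no real obstacle, since everything is diagonal in the SVD. The only step requiring care is the bookkeeping between $\range(\bm U)$ and its complement in the push-through identity and in the definition of $a_j$.
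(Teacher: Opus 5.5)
Your proposal is correct and follows essentially the same route as the paper. Both arguments project the flow onto the right singular vectors, solve the decoupled scalar ODE from $\alpha_j(0)=0$, read off the filter from $(\bm U^\top\bm u)_j=\sigma_j\alpha_j(t)$, and pass to the limit via $\bm V\bm\Sigma(\bm\Sigma^2+n\lambda\bm I)^{-1}\bm U^\top=\bm X^\top(\bm X\bm X^\top+n\lambda\bm I)^{-1}$; in addition, you explicitly handle the $\bm w_\perp$ component and the $\ker\bm X^\top$ part of $\bm y$, which the paper's proof uses only implicitly.
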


\begin{proof}
The per-example loss gradient gives the total gradient and, in the right-singular basis, the scalar linear ODE for $\alpha_j$ with rate $\gamma_j\triangleq\sigma_j^2/n+\lambda>0$:
\begin{align}
\nabla_{\bm w}\ell_j(\bm w) &= \frac{1}{n}\bm x_j(\bm x_j^\top\bm w-y_j),
\\
\nabla L_S(\bm w) &= \frac{1}{n}\bm X^\top(\bm X\bm w-\bm y),
\\
\partial_t\alpha_j
  &= -\frac{\sigma_j}{n}(\sigma_j\alpha_j-a_j)-\lambda\alpha_j
  = -\Bigl(\frac{\sigma_j^2}{n}+\lambda\Bigr)\alpha_j+\frac{\sigma_j a_j}{n}.
\end{align}
From $\alpha_j(0)=0$ we obtain $\alpha_j(t)$ and therefore the $j$-th output component:
\begin{align}
\alpha_j(t)
  &= \frac{\sigma_j a_j}{\sigma_j^2+n\lambda}(1-e^{-\gamma_j t}),
\\
(\bm U^\top\bm u)_j &= \sigma_j\alpha_j(t)=\frac{\sigma_j^2 a_j}{\sigma_j^2+n\lambda}(1-e^{-\gamma_j t}),
\end{align}
which is the filter entry \eqref{eq:lin_wd_filter}. Sending $t\to\infty$ and using the SVD identity
\begin{align}
\bm V\bm\Sigma(\bm\Sigma^2+n\lambda\bm I)^{-1}\bm U^\top
=
\bm X^\top(\bm X\bm X^\top+n\lambda\bm I)^{-1},
\end{align}
the parameter and output limits read
\begin{align}
  \bm w(\infty)
  &= \sum_{j=1}^r\frac{\sigma_j a_j}{\sigma_j^2+n\lambda}\bm V_j
  = \bm V\bm\Sigma(\bm\Sigma^2+n\lambda\bm I)^{-1}\bm U^\top\bm y
  = \bm X^\top(\bm X\bm X^\top+n\lambda\bm I)^{-1}\bm y,
\\
\bm u(\infty) &= \bm X\bm w(\infty)=\Kss(\Kss+n\lambda\bm I)^{-1}\bm y.
\end{align}
\end{proof}

\begin{proposition}[Bias--variance trade-off along the ridge path]\label{prop:linear_ridge_tradeoff}
Both the gradient-flow filter $\bm M_t=\bm I-e^{-t\bm\Sigma^2/n}$ and the ridge filter $\bm M_\lambda=\bm\Sigma^2(\bm\Sigma^2+n\lambda\bm I)^{-1}$ are self-adjoint contractions $0\preceq\bm M\preceq\bm I_r$, so the unified bias--variance decomposition (\autoref{thm:unified_bias_variance}) applies to each. Gradient flow ($\bm M=\bm M_t$) drives the bias to zero as $t\to\infty$ at the cost of full interpolation variance $\tr(\bm\Gamma_0\bm\Sigma_{\bm\zeta})$. Ridge regression ($\bm M=\bm M_\lambda$) trades nonzero bias for reduced variance, and the optimal $\lambda$ minimizes
\begin{align}\label{eq:lin_ridge_objective}
\bar{\bm a}^\top(\bm I-\bm M_\lambda)\bm\Gamma_0(\bm I-\bm M_\lambda)\bar{\bm a}
+\tr(\bm M_\lambda\bm\Gamma_0\bm M_\lambda\bm\Sigma_{\bm\zeta}).
\end{align}
\end{proposition}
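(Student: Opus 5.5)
The proposition has three parts, and the plan reduces each to earlier results. First, both filters are self-adjoint contractions, so \autoref{thm:unified_bias_variance} applies. Second, the gradient-flow branch drives bias to zero while variance tends to $\tr(\bm\Gamma_0\bm\Sigma_{\bm\zeta})$. Third, the ridge branch's risk is exactly the displayed objective \eqref{eq:lin_ridge_objective}. Everything is diagonal in the basis of singular values, so the work is spectral bookkeeping.

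\emph{Contraction property.} Both $\bm M_t$ and $\bm M_\lambda$ are diagonal in the same basis, with entries $1-e^{-t\sigma_j^2/n}$ and $\sigma_j^2/(\sigma_j^2+n\lambda)$. For $t\ge 0$ and $\lambda>0$ each entry lies in $[0,1)$. Hence each filter is symmetric with $0\preceq\bm M\preceq\bm I_r$, and \autoref{thm:unified_bias_variance} gives the conditional risk
\[
\bar{\bm a}^\top(\bm I-\bm M)\bm\Gamma_0(\bm I-\bm M)\bar{\bm a}+\tr(\bm M\bm\Gamma_0\bm M\bm\Sigma_{\bm\zeta}).
\]
Substituting $\bm M=\bm M_\lambda$ yields \eqref{eq:lin_ridge_objective} verbatim. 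That $\bm M_\lambda$ is the filter actually realized by weight-decayed flow follows from \autoref{thm:linear_ridge}: its limiting filter entry is $\sigma_j^2/(\sigma_j^2+n\lambda)$. I will note that this matches the list of filters with $\eta\leftrightarrow\lambda$.

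\emph{Gradient-flow limit.} As $t\to\infty$, $\bm M_t\to\bm I_r$ entrywise. The risk expression is continuous in $\bm M$, since it is a finite-dimensional polynomial, so the bias term tends to $0$. The variance term tends to $\tr(\bm\Gamma_0\bm\Sigma_{\bm\zeta})$, using cyclicity of the trace. This is the full interpolation variance, consistent with \autoref{cor:benign_overfitting}.

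\emph{Trade-off claim.} I will interpret ``trades nonzero bias for reduced variance'' precisely and prove it in two steps.
\begin{itemize}
\item[(i)] The bias is nonzero whenever $\bm\Gamma_0^{1/2}(\bm I-\bm M_\lambda)\bar{\bm a}\neq 0$, which holds generically because $\bm I-\bm M_\lambda\succ 0$.
\item[(ii)] The variance is monotone: $\lambda\mapsto\tr(\bm M_\lambda\bm\Gamma_0\bm M_\lambda\bm\Sigma_{\bm\zeta})$ does not exceed its $\lambda=0$ value. For this, write $\bm M_\lambda\preceq\bm I$ with $\bm M_\lambda$ commuting with $\bm\Sigma$. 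Then
\[
\tr(\bm M\bm\Gamma_0\bm M\bm\Sigma_{\bm\zeta})=\|\bm\Gamma_0^{1/2}\bm M\bm\Sigma_{\bm\zeta}^{1/2}\|_F^2.
\]
\end{itemize}
The main obstacle is step (ii): $\bm M_\lambda$ need not commute with $\bm\Gamma_0$ or $\bm\Sigma_{\bm\zeta}$, so a contraction in the middle does not obviously shrink the Frobenius norm. I would first handle the case $\bm\Sigma_{\bm\zeta}=\sigma_\xi^2\bm I$. There the variance is $\sigma_\xi^2\tr(\bm\Gamma_0\bm M^2)$, which is monotone because $\bm M^2\preceq\bm I$ and $\bm\Gamma_0\succeq 0$. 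For a general noise covariance I would state only that the variance is at most $\|\bm\Sigma_{\bm\zeta}\|_{\op}\tr(\bm\Gamma_0)$.

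Finally, the ``optimal $\lambda$'' is well defined. The objective is continuous on $[0,\infty]$: the limit $\lambda\to 0$ gives interpolation and the limit $\lambda\to\infty$ gives $\bm M=0$, with risk $\bar{\bm a}^\top\bm\Gamma_0\bar{\bm a}$. A continuous function on this compactified interval attains its minimum, so the optimal $\lambda$ exists.
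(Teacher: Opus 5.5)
Your argument is correct and coincides with the paper's. The paper gives no separate proof beyond the observation built into the statement: both filters are diagonal contractions, so \autoref{thm:unified_bias_variance} applies, \eqref{eq:lin_ridge_objective} is that theorem's risk at $\bm M=\bm M_\lambda$, and the gradient-flow claim is the limit $\bm M_t\to\bm I_r$.

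The paper never justifies the ``reduced variance'' gloss, and the obstacle you flag in (ii) is genuine rather than technical. Take $r=2$, $\bm C_0=(1,1)$ (so $\bm\Gamma_0=\bm 1\bm 1^\top$) and $\bm\Sigma_{\bm\zeta}=\bm v\bm v^\top$ with $\bm v=(1,-1)^\top$. This is realizable with $n=2$, independent label noise of variances $0$ and $2$, and left singular vectors $(1,\pm1)^\top/\sqrt2$. The ridge variance is then $(m_1-m_2)^2$ with $m_j=\sigma_j^2/(\sigma_j^2+n\lambda)$, which is strictly positive whenever $\sigma_1\neq\sigma_2$. The interpolation variance $\tr(\bm\Gamma_0\bm\Sigma_{\bm\zeta})$ is $0$.

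So the trade-off needs an extra hypothesis, such as $\bm\Sigma_{\bm\zeta}$ commuting with $\bm\Sigma$. Under that hypothesis, $\bm M_\lambda\bm\Sigma_{\bm\zeta}\bm M_\lambda=\bm\Sigma_{\bm\zeta}^{1/2}\bm M_\lambda^2\bm\Sigma_{\bm\zeta}^{1/2}$ is Loewner-decreasing in $\lambda$. That gives true monotonicity, not just comparison with $\lambda=0$. Your isotropic case is the instance that holds automatically for homoscedastic noise, since $\bm U^\top\bm U=\bm I_r$.
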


\begin{remark}[Two limits of one dynamics]
The filters in \autoref{thm:unified_bias_variance} and \autoref{thm:linear_ridge} are the $\lambda=0$ and $\lambda>0$ limits of the same weight-decayed flow, whose output-space filter has $j$-th diagonal entry
\begin{align}\label{eq:lin_unified_filter}
  M_j^{\lambda}(t)
  = \frac{\sigma_j^2}{\sigma_j^2+n\lambda}
    \bigl(1-e^{-(\sigma_j^2+n\lambda)t/n}\bigr).
\end{align}
Setting $\lambda=0$ and $t\to\infty$ recovers minimum-norm interpolation; setting $\lambda>0$ and $t\to\infty$ recovers kernel ridge regression. Weight decay preserves the spectral filter calculus and replaces the saturation filter with a shrinkage filter.
\end{remark}

\section{Additional Experiments}\label{app:additional_experiments}

This section reports three additional experiments where empirical-risk training has a documented failure mode (chaotic-dynamics rollout from noisy observations, INR denoising, noisy-preference DPO) and the population-risk update prevents it.

\paragraph{Chaotic dynamics from noisy state observations.}
A neural one-step predictor for Lorenz '63 is trained from sensor-noisy state observations and evaluated against clean held-out dynamics over $3$ seeds. The smooth vector field has a coherent gradient mean across minibatches; point-specific sensor noise produces zero-mean, high-variance fluctuations and is suppressed by the population-risk update.

\begin{figure}[h]
    \centering
    \safeincludegraphics[width=\textwidth]{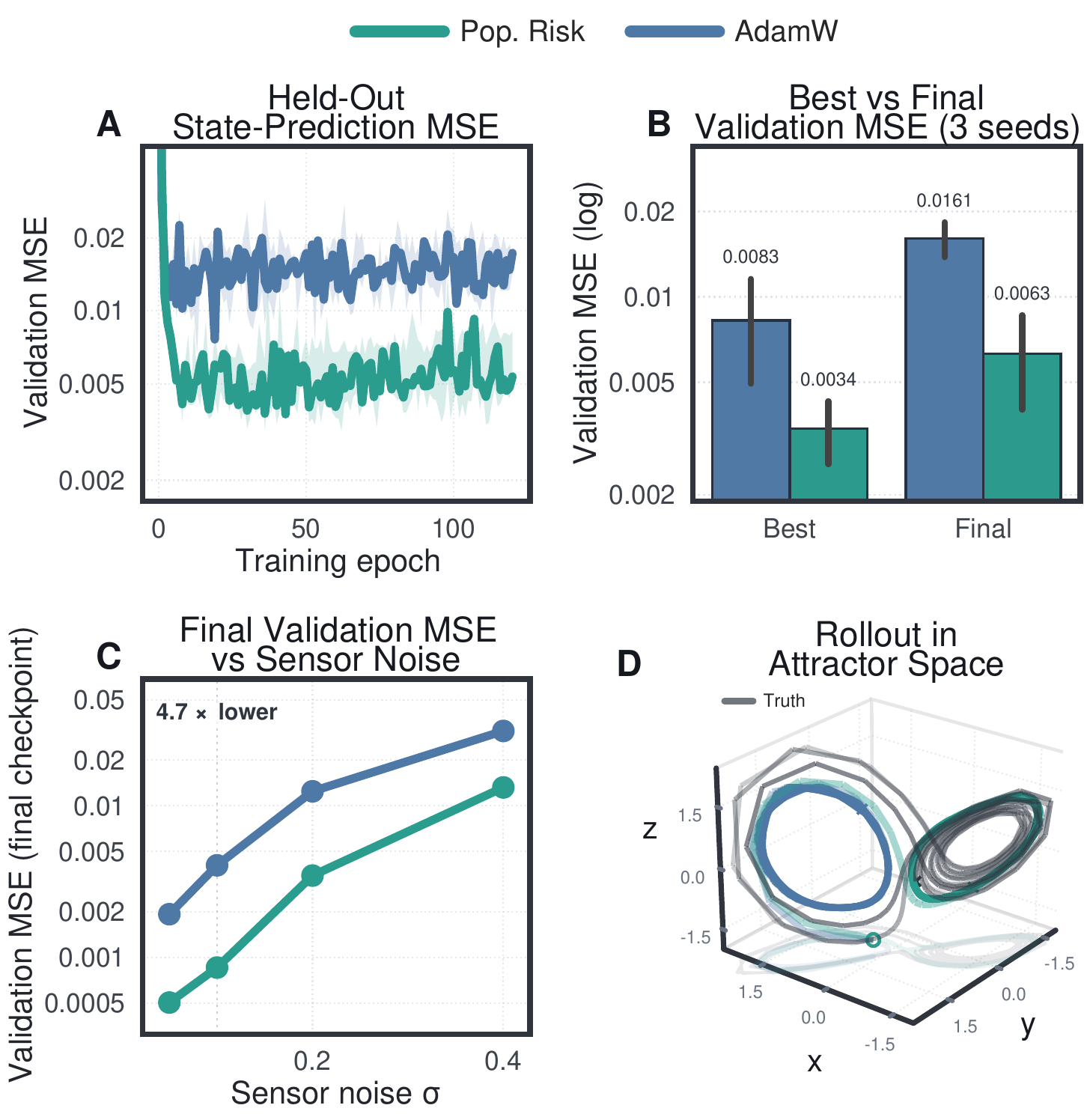}
    \caption{\textbf{Population-risk training on chaotic dynamics.}
    \textbf{(A)}~Held-out state-prediction MSE: AdamW initially improves, then fits sensor noise and its validation error rises, while population-risk training maintains a lower plateau.
    \textbf{(B)}~Best versus final validation MSE: population-risk training finishes below AdamW's best checkpoint.
    \textbf{(C)}~Final validation MSE across sensor-noise levels.
    \textbf{(D)}~Rollouts in attractor space: AdamW drifts away from the true Lorenz manifold, while population-risk training tracks the ground-truth attractor.}
    \label{fig:noisy_dynamics}
\end{figure}

\paragraph{Physics-informed networks with noisy initial conditions.}
A PINN solves the linear advection equation $u_t+\beta u_x=0$ with periodic boundary at $\beta=5$, trained from initial-condition observations corrupted by Gaussian sensor noise and evaluated against the clean analytical solution. Empirical-risk fitting drives the network to interpolate the noisy IC and damages the physical prediction; population-risk training suppresses the noise-fitting channel and reaches the target test error substantially faster than any learning-rate-tuned AdamW baseline. The main-text \autoref{fig:pinn_convection} reports relative $\ell_2$ trajectories, iterations-to-target, and pointwise error fields.

\begin{table}[h]
\centering
\caption{PINN noisy-IC convection benchmark, $\beta=5$, $\sigma_{\mathrm{IC}}=1$, $\approx$1.8M parameters, 3 seeds. $\ell_2$ is relative error against the clean analytical solution on a $101\times101$ grid. Population-risk training reaches $\ell_2\le 0.40$ in $2.4\times$ fewer iterations than the best LR-tuned AdamW.}
\label{tab:pinn_ablation}
\small
\renewcommand{\arraystretch}{1.15}
\rowcolors{2}{white}{gray!10}
\begin{tabular}{lccc}
\toprule
Method & Best $\ell_2$ rel. & Iters.\ to $\ell_2 \leq 0.4$ & Speedup \\
\midrule
  Pop. Risk Training (full) & $0.249$ & $1{,}400$ & $2.36\times$ \\
  Pop. Risk Training (no warmup) & $0.298$ & $2{,}500$ & $1.32\times$ \\
  AdamW (lr=1e-3) & $0.405$ & $>$8k (1/3) & -- \\
  AdamW (lr=1e-4) & $0.309$ & $3{,}300$ & $1.00\times$ \\
  AdamW (lr=5e-5) & $0.336$ & $6{,}100$ & $0.54\times$ \\
  AdamW (lr=1.5e-5) & $0.726$ & $>$8k (never) & -- \\
\bottomrule
\end{tabular}
\end{table}

\paragraph{Implicit neural representation denoising.}
A coordinate MLP is trained on noisy RGB pixel observations and evaluated against the clean image. The smooth image structure is the coherent signal across coordinate minibatches; pixel-level sensor noise is incoherent.

\begin{figure}[h]
    \centering
    \safeincludegraphics[width=0.92\textwidth]{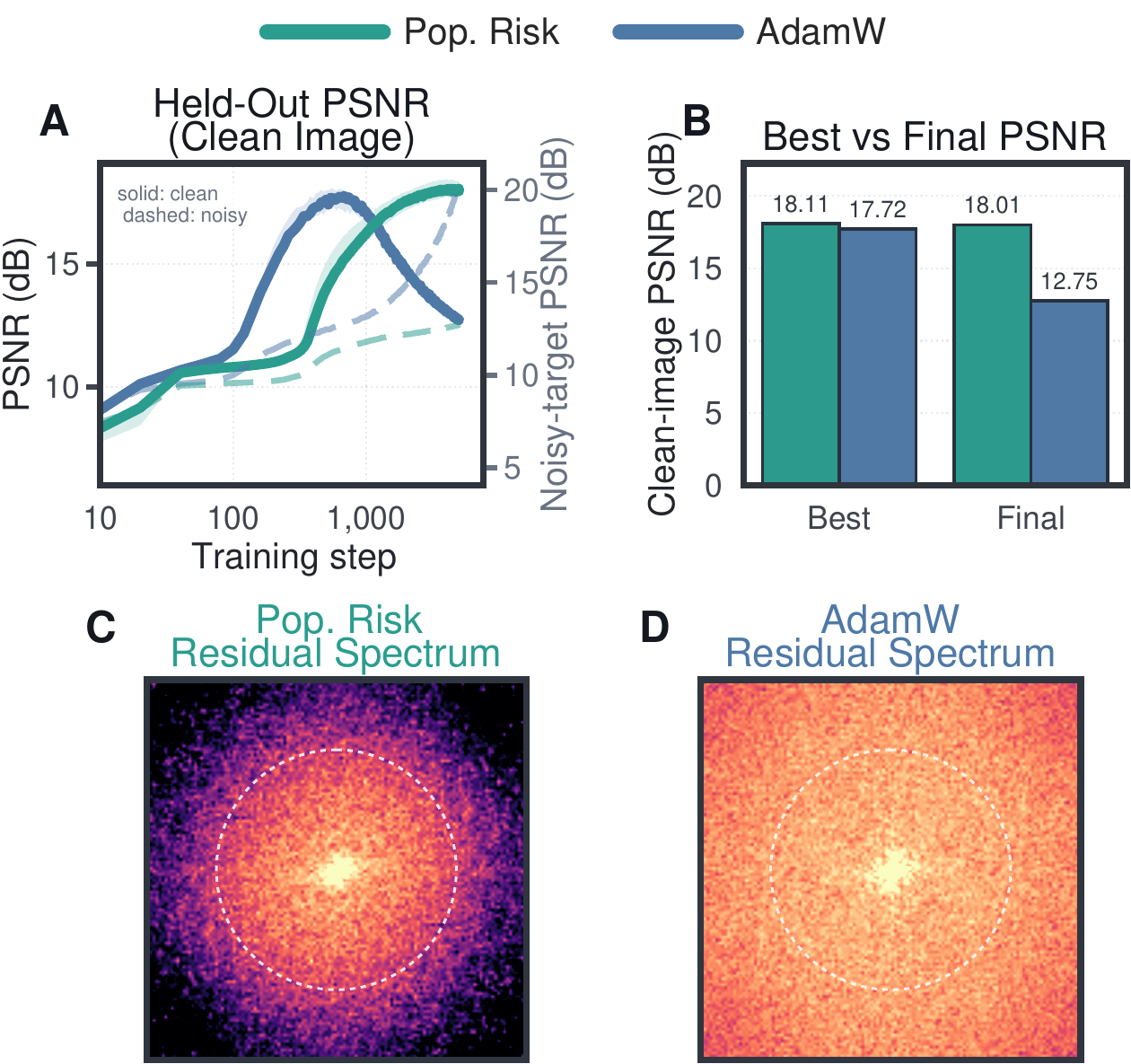}
    \caption{\textbf{Population-risk training removes early stopping in INR denoising.}
    \textbf{(A)}~Held-out clean PSNR: AdamW reaches a transient peak and then degrades, while population-risk training keeps improving the clean image without checkpoint selection.
    \textbf{(B)}~Best versus final clean PSNR.
    \textbf{(C,D)}~Final residual Fourier spectra; outside the dashed high-frequency ring, population-risk training has $8.5\times$ lower residual power.}
    \label{fig:inr_denoising}
\end{figure}

\paragraph{Noisy preference alignment.}
We fine-tune Qwen2.5-0.5B-Instruct with DPO on UltraFeedback preferences where $30\%$ of training pairs have chosen and rejected responses swapped (annotator disagreement). The clean held-out eval set has no label noise; all hyperparameters are shared between AdamW and population-risk training; results are over 3 seeds. The main-text \autoref{fig:noisy_dpo} reports sustained accuracy, reward drift from the reference policy, and the accuracy--drift phase plot.

\begin{table}[h]
\centering
\caption{Noisy-DPO preference alignment benchmark. Qwen2.5-0.5B-Instruct, QLoRA $r\!=\!16$, UltraFeedback, $30\%$ preference noise, 3 seeds. Population-risk training wins on every metric.}
\label{tab:noisy_dpo}
\small
\renewcommand{\arraystretch}{1.15}
\rowcolors{2}{white}{gray!10}
\begin{tabular}{lccr}
\toprule
Metric & AdamW & Pop.\ Risk & Ratio \\
\midrule
Final reward accuracy ($\uparrow$) & $0.566$ & $\mathbf{0.641}$ & $1.13\times$ \\
Mean trajectory accuracy ($\uparrow$) & $0.549$ & $\mathbf{0.625}$ & $1.14\times$ \\
Worst-step accuracy ($\uparrow$) & $0.510$ & $\mathbf{0.589}$ & $1.16\times$ \\
Mean absolute reward drift ($\downarrow$) & $0.41$ & $\mathbf{0.14}$ & $3.05\times$ \\
Steps to sustained $T\!\geq\!0.54$ ($\downarrow$) & $225$ & $\mathbf{75}$ & $3.00\times$ \\
Steps to sustained $T\!\geq\!0.55$ ($\downarrow$) & $225$ & $\mathbf{75}$ & $3.00\times$ \\
Steps to sustained $T\!\geq\!0.56$ ($\downarrow$) & $338$ & $\mathbf{100}$ & $3.38\times$ \\
Steps to sustained $T\!\geq\!0.58$ ($\downarrow$) & N/R & $\mathbf{125}$ & N/R \\
Steps to sustained $T\!\geq\!0.60$ ($\downarrow$) & N/R & $\mathbf{175}$ & N/R \\
\bottomrule
\end{tabular}
\end{table}

\end{document}